\documentclass[12pt]{article}
\usepackage{amsmath,amsfonts,amssymb,mathrsfs,amsthm,bm,bbm,mathtools,graphicx,color,tcolorbox,algorithm,algpseudocode}
\usepackage{caption,fancyhdr,titlesec,indentfirst,booktabs,verbatim,hyperref,cases,titletoc,multirow,wasysym,empheq,setspace,natbib,chngcntr,subcaption,caption,overpic,enumitem}

\newcommand{\KwStep}[1]{\State \textbf{#1}}
\usepackage[font=small,labelfont=bf]{caption}

\newcommand{\rom}[1]{\uppercase\expandafter{\romannumeral #1\relax}}
\usepackage[page,header]{appendix}
\usepackage[english]{babel}
\usepackage[utf8]{inputenc}

\newtheorem{theorem}[]{Theorem}
\newtheorem{lemma}[]{Lemma}
\newtheorem{corollary}[]{Corollary}
\newtheorem{proposition}[]{Proposition}

\newtheorem{assumption}[]{Assumption}

\newcommand{\EE}{\mathbb{E}}
\newcommand{\PP}{\mathbb{P}}
\newcommand{\RR}{\mathbb{R}}

\newcommand{\cC}{\mathcal{C}}

\newcommand{\cH}{\mathcal{H}}
\newcommand{\cI}{\mathcal{I}}

\newcommand{\cP}{\mathcal{P}}

\newcommand{\cX}{\mathcal{X}}
\newcommand{\cY}{\mathcal{Y}}

\newcommand{\sC}{\mathscr{C}}
\newcommand{\sE}{\mathscr{E}}
\newcommand{\sG}{\mathscr{G}}
\newcommand{\sN}{\mathscr{N}}

\newcommand{\Var}{\mathrm{Var}}
\newcommand{\Cov}{\mathrm{Cov}}
\newcommand{\op}{\mathrm{op}}
\newcommand{\Tr}{\mathrm{Tr}}
\newcommand{\wt}{\tilde}
\newcommand{\wh}{\hat}
\newcommand{\KL}{\mathrm{KL}}
\newcommand{\TV}{\mathrm{TV}}
\providecommand{\Description}[1]{\caption*{\textit{Alt text:} #1}}

\begin{document}

\title{\Large{\textbf{Prediction-Powered Conditional Inference}}}
\author{
\bigskip
{\sc Yang Sui, Jin Zhou, Hua Zhou, and Xiaowu Dai} \\
{\it {\normalsize University of California, Los Angeles}}
}
\date{}
\maketitle

\begin{footnotetext}[1]
{\textit{Address for correspondence:} Xiaowu Dai, Department of Statistics and Data Science, University of California, Los Angeles, 28125 Math Sciences Bldg \#951554, CA 90095, USA. E-mail: daix@ucla.edu. }
\end{footnotetext}

\begin{abstract}
\noindent
We study prediction-powered conditional inference in the setting where labeled data are scarce, unlabeled covariates are abundant, and a black-box machine-learning predictor is available. The goal is to perform statistical inference on conditional functionals evaluated at a fixed target point, such as conditional means, without imposing a parametric model for the conditional relationship.
Our approach combines localization with prediction-based variance reduction. First, we introduce an RKHS localization method that learns a data-adaptive weight from covariates and reformulates the target conditional moment at the target point as a weighted unconditional moment.
Second, we incorporate machine-learning predictions through a correction-based decomposition of this localized moment, yielding a prediction-powered estimator and confidence interval that reduce variance when the predictor is informative while preserving validity regardless of predictor accuracy.
We establish nonasymptotic error bounds and, in the abundant-unlabeled regime, minimax-optimal convergence rates for the resulting estimator, prove pointwise asymptotic normality with consistent variance estimation, and provide an explicit variance decomposition that characterizes how machine-learning predictions and unlabeled covariates improve statistical efficiency. Numerical experiments on simulated and real datasets demonstrate valid conditional coverage and substantially sharper confidence intervals than alternative methods.

\end{abstract}
\bigskip

\noindent{\bf Key Words:} Conditional inference; Localization; Prediction-powered inference; Reproducing kernel Hilbert spaces; Uncertainty quantification.

\newpage
\baselineskip=22pt

%%%%%%%%%%%%%%%%%%%%%%%%%%%%%%%%%%%%%%%%%%%%%%%%%%%%%%%%%%%%%%%%%%%%%%%%%%%%%%%%%%%%%%%%%%%%%%%%%

\section{Introduction}
\noindent
In many modern scientific and engineering applications, gold-standard labeled data are costly to obtain and therefore limited in number, whereas unlabeled covariates can often be collected at scale. At the same time, black-box machine-learning (ML) models can generate large numbers of inexpensive but imperfect predictions from covariates alone. This combination of scarce labels, abundant covariates, and readily available predictions arises in diverse domains, including genomics, medical imaging, and proteomics \citep{esteva2017dermatologist, jumper2021highly, vaishnav2022evolution}.
Motivated by this information structure, we study statistical inference for conditional functionals evaluated at a fixed covariate value. Let $\{(X_i,Y_i)\}_{i=1}^n$ be labeled data drawn i.i.d.\ from the joint distribution $\rho$ of $(X,Y)$ on $\cX\times\cY$, and let $\{\wt X_u\}_{u=1}^N$ be unlabeled covariates drawn i.i.d.\ from the marginal distribution $\rho_X$ on $\cX\subset\RR^d$. In addition, suppose a black-box ML predictor $f:\cX\to\cY$ is available, producing predictions $f(x)$ for any $x\in\cX$.

For a fixed covariate $x_0\in\cX$, our goal is to use the three information sources $\{(X_i,Y_i)\}_{i=1}^n\cup\{\wt X_u\}_{u=1}^N\cup\{f\}$ to conduct valid inference for a conditional functional $\theta_0(x_0)$.
To illustrate, consider conditional mean inference, where the target is $\theta_0(x_0):=\EE[Y|X=x_0]$.
A convenient characterization of $\theta_0(\cdot)$ is through a conditional moment restriction. Let $\Theta\subset\RR$ be a parameter space and let $\ell(Y;\theta)$ be an estimating function. For the conditional mean, one may take $\ell(Y;\theta)=Y-\theta$, so that $\EE[\ell(Y;\theta_0(x_0))| X=x_0]=0$. More generally, with an appropriate choice of $\ell$, this formulation covers a broad class of conditional functionals.
Unlike inference for global parameters defined through unconditional moments such as $\EE_\rho[\ell(Y;\theta_0)]=0$, conditional inference provides uncertainty quantification tailored to the fixed target point $x_0$. Conditional functionals are often the scientifically relevant objects in modern data-driven decision systems \citep{chakraborti2025personalized}. For example, in clinical risk assessment, a model that is well calibrated at the population level may still exhibit highly variable uncertainty across patients~\citep{begoli2019need}. Similarly, in economic and demographic analyses such as census income studies \citep{angelopoulos2023prediction}, global summaries can obscure substantial variation across subpopulations. Conditional inference at specific covariate values provides uncertainty quantification aligned with this heterogeneity, rather than a single population-level summary.

\begin{figure}[t]
\begin{center}
        \begin{overpic}[width=1\linewidth,height=0.37\textheight]{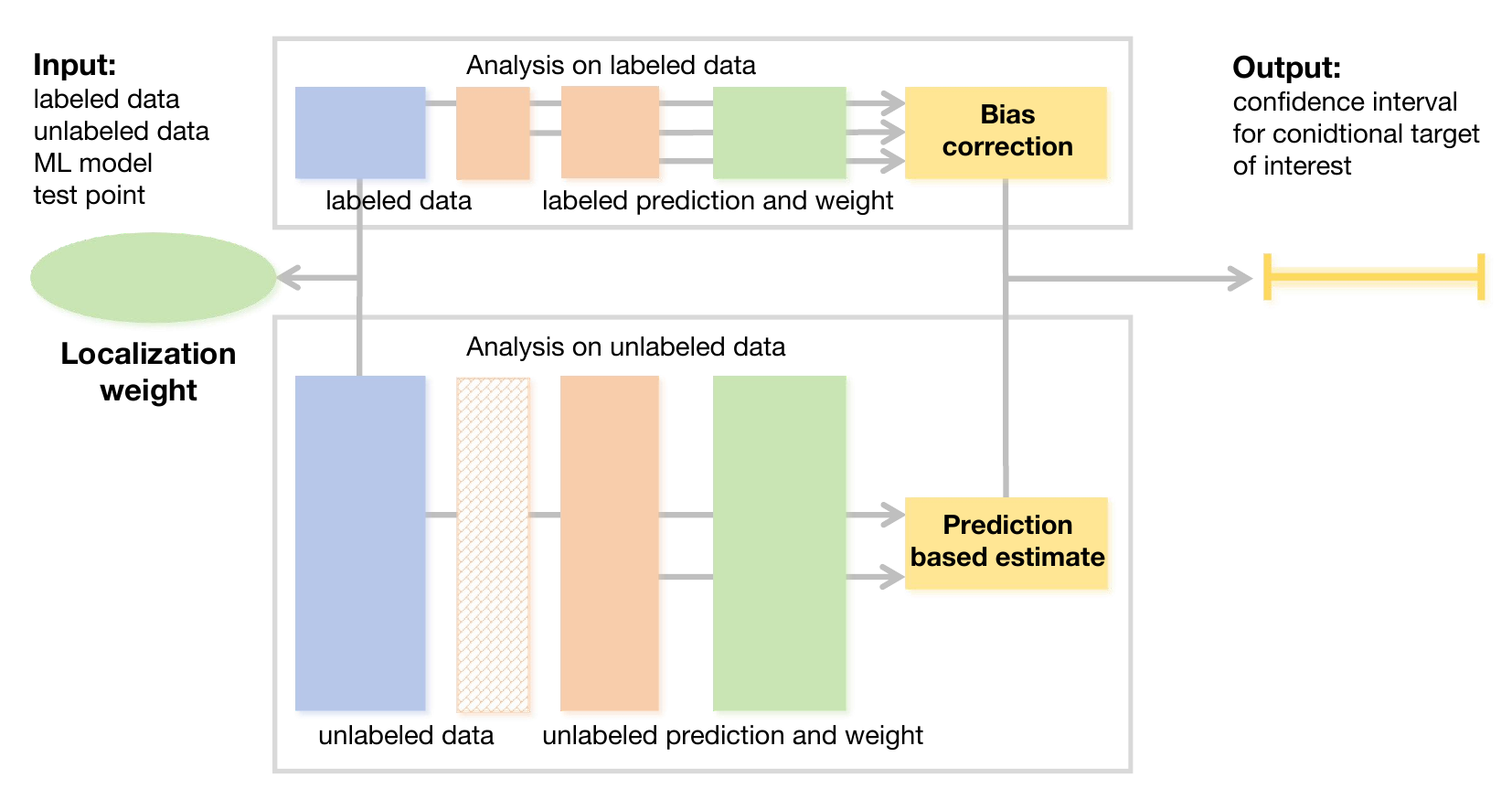}
        \put(10.4,37.6){\footnotesize$f(\cdot)$}
        \put(10.1,35.8){\footnotesize${x_0}$}
        \put(7,31){\footnotesize$w_{x_0}(\cdot)$}
        \put(22.8,39.4){\footnotesize$X$}
        \put(31.8,39.4){\footnotesize$Y$}
        \put(37.8,39.4){\footnotesize$f(X)$}
        \put(47.8,39.4){\footnotesize$w_{x_0}(X)$}

        \put(22.8,15){\footnotesize$\wt X$}
        \put(31.8,15){\footnotesize$\wt Y$}
        \put(37.8,15){\footnotesize$f(\wt X)$}
        \put(47.8,15){\footnotesize$w_{x_0}(\wt X)$}

        \put(90,37.5){\footnotesize$\theta_0(x_0)$}
        \put(88.5,29){\footnotesize$\cC(x_0)$}

    \end{overpic}
    \caption{Protocol for prediction-powered conditional inference. The procedure takes labeled data, unlabeled covariates, an ML predictor, and a target point $x_0$ as inputs. A localization step uses the covariate distribution to learn weights that capture the local structure around $x_0$. The upper block estimates a bias correction from labeled data, while the lower block computes a plug-in term using predictions from the unlabeled data. These components are combined to produce a valid confidence interval $\mathcal{C}(x_0)$ for the conditional target $\theta_0(x_0)$.}
    \label{fig:PPCI_diagram}
    \Description{Flow diagram showing a predictor and target point producing localization weights; labeled residual correction and unlabeled prediction terms are combined into a point estimate and confidence interval.}
\end{center}
\end{figure}

In this paper, we develop a prediction-powered conditional inference (PPCI) framework that combines nonparametric localization with ML predictions. The approach converts the conditional moment defining the target at $x_0$ into a localized unconditional moment using an RKHS localization weight learned from the covariate distribution. This enables pointwise inference without specifying a global model for the conditional functional and leverages abundant unlabeled covariates $\{\wt X_u\}_{u=1}^N$ to estimate the local structure around $x_0$.
Localization reduces the effective sample size, so naive localized estimators are typically variance-dominated. To address this, we incorporate ML predictions through a correction-based decomposition of the localized moment. The bias correction term, estimated from labeled data, depends on prediction residuals, while a plug-in term depending only on predictions is estimated from the large unlabeled sample. This decomposition reduces variance and yields a prediction-powered localized estimating equation whose solution $\wh\theta(x_0)$ estimates $\theta_0(x_0)$. The PPCI procedure is illustrated in Figure~\ref{fig:PPCI_diagram}.

We establish a nonasymptotic error bound for the PPCI estimator $\wh\theta(x_0)$, treating it as the root of a localized estimating equation. The error decomposes into three components: a prediction-powered stochastic term governed by the labeled sample size $n$, a weight-learning error determined by the unlabeled covariate sample size $N$ in the cross-fitted construction, with an $n+N$ analogue for the non-split variant, and an approximation bias controlled by the regularization parameter $\lambda$. In the abundant-unlabeled regime, the resulting convergence rate matches the minimax rate for pointwise estimation.
{We further prove pointwise asymptotic normality of the PPCI estimator $\wh\theta(x_0)$. Writing $J_\lambda(x_0)$ for the localized Jacobian, its asymptotic variance is $J_\lambda(x_0)^{-2}$ times
\begin{equation*}
\frac{1}{n}\Var\left(w_{x_0,\lambda}(X)\{\ell(Y;\theta)-\ell(f(X);\theta)\}\right)
+
\frac{1}{N}\Var\left(w_{x_0,\lambda}(X)\ell(f(X);\theta)\right),
\end{equation*}
which separates the contributions of labeled and unlabeled samples. In contrast, the corresponding labeled-only estimator has asymptotic variance $J_\lambda(x_0)^{-2}(1/n)\Var(w_{x_0,\lambda}(X)\ell(Y;\theta))$. When $N \gg n$, the second term is negligible and the variance is driven by prediction residuals, yielding efficiency gains when the predictor $f$ is informative.}
We use these results to construct confidence intervals for $\theta_0(x_0)$ with asymptotically correct coverage. Under asymmetric data acquisition costs, we also derive a budget-optimal sampling strategy that minimizes interval width.

Our approach is closely related to prediction-powered inference (PPI), which provides general mechanisms for incorporating black-box ML predictions into statistical inference~\citep{angelopoulos2023prediction,angelopoulos2023ppi++,zrnic2024cross,gu2024local}. Existing PPI methods primarily focus on global, population-level parameters and rely on full-sample averaging to improve statistical efficiency. Our work complements this line of research by extending the prediction-powered paradigm to pointwise conditional inference, where estimation is governed by the much smaller local effective sample size near $x_0$. To accommodate this local structure, we introduce a localization scheme that converts the conditional target into a weighted unconditional moment suitable for prediction-powered estimation. Unlike PPI++ and cross-PPI, PPCI combines prediction-powered correction with RKHS localization for a pointwise target. It differs from KRR: the RKHS is used to construct localization weights rather than to fit the conditional target directly.

We validate the proposed PPCI through extensive simulations and two real-world applications: Census income estimation and popularity prediction with the BlogFeedback dataset. Because standard PPI targets population-level parameters rather than pointwise conditional targets, we include it as an out-of-scope global reference. Classical localized estimators based only on labeled data achieve nominal coverage but produce wide intervals due to limited effective local sample sizes. Our PPCI method improves this trade-off, producing substantially sharper intervals while preserving coverage across a range of conditional targets.

The rest of the paper is organized as follows. Section~\ref{sec:bg} introduces the background, including RKHS, conditional targets, and the prediction-powered framework.
Section~\ref{sec:PPCI} presents the proposed PPCI method. Section~\ref{sec:theory} establishes its estimation and inference results. Section~\ref{sec:practical} discusses practical considerations.
Section~\ref{sec:experiments} reports numerical experiments.
Section~\ref{sec:conclusion} concludes the paper with a discussion on future directions.

\section{Background}\label{sec:bg}
\noindent
In this section, we introduce the basic notation and RKHS background, define the conditional inference target of interest, and present the prediction-powered framework.
\subsection{Notations}
\noindent
We use the following notations throughout the paper.
Let the input domain $\cX \subset \RR^d$ be convex, compact, and with nonempty interior. Let $\rho_X$ be a probability measure on $\cX$ with density bounded above and below:
$0 < \rho_0 \le \frac{d\rho_X}{dx}(x) \le \rho_1 < \infty$.
Denote by $L^2(\rho_X)$ the Hilbert space of square-integrable functions on $\cX$ with inner product
$\langle h,g\rangle_{L^2(\rho_X)}=\int_\cX h(x)g(x)\,d\rho_X(x)$, and let $\|h\|_\infty=\sup_{x\in\cX}|h(x)|$ denote the supremum norm.
We write $\EE_\rho$ for expectation under the joint law of $(X,Y)$ and $\EE_{\rho_X}$ for expectation under the covariate marginal. An unsubscripted $\EE$ uses the distribution implied by its argument and the sampling statement.

For vectors $v\in\RR^d$, $\|v\|_2$ denotes the Euclidean norm. For matrices $A\in\RR^{d\times d}$, $\Tr(A)$ denotes the matrix trace. We write $a\wedge b=\min\{a,b\}$ and $a\vee b=\max\{a,b\}$.
For nonnegative sequences $\{a_k\}$ and $\{b_k\}$, we write $a_k\lesssim b_k$ or $a_k=O(b_k)$ if $a_k\le c\,b_k$ for some universal constant $c>0$, and $a_k\gtrsim b_k$ if $a_k\ge c\,b_k$. We write $a_k\asymp b_k$ if both hold, $a_k=O(1)$ if $a_k$ is bounded, and $a_k=o(b_k)$ if $a_k/b_k\to0$.
For random sequences $\{A_k\}$ and $\{B_k\}$ with $B_k>0$ almost surely, we write $A_k=O_p(B_k)$ if, for every $\varepsilon>0$, there exists a finite $M_\varepsilon$ such that $\sup_k\PP(|A_k/B_k|>M_\varepsilon)\le\varepsilon$, and $A_k=o_p(B_k)$ if $A_k/B_k\to0$ in probability.

\subsection{Reproducing Kernel Hilbert Space (RKHS)}
\noindent
Let $K:\cX\times\cX\to\RR$ be a bounded, measurable, positive definite kernel, which satisfies
$\sup_{x\in\cX} K(x,x) = \kappa^2 < \infty$.
By the Moore--Aronszajn theorem, $K$ determines a unique RKHS $\cH$ obtained as the completion of finite linear combinations of kernel sections, with inner product satisfying $\langle K_x,K_{x'}\rangle_{\cH}=K(x,x')$. We assume this RKHS is separable~\citep{wahba1990spline}.
For each $x\in\cX$, define the feature map $K_x := K(x,\cdot)\in\cH$. The reproducing property holds:
$h(x) = \langle h, K_x\rangle_{\cH}, \forall\, h\in\cH$.
Define the integral operator $T_K:\cH\to\cH$ by
$(T_K h)(x) = \int_{\cX} K(x,t)\,h(t)\,d\rho_X(t), h\in\cH$.
For an integer $m>d/2$, the Sobolev space
$H^m(\cX)=\{h\in L^2(\cX):D^\alpha h\in L^2(\cX)\text{ for every weak derivative }D^\alpha\text{ with }|\alpha|\le m\}$
embeds continuously into continuous functions. Mat\'ern kernels with suitable smoothness generate RKHSs norm-equivalent to such Sobolev spaces on regular bounded domains.

\subsection{Conditional Inference Target}\label{sec:target}
\noindent
Let $x_0$ be an interior point of $\cX$, and let $\Theta\subset\RR$ be a parameter space. Our goal is to conduct inference for a conditional target $\theta_0(x_0)\in\Theta$, defined as the unique solution to a conditional moment restriction at $x_0$. We focus on scalar targets. Extending the theory to vector-valued targets requires a matrix-valued Jacobian and joint covariance analysis and is left for future work. Let $\ell:\cY\times\Theta\to\RR$ be an estimating function and define the conditional moment by
\begin{equation}\label{eq:cond-risk}
\eta(x_0;\theta) := \EE[\ell(Y;\theta)| X=x_0].
\end{equation}
The target $\theta_0(x_0)$ is characterized as a root of $\eta(x_0;\theta_0(x_0))=0$,
and our objective is to construct valid inference for $\theta_0(x_0)$.
Many conditional functionals admit this representation. We consider the following examples:
\begin{itemize}
\item \textbf{Conditional mean:}
Choosing $\ell(Y;\theta)=Y-\theta$ yields $\theta_0(x_0)=\EE[Y| X=x_0]$.
\item \textbf{Conditional log-odds in binary classification:}
For $Y\in\{0,1\}$, taking $\ell(Y;\theta)=Y-(1+e^{-\theta})^{-1}$ identifies $\theta_0(x_0) = \log(\PP(Y=1| X=x_0)/\PP(Y=0| X=x_0))$.

\item \textbf{Conditional expected shortfall:}
Let $\mathrm{Quant}_\tau(X)$ denote the conditional $\tau$-quantile for $\tau\in(0,1)$ and define
$Y^\ast = \mathrm{Quant}_\tau(X) + \tau^{-1}(Y-\mathrm{Quant}_\tau(X))\mathbbm{1}\{Y\le \mathrm{Quant}_\tau(X)\}$.
Choosing $\ell(Y^\ast;\theta)=Y^\ast-\theta$ yields
$\theta_0(x_0)=\EE[Y^\ast| X=x_0]$ \citep{he2023robust,yu2025estimation}.
\end{itemize}
Throughout, we assume that for each $\theta\in\Theta_0$, the function $x\mapsto\eta(x;\theta)$ belongs to $\cH$; Appendix~C.3 relaxes this condition to a spectral source condition.
For example, this assumption requires
$x\mapsto\EE[Y| X=x]\in\cH$ in the conditional mean setting,
$x\mapsto\PP(Y=1| X=x)\in\cH$ in binary classification,
and $x\mapsto\EE[Y^\ast| X=x]\in\cH$ for conditional expected shortfall.

\subsection{Prediction-Powered Framework}
\label{subsec:ppi_framework}
\noindent
We consider a setting with three sources: labeled data
$\{(X_i,Y_i)\}_{i=1}^n$, unlabeled covariates
$\{\wt X_u\}_{u=1}^N$, and a black-box ML predictor
$f:\cX\to\cY$ under the following sampling scheme.
\begin{assumption}\label{ass:sampling-scheme}
The labeled observations $\{(X_i,Y_i)\}_{i=1}^n$ are
i.i.d.\ from $\rho$, the unlabeled covariates
$\{\wt X_u\}_{u=1}^N$ are i.i.d.\ from $\rho_X$, and the two samples
are independent. The predictor used for inference is either fixed
relative to these samples, as when it is trained on an independent
sample, or cross-fitted so that each evaluated prediction is obtained
without using the corresponding outcome~\citep{zrnic2024cross}.
\end{assumption}

The prediction-powered framework leverages $f$ to improve statistical efficiency. For a global parameter $\theta_0$ defined by the moment condition $\EE_\rho[\ell(Y;\theta_0)]=0$, prediction-powered methods decompose the target moment as ~\citep{angelopoulos2023prediction}
\begin{equation*}
\EE_\rho[\ell(Y;\theta)] = \EE_\rho[\ell(Y;\theta)-\ell(f(X);\theta)] + \EE_{\rho_X}[\ell(f(X);\theta)].
\end{equation*}
The first term is estimated using labeled data, typically with reduced variance when $f$ is informative. The second term is estimated using the large unlabeled sample, with variance diminishing as the unlabeled sample size grows.

In this paper, we extend the prediction-powered framework from global parameters to pointwise conditional inference for $\theta_0(x_0)$. This extension raises two challenges. First, conditioning on $\{X=x_0\}$ prevents expressing the conditional moment $\eta(x_0;\theta)$ in~\eqref{eq:cond-risk} as an unconditional average over labeled data $\{(X_i,Y_i)\}_{i=1}^n$.
Second, localization near $x_0$ yields a small effective sample size and therefore high variance. To address these challenges, we introduce in the next section a localization scheme that converts~\eqref{eq:cond-risk} into an unconditional localized moment and combine it with prediction-powered estimation that leverages the predictor $f$ and abundant unlabeled covariates to improve efficiency.

\section{Prediction-Powered Conditional Inference}
\label{sec:PPCI}
\noindent
In this section, we detail our PPCI procedure for constructing valid confidence intervals for the conditional target $\theta_0(x_0)$.

\subsection{Algorithm}
\noindent
Our prediction-powered conditional inference (PPCI) procedure has three steps.
Step~1 selects tuning parameters within each unlabeled training fold and constructs an RKHS localization weight capturing the covariate structure around the target point $x_0$.
Step~2 converts the defining conditional moment into an unconditional localized moment and uses a prediction-powered decomposition with the predictor $f$ and unlabeled covariates $\{\wt X_u\}_{u=1}^N$ for variance reduction.
Step~3 constructs a confidence interval for the conditional estimate.
Algorithm~\ref{alg:ppci_twofold} summarizes the procedure, followed by details of each step. For clarity, the following subsections present the construction for a generic fixed kernel $K$ and regularization level $\lambda$. In the numerical implementation, the same formulas are applied foldwise using the tuning values selected in Step~1 of Algorithm~\ref{alg:ppci_twofold}.

{
\begin{algorithm}[t]
\caption{Prediction-Powered Conditional Inference}
\label{alg:ppci_twofold}
\begin{algorithmic}[1]
\Require
    \Statex Labeled data $\{(X_i,Y_i)\}_{i=1}^n$, unlabeled covariates $\{\wt X_u\}_{u=1}^{N}$,
    predictor $f$, target point $x_0$, confidence level $(1-\alpha)$,
    kernel family $\{K_h:h>0\}$, candidate grids $\mathcal G_h$ and $\mathcal G_\lambda$,
    and tuning constants $c_{\rm op}$, $c_{\rm loc}$, and $c_{\rm bias}$.

\KwStep{Step 1: Tuning and RKHS localization weights}
\State Randomly partition $\{1,\dots,N\}$ into two equal-sized disjoint folds $\cI_1$ and $\cI_2$.
\For{$m\in\{1,2\}$}
    \State Apply the tuning procedure in Section~\ref{sec:tuning_twofold} using $\{\tilde{X}_u\}_{u\in\mathcal I_m}$ to obtain $(\wh h_m,\wh\lambda_m)$.
    \State Compute $\Sigma_t^{(m)}\in\RR^{|\cI_m|\times|\cI_m|}$ with $(\Sigma_t^{(m)})_{uu'}=K_{\wh h_m}(\wt X_u,\wt X_{u'})$, and $k_{0,t}^{(m)}\in\RR^{|\cI_m|}$ with $(k_{0,t}^{(m)})_u=K_{\wh h_m}(\wt X_u,x_0)$, for all $u,u'\in\cI_m$.
    \State Solve $\xi^{(m)}=(\Sigma_t^{(m)}+|\cI_m|\wh\lambda_m I_{|\cI_m|})^{-1}k_{0,t}^{(m)}$.
    \State Construct the fold-specific empirical RKHS localization weight $\displaystyle \wh w^{(m)}_{x_0}(x)=\frac{K_{\wh h_m}(x,x_0)-K_{\wh h_m}(x,\wt X_{\cI_m})^\top\xi^{(m)}}{\wh\lambda_m}$.
\EndFor
\State Define the averaged weight for labeled data: $\overline w_{x_0}(\cdot)=\frac{1}{2}\{\wh w^{(1)}_{x_0}(\cdot)+\wh w^{(2)}_{x_0}(\cdot)\}$.

\KwStep{Step 2: Prediction-powered conditional estimation}
\State Construct the cross-fitted estimating equation using the fold-specific weights as in \eqref{eq:eta-cf_twofold}.
\State Compute the final estimator $\wh\theta(x_0)$ as in \eqref{eq:thetahat-x0_twofold}.

\KwStep{Step 3: Confidence interval}
\State Compute the Jacobian and variance using the fold-specific weights as in \eqref{eq:H-hat-new_twofold} and \eqref{eq:V-hat-new_twofold}.
\State Construct the confidence interval $\cC(x_0)$ as in \eqref{eq:CI-new_twofold}.

\Ensure
    \Statex Confidence interval $\cC(x_0)$ for $\theta_0(x_0)$.
\end{algorithmic}
\end{algorithm}
}

\subsection{RKHS Localization}
\label{sec:localization_twofold}
\noindent
We introduce an RKHS localization scheme that converts the conditional moment \eqref{eq:cond-risk} at $x_0$ into an unconditional weighted moment. Fix $\lambda>0$ and define the RKHS localization weight
\begin{equation}
\label{eq:rkhs-localization-pop-x0_twofold}
w_{x_0,\lambda}
:=
(T_K+\lambda I)^{-1}K(x_0,\cdot)
\in\cH,
\end{equation}
where $I$ is the identity on $\cH$ and the inverse is defined via functional calculus~\citep{smale2007learning}. Using $w_{x_0,\lambda}$, we define the localized moment
\begin{equation}
\label{eq:M-lambda-def-x0_twofold}
\eta_\lambda(x_0;\theta)
:=
\EE\big[w_{x_0,\lambda}(X)\ell(Y;\theta)\big],
\end{equation}
which replaces conditioning on $\{X=x_0\}$ with a tractable weighted expectation. This reformulation has a natural interpretation: $\eta_\lambda(\cdot;\theta)$ exactly corresponds to the population Tikhonov-regularized approximation of the true conditional moment $\eta(\cdot;\theta)$ in $\cH$, given by
\begin{equation}
\label{eqn:Tikhonov-regu_twofold}
\eta_\lambda(\cdot;\theta)
=
\underset{g\in\cH}{\arg\min}
\left\{
\EE\big[(g(X)-\eta(X;\theta))^2\big]
+
\lambda\|g\|_\cH^2
\right\}.
\end{equation}
We use the term RKHS localization weight for \eqref{eq:rkhs-localization-pop-x0_twofold} and its empirical analogues, abbreviated to localization weight when no alternative weighting scheme is under discussion.
The regularization bias is therefore $\eta_\lambda(x_0;\theta)-\eta(x_0;\theta)$. Hence, statistical inference for $\eta(x_0;\theta)$ can be based on $\eta_\lambda(x_0;\theta)$, provided $\lambda$ is chosen so that this bias is negligible.

In summary, the localization weight \eqref{eq:rkhs-localization-pop-x0_twofold} converts the conditional moment \eqref{eq:cond-risk} into the unconditional localized moment \eqref{eq:M-lambda-def-x0_twofold}, which is the foundation for the prediction-powered conditional estimation developed next.

\subsection{Prediction-Powered Conditional Estimation}
\label{sec:ppi-framework_twofold}
\noindent
We incorporate prediction-powered ideas into the localized moment \eqref{eq:M-lambda-def-x0_twofold} to improve efficiency in conditional estimation.
Using the predictor $f$, we decompose
\begin{equation}
\label{eqn:Mc_twofold}
\eta_\lambda(x_0;\theta)
=
\EE\big[w_{x_0,\lambda}(X)\{\ell(Y;\theta)-\ell(f(X);\theta)\}\big]
+
\EE\big[w_{x_0,\lambda}(X)\ell(f(X);\theta)\big].
\end{equation}
To estimate the localization weights $w_{x_0,\lambda}(X)$, we employ two-fold cross-fitting on the unlabeled data; the extension to general multi-fold cross-fitting is straightforward.
We randomly split $\{\wt X_u\}_{u=1}^N$ into two disjoint folds $\cI_1$ and $\cI_2$ of equal size.
For each fold $m\in\{1,2\}$, we construct the empirical weight $\wh w^{(m)}_{x_0,\lambda}$ using only $\{\wt X_u\}_{u\in\cI_m}$.
For unlabeled observations in fold $\cI_m$, we evaluate them using the out-of-fold weight $\wh w^{(3-m)}_{x_0,\lambda}$ trained on the other fold.
For labeled data $\{(X_i,Y_i)\}_{i=1}^n$, the weights are constructed independently of the responses $Y_i$, and we evaluate labeled observations using the averaged weight $\overline w_{x_0,\lambda}(\cdot)=\frac{1}{2}\{\wh w^{(1)}_{x_0,\lambda}(\cdot)+\wh w^{(2)}_{x_0,\lambda}(\cdot)\}$.
Replacing expectations in \eqref{eqn:Mc_twofold} by cross-fitted sample averages yields the empirical moment
\begin{equation}
\label{eq:eta-cf_twofold}
\wh\eta_\lambda(x_0;\theta)
=
\frac{1}{n}
\sum_{i=1}^n
\overline w_{x_0,\lambda}(X_i)
\big\{
\ell(Y_i;\theta)-\ell(f(X_i);\theta)
\big\}
+
\frac{1}{N}
\sum_{m=1}^2
\sum_{u\in\cI_m}
\wh w^{(3-m)}_{x_0,\lambda}(\wt X_u)
\ell(f(\wt X_u);\theta).
\end{equation}
The prediction-powered conditional estimator $\wh\theta(x_0)$ is defined as the solution to
\begin{equation}
\label{eq:thetahat-x0_twofold}
\wh\eta_\lambda(x_0;\wh\theta(x_0))
=
0.
\end{equation}
We take the locally unique solution in $\Theta_0$; Lemma~3 in Appendix~E.3 shows that this root exists with probability tending to one under the conditions used below.

\subsection{Prediction-Powered Conditional Inference}
\label{sec:ppci-inference_twofold}
\noindent
Based on the prediction-powered estimator $\wh\theta(x_0)$ defined in \eqref{eq:thetahat-x0_twofold}, we construct a confidence interval for the conditional target $\theta_0(x_0)$ at the target point $x_0$.
We first compute the empirical Jacobian of the cross-fitted localized moment \eqref{eq:eta-cf_twofold} evaluated at $\wh\theta(x_0)$:
\begin{equation}
\label{eq:H-hat-new_twofold}
\wh J_\lambda(x_0)
:=
\left.
\partial_\theta
\wh\eta_\lambda(x_0;\theta)
\right|_{\theta=\wh\theta(x_0)}.
\end{equation}
Next, for each labeled observation, define $\wh\zeta_i(x_0):=\overline w_{x_0,\lambda}(X_i)\{\ell(Y_i;\wh\theta(x_0))-\ell(f(X_i);\wh\theta(x_0))\}$, $i=1,\dots,n$. For each unlabeled observation $u\in\cI_m$, where $m\in\{1,2\}$, define $\wh\zeta_u(x_0):=\wh w^{(3-m)}_{x_0,\lambda}(\wt X_u)\ell(f(\wt X_u);\wh\theta(x_0))$. Let $\wh\sigma^2_{Y-f}(x_0)$ and $\wh\sigma^2_f(x_0)$ denote the sample variances of $\{\wh\zeta_i(x_0)\}_{i=1}^n$ and $\{\wh\zeta_u(x_0)\}_{u=1}^N$, respectively.
We estimate the variance of the localized moment~\eqref{eq:eta-cf_twofold} by
\begin{equation}
\label{eq:V-hat-new_twofold}
\wh V(x_0)
=
\frac{1}{n}\wh\sigma^2_{Y-f}(x_0)
+
\frac{1}{N}\wh\sigma^2_f(x_0).
\end{equation}
Finally, the $(1-\alpha)$ PPCI confidence interval for $\theta_0(x_0)$ is
\begin{equation}
\label{eq:CI-new_twofold}
\cC(x_0)
:=
\left(
\wh\theta(x_0)
\pm
z_{1-\alpha/2}
\sqrt{\wh V(x_0)}/|\wh J_\lambda(x_0)|
\right),
\end{equation}
where $z_{1-\alpha/2}$ is the $(1-\alpha/2)$ quantile of the standard normal distribution.
%%%%%%%%%%%%%%%%%%%%%%%%%%%%%%%%%%%%%%%%%%%%%%5
\section{Theoretical Guarantees}\label{sec:theory}
\noindent We provide theoretical guarantees for the proposed PPCI estimator $\wh\theta(x_0)$ and its associated inference procedure in this section.
\subsection{Convergence Rates of Estimation}
\label{sec:estimation-rates}
\noindent
We show that the PPCI estimator $\wh\theta(x_0)$ defined in~\eqref{eq:thetahat-x0_twofold} achieves the pointwise minimax rate, up to logarithmic factors, in the abundant-unlabeled regime specified below. We begin by introducing mild regularity conditions.

\begin{assumption}\label{ass:B_bound_predictor}
There exists a compact interval $\Theta_0\subset\Theta$ containing $\theta_0(x_0)$ and a constant $\delta_\theta>0$ with
$[\theta_0(x_0)-2\delta_\theta,\theta_0(x_0)+2\delta_\theta]\subset\Theta_0$.
For each $\theta\in\Theta_0$, let $B_{Y-f}(\theta)\!=\!\sup_{(x,y)}|\ell(y;\theta)-\ell(f(x);\theta)|$ and $B_{f}(\theta)=\sup_x |\ell(f(x);\theta)|$.
Then, $\sup_{\theta\in\Theta_0}B_{Y-f}(\theta)<\infty$ and $\sup_{\theta\in\Theta_0}B_{f}(\theta)<\infty$.
\end{assumption}

\begin{assumption}\label{ass:deriv-bdd-jac}
On $\Theta_0$, the maps
$\theta\mapsto \{\ell(y;\theta)-\ell(f(x);\theta)\}$
and $\theta\mapsto \ell(f(x);\theta)$
are continuously differentiable, and their derivatives are uniformly bounded. That is, by letting $G_{Y-f}(\theta)=\sup_{(x,y)}|\partial_\theta (\ell(y; \theta) - \ell(f(x);\theta))|$ and $G_{f}(\theta)=\sup_x|\partial_\theta\ell(f(x);\theta)|$, we have $\sup_{\theta\in\Theta_0}G_{Y-f}(\theta)<\infty$ and $\sup_{\theta\in\Theta_0}G_{f}(\theta)<\infty$.
\end{assumption}

\begin{assumption}\label{ass:J_lambda_bound}
Let $J_\lambda(x_0;\theta):=\partial_\theta \eta_\lambda(x_0;\theta) =\EE[w_{x_0,\lambda}(X)\,\partial_\theta\ell(Y;\theta)]$.
There exist constants $0<c_J\le C_J<\infty$ such that $c_J \le |J_\lambda(x_0;\theta)| \le C_J$ for all $\theta\in\Theta_0$.
These bounds hold uniformly for all sufficiently small $\lambda$ in the admissible regularization range.
\end{assumption}

\begin{assumption}\label{ass:eigenfunctions_bound}
The conditional moments satisfy
$\sup_{\theta\in\Theta_0}\|\eta(\cdot;\theta)\|_{\cH}\le B_\eta<\infty$.
Separately, $\cH$ is norm-equivalent to $H^m(\cX)$ for some $m>d/2$, and the kernel in \eqref{eq:rkhs-localization-pop-x0_twofold} admits the expansion
$K(x,x')=\sum_{j\ge1}\mu_j\phi_j(x)\phi_j(x')$, where $\{\phi_j\}_{j\ge1}$ is orthonormal in $L^2(\rho_X)$, $B_\phi:=\sup_j\|\phi_j\|_\infty<\infty$, and $\mu_j\asymp j^{-2m/d}$.
\end{assumption}

\begin{assumption} \label{ass:homo_res}
There exists a constant $\underline{\sigma}^2 > 0$ such that
$\mathrm{Var}\big(\ell(Y;\theta) | X\big) \ge \underline{\sigma}^2$ almost surely for all $\theta \in \Theta_0$.
\end{assumption}

\noindent
Assumption~\ref{ass:B_bound_predictor} bounds the estimating function and places the target in the interior of the parameter interval, while Assumption~\ref{ass:deriv-bdd-jac} controls its derivative with respect to $\theta$. Neither condition requires the predictor $f$ to be smooth as a function of $x$: prediction error is corrected in the labeled term, so only the displayed bounds on $\ell$ are used. Assumption~\ref{ass:J_lambda_bound} provides local identification uniformly over the regularization range. Assumption~\ref{ass:eigenfunctions_bound} separates the uniform RKHS smoothness of the conditional moment from the kernel--design spectral conditions used to quantify pointwise localization. Finally, Assumption~\ref{ass:homo_res} keeps the localized moment variance nondegenerate. Appendix~C explains which conditions are verified for the running examples and which remain structural assumptions.

A key quantity in our analysis is the \emph{pointwise leverage} at the target point $x_0$, $
D(x_0;\lambda) := \big\langle K_{x_0},(T_K+\lambda I)^{-1}K_{x_0}\big\rangle_{\cH}$.
{
We also use the quadratic size of the localization weight, $
Q(x_0;\lambda):=\|w_{x_0,\lambda}\|_{L^2(\rho_X)}^2
=\mathbb E\{w_{x_0,\lambda}^2(X)\}$.
}
In classical RKHS regression problems, complexity is typically measured by the \emph{global effective dimension} $D(\lambda) := \Tr((T_K+\lambda I)^{-1}T_K)$ \citep{caponnetto2007optimal,wainwright2019hd,fischer2020sobolev}, which characterizes the estimation of the entire regression function on $\cX$. In contrast, our objective is a pointwise estimate at $x_0$. {The Tikhonov identity $D(x_0;\lambda)=Q(x_0;\lambda)+\lambda\|w_{x_0,\lambda}\|_{\cH}^2$
shows that $D(x_0;\lambda)$ decomposes into the $L^2(\rho_X)$ variance proxy $Q(x_0;\lambda)$ and the RKHS regularization component $\lambda\|w_{x_0,\lambda}\|_{\cH}^2$.} Furthermore, the eigenvalues of $(T_K+\lambda I)^{-1}$ scale as $(\mu_j+\lambda)^{-1}$, which strictly increase as $\lambda \to 0$. Intuitively, as the regularization level $\lambda$ vanishes, the RKHS localization weight $w_{x_0,\lambda}$ sharpens to approximate a Dirac delta function at $x_0$, leading its $L^2(\rho_X)$-norm to diverge. Consequently, $D(x_0;\lambda)$ grows and the pointwise estimation problem becomes highly variable, whereas a larger $\lambda$ reduces this leverage and stabilizes the estimation.

\begin{proposition}\label{prop:Dx0-order-H}
Let $x_0$ be an interior point of $\cX$. As $\lambda\to0$,
\[
{
D(x_0;\lambda)\asymp Q(x_0;\lambda)\asymp D(x_0;\lambda)-Q(x_0;\lambda)\asymp D(\lambda)\asymp \lambda^{-d/(2m)}.
}
\]
\end{proposition}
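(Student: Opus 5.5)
We will work throughout in the spectral representation supplied by Assumption~\ref{ass:eigenfunctions_bound}. Write $p:=d/(2m)\in(0,1)$. Since $K_{x_0}=\sum_j\mu_j\phi_j(x_0)\phi_j$ and $T_K\phi_j=\mu_j\phi_j$, we have
\[
w_{x_0,\lambda}=\sum_j\frac{\mu_j\phi_j(x_0)}{\mu_j+\lambda}\phi_j,\qquad
D(x_0;\lambda)=\sum_j\frac{\mu_j\phi_j(x_0)^2}{\mu_j+\lambda},
\]
\[
Q(x_0;\lambda)=\sum_j\frac{\mu_j^2\phi_j(x_0)^2}{(\mu_j+\lambda)^2},\qquad
D(x_0;\lambda)-Q(x_0;\lambda)=\lambda\|w_{x_0,\lambda}\|_\cH^2=\sum_j\frac{\lambda\mu_j\phi_j(x_0)^2}{(\mu_j+\lambda)^2}.
\]
This also verifies the Tikhonov identity stated before the proposition. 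The global quantity satisfies $D(\lambda)=\sum_j\mu_j/(\mu_j+\lambda)\asymp\lambda^{-p}$ by the standard computation with $\mu_j\asymp j^{-1/p}$. One splits at $j^\ast\asymp\lambda^{-p}$: the head contributes $\asymp j^\ast$ and the tail contributes $\lambda^{-1}\sum_{j>j^\ast}j^{-1/p}\asymp\lambda^{-1}(j^\ast)^{1-1/p}\asymp\lambda^{-p}$, using $1/p>1$.

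\textbf{Upper bounds.} These are immediate. Since $\phi_j(x_0)^2\le B_\phi^2$, we get $D(x_0;\lambda)\le B_\phi^2D(\lambda)\lesssim\lambda^{-p}$. Both $Q$ and $D-Q$ are nonnegative and sum to $D$, so each is at most $D(x_0;\lambda)$.

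\textbf{Lower bound for $D(x_0;\lambda)$.} We will use the variational characterization
\[
D(x_0;\lambda)=\sup_{h\in\cH\setminus\{0\}}\frac{h(x_0)^2}{\|h\|_{L^2(\rho_X)}^2+\lambda\|h\|_\cH^2},
\]
which follows from Cauchy--Schwarz in the $(T_K+\lambda I)$-inner product, with equality at $h=w_{x_0,\lambda}$. We plug in a bump $h_r(x)=\psi((x-x_0)/r)$, where $\psi\in C_c^\infty$, $\psi(0)=1$, and $r=\lambda^{1/(2m)}$ is small enough that the support of $h_r$ lies in $\cX$; this is where interiority of $x_0$ is used. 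The density bounds give $\|h_r\|_{L^2(\rho_X)}^2\asymp r^d$. Norm equivalence of $\cH$ with $H^m$ gives $\|h_r\|_\cH^2\lesssim r^{d-2m}$. The ratio is therefore $\gtrsim r^{-d}=\lambda^{-p}$.

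\textbf{Lower bounds for $Q$ and $D-Q$.} This is the main obstacle. Termwise comparison of the series fails, because the $\phi_j(x_0)^2$ carry no individual lower bound. Instead we will use an interpolation (Gagliardo--Nirenberg/Sobolev-type) inequality on the convex compact domain $\cX$, transferred to $\cH$ via norm equivalence and the density bounds:
\[
|g(x_0)|\le C\|g\|_{L^2(\rho_X)}^{1-p}\|g\|_\cH^{p},\qquad g\in\cH.
\]
Apply this to $g=w_{x_0,\lambda}$, noting that $w_{x_0,\lambda}(x_0)=D(x_0;\lambda)$, $\|w_{x_0,\lambda}\|^2_{L^2(\rho_X)}=Q$, and $\|w_{x_0,\lambda}\|_\cH^2=(D-Q)/\lambda\le D/\lambda$.

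For $Q$, the inequality gives
\[
D\le C\,Q^{(1-p)/2}(D/\lambda)^{p/2},
\]
so $Q^{(1-p)/2}\gtrsim D^{1-p/2}\lambda^{p/2}\asymp\lambda^{-p(1-p)/2}$ once we insert $D\asymp\lambda^{-p}$. Hence $Q\gtrsim\lambda^{-p}$.

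For $D-Q$, use $Q\le D$ in the same inequality: $D\le C\,D^{(1-p)/2}\|w_{x_0,\lambda}\|_\cH^{p}$. This gives $\|w_{x_0,\lambda}\|_\cH^{2}\gtrsim D^{(1+p)/p}\asymp\lambda^{-(1+p)}$, and therefore $D-Q=\lambda\|w_{x_0,\lambda}\|_\cH^2\gtrsim\lambda^{-p}$.

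Combining the upper and lower bounds yields the full chain of equivalences. The only delicate point is justifying the interpolation inequality with constants uniform in $g$. For that I would cite the standard $H^m$ result on Lipschitz (here convex) domains with $m>d/2$, applied after extension to $\RR^d$.
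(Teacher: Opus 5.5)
Your proof is correct, and for the two hardest lower bounds it takes a different route from the paper.

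\textbf{What is shared.} The paper also uses the variational formula for $D(x_0;\lambda)$ and the bump-function lower bound (its Lemma~\ref{lem:sobolev_rate}).

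\textbf{Where the routes diverge.} The paper assigns the two tools the opposite roles from yours.
\begin{itemize}
\item In the paper, the interpolation inequality gives the \emph{upper} bound on $D(x_0;\lambda)$, via $\sup_{t>0}t^{d/m}/(1+\lambda t^2)$.
\item The paper's lower bounds on $Q$ and $D-Q$ come from a three-band spectral split. Using $B_\phi$ and $\mu_j\asymp j^{-2m/d}$, the bands $\mu_j<a\lambda$ and $\mu_j>b\lambda$ each carry at most $c_D\lambda^{-d/(2m)}/4$ for a fixed small $a$ and large $b$. Hence a constant fraction of $D(x_0;\lambda)$ lies on $a\lambda\le\mu_j\le b\lambda$, where both $\mu_j/(\mu_j+\lambda)$ and $\lambda/(\mu_j+\lambda)$ are bounded below.
\item You instead get the upper bound spectrally from $B_\phi$. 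You get the lower bounds by applying interpolation to $w_{x_0,\lambda}$ itself, using $w_{x_0,\lambda}(x_0)=D$, $\|w_{x_0,\lambda}\|_{L^2(\rho_X)}^2=Q$, and $\lambda\|w_{x_0,\lambda}\|_{\cH}^2=D-Q$.
\item Your exponent bookkeeping is right in both cases. Only the already-established lower bound $D(x_0;\lambda)\gtrsim\lambda^{-d/(2m)}$ enters, together with $\|w_{x_0,\lambda}\|_{\cH}^2\le D/\lambda$ or $Q\le D$ on the correct side of the inequality.
\end{itemize}

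\textbf{What each approach buys.}
\begin{itemize}
\item Your argument is shorter and needs no band constants.
\item It yields $\|w_{x_0,\lambda}\|_{\cH}^2\asymp D(x_0;\lambda)/\lambda$ (Lemma~\ref{lem:fHnorm-bds}) as an immediate byproduct.
\item If you also took the $D(x_0;\lambda)$ upper bound from interpolation, as the paper does, every pointwise claim would follow from norm equivalence with $H^m$ and the density bounds alone. The spectral clauses of Assumption~\ref{ass:eigenfunctions_bound} would then be needed only for $D(\lambda)\asymp\lambda^{-d/(2m)}$.
\item The paper's band argument avoids interpolation at that step. It also gives structural information: the leverage at $x_0$ concentrates on eigenvalues of order $\lambda$.
\end{itemize}
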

\noindent
{Proposition~\ref{prop:Dx0-order-H} shows that for RKHS $\cH=H^m(\cX)$ and interior $x_0$, the pointwise leverage $D(x_0;\lambda)$ and the quadratic weight size $Q(x_0;\lambda)$ have the same asymptotic order as the global effective dimension $D(\lambda)$.}

\smallskip
\smallskip
\noindent\textbf{Nonasymptotic Upper Bound for Estimation.}
We first establish a nonasymptotic upper bound for the PPCI estimator
$\wh\theta(x_0)$ in \eqref{eq:thetahat-x0_twofold}. The bound separates moment estimation error,
weight estimation error, and regularization bias, and shows how labeled data,
unlabeled data, and the predictor $f$ jointly determine the estimation rate.

\begin{theorem}
\label{thm:thetahat-theta0-bound-twofold}
Let $x_0$ be an interior point of $\cX$, and let $\theta_\lambda(x_0)$ denote the locally unique solution in $\Theta_0$ to $\eta_\lambda(x_0;\theta)=0$. Suppose Assumptions~\ref{ass:sampling-scheme}--\ref{ass:eigenfunctions_bound} hold. Assume $\lambda=\lambda_{n,N}\to0$ and that the regularization parameter satisfies
\begin{equation}
\label{eq:cond-upper-clean-twofold}
\lambda^{-1}=o\left(\left(\frac{N}{\log N}\right)^{\frac{m}{d}}\wedge\left(\frac{n\wedge N}{\log N}\right)^{\frac{2m}{d}}\right).
\end{equation}
Then for all sufficiently large $n,N$, with probability at least $1-56/N-4/n^2-8/N^2$,
we have
\begin{equation}
\label{eq:thetahat-theta0-bound-clean-twofold}
\begin{aligned}
|\wh\theta(x_0)-\theta_0(x_0)|
\le
&
\underbrace{\frac{8\sqrt{2}}{c_J}\sqrt{Q(x_0;\lambda)\log(n+N)}
\left(\frac{B_{Y-f}(\theta_\lambda(x_0))}{\sqrt n}+\frac{B_f(\theta_\lambda(x_0))}{\sqrt N}\right)}_{\text{moment estimation error}}
\\
&+
\underbrace{\frac{32\sqrt{2}}{c_J}\kappa\|\eta(\cdot;\theta_\lambda(x_0))\|_{\cH}\sqrt{\frac{D(x_0;\lambda)\log N}{N}}}_{\text{weight estimation error}}
\\
&+
\underbrace{\frac{1}{c_J}\|\eta(\cdot;\theta_0(x_0))\|_{\cH}\sqrt{\lambda\{D(x_0;\lambda)-Q(x_0;\lambda)\}}}_{\text{regularization bias}}.
\end{aligned}
\end{equation}
\end{theorem}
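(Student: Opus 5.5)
We will split the error as $|\wh\theta(x_0)-\theta_0(x_0)|\le|\wh\theta(x_0)-\theta_\lambda(x_0)|+|\theta_\lambda(x_0)-\theta_0(x_0)|$ and treat the two pieces separately.

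\emph{Regularization bias.} By \eqref{eqn:Tikhonov-regu_twofold}, $\eta_\lambda(\cdot;\theta)=(T_K+\lambda I)^{-1}T_K\eta(\cdot;\theta)$. Hence
\[
\eta_\lambda(x_0;\theta)-\eta(x_0;\theta)=-\lambda\langle K_{x_0},(T_K+\lambda I)^{-1}\eta(\cdot;\theta)\rangle_\cH=-\lambda\langle w_{x_0,\lambda},\eta(\cdot;\theta)\rangle_\cH .
\]
Cauchy--Schwarz and the Tikhonov identity $\lambda\|w_{x_0,\lambda}\|_\cH^2=D(x_0;\lambda)-Q(x_0;\lambda)$ then give
\[
|\eta_\lambda(x_0;\theta_0(x_0))|\le\|\eta(\cdot;\theta_0(x_0))\|_\cH\sqrt{\lambda\{D-Q\}},
\]
using $\eta(x_0;\theta_0(x_0))=0$. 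Next, $\eta_\lambda(x_0;\theta_\lambda(x_0))=0$, so the mean value theorem with Assumption~\ref{ass:J_lambda_bound} ($|J_\lambda|\ge c_J$ on $\Theta_0$) yields $|\theta_\lambda(x_0)-\theta_0(x_0)|\le c_J^{-1}|\eta_\lambda(x_0;\theta_0(x_0))|$. This is exactly the third term. Since this bound is $o(1)$ by Proposition~\ref{prop:Dx0-order-H}, $\theta_\lambda(x_0)$ lies in the $\delta_\theta$-neighbourhood of $\theta_0(x_0)$.

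\emph{Stochastic part.} We will show that $\sup_{\theta\in\Theta_0}|\wh\eta_\lambda(x_0;\theta)-\eta_\lambda(x_0;\theta)|$, and the analogous derivative deviation $|\partial_\theta\wh\eta_\lambda-J_\lambda|$, are small. Then $\wh\eta_\lambda$ has a root near $\theta_\lambda(x_0)$ (Lemma~3), and a mean value argument gives
\[
|\wh\theta(x_0)-\theta_\lambda(x_0)|\le\frac{2}{c_J}\,|\wh\eta_\lambda(x_0;\theta_\lambda(x_0))-\eta_\lambda(x_0;\theta_\lambda(x_0))|,
\]
with the factor $2$ coming from $|\partial_\theta\wh\eta_\lambda|\ge c_J/2$. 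We evaluate at the fixed point $\theta=\theta_\lambda(x_0)$ and decompose
\[
\wh\eta_\lambda-\eta_\lambda=(\text{I})+(\text{II}),
\]
where (I) replaces $\overline w$ and $\wh w^{(3-m)}$ by $w_{x_0,\lambda}$, and (II) collects the weight-estimation differences.

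\emph{Term (I).} This is a sum of independent centered averages: $\frac1n\sum w(X_i)\{\ell(Y_i)-\ell(f(X_i))\}$ and $\frac1N\sum w(\wt X_u)\ell(f(\wt X_u))$. Their variances are at most $B_{Y-f}^2Q/n$ and $B_f^2Q/N$. Bernstein's inequality gives the first term with the $\log(n+N)$ factor. For the sup-norm part of Bernstein we need $\|w_{x_0,\lambda}\|_\infty\lesssim D(x_0;\lambda)$, and that contribution, of order $D\log/n$, is dominated by the variance part precisely because \eqref{eq:cond-upper-clean-twofold} makes $D/n\ll\sqrt{D/n}$.

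\emph{Term (II).} This is the main obstacle. The empirical weight is
\[
\wh w^{(m)}=(\wh T^{(m)}+\lambda)^{-1}K_{x_0},\qquad \wh T^{(m)}=\frac{1}{|\cI_m|}\sum_{u\in\cI_m}K_{\wt X_u}\otimes K_{\wt X_u},
\]
and indeed the closed form in Algorithm~\ref{alg:ppci_twofold} equals this operator expression. We will write the dominant part of (II) as $\langle \wh w^{(m)}-w,\,T_K g_\theta\rangle_\cH$ with $g_\theta=\eta(\cdot;\theta)$, using that $\EE[w(X)\ell(Y;\theta)]=\langle w,T_K\eta\rangle_\cH$. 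Then
\[
\wh w-w=(\wh T+\lambda)^{-1}(T_K-\wh T)w .
\]
The standard concentration bound $\|(T_K+\lambda)^{-1/2}(T_K-\wh T)(T_K+\lambda)^{-1/2}\|_{\op}\le1/2$ holds with probability $1-O(1/N)$ when $N\gtrsim D(\lambda)\log N\cdot\sup_x\|(T_K+\lambda)^{-1/2}K_x\|^2$. Using Proposition~\ref{prop:Dx0-order-H} and $B_\phi$ to bound the pointwise leverage uniformly by $\lambda^{-d/(2m)}$, this requirement is exactly the first part of \eqref{eq:cond-upper-clean-twofold}. Given this bound, the inner product is controlled by
\[
2\|(T_K+\lambda)^{-1/2}(T_K-\wh T)w\|_\cH\cdot\|(T_K+\lambda)^{-1/2}T_Kg\|_\cH .
\]
The second factor is at most $\|g\|_\cH\kappa$, up to constants. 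The first is a Hilbert-valued average of variance at most $D(x_0;\lambda)$, via the vector Bernstein (Pinelis) inequality, giving the rate $\sqrt{D\log N/N}$.

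Two remaining pieces need care. First, the empirical-average counterparts of (II), i.e.\ the weight error evaluated on the labeled and out-of-fold points rather than in expectation, are controlled by conditioning on the fold producing $\wh w^{(m)}$ and applying Bernstein again. This produces higher-order terms, absorbed using the $(n\wedge N)^{2m/d}$ part of \eqref{eq:cond-upper-clean-twofold}. Second, the derivative deviation is handled by the same decomposition with $\partial_\theta\ell$ and Assumption~\ref{ass:deriv-bdd-jac}. Collecting the failure probabilities over the two folds and the labeled and derivative events by a union bound gives the stated $1-56/N-4/n^2-8/N^2$, and we then track constants to match $8\sqrt2$ and $32\sqrt2$.
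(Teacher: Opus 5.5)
\textbf{Where the proposal breaks.} Your bias bound, the oracle term (I), the conditional-Bernstein treatment of the empirical counterparts of (II), and the linearization all match the paper's decomposition $I_1+I_2^{\mathrm{split}}+I_3^{\mathrm{split}}$. The gap is in the mean weight-error term $\langle \wh w^{(m)}-w,T_K\eta\rangle_\cH$, the paper's $I_3^{\mathrm{split}}$, which produces the second term of the bound.

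Write $T_\lambda=T_K+\lambda I$, $A=T_\lambda^{-1/2}(\wh T-T_K)T_\lambda^{-1/2}$, $q_0=T_\lambda^{-1/2}K_{x_0}$ and $v=T_\lambda^{-1/2}T_K\eta$. Then this term equals $-\langle (I+A)^{-1}Aq_0,v\rangle_\cH$, and your first Cauchy--Schwarz factor is $\|Aq_0\|_\cH$.

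Your claim that this Hilbert-valued average has variance at most $D(x_0;\lambda)$ is false. Its summand $T_\lambda^{-1/2}\{K_Xw(X)-T_Kw\}$ has second moment $\EE[w(X)^2D(X;\lambda)]$, minus a centering term of at most $Q(x_0;\lambda)$. Since $D(X;\lambda)\asymp\lambda^{-d/(2m)}$ at interior points (Proposition~\ref{prop:Dx0-order-H}), this second moment is of order $D(\lambda)D(x_0;\lambda)$; compare Step~4 in the proof of Lemma~\ref{lem:delta_f_L2_bound}. Equivalently, $\|A\|_\op\|q_0\|_\cH$ gives nothing better.

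Your route therefore yields $\kappa\|\eta\|_\cH\sqrt{D(\lambda)D(x_0;\lambda)\log N/N}$. This exceeds the stated weight-estimation error by the diverging factor $\sqrt{D(\lambda)}\asymp\lambda^{-d/(4m)}$, so no constant tracking recovers the theorem. The loss comes from Cauchy--Schwarz itself: the random vector $Aq_0$ is spread over roughly $D(\lambda)$ spectral directions, while the fixed vector $v$ picks up only one.

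\textbf{The missing idea.} Do not apply Cauchy--Schwarz to the leading part. Since $(I+A)^{-1}A=A-(I+A)^{-1}A^2$,
\[ \langle \wh w^{(m)}-w,T_K\eta\rangle_\cH=-\langle Aq_0,v\rangle_\cH+\langle (I+A)^{-1}A^2q_0,v\rangle_\cH . \]
The linear part is a scalar i.i.d.\ average, $|\cI_m|^{-1}\sum_{j\in\cI_m}\{w(\wt X_j)b(\wt X_j)-\EE[w(X)b(X)]\}$, where $b=T_\lambda^{-1}T_K\eta$ and $\|b\|_\infty\le\kappa\|\eta\|_\cH$. Its variance is therefore at most $\kappa^2\|\eta\|_\cH^2Q(x_0;\lambda)$, and scalar Bernstein gives the claimed $\sqrt{D(x_0;\lambda)\log N/N}$ rate.

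The quadratic part is at most $2\|Aq_0\|_\cH\|Av\|_\cH\lesssim\kappa\|\eta\|_\cH\sqrt{D(x_0;\lambda)}\,D(\lambda)\log N/N$. This is negligible relative to the linear part exactly when $D(\lambda)\sqrt{\log N/N}\to0$.

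That is the real role of the first part of \eqref{eq:cond-upper-clean-twofold}. The stability event $\|A\|_\op\le1/2$ only needs $\lambda\gtrsim(\log N/N)^{2m/d}$; your stated sample-size requirement for it carries a spurious extra factor $D(\lambda)$.

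For the Jacobian, your Cauchy--Schwarz route is harmless, because there you only need $\sqrt{D(\lambda)D(x_0;\lambda)\log N/N}\to0$. That holds under the same condition, using $\|T_\lambda^{-1/2}\EE[g_\theta(X)K_X]\|_\cH\le\|g_\theta\|_{L^2(\rho_X)}$ since $g_\theta\notin\cH$.
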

\noindent
To state the resulting Sobolev rate, define
\[
\sN
:=
\sqrt{\log(n+N)}
\left(
\frac{8\sqrt{2}B_{Y-f}(\theta_\lambda(x_0))}{\sqrt n}
+
\frac{8\sqrt{2}B_f(\theta_\lambda(x_0))}{\sqrt N}
\right)
+
\frac{32\sqrt{2}\kappa
\|\eta(\cdot;\theta_\lambda(x_0))\|_{\cH}\sqrt{\log N}}
{\sqrt N}.
\]
Suppose that $N=n^{r_1}$ for some fixed $r_1\ge1$. Under the
standing uniform upper bounds, assume additionally that
$B_{Y-f}(\theta_\lambda(x_0))$ and
$\|\eta(\cdot;\theta_0(x_0))\|_{\cH}$ are bounded away from zero. Then
$\sN\asymp\sqrt{\log n/n}$. If, in addition, $r_1>d/m$, the formal
oracle balance
$\lambda\asymp
\{\sN/\|\eta(\cdot;\theta_0(x_0))\|_{\cH}\}^{2}$
satisfies~\eqref{eq:cond-upper-clean-twofold} in
Theorem~\ref{thm:thetahat-theta0-bound-twofold}. Consequently, there
exists a constant $C_0>0$ such that, with the probability stated in
that theorem,
\begin{equation}
\label{eq:upper_bound_sobolev_rate_twofold}
|\wh\theta(x_0)-\theta_0(x_0)|
\le
\frac{2\sqrt{C_0}}{c_J}
\|\eta(\cdot;\theta_0(x_0))\|_{\cH}^{\frac{d}{2m}}
\sN^{1-\frac{d}{2m}}.
\end{equation}

We make three remarks on Theorem \ref{thm:thetahat-theta0-bound-twofold}. First, the bound in \eqref{eq:thetahat-theta0-bound-clean-twofold} has a clear statistical interpretation.
The first term of \eqref{eq:thetahat-theta0-bound-clean-twofold} represents the error in estimating the localized moment \eqref{eq:eta-cf_twofold}, which depends on the residual size $B_{Y-f}$ and the prediction magnitude $B_f$. When the unlabeled sample size $N$ is large and the predictor $f$ is informative, the residual term $B_{Y-f}$ is small, resulting in substantial error reduction.
The second term of \eqref{eq:thetahat-theta0-bound-clean-twofold} is the error from estimating the population localization weight \eqref{eq:rkhs-localization-pop-x0_twofold} by its empirical version.
The third term of \eqref{eq:thetahat-theta0-bound-clean-twofold} is the regularization bias induced by the Tikhonov approximation in \eqref{eqn:Tikhonov-regu_twofold}.
The factor $\|\eta(\cdot;\theta_0(x_0))\|_{\cH}$ is the smoothness-complexity radius of the conditional moment: a larger value permits sharper variation around $x_0$ and therefore enlarges the worst-case regularization bias.

Second, in contrast to classical kernel ridge regression (KRR) that focuses on estimation of the regression function itself \citep{wahba1990spline,dai2023orthogonalized,ma2023optimally, dai2025nonparametric}, we target a moment-defined functional $\theta_0(x_0)$ in~\eqref{eq:cond-risk} at a fixed design point $x_0$. This difference induces two additional difficulties.
On one hand, the statistical properties of the estimator $\wh\theta(x_0)$ depend on the localized Jacobian
$J_\lambda(x_0;\theta)=\partial_\theta \eta_\lambda(x_0;\theta)$ in Assumption~\ref{ass:J_lambda_bound}.
The relevant conditional derivative function
$g_\theta(x)=\EE\{\partial_\theta\ell(Y;\theta)|X=x\}$ need not belong to $\cH$, even though $J_\lambda(x_0;\theta)=\EE\{w_{x_0,\lambda}(X)g_\theta(X)\}$ is scalar. As a result, standard uniform convergence results for KRR do not directly control its empirical approximation. To ensure that the empirical Jacobian consistently approximates its population counterpart, we establish the scaling condition
$\lambda^{-1}=o\left((N/\log N)^{m/d}\right)$ in Theorem~\ref{thm:thetahat-theta0-bound-twofold}.
On the other hand, the estimator $\wh\theta(x_0)$ couples the estimated localization weights $\wh w_{x_0,\lambda}$ with the estimating function $\ell(f(\cdot);\theta)$ evaluated on the same data in the empirical moment~\eqref{eq:eta-cf_twofold}, inducing a nontrivial dependence structure. We address this via cross-fitting: weights estimated on one fold are applied exclusively to the other. This decoupling ensures that the conditionally weighted moments behave as independent sums, allowing separate control of the moment estimation error and the weight estimation error in Theorem~\ref{thm:thetahat-theta0-bound-twofold}.

{
Finally, we analyze in Appendix~E a
no-split PPCI procedure that pools all labeled and unlabeled covariates
to construct the empirical RKHS localization weight
$\wh w_{x_0,\lambda}$. Although fully data-efficient, this approach
complicates the theoretical analysis. Since the weight is learned
from the pooled covariates, it is not independent of either the labeled
contributions
$\{\ell(Y_i;\theta)-\ell(f(X_i);\theta)\}_{i=1}^n$
or the unlabeled prediction contributions
$\{\ell(f(\wt X_u);\theta)\}_{u=1}^N$
in the empirical moment, creating shared-design dependence. We control
this dependence through an operator-resolvent expansion of
$\wh w_{x_0,\lambda}-w_{x_0,\lambda}$ and separate bounds for the
resulting interaction terms. The numerical comparison of the two-fold and no-split PPCI implementations in
Appendix~B.2 shows nearly identical statistical
performance, while the no-split implementation is substantially more
computationally demanding. Appendix~F.3
summarizes the technical novelties of this analysis and its differences
from classical KRR theory.
}

\smallskip
\smallskip
\noindent\textbf{Pointwise Minimax Lower Bound.}
We now establish an estimator-independent lower bound over a
regular Sobolev class.
Let $\cP$ denote the class of distributions
$(\rho_X,\rho_{Y|X},K)$ satisfying
Assumptions~\ref{ass:eigenfunctions_bound} and~\ref{ass:homo_res},
with the constants in these assumptions, including
$B_\eta\in(0,\infty)$, fixed uniformly over $P\in\cP$.
\begin{theorem}\label{thm:minimax-pointwise}
There exists a constant $c>0$, depending only on the fixed
constants defining $\cP$, such that for all sufficiently large $n$
and every $N\ge0$,
\begin{equation}\label{eq:minimax-pointwise-main}
\inf_{\wt\theta}\ \sup_{P\in\cP}
\EE_P\big[(\wt\theta-\theta_0(x_0))^2\big]
\ge c\,n^{-1+d/(2m)}.
\end{equation}
where the infimum is taken over all measurable estimators based on
$\{(X_i,Y_i)\}_{i=1}^n$ and
$\{\wt X_u\}_{u=1}^N$.
\end{theorem}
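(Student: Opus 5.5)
The plan is a two-point reduction (Le Cam) inside $\cP$. The design law $\rho_X$ and the kernel $K$ are the same under both hypotheses, so the unlabeled covariates $\{\wt X_u\}_{u=1}^N$ are ancillary. We use the conditional-mean example $\ell(Y;\theta)=Y-\theta$ and vary only $\EE[Y|X]$ in a shrinking neighbourhood of $x_0$. The bump is scaled in the $H^m$ norm rather than as a Hölder bump. This is what produces the exponent $-1+d/(2m)$ instead of the classical $-2m/(2m+d)$.

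\textbf{Step 1 (fixed design and kernel).} Fix one admissible pair $(\rho_X,K)$ satisfying Assumption~\ref{ass:eigenfunctions_bound}. Its density lies in $[\rho_0,\rho_1]$, $\cH$ is norm-equivalent to $H^m(\cX)$ with constants $c_1,c_2$, and the eigen-decay and bounded-eigenfunction conditions hold. I take the existence of such a pair as given, since the class $\cP$ is assumed nonempty with these fixed constants.

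\textbf{Step 2 (the two hypotheses).} Let $\psi\in C_c^\infty(\RR^d)$ satisfy $\psi(0)=1$ and $\mathrm{supp}\,\psi\subset\{\|u\|_2\le1\}$. For a bandwidth $h$ and an amplitude $a$, put
\[
g_h(x)=a\,\psi\big((x-x_0)/h\big).
\]
Since $x_0$ is interior, $\mathrm{supp}\, g_h\subset\cX$ for small $h$. Define
\begin{align*}
P_0:&\quad Y|X\sim N(\mu_0,\sigma^2),\\
P_1:&\quad Y|X\sim N\big(\mu_0+g_h(X),\sigma^2\big),
\end{align*}
with the same $\rho_X$ and $K$, a fixed constant $\mu_0$, and $\sigma^2\ge\underline\sigma^2$. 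Then $\Var(\ell(Y;\theta)|X)=\sigma^2$, so Assumption~\ref{ass:homo_res} holds.

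Membership in $\cP$ also needs $\|\eta(\cdot;\theta)\|_\cH\le B_\eta$ uniformly over $\theta$ in a compact $\Theta_0$. Here $\eta(x;\theta)=\mu_0+g_h(x)-\theta$. Constants lie in $H^m(\cX)$ because $\cX$ is bounded, so it suffices to choose $\mu_0$ and $\Theta_0$ small enough that the constant part uses at most half the budget. It then remains to require $\|g_h\|_{H^m}\le B_\eta/(2c_2)$.

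\textbf{Step 3 (scaling).} A change of variables gives
\[
\|g_h\|_{H^m}\lesssim a\,h^{d/2-m}\quad\text{for } h\le1,
\]
since the top-order derivatives dominate. Taking $a=c_a h^{m-d/2}$ with $c_a$ small enforces the norm budget. The separation of the targets is
\[
\theta_0^{(1)}(x_0)-\theta_0^{(0)}(x_0)=g_h(x_0)=a.
\]
The labeled samples are product laws, and the $\wt X$ laws are identical under $P_0$ and $P_1$. Hence the KL divergence between the full data laws is
\[
\KL=\frac{n}{2\sigma^2}\EE_{\rho_X}\big[g_h(X)^2\big]\le\frac{n\rho_1}{2\sigma^2}\,a^2h^d\|\psi\|_{L^2}^2\asymp n\,h^{2m}.
\]
Choose $h=c_h n^{-1/(2m)}$ so that $\KL\le1/2$. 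Pinsker's inequality then gives $\TV\le1/2$.

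\textbf{Step 4 (Le Cam).} Le Cam's two-point bound yields
\[
\inf_{\wt\theta}\max_{j=0,1}\EE_{P_j}\big[(\wt\theta-\theta_0^{(j)})^2\big]\ge\frac{a^2}{16}\,(1-\TV)\gtrsim h^{2m-d}\asymp n^{-1+d/(2m)}.
\]
This holds for every $N\ge0$, because the likelihood ratio does not depend on the unlabeled data.

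\textbf{Main obstacle.} The delicate step is verifying that both $P_0$ and $P_1$ genuinely lie in $\cP$ with constants independent of $n$. This requires two things. First, the $H^m$ norm of the rescaled bump must be bounded uniformly as $h\to0$; the choice $a\asymp h^{m-d/2}$ is designed for exactly this. Second, the norm-equivalence constants of $\cH$ with $H^m(\cX)$ must be fixed, which holds because $K$ does not change across the two hypotheses. A smaller check is that the lower-order Sobolev terms, which scale as $a\,h^{d/2-k}$ for $k<m$, are dominated when $h\le1$, and that $\theta_0^{(j)}(x_0)$ is the unique root with $\Theta_0$ containing both targets. If Gaussian outcomes were disallowed, I would replace them by bounded two-valued $Y$ with conditional variance bounded below; the same $\chi^2$/KL bound then holds up to constants.
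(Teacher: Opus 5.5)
Your argument is correct. It shares the paper's skeleton:
\begin{itemize}
\item a Gaussian conditional-mean submodel with common $\rho_X$ and $K$, so the unlabeled sample is ancillary and the bound holds for every $N$;
\item the KL identity $n\,\EE[g(X)^2]/(2\sigma^2)$;
\item Pinsker's inequality and Le Cam's two-point lemma.
\end{itemize}

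The genuine difference is the perturbation direction. You use an explicit compactly supported bump $a\psi((x-x_0)/h)$ with $a\asymp h^{m-d/2}$ and $h\asymp n^{-1/(2m)}$. You then check the $\cH$-radius, the $L^2$ size, and the separation by direct Sobolev scaling.

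The paper instead perturbs along the normalized localization weight $h_n=w_{x_0,\lambda_n}/\sqrt{D(x_0;\lambda_n)}$, with $\lambda_n=c_\lambda/n$ and amplitude $\delta_n=c_0/\sqrt n$. The Tikhonov energy identity $D(x_0;\lambda)=Q(x_0;\lambda)+\lambda\|w_{x_0,\lambda}\|_{\cH}^2$ gives $\EE\{h_n(X)^2\}\le1$ and $\|h_n\|_{\cH}\le\lambda_n^{-1/2}$ at once. The separation $2\delta_n\sqrt{D(x_0;\lambda_n)}$ is then sized via Proposition~\ref{prop:Dx0-order-H}.

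What your route buys:
\begin{itemize}
\item It is more elementary and self-contained. It needs only the density bounds and norm equivalence with $H^m(\cX)$, with no detour through $(T_K+\lambda I)^{-1}$.
\item The lower bound on $D(x_0;\lambda)$ that the paper invokes is itself proved in Lemma~\ref{lem:sobolev_rate} with exactly your bump. You are using the unwrapped version of the same extremal object.
\item You verify the uniform-over-$\Theta_0$ radius condition directly, whereas the paper checks only the centered moment at the target.
\end{itemize}

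What the paper's construction buys is interpretability. Its lower bound reads literally $\delta_n^2D(x_0;\lambda_n)$, the same pointwise leverage that governs the upper bound at the balance $\lambda\asymp1/n$. The argument therefore transfers verbatim to any kernel whose leverage order is known.

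One small addition: the paper fixes the predictor $f\equiv0$ under both hypotheses, so its bound also covers procedures that use $f$. If you regard $f$ as available to the estimator, you should state the same.
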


\noindent
If $N=n^{r_1}$ for some fixed $r_1\ge1$ satisfying $r_1>d/m$, then
the squared upper rate in
\eqref{eq:upper_bound_sobolev_rate_twofold} is
$n^{-1+d/(2m)}$ up to logarithmic factors and matches the minimax
lower rate in \eqref{eq:minimax-pointwise-main}. Thus, the
oracle-balanced PPCI estimator is pointwise minimax-rate optimal up
to logarithmic factors in this regime.

\subsection{Coverage Guarantees of Confidence Interval}\label{sec:asymptotic}
\noindent
We now establish asymptotic normality of the PPCI estimator
$\wh\theta(x_0)$ and the validity of the resulting confidence interval.

\begin{theorem}
\label{thm:thetahat-asymp-twofold}
Under the conditions of Theorem~\ref{thm:thetahat-theta0-bound-twofold}
and Assumptions~\ref{ass:B_bound_predictor}--\ref{ass:homo_res},
let
\begin{equation*}
    J_\lambda(x_0)=J_\lambda(x_0;\theta_\lambda(x_0)), \quad V(x_0):=\frac{1}{n}\sigma^2_{Y-f}(\theta_\lambda(x_0))+\frac{1}{N}\sigma^2_{f}(\theta_\lambda(x_0)),
\end{equation*}
 where $\sigma^2_{Y-f}(\theta)
    =\Var\big(w_{x_0,\lambda}(X)\{\ell(Y;\theta)-\ell(f(X);\theta)\}\big)$ and $ \sigma^2_f(\theta)=\Var\big(w_{x_0,\lambda}(X)\,\ell(f(X);\theta)\big)$.
Suppose
\begin{equation}
\label{eq:cond-clean-twofold}
n=o\left(\frac{N}{\log N}\right),
\quad
\lambda=o(n^{-1}),
\quad
\lambda^{-1}=o\left(\left(\frac{N}{\log N}\right)^{\frac{m}{d}}
\wedge\left(\frac{n}{(\log(n+N))^2}\right)^{\frac{2m}{d}}\right).
\end{equation}
Then, as $n,N\to\infty$,
\begin{equation*}
\frac{J_\lambda(x_0)\big(\wh\theta(x_0)-\theta_0(x_0)\big)}
{\sqrt{V(x_0)}}\rightsquigarrow N(0,1).
\end{equation*}
\end{theorem}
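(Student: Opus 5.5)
We prove the claim with a one-step (mean-value) linearization of the estimating equation~\eqref{eq:thetahat-x0_twofold} around $\theta_\lambda(x_0)$. The leading term is an oracle moment built from the population weight $w_{x_0,\lambda}$, and all remainders must be shown to be $o_p(\sqrt{V(x_0)})$.

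\textbf{Step 1: consistency and linearization.} By Theorem~\ref{thm:thetahat-theta0-bound-twofold}, $\wh\theta(x_0)\to\theta_0(x_0)$ in probability, and hence $\wh\theta(x_0)$ lies in the interior of $\Theta_0$ with probability tending to one. A mean-value expansion of $\wh\eta_\lambda(x_0;\cdot)$ gives
\[
0=\wh\eta_\lambda(x_0;\theta_\lambda(x_0))+\wh J_\lambda(x_0;\bar\theta)\{\wh\theta(x_0)-\theta_\lambda(x_0)\}
\]
for some intermediate $\bar\theta$.

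\textbf{Step 2: Jacobian consistency.} We show $\sup_{\theta\in\Theta_0}|\wh J_\lambda(x_0;\theta)-J_\lambda(x_0;\theta)|=o_p(1)$. This uses the sup-norm control of $\overline w-w$ and $\wh w^{(m)}-w$ that underlies the condition $\lambda^{-1}=o((N/\log N)^{m/d})$ in Theorem~\ref{thm:thetahat-theta0-bound-twofold}, together with Assumption~\ref{ass:deriv-bdd-jac} and equicontinuity in $\theta$. Assumption~\ref{ass:J_lambda_bound} then lets us replace $\wh J_\lambda(x_0;\bar\theta)$ by $J_\lambda(x_0)$ up to a factor $1+o_p(1)$.

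\textbf{Step 3: decomposition of the score.} Write $\wh\eta_\lambda(x_0;\theta_\lambda)=S_n+R_w$. The oracle term is
\[
S_n=\frac1n\sum_i w(X_i)\{\ell(Y_i)-\ell(f(X_i))\}+\frac1N\sum_u w(\wt X_u)\ell(f(\wt X_u)),
\]
which has mean $\eta_\lambda(x_0;\theta_\lambda)=0$ and variance exactly $V(x_0)$, since the labeled and unlabeled samples are independent. $R_w$ collects the weight-estimation error.

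\textbf{Step 4: CLT for $S_n$.} $S_n$ is a triangular array of independent summands. The summands are bounded by $\|w_{x_0,\lambda}\|_\infty\lesssim D(x_0;\lambda)^{1/2}\asymp\lambda^{-d/(4m)}$; via the reproducing property $|w(x)|\le\kappa\|w\|_\cH$ and $\lambda\|w\|_\cH^2\le D(x_0;\lambda)$, this gives the cruder bound $\|w\|_\infty\le\kappa\sqrt{D/\lambda}$, which we use if the sharper one is unavailable. For the variance, Assumption~\ref{ass:homo_res} gives $\sigma^2_{Y-f}\ge\underline\sigma^2Q(x_0;\lambda)\asymp\lambda^{-d/(2m)}$ by the conditional-variance decomposition, since $\ell(f(X))$ is $X$-measurable. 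A Lyapunov ratio bounded by $\|w\|_\infty/\sqrt{nQ}$ then tends to zero under the rate condition $\lambda^{-1}=o((n/\log^2)^{2m/d})$, so $S_n/\sqrt{V}\rightsquigarrow N(0,1)$.

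\textbf{Step 5: bias.} $|\theta_\lambda-\theta_0|\le c_J^{-1}|\eta_\lambda(x_0;\theta_0)-\eta(x_0;\theta_0)|\lesssim\sqrt{\lambda D(x_0;\lambda)}$, as in the bias term of~\eqref{eq:thetahat-theta0-bound-clean-twofold}. Dividing by $\sqrt V\gtrsim\sqrt{Q/n}$ gives a ratio of order $\sqrt{n\lambda}\to0$ by $\lambda=o(n^{-1})$.

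\textbf{Main obstacle: the weight-estimation remainder $R_w$.} We must show $R_w=o_p(\sqrt{Q/n})$, whereas Theorem~\ref{thm:thetahat-theta0-bound-twofold} only gives $O(\sqrt{D\log N/N})$. The plan is to condition on the fold used to build each weight, exploiting the cross-fitting. First, the mean-zero fluctuation parts, $(\overline w-w)$ applied to the labeled residuals and $(\wh w^{(3-m)}-w)$ applied to out-of-fold predictions, are conditionally centered sums. Their conditional variance is $\|\wh w-w\|_{L^2}^2/n$, which is $o(Q/n)$ once $\|\wh w-w\|_{L^2}=o_p(\sqrt Q)$. Second, the conditional-mean part is $\EE[(\wh w-w)(X)\eta(X;\theta_\lambda)]$, which by the reproducing/Tikhonov structure equals $\langle\wh w-w,T_K\eta\rangle$-type terms of order $\|\eta\|_\cH\sqrt{D\log N/N}$. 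Relative to $\sqrt{Q/n}\asymp\sqrt{D/n}$, this is $O(\sqrt{n\log N/N})=o(1)$ by $n=o(N/\log N)$. Combining Steps 2--5 with Slutsky's lemma yields the stated limit.
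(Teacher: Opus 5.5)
Your architecture matches the paper's proof. You linearize at $\theta_\lambda(x_0)$ and invoke Jacobian stability. You split $\wh\eta_\lambda(x_0;\theta_\lambda(x_0))$ into the oracle term ($I_1$), the conditional-mean weight error ($I_3^{\mathrm{split}}$, negligible since $\sqrt{n\log N/N}\to0$) and the cross-fitted fluctuation ($I_2^{\mathrm{split}}$). You then apply a Lindeberg/Lyapunov CLT to the oracle term and remove the bias via $\sqrt{n\lambda}\to0$.

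\textbf{The gap: the envelope in Step~4.} The step that fails as written is the sup-norm bound on the weight. Your ``sharp'' bound $\|w_{x_0,\lambda}\|_\infty\lesssim D(x_0;\lambda)^{1/2}$ is false. By the reproducing property, $w_{x_0,\lambda}(x_0)=\langle (T_K+\lambda I)^{-1}K_{x_0},K_{x_0}\rangle_{\cH}=D(x_0;\lambda)\asymp\lambda^{-d/(2m)}$, so the sup norm is at least of order $D$, not $D^{1/2}$. A warning sign: with your bound the Lyapunov ratio would be $n^{-1/2}$ and no rate condition would be needed at all.

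Your fallback $\kappa\sqrt{D(x_0;\lambda)/\lambda}$ is true but insufficient. It gives $\|w\|_\infty/\sqrt{nQ}\asymp(n\lambda)^{-1/2}$, which \emph{diverges} precisely because the theorem imposes undersmoothing, $\lambda=o(n^{-1})$. What you need is the intermediate bound
\[
|w_{x_0,\lambda}(x)|\le\sqrt{D(x_0;\lambda)\,D(x;\lambda)}\le B_\phi\sqrt{D(x_0;\lambda)\,D(\lambda)}\asymp\lambda^{-d/(2m)}.
\]
This follows from Cauchy--Schwarz in the $(T_K+\lambda I)^{-1}$ inner product plus the bounded-eigenfunction clause of Assumption~\ref{ass:eigenfunctions_bound}. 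Equivalently, use Sobolev interpolation between $\|w\|_{L^2(\rho_X)}\le\sqrt{D}$ and $\|w\|_{\cH}\le\sqrt{D/\lambda}$. It yields the labeled ratio $\sqrt{D(\lambda)/n}$, and this is exactly where the third condition, $\lambda^{-1}=o(n^{2m/d})$ up to logarithms, enters. The unlabeled summands give the smaller ratio $\sqrt{nD(\lambda)}/N$.

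\textbf{A smaller correction in Step~2.} A vanishing sup-norm bound on $\overline w-w$ is not what underlies $\lambda^{-1}=o((N/\log N)^{m/d})$. The available sup-norm bounds are $3B_\phi\sqrt{D(\lambda)D(x_0;\lambda)}$, or order $D(\lambda)^{3/2}\sqrt{\log N/N}$ via the resolvent expansion. Neither need vanish in the theorem's regime, for example with $N=n^2$ and $D(\lambda)\approx n^{0.99}$.

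That rate condition is instead exactly what makes the foldwise $L^2(\rho_X)$ error vanish:
\[
\|\wh w^{(m)}-w_{x_0,\lambda}\|_{L^2(\rho_X)}\lesssim\sqrt{D(\lambda)D(x_0;\lambda)\log N/N}\to0.
\]
So handle the Jacobian the way you handle $R_w$. Bound its mean part by $(G_{Y-f}+G_f)\|\Delta\overline w\|_{L^2(\rho_X)}$, and bound the fluctuations conditionally on the weight-training folds; this is the content of the paper's $J_1+J_2+J_3$ lemma.

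The same foldwise $L^2$ bound verifies your hypothesis $\|\wh w-w\|_{L^2}=o_p(\sqrt{Q})$ in the remainder step. With it, your conditional-Chebyshev treatment of the fluctuation is fine, and slightly leaner than the paper's Bernstein bound, which carries an extra sup-norm term.
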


\noindent
We make three remarks on Theorem~\ref{thm:thetahat-asymp-twofold}. First, compared to Theorem~\ref{thm:thetahat-theta0-bound-twofold}, the asymptotic normality in Theorem~\ref{thm:thetahat-asymp-twofold} characterizes the limiting distribution rather than providing a worst-case upper bound. In Theorem~\ref{thm:thetahat-theta0-bound-twofold}, we explicitly bound the moment estimation error in~\eqref{eq:thetahat-theta0-bound-clean-twofold}. Taking the unlabeled data as an example, the variance of the empirical moment is $\sigma^2_f(\theta)/N=\Var\big(w_{x_0,\lambda}(X)\ell(f(X);\theta)\big)/N$. To construct the first term in \eqref{eq:thetahat-theta0-bound-clean-twofold}, we use Assumption~\ref{ass:B_bound_predictor} to decouple the estimating function from the localization weights, yielding the relaxation $\Var\big(w_{x_0,\lambda}(X)\ell(f(X);\theta)\big)/N \le \EE\big[\{w_{x_0,\lambda}(X)\ell(f(X);\theta)\}^2\big]/N$ that is further bounded by $B_f^2(\theta)\EE\big[w_{x_0,\lambda}(X)^2\big]/N = B_f^2(\theta)Q(x_0;\lambda)/N$, which is upper bounded by $B_f^2(\theta)D(x_0;\lambda)/N$. Consequently, the bounding constants $B_{Y-f}$ and $B_f$ explicitly appear in the first term of~\eqref{eq:thetahat-theta0-bound-clean-twofold}. In contrast, Theorem~\ref{thm:thetahat-asymp-twofold} establishes a central limit theorem where the exact variance components $\sigma^2_{Y-f}$ and $\sigma^2_f$ directly govern the limiting behavior. While Assumption~\ref{ass:B_bound_predictor} is still inherited to ensure finite moments, the conservative bounds $B_{Y-f}$ and $B_f$ are replaced by the precise localized-moment variance $V(x_0)$. Furthermore, Theorem~\ref{thm:thetahat-asymp-twofold} imposes Assumption~\ref{ass:homo_res} to guarantee that $V(x_0)$ is non-degenerate.

Second, the condition $n=o(N/\log N)$ ensures that the error from
estimating the localization weights is asymptotically negligible. The
condition $\lambda=o(n^{-1})$ imposes undersmoothing so that the
regularization bias is asymptotically negligible relative to the
leading moment-estimation error in
Theorem~\ref{thm:thetahat-theta0-bound-twofold}. To clarify the
scaling, suppose $N=n^{r_1}$ and $\lambda=n^{r_2}$ with $r_1>0$ and
$r_2<0$. These conditions are satisfied if
$r_1>\max\{1,d/m\}$ and
$-(m/d)\min\{r_1,2\}<r_2<-1$. The first requirement includes both
the abundant-unlabeled condition and the nonemptiness condition when
$1<r_1<2$. When $r_1\ge2$, nonemptiness follows directly from
$2m>d$. Compared with the estimation-optimal regime in
Section~\ref{sec:estimation-rates}, asymptotic normality strengthens
$r_1\ge1,\ r_1>d/m$ to $r_1>\max\{1,d/m\}$ and replaces the balance
$\lambda\asymp\log n/n$ with undersmoothing $\lambda=o(n^{-1})$.
Theorem~\ref{thm:minimax-pointwise} is estimator-independent
and therefore imposes no tuning requirement.

Finally, in practice, we replace $J_\lambda(x_0)$ and $V(x_0)$ by their estimates $\wh J_\lambda(x_0)$ in~\eqref{eq:H-hat-new_twofold} and
$\wh V(x_0)$ in~\eqref{eq:V-hat-new_twofold}, both evaluated at $\wh\theta(x_0)$,
yielding the PPCI confidence interval $\cC(x_0)$ in~\eqref{eq:CI-new_twofold}.
\begin{corollary}\label{cor:CI-coverage-twofold}
Under the conditions of Theorem~\ref{thm:thetahat-asymp-twofold},
$\wh J_\lambda(x_0)/J_\lambda(x_0)\to_p1$ and
$\wh V(x_0)/V(x_0)\to_p1$.
Moreover, for any $\alpha\in(0,1)$,
$\PP\big(\theta_0(x_0)\in\cC(x_0)\big)\to1-\alpha$,
where $\cC(x_0)$ is the PPCI confidence interval constructed in~\eqref{eq:CI-new_twofold}.
\end{corollary}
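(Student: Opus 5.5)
The corollary has two parts: the ratio consistency claims and the coverage statement. Coverage follows from Theorem~\ref{thm:thetahat-asymp-twofold} by Slutsky's lemma once both ratios converge to one. Write
\[
\frac{\wh J_\lambda(x_0)(\wh\theta(x_0)-\theta_0(x_0))}{\sqrt{\wh V(x_0)}}
=\frac{\wh J_\lambda(x_0)}{J_\lambda(x_0)}\sqrt{\frac{V(x_0)}{\wh V(x_0)}}\cdot\frac{J_\lambda(x_0)(\wh\theta(x_0)-\theta_0(x_0))}{\sqrt{V(x_0)}}.
\]
The left side lies in $[-z_{1-\alpha/2},z_{1-\alpha/2}]$ exactly when $\theta_0(x_0)\in\cC(x_0)$. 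Since the limit law is continuous, $\PP(\theta_0(x_0)\in\cC(x_0))\to1-\alpha$. The work is therefore in the two ratio limits.

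\textbf{Jacobian.} Decompose $\wh J_\lambda(x_0)-J_\lambda(x_0)$ into three pieces:
\[
\{\partial_\theta\wh\eta_\lambda(x_0;\wh\theta)-\partial_\theta\wh\eta_\lambda(x_0;\theta_\lambda)\},\qquad
\{\partial_\theta\wh\eta_\lambda(x_0;\theta_\lambda)-\partial_\theta\eta_\lambda(x_0;\theta_\lambda)\},
\]
where $\theta_\lambda=\theta_\lambda(x_0)$ and $\partial_\theta\eta_\lambda(x_0;\theta_\lambda)=J_\lambda(x_0)$.

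\emph{Second piece.} The empirical derivative has the same cross-fitted structure as $\wh\eta_\lambda$, with $\ell$ replaced by $\partial_\theta\ell$. I would rerun the moment-estimation and weight-estimation bounds from the proof of Theorem~\ref{thm:thetahat-theta0-bound-twofold} with the bounds $G_{Y-f},G_f$ of Assumption~\ref{ass:deriv-bdd-jac} in place of $B_{Y-f},B_f$. This gives a stochastic term of order $\sqrt{Q(x_0;\lambda)\log(n+N)/n}$. By Proposition~\ref{prop:Dx0-order-H} this is $\lambda^{-d/(4m)}\sqrt{\log(n+N)/n}$, which is $o(1)$ under the condition $\lambda^{-1}=o((n/\log^2(n+N))^{2m/d})$.

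The weight-error term is the delicate one, because $g_\theta(x)=\EE[\partial_\theta\ell(Y;\theta)|X=x]$ need not lie in $\cH$. I would follow the remark after Theorem~\ref{thm:thetahat-theta0-bound-twofold}: bound $\|\wh w-w_{x_0,\lambda}\|_{L^2(\rho_X)}$ directly via the resolvent expansion and concentration of the empirical operator in $(T_K+\lambda)^{-1/2}$-norm. Cauchy--Schwarz against the bounded function $g_\theta$ then gives an error of order $\sqrt{D(x_0;\lambda)\,D(\lambda)\log N/N}\asymp\lambda^{-d/(2m)}\sqrt{\log N/N}$. This is $o(1)$ precisely under $\lambda^{-1}=o((N/\log N)^{m/d})$, which is exactly where that condition enters.

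\emph{First piece.} Theorem~\ref{thm:thetahat-asymp-twofold} gives $\wh\theta-\theta_\lambda=o_p(1)$, since the bias $\theta_\lambda-\theta_0$ is negligible under undersmoothing. Uniform continuity of $\partial_\theta\ell$ on compact $\Theta_0$, combined with $\|\overline w\|_{L^1(P_n)}=O_p(1)$ or a uniform-in-$\theta$ version of the second-piece bound over a finite grid, makes the first piece $o_p(1)$. Dividing by $|J_\lambda|\ge c_J$ from Assumption~\ref{ass:J_lambda_bound} gives the Jacobian ratio.

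\textbf{Variance, and the main obstacle.} Since $V$ is a positive combination of $\sigma^2_{Y-f}/n$ and $\sigma^2_f/N$, it suffices to show $\wh\sigma^2_{Y-f}/\sigma^2_{Y-f}\to_p1$ and $(\wh\sigma^2_f-\sigma^2_f)/N=o_p(\sigma^2_{Y-f}/n)$. Assumption~\ref{ass:homo_res} gives $\sigma^2_{Y-f}\ge\underline\sigma^2 Q(x_0;\lambda)$ up to the mean-square term. For the labeled piece I would replace $\overline w$ by $w_{x_0,\lambda}$ and $\wh\theta$ by $\theta_\lambda$. The perturbation in the second moment is controlled by $\|\overline w-w\|_{L^2}\|w\|_{L^2}$ plus the $L^2$ norm of the $\theta$-increment, both $o_p(Q)$ by the above. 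What remains is the centered i.i.d.\ average of $w^2\{\ell(Y)-\ell(f)\}^2$. Its relative variance is bounded by $\|w\|_\infty^2/(nQ)$, with $\|w\|_\infty\lesssim D(x_0;\lambda)\asymp Q$. This gives relative error $O_p(\sqrt{Q/n})$, which is $o_p(1)$ under the condition relating $\lambda$ and $n$. The unlabeled piece is easier, because $n=o(N/\log N)$ makes it negligible even under crude bounds. I expect the main obstacle to be the Jacobian weight-error step, i.e.\ controlling $\EE[(\wh w-w)g_\theta]$ for $g_\theta\notin\cH$ using only $L^2$ weight error.
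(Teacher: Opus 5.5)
Your proposal is correct up to one step that the paper also leaves implicit. On the variance side it follows the paper's route; it differs in how you get the Jacobian ratio.

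\textbf{How the paper argues.} The corollary's proof is a citation: Lemma~\ref{lem:variance-ratio}, applied fold by fold, supplies both ratio limits, and Slutsky gives coverage.

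Inside that lemma the variance argument is the one you sketch. It forms an oracle sample variance at $(w_{x_0,\lambda},\theta_\lambda(x_0))$, controlled via $\EE[\{w_{x_0,\lambda}(X)R\}^4]\lesssim Q(x_0;\lambda)^2D(\lambda)$. It then replaces the center using bounded derivatives, and the weight using the foldwise $L^2$ error plus conditional Bernstein.

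The Jacobian ratio is read off from the uniform bound $\sup_{\theta\in\Theta_0}|\wh J_\lambda(x_0;\theta)-J_\lambda(x_0;\theta)|\to0$ of Lemma~\ref{lem:Hhat-H-J123_split}. There the weight-error piece $J_3^{\mathrm{split}}$ is handled by the whitening expansion and a scalar Bernstein bound with $\|b_1\|_\infty\lesssim\|g_\theta\|_{L^2(\rho_X)}\sqrt{D(\lambda)}$.

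\textbf{What you do differently.} You evaluate at the fixed point $\theta_\lambda(x_0)$ and write the weight-error piece as $\EE[\Delta\overline w(X)g_\theta(X)]$. You bound it by $\|\Delta\overline w\|_{L^2(\rho_X)}\|g_\theta\|_{L^2(\rho_X)}$ using Lemma~\ref{lem:delta_f_L2_bound}. This gives the same rate $\sqrt{D(\lambda)D(x_0;\lambda)\log N/N}$, is shorter, and is uniform in $\theta$ for free. Your remark that the unlabeled term only needs the crude bound $O_p(Q/N)=o_p(Q/n)$ is also a small economy over proving consistency of both sample variances.

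\textbf{Two points to tighten.}

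First, the sample variance involves $\mathbb P_n(\overline w-w_{x_0,\lambda})^2$, not $\|\overline w-w_{x_0,\lambda}\|_{L^2(\rho_X)}^2$. The transfer is a conditional Markov or Bernstein step given the unlabeled folds. This is exactly where cross-fitting is used, and you should state it.

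Second, you control $\wh J_\lambda(x_0;\wh\theta)-\wh J_\lambda(x_0;\theta_\lambda)$ through $\mathbb P_n|\overline w|=O_p(1)$. That rests on $\|w_{x_0,\lambda}\|_{L^1(\rho_X)}=O(1)$ uniformly in $\lambda$, which nothing in the paper proves. The $L^2$ substitute only gives $\sqrt{Q}\,|\wh\theta-\theta_\lambda|=O_p(Q/\sqrt n)$, and this need not vanish when $d>m$, since $\lambda=o(n^{-1})$ forces $Q\gg n^{d/(2m)}$. Your finite-grid alternative still needs continuity of $\theta\mapsto J_\lambda(x_0;\theta)$ uniformly in $\lambda$.

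The paper's route carries the same hidden requirement, because it must pass from $J_\lambda(x_0;\wh\theta)$ to $J_\lambda(x_0;\theta_\lambda)$. So this is not a defect relative to the paper. In all the running examples $\partial_\theta\ell$ does not depend on $(x,y)$. There $\wh J_\lambda(x_0;\theta)$ is a continuous function of $\theta$ times the cross-fitted unlabeled weight average, which is $O_p(1)$, so the step is immediate.
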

\noindent
This corollary validates the asymptotic coverage of the PPCI confidence interval.

{

\subsection{Tuning Procedure}
\label{sec:tuning_twofold}
\noindent
We now describe the tuning rule used to select the bandwidth $h$ and regularization level $\lambda$. For each candidate $(h,\lambda)$, write $K_{h,x_0}=K_h(\cdot,x_0)$ and $w_{x_0,h,\lambda}=(T_h+\lambda I)^{-1}K_{h,x_0}$. Define $D_h(\lambda)=\operatorname{Tr}\{(T_h+\lambda I)^{-1}T_h\}$, $D_h(x_0;\lambda)=w_{x_0,h,\lambda}(x_0)$, and $Q_h(x_0;\lambda)=\|w_{x_0,h,\lambda}\|_{L^2(\rho_X)}^2$.

For fixed finite sets $\mathcal C_h,\mathcal C_\lambda\subset(0,\infty)$, the numerical studies use $\mathcal G_h=\{c\operatorname{median}_{u\in\mathcal I}\|\widetilde X_u-x_0\|_2:c\in\mathcal C_h\}$ and $\mathcal G_\lambda=\{c/[n\log\log(n+e^e)]:c\in\mathcal C_\lambda\}$. The finite sets are fixed before the formal analysis. Scaling $h$ by the median distance adapts the same dimensionless envelope to the local geometry at each target point, while $n\lambda=O([\log\log(n+e^e)]^{-1})=o(1)$ uniformly over $\mathcal G_\lambda$, as required by \eqref{eq:cond-clean-twofold}. The bandwidth grid is also a locality envelope: overly large multiples may lower variance while changing the local target through excessive smoothing. Appendix~B.3 examines its upper endpoint.

For each $(h,\lambda)\in\mathcal G_h\times\mathcal G_\lambda$, compute the empirical analogues $\widehat D_h(\lambda)$, $\widehat D_h(x_0;\lambda)$, and $\widehat Q_h(x_0;\lambda)$ on the unlabeled fold $\mathcal I$, where
$\widehat Q_h(x_0;\lambda)=|\mathcal I|^{-1}\sum_{j\in\mathcal I}\widehat w_{x_0,h,\lambda}^2(\widetilde X_j)$.
We then define
\[
\mathcal A_{\rm stab}
=
\left\{(h,\lambda)\in\mathcal G_h\times\mathcal G_\lambda:
\widehat D_h(\lambda)\sqrt{\frac{\log|\mathcal I|}{|\mathcal I|}}\le c_{\rm op},\quad
\widehat D_h(x_0;\lambda)\frac{\log(n+|\mathcal I|)}{n\wedge|\mathcal I|}\le c_{\rm loc}
\right\}.
\]
These restrictions are finite-sample empirical analogues of the
stability condition in \eqref{eq:cond-upper-clean-twofold}. Under $N=n^{r_1}$ with $r_1>\max\{1,d/m\}$ and $2m>d$,
every sequence in the shrinking grid
$\lambda=c/[n\log\log(n+e^e)]$
satisfies the rate conditions in \eqref{eq:cond-clean-twofold}. We further define
\[
\mathcal A_{\rm bias}
=
\left\{(h,\lambda)\in\mathcal G_h\times\mathcal G_\lambda:
\left(
\frac{n\lambda[\widehat D_h(x_0;\lambda)-\widehat Q_h(x_0;\lambda)]_+}
{\widehat Q_h(x_0;\lambda)}
\right)^{1/2}
\le c_{\rm bias}
\right\}.
\]
This restriction controls the regularization bias relative to
the moment-estimation error scale in~\eqref{eq:thetahat-theta0-bound-clean-twofold}. Indeed,
\eqref{eq:thetahat-theta0-bound-clean-twofold} and
Assumption~\ref{ass:homo_res} imply
\[
\frac{\text{regularization-bias upper bound}}
{\text{lower moment-estimation error scale}}
\le
\frac{\|\eta(\cdot;\theta)\|_{\cH}}{\underline\sigma}
\left\{\frac{n\lambda(D_h-Q_h)}{Q_h}\right\}^{1/2}.
\]
The first factor is the unknown smoothness-to-noise ratio, and the
remaining factor is the diagnostic estimated by
the expression defining $\mathcal A_{\rm bias}$. Hence,
$c_{\rm bias}$ absorbs the unknown ratio.
Among candidates in $\mathcal A_{\rm stab}\cap\mathcal A_{\rm bias}$, we choose
\[
(\widehat h,\widehat\lambda)
=
\arg\min_{(h,\lambda)\in\mathcal A_{\rm stab}\cap\mathcal A_{\rm bias}}
\frac{\widehat Q_h(x_0;\lambda)^{1/2}}
{\left||\mathcal I|^{-1}\sum_{j\in\mathcal I}
\widehat w_{x_0,h,\lambda}(\widetilde X_j)\right|}.
\]
The criterion is a proxy for the weight-dependent component of the
moment-estimation error scale of $\wh\theta(x_0)$
in~\eqref{eq:thetahat-theta0-bound-clean-twofold}. Its numerator,
$\widehat Q_h(x_0;\lambda)^{1/2}
=\|\widehat w_{x_0,h,\lambda}\|_{L^2(\widehat\rho_{\mathcal I})}$,
is the empirical $L^2$ norm of the localization weight, where
$\widehat\rho_{\mathcal I}
=|\mathcal I|^{-1}\sum_{j\in\mathcal I}\delta_{\widetilde X_j}$.
The denominator is the absolute empirical mean of the weight. When
the derivative of $\ell$ is constant, this mean is proportional to the
localized Jacobian. For example, if
$\partial_\theta\ell(y;\theta)\equiv a_\theta\ne0$, then
$J_{h,\lambda}(x_0;\theta)
=\EE[w_{x_0,h,\lambda}(X)\partial_\theta\ell(Y;\theta)]
=a_\theta\EE[w_{x_0,h,\lambda}(X)]$, so the denominator is a proxy for
$|J_{h,\lambda}(x_0;\theta)|/|a_\theta|$.
For a general estimating function, the criterion remains a proxy rather than an exact estimate of the moment-estimation
error scale.

If $\mathcal A_{\rm stab}\cap\mathcal A_{\rm bias}$ is empty, define
$S_{\rm op}$, $S_{\rm loc}$, and $S_{\rm bias}$ as the three left
sides divided by their respective thresholds. We choose a minimizer
of $\max\{S_{\rm op},S_{\rm loc},S_{\rm bias}\}$, breaking ties by the
same criterion. We call this least-violation selection the fallback
rule. It is a numerical safeguard: all main experiments have no fallback events,
while Appendix~B.3 reports its frequency in
the sensitivity study.

For a fold of size $M$, constructing the Gram matrix
for each bandwidth costs $O(M^2d)$ operations, and one
eigendecomposition costs $O(M^3)$. Our implementation reuses this
decomposition across the $\lambda$ grid, so evaluating the full
tuning grid costs
$O\{|\mathcal G_h|(M^2d+M^3+|\mathcal G_\lambda|M^2)\}$.
The storage cost is $O(M^2)$ per bandwidth, or
$O(|\mathcal G_h|M^2)$ when all decompositions are cached as in the
reported experiments; see Appendix~A.2.
}

\section{Practical Implications}
\label{sec:practical}
\noindent In this section, we clarify how the ML predictor and the unlabeled data contribute to PPCI, and discuss how to sample labeled and unlabeled data under a limited budget.
\subsection{The Role of ML Predictor}
\noindent
The localized-moment variance $V(x_0)$ in Theorem~\ref{thm:thetahat-asymp-twofold} characterizes how the ML predictor $f$ affects estimation uncertainty; the estimator's asymptotic variance is $V(x_0)/J_\lambda(x_0)^2$. For the classical localized estimator based only on labeled data, the corresponding asymptotic variance is $\sigma_Y^2(\theta_\lambda(x_0))/\{nJ_\lambda(x_0)^2\}$, with $\sigma_Y^2(\theta)=
\Var\big(w_{x_0,\lambda}(X)\ell(Y;\theta)\big)$ \citep{Tsybakov_2009}. In contrast, the PPCI localized-moment variance decomposes as
$V(x_0)=n^{-1}\sigma_{Y-f}^2(\theta_\lambda(x_0))+N^{-1}\sigma_f^2(\theta_\lambda(x_0))$,
where $\sigma_{Y-f}^2$ and $\sigma_f^2$ are defined in Theorem~\ref{thm:thetahat-asymp-twofold}. When $N\gg n$, the second term $\sigma_f^2/N$ is negligible, and the variance is dominated by the residual component $\sigma_{Y-f}^2$. If the predictor $f$ is informative in the sense that $\sigma_{Y-f}^2(\theta)\ll\sigma_Y^2(\theta)$, then PPCI enjoys substantial variance reduction relative to the classical localized estimator. Thus, efficiency gains arise precisely when the predictor meaningfully reduces residual variability.

\begin{figure}[t]
    \centering
    \includegraphics[width=\linewidth,height=0.22\textheight]{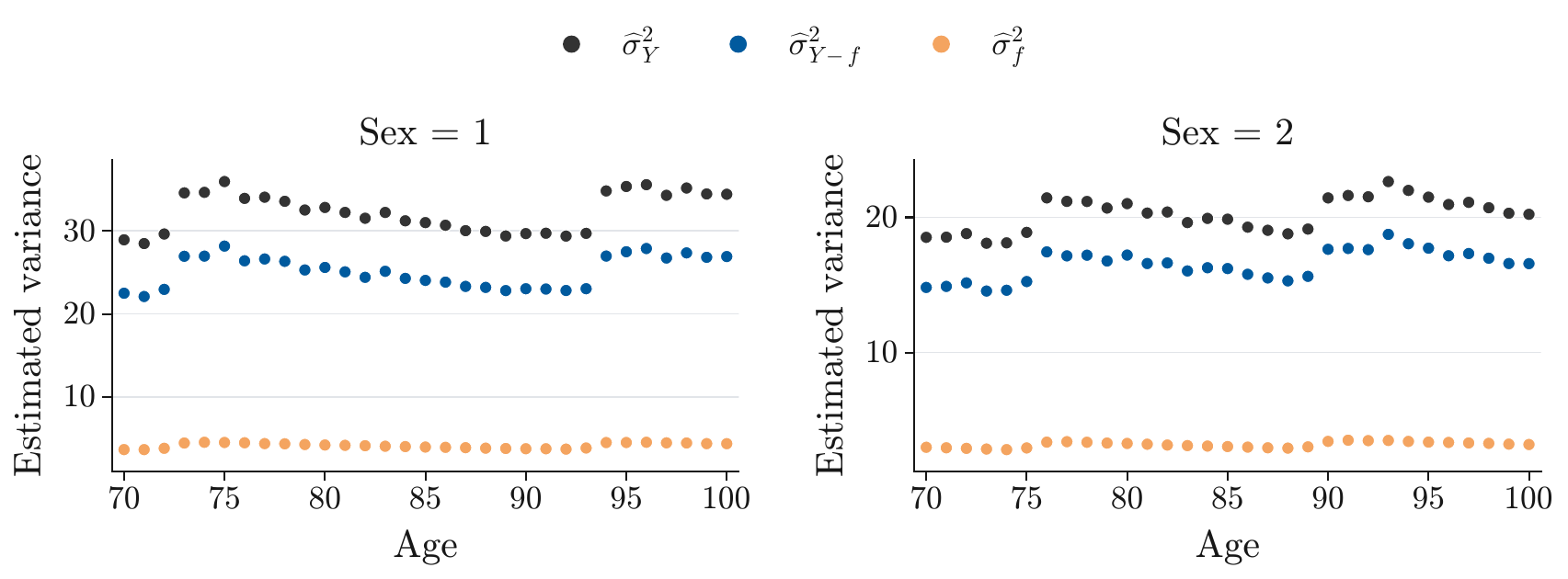}
    \caption{Empirical $\sigma^2_{Y-f}$, $\sigma^2_Y$, and $\sigma^2_f$ for the conditional mean at different (age, sex) target points in the census income data. Results are based on $1000$ replications.}
    \label{fig:income_sigma2}
    \Description{Two age-indexed panels, one per sex group, showing residual-contribution variance below outcome-contribution variance and a comparatively small prediction-contribution variance.}
\end{figure}

We illustrate this phenomenon using the census income data in Section~\ref{subsec:census}. Using XGBoost as the auxiliary predictor $f$, we estimate $\sigma_Y^2$, $\sigma_{Y-f}^2$, and $\sigma_f^2$. Figure~\ref{fig:income_sigma2} shows that $\sigma_{Y-f}^2$ is substantially smaller than $\sigma_Y^2$, especially for the male subgroup ($\text{sex}=1$, with $\text{sex}=2$ representing the female subgroup). In this regime, $\sigma_f^2/N$ is negligible, and PPCI achieves clear variance reduction.

The gain depends on the quality of the predictor $f$. If $f$ is
uninformative, the residual variance
$\sigma_{Y-f}^2(\theta)$ can be large. Motivated by the
variance-adaptive prediction weighting of PPI++
\citep{angelopoulos2023ppi++}, we introduce a point-specific
coefficient $\omega(x_0)\in[0,1]$ and modify the empirical moment
in~\eqref{eq:eta-cf_twofold} as
\begin{equation*}
\begin{aligned}
\widehat\eta_{\lambda,\omega}(x_0;\theta)
&= \frac{1}{n}\sum_{i=1}^n
\widehat w_{x_0,\lambda}(X_i)\ell(Y_i;\theta)
-\omega(x_0)\frac{1}{n}\sum_{i=1}^n
\widehat w_{x_0,\lambda}(X_i)\ell(f(X_i);\theta)\\
&\quad+\omega(x_0)\frac{1}{N}\sum_{u=1}^N
\widehat w_{x_0,\lambda}(\widetilde X_u)
\ell(f(\widetilde X_u);\theta).
\end{aligned}
\end{equation*}
{
When $\omega(x_0)=1$, this is the PPCI moment; when
$\omega(x_0)=0$, it reduces to the labeled-only localized moment.
The coefficient has a direct variance-minimization interpretation.
For a fixed $(x_0,\theta)$, let
$A_\theta=w_{x_0,\lambda}(X)\ell(Y;\theta)$,
$B_\theta=w_{x_0,\lambda}(X)\ell(f(X);\theta)$, and
$C_\theta=w_{x_0,\lambda}(\widetilde X)
\ell(f(\widetilde X);\theta)$. Minimizing
$n^{-1}\Var(A_\theta-\omega B_\theta)
+N^{-1}\omega^2\Var(C_\theta)$ over $\omega\in[0,1]$ gives
\begin{equation}
\omega^*(x_0;\theta)
=
\Pi_{[0,1]}\left\{
\frac{\Cov(A_\theta,B_\theta)}
{\Var(B_\theta)+(n/N)\Var(C_\theta)}
\right\},
\label{eq:omega-star-ppci-plus}
\end{equation}
where $\Pi_{[0,1]}$ denotes projection onto $[0,1]$.
For a fixed coefficient, the resulting localized-moment
variance is
$V_\omega(x_0)=n^{-1}\Var(A_\theta-\omega B_\theta)
+N^{-1}\omega^2\Var(C_\theta)$, so the parameter-scale standard
error is $\sqrt{V_\omega(x_0)}/|J_\lambda(x_0)|$.
We call the resulting point-specific procedure PPCI++. At the
population level, any fixed $\omega$ changes the variance but not the target
parameter. Implementation details are given in
Appendix~A.3.}

\subsection{The Role of Unlabeled Data}
\noindent
The unlabeled data $\{\wt X_u\}_{u=1}^N$ are incorporated to estimate the localization weights $\wh w_{x_0,\lambda}$. Recall from \eqref{eq:rkhs-localization-pop-x0_twofold} that the weight $w_{x_0,\lambda}$ relies on the population operator $T_K$. In Step 1 of Algorithm~\ref{alg:ppci_twofold}, we approximate $T_K$ using the empirical operator $\wh T_K$ constructed from unlabeled samples. As established in Theorem~\ref{thm:thetahat-theta0-bound-twofold}, the resulting weight estimation contributes an $O_p(\sqrt{D(x_0;\lambda)\log N/N})$ term to the total error. Thus, a large unlabeled sample enables a more accurate estimation of the underlying localization structure.
Moreover, unlabeled data directly reduce the localized-moment variance
$V(x_0) =n^{-1}\sigma_{Y-f}^2(\theta_\lambda(x_0))+N^{-1}\sigma_f^2(\theta_\lambda(x_0))$, and the term $\sigma_f^2(\theta)/N$ vanishes as $N$ increases.

\begin{figure}[t]
    \centering
    \includegraphics[width=\linewidth, height=0.22\textheight]{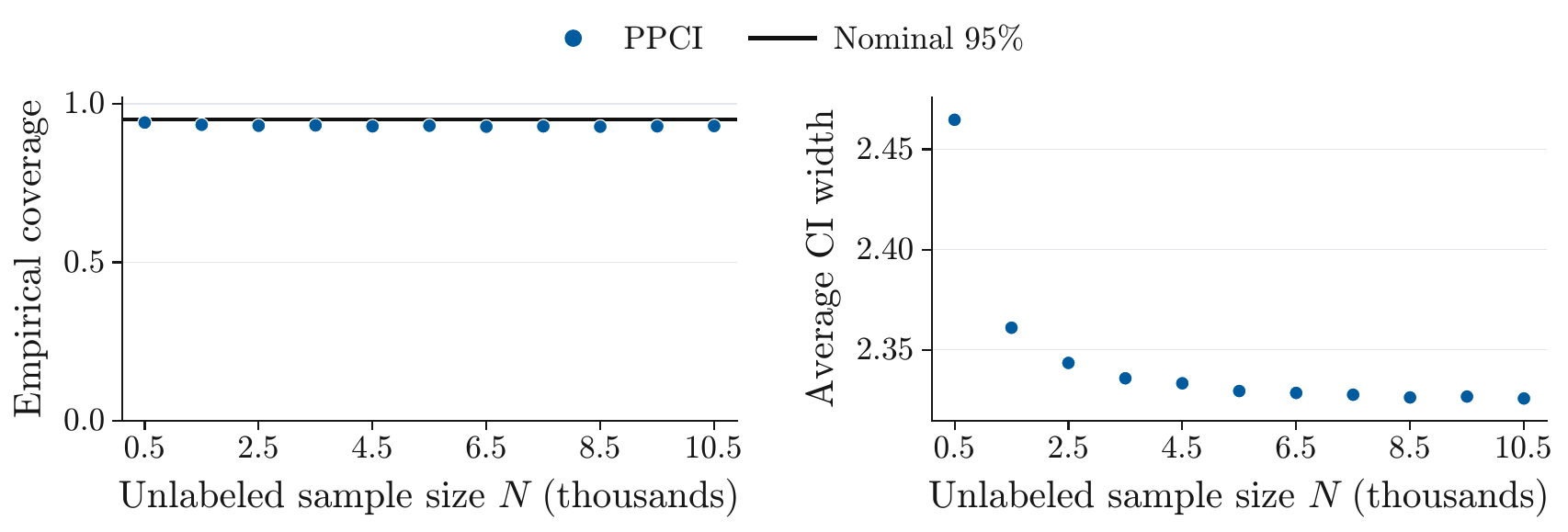}
    \caption{Empirical coverage rate and average width of nominal $95\%$ confidence intervals for the conditional mean at the target point ($\text{age}=70, \text{sex}=1$) in the census income data. The black horizontal line indicates the nominal $95\%$ coverage level. Results are based on $1000$ replications.}
    \label{fig:census-unlabeled-trend}
    \Description{As unlabeled sample size increases, empirical coverage remains near the 0.95 reference line while average interval width decreases.}
\end{figure}

We illustrate these effects by revisiting the census income data in Section~\ref{subsec:census}. Fixing a target point corresponding to age $70$ and sex $1$, we increase the unlabeled sample size $N$ from $500$ to $10,500$.
Figure~\ref{fig:census-unlabeled-trend} shows a clear pattern: as $N$ increases, the average confidence interval width decreases, while coverage remains stable. This result highlights the practical benefit of abundant unlabeled data in improving efficiency.

\subsection{Budget-Aware Sampling Design}
\label{subsec:budget_design}
\noindent
We study how to allocate labeled and unlabeled samples under a fixed budget to minimize the expected width of the PPCI confidence interval in \eqref{eq:CI-new_twofold}. Let $c_l>0$ and $c_u>0$ denote the per-sample costs of labeled and unlabeled data, and let $C>0$ be the total budget. Formally, the empirical budget-optimal allocation problem aims to solve
$\min_{n>0,N>0} \sqrt{\wh V(x_0)}/|\wh J_\lambda(x_0)|$
subject to $c_l n + c_u N \le C$.
While this empirical objective depends on the finite-sample estimates $\wh V(x_0)$ and $\wh J_\lambda(x_0)$, Corollary~\ref{cor:CI-coverage-twofold} shows that ${\sqrt{\wh V(x_0)}}/{|\wh J_\lambda(x_0)|} = {\sqrt{V(x_0)}}/{|J_\lambda(x_0)|}(1 + o_p(1))$. We focus on minimizing the deterministic leading-order term $\sqrt{V(x_0)}/|J_\lambda(x_0)|$. Since the population Jacobian $J_\lambda(x_0)$ depends solely on $(x_0, \lambda)$, minimizing the leading-order width is equivalent to minimizing the localized-moment variance $V(x_0)$ under the budget constraint.

\begin{proposition}\label{prop:budget_opt}
Fix $(x_0,\lambda)$ and suppose the two variance components are positive. Under the conditions of Theorem~\ref{thm:thetahat-asymp-twofold}, write $\sigma^2_{Y-f}=\sigma^2_{Y-f}(\theta_\lambda(x_0))$ and $\sigma^2_{f}=\sigma^2_{f}(\theta_\lambda(x_0))$. Then the budget-optimal allocation for $\min_{n>0,N>0} \sqrt{V(x_0)}/|J_\lambda(x_0)|$
subject to $c_l n + c_u N \le C$
is
$n^\star
=
\frac{C\,\sqrt{\sigma^2_{Y-f}/c_l}}
{\sqrt{\sigma^2_{Y-f}\,c_l}
+\sqrt{\sigma^2_{f}\,c_u}}$ and
$N^\star
=
\frac{C\,\sqrt{\sigma^2_{f}/c_u}}
{\sqrt{\sigma^2_{Y-f}\,c_l}
+\sqrt{\sigma^2_{f}\,c_u}}$,
with minimum variance $V_{\min}(C)
=
\frac{\big(\sqrt{\sigma^2_{Y-f}\,c_l}
+\sqrt{\sigma^2_{f}\,c_u}\big)^2}{C}.$
The boundary cases follow by continuity.
\end{proposition}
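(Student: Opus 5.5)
Since $J_\lambda(x_0)=J_\lambda(x_0;\theta_\lambda(x_0))$ depends only on $(x_0,\lambda)$, and both variance components $\sigma^2_{Y-f}$ and $\sigma^2_f$ are likewise fixed once $(x_0,\lambda)$ is fixed, minimizing $\sqrt{V(x_0)}/|J_\lambda(x_0)|$ is equivalent to minimizing $V(n,N)=\sigma^2_{Y-f}/n+\sigma^2_f/N$ over $n,N>0$ subject to $c_l n+c_u N\le C$. I will treat $n,N$ as continuous variables, as the statement implicitly does. Then $V$ is strictly decreasing in each argument, so any minimizer saturates the budget: if $c_l n+c_u N<C$, increasing $n$ strictly lowers $V$. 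Hence it suffices to minimize on the line $c_l n+c_u N=C$.

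The cleanest route is Cauchy--Schwarz, which yields both the minimum value and the minimizer without calculus. On the constraint line,
\[
C\,V=(c_l n+c_u N)\left(\frac{\sigma^2_{Y-f}}{n}+\frac{\sigma^2_f}{N}\right)\ge\left(\sqrt{c_l\sigma^2_{Y-f}}+\sqrt{c_u\sigma^2_f}\right)^2,
\]
applied to the vectors $(\sqrt{c_l n},\sqrt{c_u N})$ and $(\sqrt{\sigma^2_{Y-f}/n},\sqrt{\sigma^2_f/N})$. Equality holds exactly when these vectors are proportional, that is, when $c_l n/\sqrt{\sigma^2_{Y-f}/c_l\cdot c_l}$ takes a common value. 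Concretely, equality requires $n\sqrt{c_l}/\sigma_{Y-f}=N\sqrt{c_u}/\sigma_f$, i.e.\ $n\propto\sigma_{Y-f}/\sqrt{c_l}$ and $N\propto\sigma_f/\sqrt{c_u}$.

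Imposing the budget $c_l n+c_u N=C$ fixes the proportionality constant as $C/(\sigma_{Y-f}\sqrt{c_l}+\sigma_f\sqrt{c_u})$. This gives exactly the stated $n^\star,N^\star$, and substituting them back yields $V_{\min}(C)=(\sqrt{\sigma^2_{Y-f}c_l}+\sqrt{\sigma^2_f c_u})^2/C$. Uniqueness follows from the equality case of Cauchy--Schwarz together with strict convexity of $V$ on the positive orthant.

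For the boundary cases, if one variance component vanishes, the formula sends the corresponding sample size to $0$ and the bound $V\ge(\cdot)^2/C$ remains valid and is approached by a limit. I would remark that this follows from continuity of both sides in $(\sigma^2_{Y-f},\sigma^2_f)$.

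The only genuinely delicate point is justifying the reduction to $V(x_0)$ itself. The conditions of Theorem~\ref{thm:thetahat-asymp-twofold} make the asymptotic width $\sqrt{V}/|J|$, by Corollary~\ref{cor:CI-coverage-twofold}. I would state explicitly that the optimization is over the leading-order deterministic term, with $\lambda$ and $x_0$ held fixed, so that neither $J_\lambda$ nor the $\sigma^2$'s vary with $(n,N)$. Everything else is elementary.
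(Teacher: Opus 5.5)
Your proposal is correct and follows the same route as the paper's proof: the Cauchy--Schwarz bound $C\,V\ge(\sqrt{\sigma^2_{Y-f}c_l}+\sqrt{\sigma^2_f c_u})^2$, the equality condition $n\sqrt{c_l}/\sigma_{Y-f}=N\sqrt{c_u}/\sigma_f$, and substitution into the binding budget constraint. Your explicit argument that the budget must bind (the paper simply writes $V(x_0)\cdot C=(\cdot)(c_l n+c_u N)$ as if it does) and your uniqueness remark are small improvements. Before stating the correct proportionality condition, delete the garbled intermediate expression ``$c_l n/\sqrt{\sigma^2_{Y-f}/c_l\cdot c_l}$''.
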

\noindent
In Proposition~\ref{prop:budget_opt}, the population variances $\sigma^2_{Y-f}$ and $\sigma^2_f$ depend on $(x_0,\lambda)$ and are unknown in practice.
To implement the optimal allocation, we adopt a two-stage design. First, we use a small fraction of the total budget to collect preliminary labeled and unlabeled samples and compute the empirical variances $\wh\sigma^2_{Y-f}$ and $\wh\sigma^2_f$ of the localized contributions.
We then plug these estimates into Proposition~\ref{prop:budget_opt} to determine the budget-optimal allocation $(n^\star, N^\star)$ under the remaining budget. Finally, we collect a new dataset of sizes $(n^\star, N^\star)$ and apply the full PPCI procedure in Algorithm~\ref{alg:ppci_twofold}. The preliminary observations are not reused in the confirmatory PPCI analysis.

\section{Experiments}\label{sec:experiments}
\noindent
We evaluate PPCI through simulation studies and two real-data
applications. Throughout, we implement the two-fold procedure in
Algorithm~\ref{alg:ppci_twofold} and select $(h,\lambda)$ independently
within each unlabeled training fold using
Section~\ref{sec:tuning_twofold}. After the candidate grids and
constants are fixed, every foldwise choice uses unlabeled covariates
only; no outcome, prediction, or coverage value enters tuning.
Appendix~B.1 reports the
dataset-specific grids and constants. For each target point, we
generate 50 unlabeled samples and reuse each pair of
fold-specific weights across 20 labeled draws, yielding
$50\times20=1000$ confidence intervals and 50 independent unlabeled
clusters.
Appendix~B.3 examines sensitivity to the
bias threshold, bandwidth envelope, and regularization envelope and
reports fallback frequencies under alternative configurations. The configurations used for the main results have no fallback events.

\subsection{Simulation Studies}\label{sec:simu-demo}
\noindent
We generate $X=(X_1,X_2,X_3)^\top\sim\mathrm{Unif}([0,1]^3)$ and $Y=m(X)+\varepsilon$, where $m(X)=\sin(2\pi X_1)+0.6\cos(2\pi X_2)+0.4\sin(2\pi X_3)+0.25\sin\{2\pi(X_1+X_2)\}$ and $\varepsilon\sim N(0,1)$ is independent of $X$. To represent a predictor trained before the inference sample is observed, we use the deterministic family $f_q(X)=qm(X)+(1-q)s(X)$, where $s(X)=\sqrt{1.5825}\sin(6\pi X_1)$; in particular, $f_q$ never uses the realized response. The main experiment takes $q=0.9$.
For each $x_0$, we use $n=400$ labeled and $N=10{,}000$ unlabeled observations with the Mat\'ern-$5/2$ kernel $K_h(x,z)=\{1+\sqrt{5}\|x-z\|_2/h+5\|x-z\|_2^2/(3h^2)\}\exp\{-\sqrt{5}\|x-z\|_2/h\}$. Appendix~B.1 reports the tuning configuration, and Appendix~B.3 gives the corresponding covariate-only sensitivity analysis.

\begin{figure}[t]
    \centering
    \includegraphics[width=\linewidth, height=0.18\textheight]{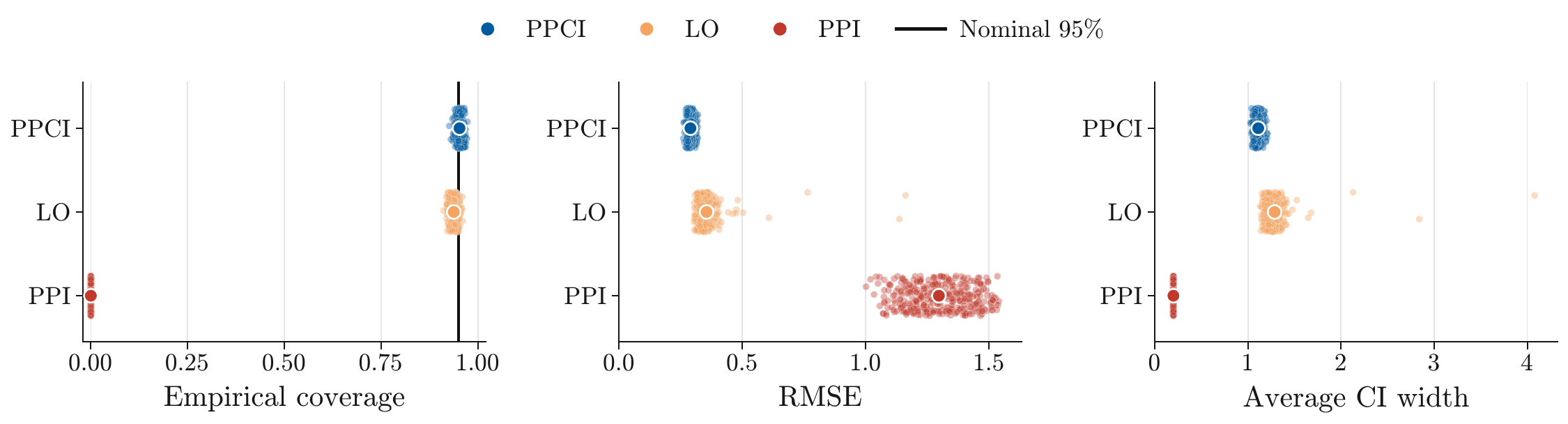}
    \caption{Distributions of empirical coverage rates, RMSE, and average widths of nominal $95\%$ confidence intervals for the conditional mean across 343 simulated target points. Results are based on $1000$ replications per target point.}
    \label{fig:simu-conditional-mean}
    \Description{Three distribution panels compare PPCI, labeled-only, and population-level PPI across target points; PPCI is concentrated near nominal coverage and has lower RMSE and width than labeled-only inference.}
\end{figure}

We compare PPCI with a labeled-only (LO) estimator using the same averaged RKHS localization weights but no prediction terms, and with classical PPI \citep{angelopoulos2023prediction}, which targets a marginal rather than conditional functional. The target points form a $7^3=343$ grid on $[0.70,0.85]^3$. As shown in Figure~\ref{fig:simu-conditional-mean}, PPCI has mean coverage $0.952$, RMSE $0.290$, and average width $1.111$, compared with $0.938$, $0.355$, and $1.287$ for LO. Thus, PPCI retains near-nominal coverage while reducing RMSE and width by $18.3\%$ and $13.7\%$, respectively. The population-level PPI comparator is out of scope for these pointwise conditional targets and consequently does not attain conditional coverage.

\subsection{Adaptation to Predictor Quality}\label{subsec:predictor-quality}
\noindent
We evaluate the PPCI++ procedure defined in Section~\ref{sec:practical} as predictor quality deteriorates. Using the same data-generating process and tuning rule, we take $n=500$, $N=10{,}000$, eight fixed target points, and the common $50\times20$ design, with $q\in\{0,0.5,0.9\}$. PPCI++ uses the normal Wald interval throughout.

\begin{figure}[t]
    \centering
    \includegraphics[width=0.85\linewidth, height=0.35\textheight]{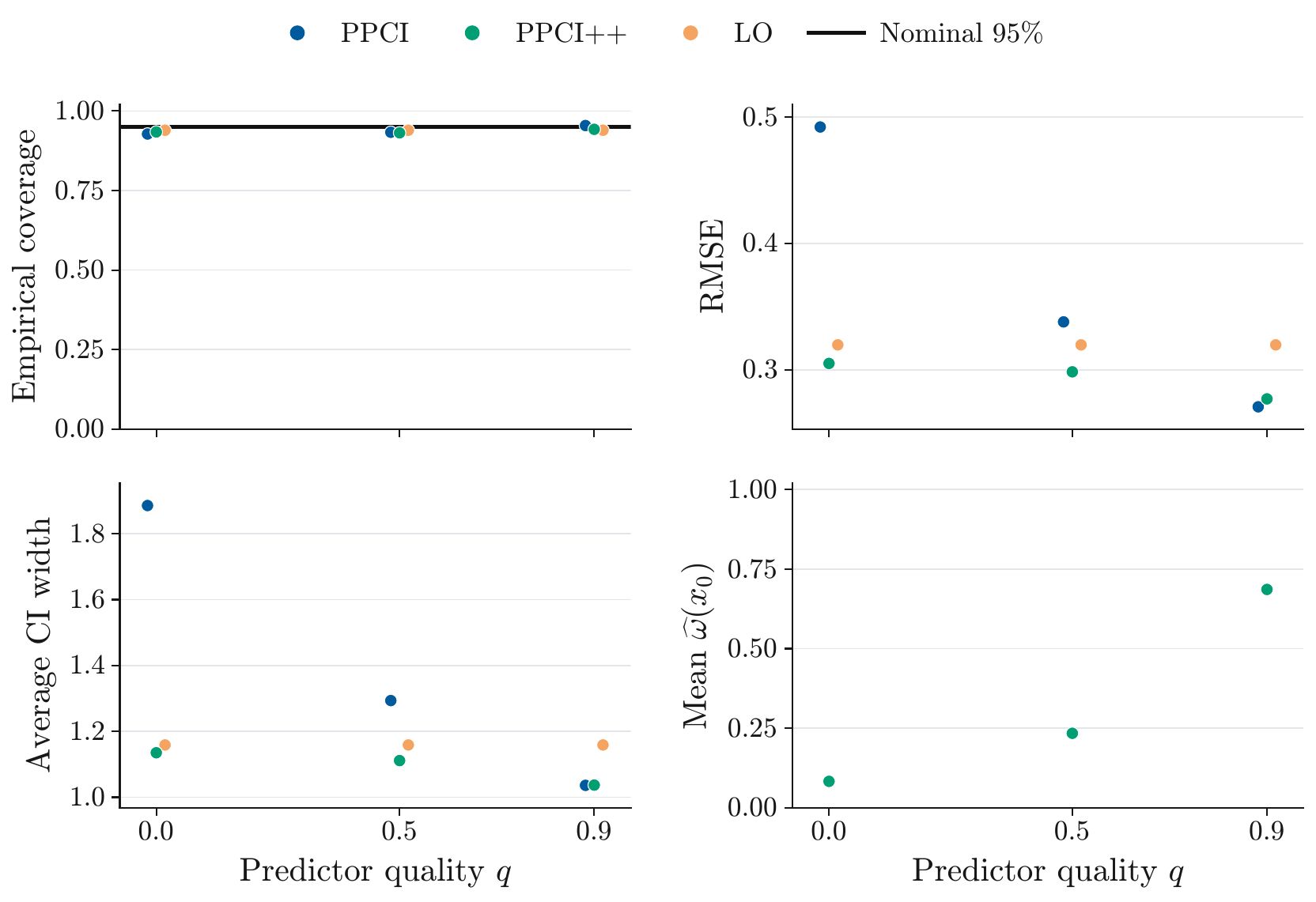}
    \caption{Empirical coverage, RMSE, average confidence-interval width, and mean $\widehat\omega(x_0)$ as the auxiliary predictor quality $q$ varies. Results are averaged over eight target points with $1000$ replications per point.}
    \label{fig:predictor-quality}
    \Description{Four panels trace performance as predictor quality declines; the adaptive PPCI++ coefficient decreases toward zero and PPCI++ approaches labeled-only RMSE and width.}
\end{figure}

Figure~\ref{fig:predictor-quality} shows that adaptation costs little when the predictor is strong: at $q=0.9$, PPCI and PPCI++ have RMSE $0.271$ and $0.277$ and nearly identical widths $1.036$ and $1.037$. As quality falls, the mean coefficient decreases from $0.686$ at $q=0.9$ to $0.234$ at $q=0.5$ and $0.083$ at $q=0$. At $q=0$, PPCI++ has coverage $0.934$, RMSE $0.305$, and width $1.135$, close to LO $(0.940,0.320,1.159)$ and substantially better than full-weight PPCI $(0.928,0.492,1.886)$. Thus, PPCI++ automatically retreats toward LO when predictions are uninformative while retaining the gain from informative predictions.

\subsection{Census Income Data}
\label{subsec:census}
\noindent
We study conditional mean income as a function of age and sex (Male$=1$, Female$=2$) using the California census data and precomputed XGBoost predictions from \citet{angelopoulos2023prediction}. The data contain $380{,}091$ observations. We rescale income by $10{,}000$ and analyze the two sex groups separately; their sample sizes are $187{,}471$ and $192{,}620$. For evaluation only, we construct smoothed finite-population reference targets by fitting a Nadaraya--Watson smoother with a Mat\'ern-$5/2$ kernel to each complete sex-specific sample. After standardization, $\theta_0(x_0)=\sum_iK_{\bar h}(X_i,x_0)Y_i/\sum_iK_{\bar h}(X_i,x_0)$, where $\bar h=\operatorname{median}_i\|X_i-x_0\|_2$.
This full-data smoother defines the finite-sample evaluation estimand; another reference smoother would define a slightly different target. Appendix~B.4 assesses this smoother choice using alternative bandwidths and a Gaussian kernel.

\begin{figure}[t]
    \centering
    \includegraphics[width=\linewidth, height=0.25\textheight]{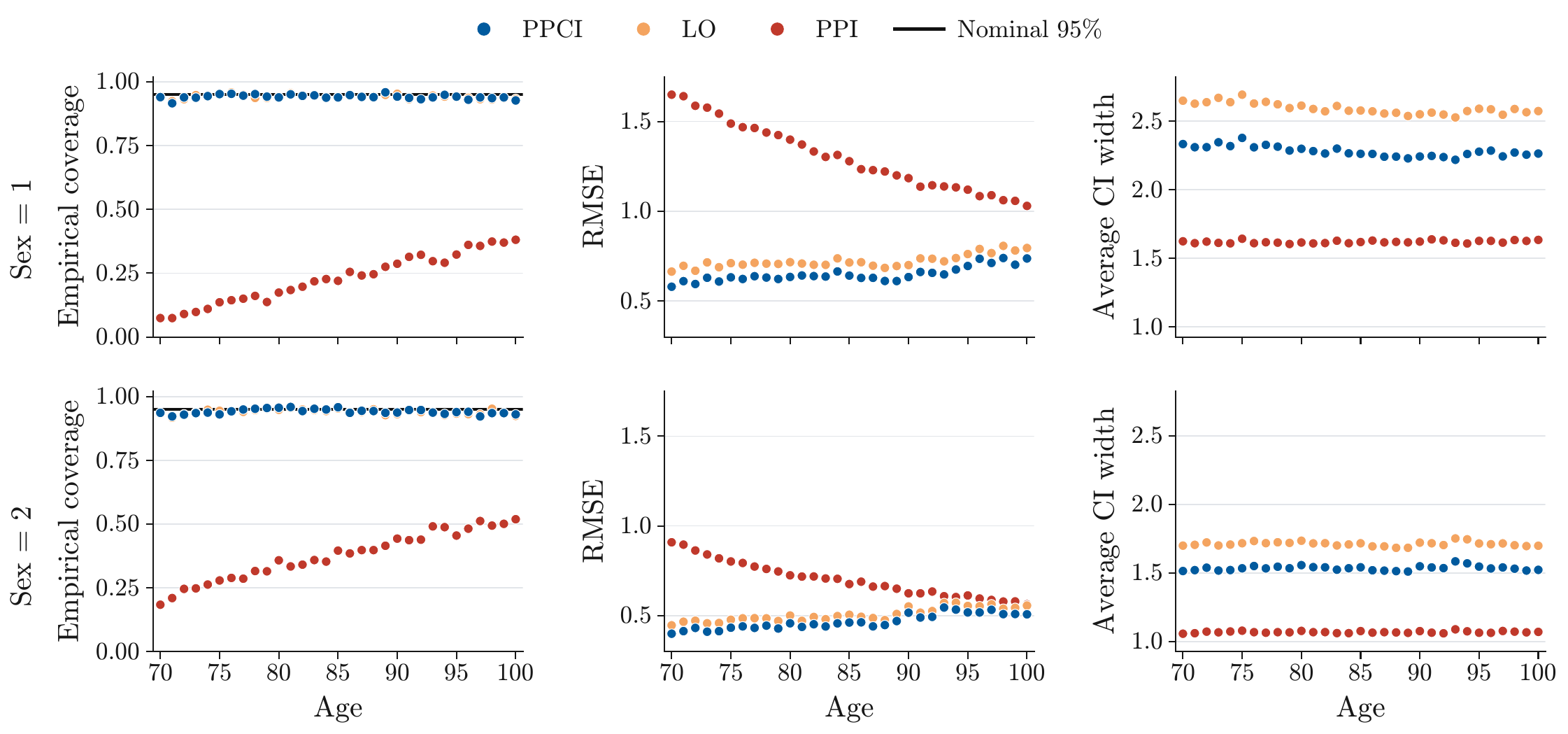}
    \caption{Empirical coverage rates, RMSE, and average widths of nominal $95\%$ confidence intervals for the conditional mean at different (age, sex) target points in the census income data. Results are based on $1000$ replications per target point.}
    \label{fig:census-conditional-mean}
    \Description{Two rows for the sex groups show coverage, RMSE, and interval width across ages 70--100; PPCI tracks labeled-only coverage with generally smaller error and width.}
\end{figure}

We consider every integer age from 70 to 100. At each $x_0=(\text{age},\text{sex})$, exact duplicates are excluded from the sampling pool before drawing $n=300$ labeled and $N=10{,}000$ unlabeled observations. The sex-specific covariate-only configurations are listed in Appendix~B.1. Although age is integer-valued, the kernel treats it as an ordered
covariate and borrows information from neighboring ages.

Figure~\ref{fig:census-conditional-mean} compares PPCI, LO, and population-level PPI. For sex$=1$, PPCI and LO both have mean coverage $0.941$, while PPCI reduces RMSE from $0.722$ to $0.648$ and width from $2.593$ to $2.280$. For sex$=2$, PPCI has coverage $0.941$, RMSE $0.466$, and width $1.536$, compared with $0.939$, $0.505$, and $1.712$ for LO. PPI is nearly insensitive to age because it targets the population mean, whereas PPCI and LO adapt to the local target.

\subsection{BlogFeedback Data}
\label{subsec:blog}
\noindent
We study conditional mean inference in the UCI BlogFeedback data \citep{buza2013feedback}. Each observation is a blog post, and the response is $Y=\log(1+\text{comments within 24 hours})$. The standardized covariate vector contains 280 temporal, activity, and content features. After preprocessing, $48{,}439$ observations remain. We train a LightGBM regressor \citep{ke2017lightgbm} on a training split of $33{,}872$ observations that is disjoint from the $14{,}517$-observation PPCI inference pool; all 50 target points and their outcomes are excluded from predictor fitting. The predictions on the held-out inference pool are denoted by $f(x)$.

Since most covariate profiles occur only once, we define the evaluation target using a Nadaraya--Watson smoother on the complete held-out inference population from which the repeated labeled and unlabeled samples are drawn. We select 50 target points and exclude exact duplicates of each $x_0$ from its sampling pool. At every point, $n=300$, $N=10{,}000$, and the Mat\'ern-$5/2$ kernel is used. Appendix~B.1 gives the BlogFeedback tuning grid and screening constants.
As in the income experiment, the smoother defines the finite-population reference target and is not used by the estimator.

\begin{figure}[t]
    \centering
    \includegraphics[width=\linewidth, height=0.15\textheight]{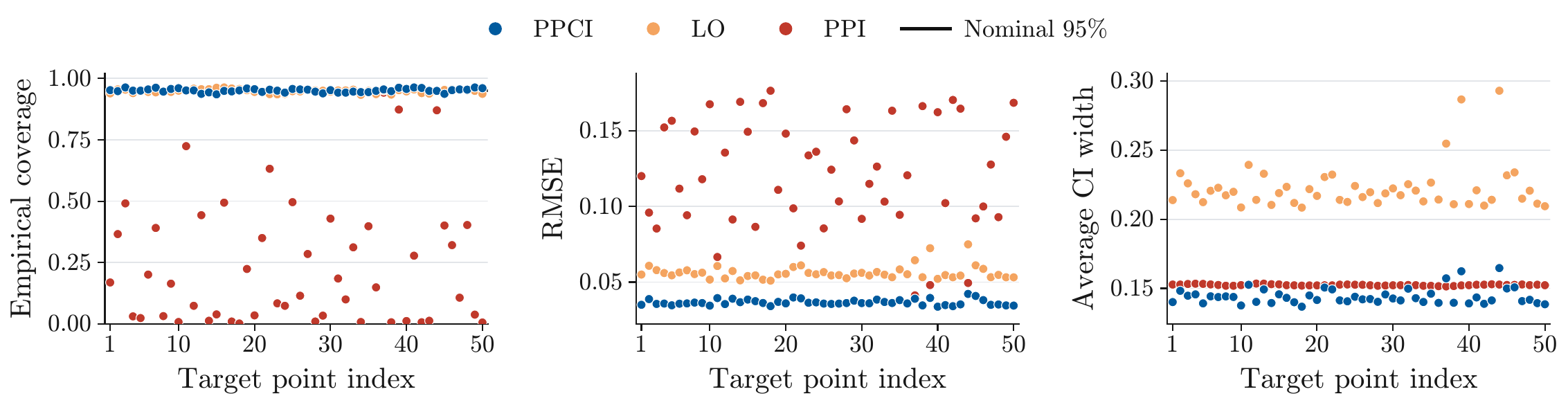}
    \caption{Empirical coverage rates, RMSE, and average widths of nominal $95\%$ confidence intervals for the conditional mean at 50 target points in the BlogFeedback data. Results are based on $1000$ replications per target point.}
    \label{fig:blog-conditional-mean}
    \Description{Coverage, RMSE, and interval-width scatterplots across 50 BlogFeedback target points show PPCI near nominal coverage with lower error and narrower intervals than labeled-only inference.}
\end{figure}

Figure~\ref{fig:blog-conditional-mean} shows that PPCI attains mean coverage $0.951$, RMSE $0.0366$, and width $0.1442$, compared with $0.948$, $0.0563$, and $0.2227$ for LO. PPCI reduces both RMSE and width by about $35\%$ while preserving calibration. Classical PPI again targets a global mean and is included only as an out-of-scope population-level reference for the conditional estimands.

\section{Conclusion}
\label{sec:conclusion}
\noindent
In this paper, we propose a prediction-powered framework for conditional inference that integrates labeled data, unlabeled covariates, and ML predictions. The procedure is explicitly tailored to a fixed target point $x_0$, with localization and uncertainty quantification designed at the pointwise level. We establish theoretical guarantees, including nonasymptotic error bounds, minimax-optimal rates in the abundant-unlabeled regime, and asymptotic normality, and show empirically that the resulting confidence intervals achieve near-nominal coverage while delivering substantial efficiency gains by leveraging ML predictors and unlabeled covariates.
We discuss a few directions for further investigation. PPCI++ provides a practical point-specific adaptation to predictor quality; a complete uniform theory for its estimated coefficient, including multivariate targets and richer local adaptation, remains open. Our current inference theory also relies on smooth estimating functions. Extending it to nonsmooth targets without smoothing approximations is another useful direction.

%\section*{Acknowledgement}
%\noindent
%The author thanks the Editor, Associate Editor, and anonymous reviewers for their invaluable feedback, which helped improve the paper.

%\noindent
%\emph{Conflict of interest}: We have no conflict of interest to disclose.

%\section*{Funding}
%\noindent
\section*{Data Availability Statement}
\noindent
The BlogFeedback data are available in the UCI Machine Learning Repository at \url{https://doi.org/10.24432/C58S3F}. The Census income data are available in the Prediction-Powered Inference (\texttt{ppi\_py}) repository at \url{https://github.com/aangelopoulos/ppi_py}, and can be accessed with the dataset identifier \texttt{census\_income}. The simulated data and code are available at \url{https://github.com/YS-stat/PPCI}.

%\section*{Supplementary material}
%\noindent
%Supplementary material is available online at \emph{Journal of the Royal Statistical Society: Series B}.

\baselineskip=22pt
\bibliographystyle{apa}
\bibliography{PPCI}

\newpage
\appendix
\noindent
The appendices are organized as follows:
\begin{itemize}
    \item Appendix~\ref{app:lcurve_details} gives the finite-dimensional implementation and computational cost of the empirical RKHS localization weights, together with the PPCI++ implementation.
    \item Appendix~\ref{app:additional-experiments} reports tuning
configurations and sensitivity analyses, together with the numerical
comparison of the two-fold and no-split implementations.
    \item Appendix~\ref{app:theory-extensions} discusses the
regularity assumptions, treats expected shortfall with an estimated
quantile, and develops the spectral source extension.
    \item Appendix~\ref{sec:comparisons} compares RKHS localization with alternative approaches, including Nadaraya--Watson localization.
    \item Appendix~\ref{app:ppci-no-split} develops the no-split PPCI
construction and its theoretical guarantees.
    \item Appendix~\ref{app:proofs-of-PPCI-combine} provides the detailed proofs and summarizes the main technical distinctions from classical KRR analysis.
\end{itemize}
Some symbols are reused across appendices to avoid unnecessary
proliferation; unless otherwise stated, each definition is local to
the section or subsection in which it appears.

\section{Implementation Details for Empirical Localization Weights}\label{app:lcurve_details}
\noindent
This appendix records the finite-dimensional identities used to compute the empirical RKHS localization weights in Algorithm~\ref{alg:ppci_twofold}, together with the implementation details for PPCI++. The weight identities are independent of any particular data-driven rule for selecting $\lambda$.

\subsection{Finite-Dimensional Formula}\label{app:weight_finite_formula}
\noindent
We illustrate the formula for the first fold $\cI_1$; the second fold is identical. Let $M_1=|\cI_1|$, let
$\Sigma_t^{(1)}\in\RR^{M_1\times M_1}$ be the Gram matrix on $\{\wt X_u:u\in\cI_1\}$, and let $k_{0,t}^{(1)}$ be the vector with entries $K(\wt X_u,x_0)$. Define
$\xi_\lambda^{(1)}=(\Sigma_t^{(1)}+M_1\lambda I_{M_1})^{-1}k_{0,t}^{(1)}$.
The empirical weight used in the PPCI estimating equation is $\wh w^{(1)}_{x_0,\lambda}=(\wh T_K^{(1)}+\lambda I)^{-1}K_{x_0}$,
where $\wh T_K^{(1)}=M_1^{-1}\sum_{u\in\cI_1}K_{\wt X_u}\otimes K_{\wt X_u}$. By the Woodbury identity,
\begin{equation}\label{eq:woodbury-weight-app}
\wh w^{(1)}_{x_0,\lambda}(x)
=
\frac{K(x,x_0)-K(x,\wt X_{\cI_1})^\top\xi_\lambda^{(1)}}{\lambda}.
\end{equation}
This differs from the noiseless KRR fitted function $K(x,\wt X_{\cI_1})^\top\xi_\lambda^{(1)}$, which solves a different normal equation with right-hand side $\wh T_K^{(1)}K_{x_0}$. For points in the same fold, \eqref{eq:woodbury-weight-app} reduces to $\wh w^{(1)}_{x_0,\lambda}(\wt X_u)=M_1(\xi_\lambda^{(1)})_u$ for $u\in\cI_1$,
since $(\Sigma_t^{(1)}+M_1\lambda I)\xi_\lambda^{(1)}=k_{0,t}^{(1)}$. In cross-fitting, however, the weight learned from $\cI_1$ is evaluated on $\cI_2$ and on the labeled covariates, so the full formula \eqref{eq:woodbury-weight-app} must be used.

\subsection{Computational Cost}\label{app:computation}
\noindent
For a fixed $\lambda$, factorizing
$\Sigma_t^{(1)}+M_1\lambda I$ and computing
$\xi_\lambda^{(1)}$ from scratch costs $O(M_1^3)$ after constructing
the Gram matrix. When a grid of candidate regularization values is
evaluated, our implementation instead computes the eigendecomposition
$\Sigma_t^{(1)}=UDU^\top$, where
$D=\mathrm{diag}(d_1,\ldots,d_{M_1})$, and precomputes
$\widetilde k=U^\top k_{0,t}^{(1)}$. Then
$\xi_\lambda^{(1)}
=U(D+M_1\lambda I)^{-1}\widetilde k$.
The eigendecomposition is performed only once for each bandwidth, and
each additional $\lambda$ requires $O(M_1^2)$ operations to
reconstruct $\xi_\lambda^{(1)}$. Thus, evaluating the complete
$\lambda$ grid costs
$O(M_1^3+|\mathcal G_\lambda|M_1^2)$ rather than
$O(|\mathcal G_\lambda|M_1^3)$. Scalar diagnostics alone could
alternatively be evaluated in $O(M_1)$ operations per $\lambda$ if
expressed entirely in the spectral basis.

Once $\xi_\lambda^{(1)}$ is available, the vector of weights on any
query set $Q$ is computed from
\[
\wh w^{(1)}_{x_0,\lambda}(Q)
=
\frac{K(Q,x_0)-K(Q,\wt X_{\cI_1})\xi_\lambda^{(1)}}{\lambda}.
\]
\subsection{PPCI++ Implementation}
\label{app:ppci-plus-implementation}
\noindent
PPCI++ follows Algorithm~\ref{alg:ppci_twofold} for
construction and tuning of the RKHS localization weights and for the
final Wald interval; only the prediction coefficient and its induced
contributions change. Split the labeled sample into folds
$\mathcal L_k$ of sizes $n_k$, with five folds in our experiments, and
let $\pi_k=n_k/n$. For each fold, use the complementary labeled
observations to compute a labeled-only pilot
$\widetilde\theta^{(-k)}(x_0)$ and the empirical covariance and
variances in~\eqref{eq:omega-star-ppci-plus}, evaluated at this pilot.
We set
\[
\widehat\omega_k(x_0)
=
\Pi_{[0,1]}
\left\{
\frac{\widehat{\Cov}_{-k}(A,B)}
{\widehat{\Var}_{-k}(B)
 +(n/N)\widehat{\Var}_U(C)+\rho_k}
\right\},
\]
where the training-fold scale-adaptive ridge is
\[
\rho_k
=
\varepsilon_\omega
\max\left\{
\widehat{\Var}_{-k}(A),
\widehat{\Var}_{-k}(B),
\frac{n}{N}\widehat{\Var}_U(C),
s_{\min}
\right\}.
\]
Here $\varepsilon_\omega=10^{-6}$ and $s_{\min}=10^{-12}$ are the
fixed numerical safeguards used in our experiments. The coefficient
$\widehat\omega_k(x_0)$ is applied only to observations in
$\mathcal L_k$, and the coefficient for the unlabeled contribution is
$\overline\omega=\sum_k\pi_k\widehat\omega_k$.

For the conditional mean, let $\widehat w_L$ denote the averaged
labeled-data weight from Algorithm~\ref{alg:ppci_twofold} and
$\widehat w_U$ the corresponding out-of-fold unlabeled weight. The
PPCI++ estimate is
\[
\widehat\theta_{++}(x_0)
=
\frac{
\overline\omega\,\mathbb P_N(\widehat w_Uf)
+\sum_k\pi_k\mathbb P_{\mathcal L_k}
 [\widehat w_L\{Y-\widehat\omega_k f\}]
}{
\overline\omega\,\mathbb P_N(\widehat w_U)
+\sum_k\pi_k(1-\widehat\omega_k)
 \mathbb P_{\mathcal L_k}(\widehat w_L)
}.
\]
The empirical Jacobian is the negative of this denominator; denote it
by $\widehat J_{++}(x_0)$. With
$Z_{ki}=\widehat w_L(X_i)
[\{Y_i-\widehat\theta_{++}\}
-\widehat\omega_k\{f(X_i)-\widehat\theta_{++}\}]$ and
$U_u=\widehat w_U(\widetilde X_u)
\{f(\widetilde X_u)-\widehat\theta_{++}\}$, the variance estimator is
\[
\widehat V_{++}(x_0)
=
\sum_k\frac{\pi_k^2}{n_k}
\widehat{\Var}_{\mathcal L_k}(Z_{ki})
+
\frac{\overline\omega^2}{N}
\widehat{\Var}_U(U_u).
\]
The corresponding standard error is
$\widehat V_{++}(x_0)^{1/2}/|\widehat J_{++}(x_0)|$.
For a general smooth estimating function, the same construction
replaces the mean residuals by the corresponding foldwise
contributions and differentiates the resulting moment for the
Jacobian. Thus, $\widehat\omega_k=0$ gives the labeled-only moment,
whereas $\widehat\omega_k=1$ gives the PPCI moment, without repeating
the unchanged steps of Algorithm~\ref{alg:ppci_twofold}.

\section{Experimental Settings and Additional Analyses}\label{app:additional-experiments}
\noindent
This section reports the experiment-specific configurations, the two-fold versus no-split comparison, and tuning and reference-smoother sensitivity analyses.

\subsection{Sample Sizes and Tuning Configurations}
\label{app:experiment-configurations}
\noindent
Table~\ref{tab:experiment-config} collects the sample sizes and all dataset-specific constants used with the common tuning rule in Section~\ref{sec:tuning_twofold}. The notation $L:[a,b]_{\log}$ denotes $L$ logarithmically spaced factors in $\mathcal C_\lambda$.

\begin{table}[t]
\centering
\caption{Sample sizes and covariate-only tuning configurations. The final column reports $(c_{\rm op},c_{\rm loc},c_{\rm bias})$.}
\label{tab:experiment-config}
\footnotesize
\setlength{\tabcolsep}{4pt}
\renewcommand{\arraystretch}{1.08}
\begin{tabular}{@{}lcccc@{}}
\toprule
Experiment & $(n,N,\#x_0)$ & $\mathcal C_h$ & $\mathcal C_\lambda$ & Screen constants \\
\midrule
Simulation & $(400,10{,}000,343)$ & $\{0.8,1.0,1.15,1.2\}$ & $35:[0.02,60]_{\log}$ & $(12,4,0.10)$ \\
Predictor quality & $(500,10{,}000,8)$ & $\{0.8,1.0,1.15,1.2\}$ & $35:[0.02,60]_{\log}$ & $(12,4,0.10)$ \\
Income (sex$=1$) & $(300,10{,}000,31)$ & $\{1.0,1.2,1.4\}$ & $41:[0.1,1000]_{\log}$ & $(12,4,22)$ \\
Income (sex$=2$) & $(300,10{,}000,31)$ & $\{1.0,1.2,1.4\}$ & $41:[0.1,1000]_{\log}$ & $(10,3,15)$ \\
BlogFeedback & $(300,10{,}000,50)$ & $\{0.8,0.9,1.0\}$ & $81:[0.1,10{,}000]_{\log}$ & $(12,4,300)$ \\
\bottomrule
\end{tabular}
\end{table}

\subsection{Two-fold versus No-split Comparison}
\label{app:split-comparison}
\noindent
We compare the two-fold implementation in Algorithm~\ref{alg:ppci_twofold} with a no-split implementation that constructs one RKHS localization weight from all pooled $n+N$ labeled and unlabeled covariates and evaluates the estimating equation on that shared design. The comparison uses the simulation design with $n=200$, $N=10{,}000$, five diagonal target points in $[0.70,0.85]^3$, and 50 paired replications per target point, for a total of 250 paired runs. Both implementations use the same tuning grids and constants, and are evaluated sequentially on the same paired data and the same NVIDIA RTX A6000 GPU.

\begin{table}[t]
\centering
\small
\setlength{\tabcolsep}{5pt}
\caption{Comparison of the two-fold and no-split PPCI implementations. Wall time and peak GPU memory are averages over the 250 paired runs; parenthetical values are ratios relative to the two-fold implementation.}
\label{tab:split-comparison}
\begin{tabular}{lrrrrrr}
\toprule
Implementation & Coverage & Bias & RMSE & Width & Wall time (s) & Peak memory (MiB) \\
\midrule
Two-fold & 0.964 & 0.0467 & 0.3465 & 1.3506 & 11.35 (1.00) & 2,680 (1.00) \\
No-split & 0.964 & 0.0472 & 0.3469 & 1.3524 & 37.15 (3.27) & 7,855 (2.93) \\
\bottomrule
\end{tabular}
\end{table}

The two implementations have essentially identical statistical performance: all 250 PPCI coverage indicators agree, and the changes in bias, RMSE, and interval width are negligible. The labeled-only estimator likewise has coverage $0.948$ under both implementations. In contrast, no-split is $3.27$ times slower and uses $2.93$ times as much peak GPU memory. This reflects the larger operator system formed from the full unlabeled sample. Thus, in this experiment, reusing the full sample provides no detectable accuracy gain, whereas two-fold cross-fitting preserves the independence structure used by the theory and substantially reduces computation.

\subsection{Tuning Sensitivity}
\label{app:tuning-sensitivity}
\noindent
We vary the three tuning choices that most directly affect the
selected localization weight: the bias threshold
$c_{\rm bias}$, the bandwidth cap, and the regularization-factor
envelope. Here $\mathcal C_\lambda$ denotes the factors $c$ in
$\lambda=c/[n\log\log(n+e^e)]$, while
$c_{\rm op}=12$ and $c_{\rm loc}=4$ remain fixed. We use the more
demanding simulation setting with $n=200$, $N=10{,}000$, $q=0.9$,
and the same $50\times20$ clustered replication design as in the
main experiments. The $c_{\rm bias}$ and bandwidth comparisons use
the same 125 target points and paired samples; the
regularization comparison uses 64 paired target points.
Table~\ref{tab:tuning-sensitivity} reports
equal-target-point averages, with minimum pointwise coverage
in parentheses.

\begin{table}[t]
\centering
\small
\setlength{\tabcolsep}{4pt}
\caption{Sensitivity to the bias threshold, bandwidth cap, and
regularization-factor envelope. Coverage is the mean (minimum across
target points); fallback is the proportion of tuning
decisions invoking the least-violation rule.}
\label{tab:tuning-sensitivity}
\begin{tabular}{llrrrrr}
\toprule
Component & Configuration & Coverage & Bias & RMSE & Width & Fallback \\
\midrule
$c_{\rm bias}$ & $0.06$ & 0.952 (0.930) & 0.0124 & 0.3876 & 1.4510 & 1.000 \\
                  & $0.08$ & 0.952 (0.931) & 0.0126 & 0.3862 & 1.4465 & 0.897 \\
                  & $0.10$ & 0.950 (0.925) & 0.0222 & 0.3629 & 1.3640 & 0.000 \\
                  & $0.12$ & 0.945 (0.909) & 0.0402 & 0.3392 & 1.2721 & 0.000 \\
                  & $0.14$ & 0.939 (0.891) & 0.0577 & 0.3245 & 1.2076 & 0.000 \\
\addlinespace
Bandwidth cap & $\{0.8,1.0,1.15,1.2\}h_0$ & 0.950 (0.925) & 0.0222 & 0.3629 & 1.3640 & 0.000 \\
              & $\{0.8,1.0,1.15,1.2,1.3,1.4\}h_0$ & 0.943 (0.897) & 0.0455 & 0.3247 & 1.2171 & 0.000 \\
\addlinespace
$\mathcal C_\lambda$ & $[0.02,60]$ & 0.949 (0.933) & 0.0220 & 0.3648 & 1.3651 & 0.000 \\
                   & $[0.01,120]$ & 0.949 (0.933) & 0.0234 & 0.3628 & 1.3572 & 0.000 \\
\bottomrule
\end{tabular}
\end{table}

The $c_{\rm bias}$ comparison shows the expected bias--variance
tradeoff. Values $0.06$ and $0.08$ are conservative but invoke
fallback in all or most tuning decisions. The prespecified
rule selects the smallest candidate with fallback rate at most
$5\%$, yielding $c_{\rm bias}=0.10$. Coverage, bias, RMSE, and width
are sensitivity diagnostics and do not enter foldwise selection.
Larger values shorten intervals but increase localization bias and
reduce mean and minimum coverage.

Extending the bandwidth grid to $1.4h_0$ reduces RMSE and width but
selects $1.4h_0$ throughout, approximately doubles the bias, and
lowers minimum coverage from $0.925$ to $0.897$. Thus, the $1.2h_0$
cap enforces locality rather than merely reducing computation.
Expanding $\mathcal C_\lambda$ from $[0.02,60]$ to $[0.01,120]$ has
negligible effects. The selected factors remain between $0.0235$ and
$0.0446$, with no endpoint selections or fallback, supporting the
more compact envelope $[0.02,60]$.
\subsection{Reference-Smoother Sensitivity}
\label{app:reference-smoother-sensitivity}
\noindent
The full-data smoothers for Income and BlogFeedback define finite-population reference targets used only for evaluation. We assess sensitivity by holding the PPCI and LO estimates, intervals, sampling design, and tuning fixed and changing only the reference used to score coverage and RMSE. We compare the Mat\'ern-$5/2$ Nadaraya--Watson reference at $0.8h_0$, $h_0$, and $1.2h_0$ with a Gaussian Nadaraya--Watson reference whose covariate-only bandwidth matches the local effective sample size at $h_0$. Writing $K^{\mathrm{G}}$ and $K^{\mathrm{M}}$ for the Gaussian and Mat\'ern-$5/2$ kernels, respectively, we choose $\widehat h_{\mathrm{G}}$ to satisfy
\[
\frac{\left\{\sum_i K^{\mathrm{G}}_{\widehat h_{\mathrm{G}}}(X_i,x_0)\right\}^2}{\sum_i\left\{K^{\mathrm{G}}_{\widehat h_{\mathrm{G}}}(X_i,x_0)\right\}^2}
=
\frac{\left\{\sum_i K^{\mathrm{M}}_{h_0}(X_i,x_0)\right\}^2}{\sum_i\left\{K^{\mathrm{M}}_{h_0}(X_i,x_0)\right\}^2}.
\]
All intervals retain the default tuning used in the main experiments; if a different smoother were adopted as the target definition, dedicated covariate-only tuning could further improve its finite-sample performance.

Reference shifts use all 62 Income and 50 BlogFeedback target points. Coverage uses eight prespecified targets per data set with $n=300$, $N=10{,}000$, and $15\times20=300$ intervals per target and method. RMSE and width use the formal 1000-replication summaries at all targets. Table~\ref{tab:reference-smoother-sensitivity} reports equal-target-point averages.

\begin{table}[t]
\centering
\footnotesize
\setlength{\tabcolsep}{3pt}
\caption{Sensitivity to the full-data reference smoother. Shift is the mean absolute target change from the main Mat\'ern-$5/2$ reference at $h_0$. Coverage uses eight targets and 300 intervals per target and method; RMSE and width use the formal 1000-replication summaries at all targets.}
\label{tab:reference-smoother-sensitivity}
\begin{tabular}{llrrrrrrr}
\toprule
Data & Reference & Shift & \multicolumn{2}{c}{Coverage} & \multicolumn{2}{c}{RMSE} & \multicolumn{2}{c}{Width} \\
\cmidrule(lr){4-5}\cmidrule(lr){6-7}\cmidrule(lr){8-9}
 & & & PPCI & LO & PPCI & LO & PPCI & LO \\
\midrule
Income & Mat\'ern, $0.8h_0$ & 0.1976 & 0.9329 & 0.9329 & 0.5181 & 0.5782 & 1.9079 & 2.1528 \\
       & Mat\'ern, $h_0$    & 0      & 0.9429 & 0.9375 & 0.5571 & 0.6134 & 1.9079 & 2.1528 \\
       & Mat\'ern, $1.2h_0$ & 0.1681 & 0.9233 & 0.9242 & 0.6404 & 0.6898 & 1.9079 & 2.1528 \\
       & Gaussian, ESS matched & 0.0630 & 0.9400 & 0.9408 & 0.5364 & 0.5946 & 1.9079 & 2.1528 \\
\addlinespace
BlogFeedback & Mat\'ern, $0.8h_0$ & 0.0200 & 0.9175 & 0.9404 & 0.0425 & 0.0603 & 0.1442 & 0.2227 \\
             & Mat\'ern, $h_0$    & 0      & 0.9533 & 0.9404 & 0.0366 & 0.0563 & 0.1442 & 0.2227 \\
             & Mat\'ern, $1.2h_0$ & 0.0159 & 0.9213 & 0.9288 & 0.0397 & 0.0583 & 0.1442 & 0.2227 \\
             & Gaussian, ESS matched & 0.0016 & 0.9554 & 0.9433 & 0.0367 & 0.0564 & 0.1442 & 0.2227 \\
\bottomrule
\end{tabular}
\end{table}

The ESS-matched Gaussian reference leaves coverage and RMSE nearly unchanged. The broader $\pm20\%$ bandwidth perturbations affect absolute coverage, especially for BlogFeedback, because they define different finite-population targets. Across all references, PPCI retains lower RMSE than LO and intervals that are $11.4\%$ shorter for Income and $35.2\%$ shorter for BlogFeedback. Width is constant across rows, as rescoring does not alter the intervals. Thus the conclusions are insensitive to kernel family, while bandwidth sensitivity reflects the evaluation estimand's smoothing scale.

\section{Assumption Verification and Theoretical Extensions}
\label{app:theory-extensions}
\noindent
This section supplements the theory in Section~\ref{sec:theory}. We first verify the main regularity assumptions for the conditional functionals considered in this paper, then analyze the effect of estimating the conditional quantile used for expected shortfall, and finally extend the RKHS membership condition to a spectral source class.

\subsection{Verification of Assumptions in Section~\ref{sec:theory}}
\label{app:appendix_assumptions_verification}
\noindent
Assumptions~\ref{ass:B_bound_predictor} and \ref{ass:deriv-bdd-jac}:
First, consider the residual term $\ell(y;\theta)-\ell(f(x);\theta)$. In each running example, the same $\theta$-dependent term appears in both evaluations of the estimating function and cancels in their difference. For the conditional mean and binary log-odds, the difference is $y-f(x)$. For conditional expected shortfall, it becomes $y^\ast-f^\ast(x)$, where $f^\ast(x)$ predicts $Y^\ast$. In all cases, $G_{Y-f}(\theta)\equiv 0$, and $B_{Y-f}(\theta)<\infty$ provided responses and predictions are bounded.
Second, for the predictor term $\ell(f(x);\theta)$, $\partial_\theta \ell(f(x);\theta)$ equals either $-1$ or $-p(\theta)(1-p(\theta))$, where $p(\theta)=(1+e^{-\theta})^{-1}$. Since $\Theta_0$ is compact, these derivatives are uniformly bounded. If the predictions are bounded, then $B_f(\theta)$ is also finite. Hence both assumptions hold. The interval $\Theta_0$ can be chosen with the required positive margin whenever the scalar target lies in the interior of its parameter space. For binary log-odds, this corresponds to keeping the conditional success probability away from $0$ and $1$.

Assumption~\ref{ass:J_lambda_bound}:
For the conditional mean and expected shortfall, $\partial_\theta\ell(\cdot;\theta)=-1$, so $|J_\lambda(x_0;\theta)|=\EE[w_{x_0,\lambda}(X)]=1_\lambda(x_0)$, where $1_\lambda$ is the Tikhonov-regularized approximation of the constant function $1$ in \eqref{eqn:Tikhonov-regu_twofold}. Since $1\in H^m(\cX)$ for a compact $\cX$, standard regularization theory for integral operators~\citep{caponnetto2007optimal} implies that $1_\lambda(x_0)$ is strictly bounded away from zero and infinity for sufficiently small $\lambda>0$. For the binary log-odds, $\partial_\theta\ell(\cdot;\theta)=-p(\theta)\{1-p(\theta)\}$ with $p(\theta)=(1+e^{-\theta})^{-1}$, yielding $|J_\lambda(x_0;\theta)|=p(\theta)\{1-p(\theta)\}\,1_\lambda(x_0)$. If $\PP(Y=1| X=x_0)$ is bounded away from $0$ and $1$, then $p(\theta)\{1-p(\theta)\}$ is bounded away from zero in a neighborhood of $\theta_0(x_0)$, verifying the assumption.
Compactness of $\Theta_0$ makes these bounds uniform in $\theta$, and the preceding approximation is uniform for all sufficiently small $\lambda$ in the admissible range.

Assumption~\ref{ass:eigenfunctions_bound} has two logically distinct parts. For the conditional mean and expected shortfall, $\eta(x;\theta)$ equals a smooth conditional mean minus the constant $\theta$; for binary log-odds it has the analogous conditional-probability form. Thus, if the corresponding conditional function belongs to $H^m(\cX)$, then $\sup_{\theta\in\Theta_0}\|\eta(\cdot;\theta)\|_{\cH}<\infty$ since $\Theta_0$ is compact and constants belong to $H^m(\cX)$. This remains a smoothness condition on the unknown data-generating function. The eigenvalue decay and bounded-eigenfunction clauses are structural kernel--design conditions for the chosen Sobolev-equivalent kernel.

Assumption~\ref{ass:homo_res}:
For conditional mean, it reduces to $\Var(Y| X)\ge \underline{\sigma}^2$.  For binary log-odds, it becomes $\PP(Y=1| X)\PP(Y=0| X)\ge \underline{\sigma}^2$.
For conditional expected shortfall, it translates to $\Var(Y^\ast| X)\ge \underline{\sigma}^2$, meaning that the lower-tail distribution of $Y$ below its $\tau$-quantile remains non-degenerate.

\subsection{Conditional Expected Shortfall with an Estimated Quantile}
\label{app:es-estimated-quantile}
\noindent
In practice, $\mathrm{Quant}_\tau$ is estimated from labeled data and
used to construct the surrogate response. Let
$\psi(y,q)=q+\tau^{-1}(y-q)\mathbbm{1}\{y\le q\}$, so that
$Y^\ast=\psi\{Y,\mathrm{Quant}_\tau(X)\}$. For
$h_x(q)=\EE\{\psi(Y,q)|X=x\}$,
\[
h_x'(q)=1-\tau^{-1}F_{Y|X}(q|x),
\qquad
h_x'\{\mathrm{Quant}_\tau(x)\}=0,
\]
and
\[
h_x(q)-h_x\{\mathrm{Quant}_\tau(x)\}
=
-\tau^{-1}\int_{\mathrm{Quant}_\tau(x)}^q
\{F_{Y|X}(u|x)-\tau\}\,du.
\]
Thus, if the conditional density is bounded near
$\mathrm{Quant}_\tau(x)$ and
$\widehat{\mathrm{Quant}}_\tau(x)$ lies in that neighborhood with
probability tending to one, the substitution produces conditional
bias of order
$\{\widehat{\mathrm{Quant}}_\tau(x)
-\mathrm{Quant}_\tau(x)\}^2$ rather than a first-order error.

We remove the resulting sampling dependence by cross-fitting over the
labeled observations. Specifically,
$\widehat{\mathrm{Quant}}_\tau^{(-k)}$ is trained outside labeled fold
$k$, and its surrogate responses are evaluated on fold $k$; the folds
$\cI_1$ and $\cI_2$ in Algorithm~\ref{alg:ppci_twofold} continue to
cross-fit the unlabeled-data localization weights. Writing
$\Delta_{\mathrm{Quant}}
=\widehat{\mathrm{Quant}}_\tau-\mathrm{Quant}_\tau$, a sufficient
condition for retaining oracle first-order inference is
\[
\EE\{|w_{x_0,\lambda}(X)|
\Delta_{\mathrm{Quant}}(X)^2
|\widehat{\mathrm{Quant}}_\tau\}
+
\left[
n^{-1}\EE\{w_{x_0,\lambda}(X)^2
\Delta_{\mathrm{Quant}}(X)^2
|\widehat{\mathrm{Quant}}_\tau\}
\right]^{1/2}
=
o_p\{\sqrt{V(x_0)}\}.
\]
This requirement is local and weighted. With bounded weights and
root-$n$ scaling, it follows from
$\|\Delta_{\mathrm{Quant}}\|_{L^2}=o_p(n^{-1/4})$.
Vanishing weighted bias and fluctuation terms suffice for consistency,
whereas the displayed condition preserves the oracle limiting
distribution, variance estimator, and confidence interval. Otherwise,
the quadratic remainder may affect the first-order limit and cause
undercoverage unless it is explicitly accounted for.
\subsection{Beyond RKHS}
\label{sec:spectral-source-src}
\noindent
The RKHS in Assumption~\ref{ass:eigenfunctions_bound} is needed to construct the localization weight in \eqref{eq:rkhs-localization-pop-x0_twofold}, but it need not be a correctly specified model for the conditional moment. Let $L_K:L^2(\rho_X)\to L^2(\rho_X)$ be the integral operator $(L_Kq)(x)=\int K(x,x')q(x')\,d\rho_X(x')$, with eigenpairs $(\mu_j,\phi_j)$ from Assumption~\ref{ass:eigenfunctions_bound}. This is the classical spectral source-condition framework for Tikhonov regularization and kernel learning~\citep{engl1996regularization,bauer2007regularization,smale2007learning}.

\begin{assumption}[Spectral source condition]
\label{ass:spectral-source-src}
Retain the kernel clauses of Assumption~\ref{ass:eigenfunctions_bound}, but replace $\eta(\cdot;\theta)\in\cH$ by
\[
\eta(\cdot;\theta)=L_K^r q_\theta,
\qquad
r\in\left(\frac{d}{4m},\frac12\right],
\qquad
\sup_{\theta\in\Theta_0}\|q_\theta\|_{L^2(\rho_X)}<\infty,
\]
where $q_\theta$ is taken in the closed span of $\{\phi_j\}_{j\ge1}$.
\end{assumption}

Writing $q_{\theta,j}=\langle q_\theta,\phi_j\rangle_{L^2(\rho_X)}$, the series $\sum_j\mu_j^r q_{\theta,j}\phi_j$ converges absolutely and uniformly since $\sup_x\sum_j\mu_j^{2r}\phi_j(x)^2<\infty$ when $r>d/(4m)$ under the Sobolev eigenvalue decay~\citep{fischer2020sobolev}. It therefore defines the pointwise representative used in $\eta(x_0;\theta_0(x_0))=0$. At $r=1/2$, $\|q_\theta\|_{L^2(\rho_X)}=\|\eta(\cdot;\theta)\|_{\cH}$ under the usual spectral identification; for $r<1/2$, the source class permits $\eta(\cdot;\theta)\notin\cH$~\citep{bauer2007regularization,caponnetto2007optimal}. At the endpoint $r=d/(4m)$, the present assumptions do not ensure point evaluation or the uniform bound needed below; logarithmic refinements require stronger local spectral conditions.

The Tikhonov characterization \eqref{eqn:Tikhonov-regu_twofold} remains valid for an $L^2(\rho_X)$ target~\citep{smale2007learning}. Moreover,
\[
\eta_\lambda(x_0;\theta)-\eta(x_0;\theta)
=-\lambda\sum_{j\ge1}\frac{\mu_j^r}{\mu_j+\lambda}q_{\theta,j}\phi_j(x_0),
\]
so, with
\[
B_r^2(x_0;\lambda)
:=\lambda^2\sum_{j\ge1}\frac{\mu_j^{2r}}{(\mu_j+\lambda)^2}\phi_j(x_0)^2,
\]
we have $|\eta_\lambda(x_0;\theta)-\eta(x_0;\theta)|\le \|q_\theta\|_{L^2(\rho_X)}B_r(x_0;\lambda)$ and $B_r(x_0;\lambda)=O\{\lambda^{r-d/(4m)}\}$. For $r=1/2$, $B_{1/2}^2(x_0;\lambda)=\lambda\{D(x_0;\lambda)-Q(x_0;\lambda)\}$, recovering \eqref{eq:bias-bound}.

Only one other proof input changes. In Step~4 of the proof of Lemma~\ref{lem:I3-clean}, the mean embedding $\mu(\theta)=\EE[K_X\eta(X;\theta)]$ remains an element of $\cH$, but the identity $\mu(\theta)=T_K\eta(\cdot;\theta)$ is no longer available. Instead,
\[
b_\lambda(x;\theta)
:=\{(T_K+\lambda I)^{-1}\mu(\theta)\}(x)
=\sum_{j\ge1}\frac{\mu_j^{r+1}}{\mu_j+\lambda}q_{\theta,j}\phi_j(x),
\]
and hence $\|b_\lambda(\cdot;\theta)\|_\infty\le C_{r,K}\|q_\theta\|_{L^2(\rho_X)}$, where $C_{r,K}:=\sup_x\{\sum_j\mu_j^{2r}\phi_j(x)^2\}^{1/2}<\infty$. Steps~5--8 of that proof use only this sup-norm bound, so Lemma~\ref{lem:I23_split_bounds_v2} keeps the order $\sqrt{D(x_0;\lambda)\log N/N}$ with $\kappa\|\eta(\cdot;\theta)\|_{\cH}$ replaced by $C_{r,K}\|q_\theta\|_{L^2(\rho_X)}$. The $\cH$-norm part of \eqref{eqn:b_bound} is not uniform over this source class when $r<1/2$, but it is not used in the subsequent Bernstein bounds.

Consequently, under Assumptions~\ref{ass:B_bound_predictor}--\ref{ass:J_lambda_bound} and Assumption~\ref{ass:spectral-source-src}, with the same regularization condition and probability as in Theorem~\ref{thm:thetahat-theta0-bound-twofold}, the bound \eqref{eq:thetahat-theta0-bound-clean-twofold} holds after the two replacements
\begin{align*}
\kappa\|\eta(\cdot;\theta_\lambda(x_0))\|_{\cH}
\longmapsto
C_{r,K}\|q_{\theta_\lambda(x_0)}\|_{L^2(\rho_X)},
\\
\|\eta(\cdot;\theta_0(x_0))\|_{\cH}
\sqrt{\lambda\{D(x_0;\lambda)-Q(x_0;\lambda)\}}
\longmapsto
\|q_{\theta_0(x_0)}\|_{L^2(\rho_X)}B_r(x_0;\lambda).
\end{align*}
Let $\sN_r$ denote $\sN$ after the first replacement. Combining Proposition~\ref{prop:Dx0-order-H} with the order of $B_r$, the standard source--capacity balancing calculation~\citep{caponnetto2007optimal} gives $\lambda\asymp\{\sN_r/\|q_{\theta_0(x_0)}\|_{L^2(\rho_X)}\}^{1/r}$ and
\[
|\wh\theta(x_0)-\theta_0(x_0)|
\lesssim
\frac{1}{c_J}
\|q_{\theta_0(x_0)}\|_{L^2(\rho_X)}^{d/(4mr)}
\sN_r^{\,1-d/(4mr)}.
\]
This rate applies only when the balanced choice also satisfies the full regularization condition in Theorem~\ref{thm:thetahat-theta0-bound-twofold}; $r>d/(4m)$ alone does not guarantee that condition. At $r=1/2$, the balance and both exponents reduce to \eqref{eq:upper_bound_sobolev_rate_twofold}.

For inference, retain the first and third conditions of \eqref{eq:cond-clean-twofold} and replace $\lambda=o(n^{-1})$ by
\[
\frac{\sqrt n\,\|q_{\theta_0(x_0)}\|_{L^2(\rho_X)}B_r(x_0;\lambda)}
{\sqrt{Q(x_0;\lambda)}}\longrightarrow0.
\]
Under the Sobolev orders, a sufficient form is $\sqrt n\lambda^r\to0$, equivalently $\lambda=o(n^{-1/(2r)})$, and the conclusions of Theorem~\ref{thm:thetahat-asymp-twofold} and Corollary~\ref{cor:CI-coverage-twofold} continue to hold. The third condition remains $\lambda\gg\max\{(\log N/N)^{m/d},\,((\log(n+N))^2/n)^{2m/d}\}$. Thus, if $N=n^{r_1}$ and $\lambda=n^{r_2}$, then, ignoring logarithms, the full window is $r_1>1$ and $-(m/d)\min\{r_1,2\}<r_2<-1/(2r)$. It is nonempty under $r>d/(4m)$ when $r_1\ge2$; when $1<r_1<2$, the stronger condition $r>d/(2mr_1)$ is required.

The PPCI estimator and confidence interval are unchanged for a prescribed $(h,\lambda)$. Two scope limitations remain: Theorem~\ref{thm:minimax-pointwise} proves optimality only over the RKHS class, so the rate above is only an upper rate for $r<1/2$; and the tuning rule in Section~\ref{sec:tuning_twofold} is calibrated to $B_{1/2}^2=\lambda(D-Q)$ and the grid $\lambda\asymp\{n\log\log(n+e^e)\}^{-1}$, so a uniform guarantee over $r<1/2$ would require a source-aware bias certificate and a smaller-$\lambda$ grid.

\section{Comparisons with Alternative Methods}\label{sec:comparisons}
\noindent
In this section, we compare the proposed PPCI with some related approaches. We highlight the challenges of conditional inference and clarify the specific advantages of PPCI.
\subsection{Nadaraya--Watson PPCI and RKHS Localization}
\label{app:nw-localization}
\noindent
This subsection gives the complete two-fold NW-PPCI construction, establishes its smoothing-bias and efficiency properties, develops the variational characterization of the RKHS localization weight, and reports numerical comparisons that clarify the different bias--variance mechanisms of the two localization schemes. Relatedly, \citet{gu2024local} construct prediction-powered estimators of a conditional mean and its gradient using local multivariable regression. Motivated by this local-smoothing perspective, we consider the following NW-PPCI benchmark.

\subsubsection{Two-fold NW-PPCI construction}
\noindent
Let $K_h^{\rm NW}(x,x_0)=h^{-d}\mathcal K\left((x-x_0)/h\right)$,
where $\mathcal K:\RR^d\to[0,\infty)$ is bounded and integrable, satisfies $\mathcal K(u)=\mathcal K(-u)$, $\int_{\RR^d}\mathcal K(u)\,du=1$ and $\int_{\RR^d}u\mathcal K(u)\,du=0$, and has finite second-moment matrix
$M_{\mathcal K}=\int_{\RR^d}uu^\top\mathcal K(u)\,du$.
The population normalized NW weight is
\[
w_h^{\rm NW}(x;x_0)
=
\frac{K_h^{\rm NW}(x,x_0)}
{\EE\{K_h^{\rm NW}(X,x_0)\}},
\]
and the corresponding localized moment is $\eta_h^{\rm NW}(x_0;\theta)=\EE\{w_h^{\rm NW}(X;x_0)\ell(Y;\theta)\}$.
Let $\theta_h^{\rm NW}(x_0)$ denote the locally unique solution to
$\eta_h^{\rm NW}(x_0;\theta)=0$. For fixed $h$, this is generally a smoothed target rather than the pointwise parameter $\theta_0(x_0)$.

Split the unlabeled indices into two equal-sized folds $\cI_1$ and $\cI_2$. For $m\in\{1,2\}$, estimate the normalizing denominator using only fold $\cI_m$:
\[
\widehat\mu_{m,h}(x_0)
=
\frac{1}{|\cI_m|}
\sum_{u\in\cI_m}
K_h^{\rm NW}(\wt X_u,x_0),
\]
and define
\[
\widehat w_{m,h}^{\rm NW}(x;x_0)
=
\frac{K_h^{\rm NW}(x,x_0)}
{\widehat\mu_{m,h}(x_0)}.
\]
For the labeled observations, use the averaged weight
\[
\widehat w_{L,h}^{\rm NW}(x;x_0)
=
\frac{1}{2}
\left\{
\widehat w_{1,h}^{\rm NW}(x;x_0)
+
\widehat w_{2,h}^{\rm NW}(x;x_0)
\right\}.
\]
For an unlabeled observation in fold $\cI_m$, use the weight trained on the opposite fold:
\[
\widehat w_{U,h}^{\rm NW}(\wt X_u;x_0)
=
\widehat w_{3-m,h}^{\rm NW}(\wt X_u;x_0),
\qquad
u\in\cI_m.
\]

The two-fold NW-PPCI estimating equation is
\begin{equation}
\label{eq:nw-ppci-moment}
\begin{aligned}
\widehat\eta_h^{\rm NW}(x_0;\theta)
={}&
\frac{1}{n}
\sum_{i=1}^n
\widehat w_{L,h}^{\rm NW}(X_i;x_0)
\{\ell(Y_i;\theta)-\ell(f(X_i);\theta)\}
\\
&+
\frac{1}{N}
\sum_{m=1}^2
\sum_{u\in\cI_m}
\widehat w_{3-m,h}^{\rm NW}(\wt X_u;x_0)
\ell(f(\wt X_u);\theta).
\end{aligned}
\end{equation}
Its solution is denoted by $\widehat\theta_h^{\rm NW}(x_0)$: $
\widehat\eta_h^{\rm NW}
\{x_0;\widehat\theta_h^{\rm NW}(x_0)\}
=
0$.
Define the empirical Jacobian
\[
\widehat J_h^{\rm NW}(x_0)
=
\left.
\partial_\theta
\widehat\eta_h^{\rm NW}(x_0;\theta)
\right|_{\theta=\widehat\theta_h^{\rm NW}(x_0)}.
\]
For labeled observations, define
\[
\widehat\zeta_{i,h}^{\rm NW}(x_0)
=
\widehat w_{L,h}^{\rm NW}(X_i;x_0)
\left[
\ell\{Y_i;\widehat\theta_h^{\rm NW}(x_0)\}
-
\ell\{f(X_i);\widehat\theta_h^{\rm NW}(x_0)\}
\right],
\]
and, for $u\in\cI_m$, define
\[
\widehat\zeta_{u,h}^{\rm NW}(x_0)
=
\widehat w_{3-m,h}^{\rm NW}(\wt X_u;x_0)
\ell\{f(\wt X_u);\widehat\theta_h^{\rm NW}(x_0)\}.
\]
Let $\widehat\sigma_{Y-f,h}^{2,{\rm NW}}(x_0)$ and
$\widehat\sigma_{f,h}^{2,{\rm NW}}(x_0)$ be the sample variances of the labeled and unlabeled contributions, respectively, and set
\[
\widehat V_h^{\rm NW}(x_0)
=
\frac{\widehat\sigma_{Y-f,h}^{2,{\rm NW}}(x_0)}{n}
+
\frac{\widehat\sigma_{f,h}^{2,{\rm NW}}(x_0)}{N}.
\]
The resulting confidence interval is
\[
\mathcal C_h^{\rm NW}(x_0)
=
\left[
\widehat\theta_h^{\rm NW}(x_0)
\pm
z_{1-\alpha/2}
\frac{
\{\widehat V_h^{\rm NW}(x_0)\}^{1/2}
}{
|\widehat J_h^{\rm NW}(x_0)|
}
\right].
\]
For fixed $h$, this interval is naturally centered at $\theta_h^{\rm NW}(x_0)$. Coverage of $\theta_0(x_0)$ requires the smoothing bias
$\theta_h^{\rm NW}(x_0)-\theta_0(x_0)$ to be negligible relative to the stochastic scale.

\subsubsection{Interior smoothing bias and prediction-powered efficiency}

\begin{proposition}[Interior bias and efficiency of NW-PPCI]
\label{prop:nw-localization-limit}
Let $p_X=d\rho_X/dx$. Suppose that $x_0$ is an interior point of $\cX$, that $0<\rho_0\le p_X(x)\le\rho_1<\infty$ on $\cX$, and that $p_X$ is twice continuously differentiable near $x_0$. Suppose also that $x\mapsto\eta(x;\theta)$ is twice continuously differentiable near $x_0$ uniformly for $\theta$ in a neighborhood of $\theta_0(x_0)$. Write
$\eta_0(x)=\eta(x;\theta_0(x_0))$ and
$J_0(x_0)=\partial_\theta\eta(x_0;\theta_0(x_0))\ne0$, and define
\[
b_{\rm NW}(x_0)
=
-\frac{1}{2J_0(x_0)}
\left[
\operatorname{tr}
\{M_{\mathcal K}\nabla_x^2\eta_0(x_0)\}
+
2\nabla_x\eta_0(x_0)^\top
M_{\mathcal K}\nabla_x\log p_X(x_0)
\right].
\]
Then
\[
\theta_h^{\rm NW}(x_0)-\theta_0(x_0)
=
h^2b_{\rm NW}(x_0)+o(h^2),
\]
and
\[
\EE\{w_h^{\rm NW}(X;x_0)^2\}
=
\frac{
\int_{\RR^d}\mathcal K(u)^2\,du
}{
p_X(x_0)h^d
}
\{1+o(1)\}.
\]
If $b_{\rm NW}(x_0)\ne0$, $N\ge n$, and Assumption~\ref{ass:homo_res} holds, centered pointwise inference for $\theta_0(x_0)$ requires $nh^{d+4}\to0$,
in addition to $nh^d\to\infty$ and $Nh^d\to\infty$.

Moreover, consider the conditional mean model
$Y=m(X)+\varepsilon$, where $\EE(\varepsilon| X)=0$, $m$ is twice continuously differentiable near $x_0$, and
$\sigma^2(x)=\Var(\varepsilon| X=x)$ is continuous near $x_0$ with $\sigma^2(x_0)>0$. Suppose that the predictor satisfies $f=m$ and that $N/n\to\kappa\in(1,\infty]$. Then the relative variance reduction of NW-PPCI over its NW labeled-only counterpart is $O(h^2)$ when $\nabla m(x_0)\ne0$, and is $O(h^4)$ when $\nabla m(x_0)=0$ but $\nabla_x^2m(x_0)\ne0$.
\end{proposition}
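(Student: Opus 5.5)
The plan is a second-order Taylor analysis of the normalized kernel moment, followed by a variance calculation for the prediction-powered contributions. All expansions use the substitution $x=x_0+hu$ with $u$ in the support of $\mathcal K$; for small $h$ this keeps us in the neighborhood of $x_0$ where $p_X$ and $\eta$ are $C^2$. If $\mathcal K$ has unbounded support, the tails are controlled by boundedness of $\eta$ and $p_X$ together with the finite second moment of $\mathcal K$.

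\textbf{Bias.} By the tower property, $\eta_h^{\rm NW}(x_0;\theta)=\int\mathcal K(u)\eta(x_0+hu;\theta)p_X(x_0+hu)\,du\big/\int\mathcal K(u)p_X(x_0+hu)\,du$. Expanding $\eta p_X$ and $p_X$ to second order, symmetry of $\mathcal K$ kills the first-order terms. The numerator is $\eta p_X(x_0)+\tfrac{h^2}{2}\operatorname{tr}\{M_{\mathcal K}\nabla^2(\eta p_X)(x_0)\}+o(h^2)$, and the denominator is $p_X(x_0)+\tfrac{h^2}{2}\operatorname{tr}\{M_{\mathcal K}\nabla^2p_X(x_0)\}+o(h^2)$. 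Taking the ratio gives
\[
\eta_h^{\rm NW}(x_0;\theta)=\eta(x_0;\theta)+\tfrac{h^2}{2}\bigl[\operatorname{tr}\{M_{\mathcal K}\nabla^2\eta\}+2\nabla\eta^\top M_{\mathcal K}\nabla\log p_X\bigr](x_0)+o(h^2).
\]
This holds uniformly for $\theta$ near $\theta_0(x_0)$, by the uniform $C^2$ assumption.

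Next I evaluate at $\theta_h^{\rm NW}$ and use $\eta(x_0;\theta_0(x_0))=0$. Differentiability of $\eta(x_0;\cdot)$ with $J_0\ne0$ comes from local uniqueness together with continuity of $\partial_\theta\eta$. First this gives consistency, $\theta_h^{\rm NW}\to\theta_0$. A mean-value expansion then gives $J_0(\theta_h^{\rm NW}-\theta_0)(1+o(1))=-\tfrac{h^2}{2}[\cdots]+o(h^2)$, which is the stated $h^2b_{\rm NW}$. The second-moment formula follows the same way. Since $\EE\{K_h^2\}=h^{-d}\int\mathcal K^2(u)p_X(x_0+hu)\,du=h^{-d}p_X(x_0)\int\mathcal K^2\{1+o(1)\}$ and $\EE K_h=p_X(x_0)\{1+o(1)\}$, dividing gives the claim.

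\textbf{Rate requirements.} With $N\ge n$ and Assumption~\ref{ass:homo_res}, the labeled variance term is bounded below: $\Var(w\{\ell(Y)-\ell(f)\})\ge\EE[w^2\Var(\ell(Y;\theta)|X)]\ge\underline\sigma^2\EE w^2\asymp h^{-d}$. The unlabeled term is of at most the same order. Hence the stochastic scale of $\wh\theta_h^{\rm NW}$ is $\asymp(nh^d)^{-1/2}$. For centered inference the bias must be negligible, $h^2|b_{\rm NW}|=o((nh^d)^{-1/2})$, which is exactly $nh^{d+4}\to0$. Consistency of the weight-normalizer and variance estimators, and a Lindeberg CLT for the triangular array (weights are bounded by $h^{-d}$, so the Lindeberg ratio is $O((nh^d)^{-1/2})$), require $nh^d\to\infty$ and $Nh^d\to\infty$. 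The denominator $\wh\mu_{m,h}$ has relative error $O_p((Nh^d)^{-1/2})$.

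\textbf{Efficiency under $f=m$.} For the conditional mean with $\ell=Y-\theta$, the labeled-only variance per observation is $\Var(w(Y-\theta))=\EE[w^2\sigma^2]+\Var(w(m-\theta))$. For PPCI it is $n^{-1}\EE[w^2\sigma^2]+N^{-1}\Var(w(m-\theta))$. The relative reduction is therefore at most $(1-n/N)\Var(w(m-\theta))/\EE[w^2\sigma^2]$. The denominator is $\asymp h^{-d}$. For the numerator, note that $\theta=\theta_h^{\rm NW}$ is the weighted mean of $m$, so $\EE w(m-\theta)=0$ and $\Var(w(m-\theta))=\EE[w^2(m-\theta)^2]$. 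Since $m(x_0+hu)-\theta=h\nabla m^\top u+O(h^2)$, this quantity is $O(h^{2-d})$, giving relative reduction $O(h^2)$. When $\nabla m(x_0)=0$, we have $m-\theta=O(h^2)$ on the kernel scale, giving $O(h^{4-d})$ and relative reduction $O(h^4)$.

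The main obstacle is making the Taylor remainders uniform in $\theta$ and handling kernel tails for non-compactly supported $\mathcal K$. I would first prove everything for compact support and then truncate.
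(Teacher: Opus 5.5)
Your proposal is correct and follows essentially the same route as the paper. The shared steps are:
\begin{itemize}
\item a change of variables with a second-order expansion of the normalized kernel moment, where your expansion of $\eta p_X$ and $p_X$ reproduces the paper's bracket;
\item linearization in $\theta$ using $J_0(x_0)\ne0$;
\item the $\int\mathcal K^2/\{p_X(x_0)h^d\}$ second-moment computation;
\item the Assumption~\ref{ass:homo_res} lower bound $(nh^d)^{-1/2}$ on the stochastic scale, which yields $nh^{d+4}\to0$;
\item the orthogonal split of $w(Y-\theta)$ into $A_h=w\varepsilon$ and $B_h=w(m-\theta_h^{\rm NW})$ for the efficiency claim.
\end{itemize}

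The only differences are cosmetic. You evaluate a $\theta$-uniform bias expansion at $\theta_h^{\rm NW}$, whereas the paper linearizes $\eta_h^{\rm NW}$ around $\theta_0(x_0)$. You also upper-bound the relative reduction by dropping $\Var(B_h)$ from the denominator, rather than using the exact identity $(1-n/N)\Var(B_h)/\{\Var(A_h)+\Var(B_h)\}$. That bound is enough for the stated $O(h^2)$ and $O(h^4)$ orders.
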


\begin{proof}[Proof of Proposition~\ref{prop:nw-localization-limit}]
Write $p=p_X$, $\eta_0(x)=\eta(x;\theta_0(x_0))$, and
$J_0=J_0(x_0)$. At an interior point, a change of variables gives
\[
\eta_h^{\rm NW}(x_0;\theta)
=
\frac{
\int_{\RR^d}
\mathcal K(u)
\eta(x_0+hu;\theta)
p(x_0+hu)\,du
}{
\int_{\RR^d}
\mathcal K(u)p(x_0+hu)\,du
}.
\]
Using $\int_{\RR^d}u\mathcal K(u)\,du=0$, a second-order expansion of the numerator and denominator yields
\[
\begin{aligned}
&
\eta_h^{\rm NW}
\{x_0;\theta_0(x_0)\}
-
\eta\{x_0;\theta_0(x_0)\}
\\
&\quad=
\frac{h^2}{2}
\left[
\operatorname{tr}
\{M_{\mathcal K}\nabla_x^2\eta_0(x_0)\}
+
2\nabla_x\eta_0(x_0)^\top
M_{\mathcal K}\nabla_x\log p(x_0)
\right]
+
o(h^2).
\end{aligned}
\]
Since $\eta(x_0;\theta_0(x_0))=0$, a first-order expansion in $\theta$ around $\theta_0(x_0)$ gives
\[
0
=
\eta_h^{\rm NW}
\{x_0;\theta_0(x_0)\}
+
J_0
\{\theta_h^{\rm NW}(x_0)-\theta_0(x_0)\}
+
o(h^2),
\]
which implies
$\theta_h^{\rm NW}(x_0)-\theta_0(x_0)=h^2b_{\rm NW}(x_0)+o(h^2)$.

Next, $\EE\{K_h^{\rm NW}(X,x_0)\}=p(x_0)+O(h^2)$,
whereas
\[
\EE\{K_h^{\rm NW}(X,x_0)^2\}
=
h^{-d}p(x_0)
\int_{\RR^d}\mathcal K(u)^2\,du
+
o(h^{-d}).
\]
Therefore,
\[
\EE\{w_h^{\rm NW}(X;x_0)^2\}
=
\frac{
\int_{\RR^d}\mathcal K(u)^2\,du
}{
p(x_0)h^d
}
\{1+o(1)\}.
\]
Under Assumption~\ref{ass:homo_res}, the labeled stochastic scale is bounded below by a constant multiple of $(nh^d)^{-1/2}$. If $b_{\rm NW}(x_0)\ne0$, negligibility of the $O(h^2)$ smoothing bias requires
\[
\frac{h^2}{(nh^d)^{-1/2}}
=
\sqrt{nh^{d+4}}
\longrightarrow0.
\]
Thus, $nh^{d+4}\to0$ is necessary for centered pointwise inference. The conditions $nh^d\to\infty$ and $Nh^d\to\infty$ ensure a growing effective labeled sample and consistent estimation of the normalizing denominator, respectively.

For the efficiency result, define
\[
A_h
=
w_h^{\rm NW}(X;x_0)\varepsilon,
\qquad
B_h
=
w_h^{\rm NW}(X;x_0)
\{m(X)-\theta_h^{\rm NW}(x_0)\}.
\]
Since $\EE(A_h| X)=0$, we have $\operatorname{Cov}(A_h,B_h)=0$. Hence, $V_{\rm LO}^{\rm NW}(h)=\frac{\Var(A_h)+\Var(B_h)}{n}$,
while
\[
V_{\rm PPCI}^{\rm NW}(h)
=
\frac{\Var(A_h)}{n}
+
\frac{\Var(B_h)}{N}.
\]
It follows that
\[
1-
\frac{
V_{\rm PPCI}^{\rm NW}(h)
}{
V_{\rm LO}^{\rm NW}(h)
}
=
\left(1-\frac{n}{N}\right)
\frac{
\Var(B_h)
}{
\Var(A_h)+\Var(B_h)
}.
\]
The continuity and nondegeneracy of $\sigma^2(x)$ near $x_0$ imply
$\Var(A_h)\asymp h^{-d}$.

If $\nabla m(x_0)\ne0$, then $m(x_0+hu)-m(x_0)=h\nabla m(x_0)^\top u+O(h^2)$,
so $\Var(B_h)=O(h^{2-d})$ and the relative gain is $O(h^2)$. If
$\nabla m(x_0)=0$ but $\nabla_x^2m(x_0)\ne0$, then
\[
m(x_0+hu)-m(x_0)
=
\frac{h^2}{2}
u^\top\nabla_x^2m(x_0)u
+
o(h^2),
\]
so $\Var(B_h)=O(h^{4-d})$ and the relative gain is $O(h^4)$.
\end{proof}

\noindent\textbf{Implications for bandwidth selection.}
The interior NW bias is generically $O(h^2)$, while the variance is of order $(nh^d)^{-1}$. The MSE-oriented bandwidth therefore balances $h^4$ and $(nh^d)^{-1}$, giving
$h\asymp n^{-1/(d+4)}$. At this bandwidth, $h^2\asymp(nh^d)^{-1/2}$,
so the smoothing bias and standard error have the same order. Pointwise inference consequently requires a smaller bandwidth than the conventional MSE-oriented choice.

A local-constant NW estimator also does not automatically exploit smoothness beyond second order. Its root-MSE rate remains
$n^{-2/(d+4)}$. If the Sobolev class has pointwise smoothness
$s=m-d/2>2$, this is slower than the rate $n^{-s/(2s+d)}$ obtained from the RKHS localization analysis. Higher-order kernels or local-polynomial estimators can improve the bias order, but their equivalent weights are generally signed and require additional choices of polynomial order and boundary correction~\citep{fan1996local}.

\subsubsection{Variational characterization and comparison with NW}
\noindent
For a fixed base kernel $K_h$, let $T_h$ denote its covariance operator and write $K_{h,x_0}=K_h(\cdot,x_0)$. For $w\in\cH$, define $\mathcal B_h(w)=\sup_{\|g\|_{\cH}\le1}|g(x_0)-\EE\{w(X)g(X)\}|$ and $Q_h(w)=\|w\|_{L^2(\rho_X)}^2$. By the reproducing property,
\[
\mathcal B_h(w)=\sup_{\|g\|_{\cH}\le1}|\langle K_{h,x_0}-T_hw,g\rangle_{\cH}|=\|K_{h,x_0}-T_hw\|_{\cH}.
\]
Consider
\[
\mathcal L_{h,\lambda}(w)=\mathcal B_h(w)^2+\lambda Q_h(w).
\]

\begin{proposition}[Variational and constrained optimality]
\label{prop:rkhs-localization-variational-nw}
For every $\lambda>0$, the RKHS localization weight $w_{x_0,h,\lambda}=(T_h+\lambda I)^{-1}K_{h,x_0}$ is the unique minimizer of $\mathcal L_{h,\lambda}$ over $\cH$. Moreover,
\[
\begin{aligned}
\mathcal B_h(w_{x_0,h,\lambda})^2
&=\lambda\{D_h(x_0;\lambda)-Q_h(x_0;\lambda)\},\\
\min_{w\in\cH}\mathcal L_{h,\lambda}(w)
&=\lambda D_h(x_0;\lambda).
\end{aligned}
\]
If $\tau_\lambda=\mathcal B_h(w_{x_0,h,\lambda})$, then
\[
w_{x_0,h,\lambda}\in\arg\min_{\substack{w\in\cH:\ \mathcal B_h(w)\le\tau_\lambda}}Q_h(w).
\]
When NW uses the same nonnegative base kernel $K_h$, its normalized weight $w_h^{\rm NW}=K_{h,x_0}/\EE\{K_h(X,x_0)\}$ belongs to $\cH$ and satisfies
\[
\mathcal L_{h,\lambda}(w_{x_0,h,\lambda})\le\mathcal L_{h,\lambda}(w_h^{\rm NW}).
\]
\end{proposition}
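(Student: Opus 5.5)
The plan is to treat $\mathcal L_{h,\lambda}$ as a strictly convex quadratic functional on $\cH$ and read off its minimizer from the first-order condition. This relies on the Hilbert-space structure that $T_h$ induces on both $\cH$ and $L^2(\rho_X)$.

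\textbf{Step 1: rewrite the objective.} Since $T_h$ is self-adjoint and positive on $\cH$, for $w\in\cH$ we have
\[
Q_h(w)=\EE\{w(X)^2\}=\langle w,T_hw\rangle_{\cH}.
\]
Expanding the squared norm $\|K_{h,x_0}-T_hw\|_{\cH}^2$ then gives
\[
\mathcal L_{h,\lambda}(w)=\|K_{h,x_0}\|_{\cH}^2-2\langle T_hw,K_{h,x_0}\rangle_{\cH}+\langle w,T_h(T_h+\lambda I)w\rangle_{\cH}.
\]
The quadratic part is governed by $A=T_h(T_h+\lambda I)$, which is positive semidefinite, and the linear term is $\langle w,T_hK_{h,x_0}\rangle_{\cH}$.

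\textbf{Step 2: complete the square.} Set $w^\ast=(T_h+\lambda I)^{-1}K_{h,x_0}$, so that $Aw^\ast=T_hK_{h,x_0}$. Then
\[
\mathcal L_{h,\lambda}(w)-\mathcal L_{h,\lambda}(w^\ast)=\langle w-w^\ast,A(w-w^\ast)\rangle_{\cH}=\|T_h(w-w^\ast)\|_{\cH}^2+\lambda\|w-w^\ast\|_{L^2(\rho_X)}^2\ge0.
\]
This shows $w^\ast$ is a minimizer.

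\textbf{Step 3: uniqueness.} This is the delicate point. If the right-hand side of Step 2 vanishes, then $\|w-w^\ast\|_{L^2(\rho_X)}=0$. Because $\rho_X$ has a density bounded below on $\cX$ and elements of $\cH$ are continuous, $w-w^\ast$ is zero everywhere. This gives uniqueness as an element of $\cH$. Alternatively, $T_h(w-w^\ast)=0$ together with $\langle v,T_hv\rangle_{\cH}=\|v\|_{L^2}^2$ gives the same conclusion.

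\textbf{Step 4: the two displayed identities.} We have $K_{h,x_0}-T_hw^\ast=\lambda w^\ast$. Hence
\[
\mathcal B_h(w^\ast)^2=\lambda^2\|w^\ast\|_{\cH}^2.
\]
Next, $D_h(x_0;\lambda)=w^\ast(x_0)=\langle K_{h,x_0},w^\ast\rangle_{\cH}$, and
\[
\langle (T_h+\lambda I)w^\ast,w^\ast\rangle_{\cH}=Q_h(x_0;\lambda)+\lambda\|w^\ast\|_{\cH}^2,
\]
which is the Tikhonov identity already quoted in the paper. Combining these,
\[
\lambda^2\|w^\ast\|_{\cH}^2=\lambda(D_h-Q_h).
\]
Adding $\lambda Q_h$ shows that the minimum value is $\lambda D_h(x_0;\lambda)$.

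\textbf{Step 5: the constrained statement.} Use the standard Lagrangian argument. Take any $w$ with $\mathcal B_h(w)\le\tau_\lambda$. Then
\[
\lambda Q_h(w)\ \ge\ \mathcal L_{h,\lambda}(w^\ast)-\mathcal B_h(w)^2\ \ge\ \mathcal L_{h,\lambda}(w^\ast)-\tau_\lambda^2=\lambda Q_h(w^\ast).
\]
Therefore $Q_h(w)\ge Q_h(w^\ast)$. Since $w^\ast$ is itself feasible, it belongs to the argmin.

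\textbf{Step 6: comparison with the NW weight.} We have $w_h^{\rm NW}=K_{h,x_0}/c$ with $c=\EE\{K_h(X,x_0)\}$. To use it we need $c>0$. This holds because $K_h(x_0,x_0)>0$ and the kernel is continuous and nonnegative, so $c$ is positive near $x_0$, and the density is bounded below. Then $w_h^{\rm NW}$ is a scalar multiple of a kernel section, hence lies in $\cH$. The inequality $\mathcal L_{h,\lambda}(w^\ast)\le\mathcal L_{h,\lambda}(w_h^{\rm NW})$ follows immediately from global minimality in Step 2.

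\textbf{Main obstacle.} I expect the only real care to be in Step 1. One must justify $\EE\{w(X)g(X)\}=\langle T_hw,g\rangle_{\cH}$ and the self-adjointness of $T_h$ on $\cH$. This follows from boundedness of the kernel and the reproducing property, via a Bochner-integral interchange. The rest of the argument is algebraic.
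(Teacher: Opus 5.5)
Your proposal is correct and follows essentially the same route as the paper: both identify the minimizer through the normal equation $(T_h+\lambda I)w=K_{h,x_0}$, get uniqueness from injectivity of $\cH\hookrightarrow L^2(\rho_X)$, derive the two identities from $K_{h,x_0}-T_hw_{x_0,h,\lambda}=\lambda w_{x_0,h,\lambda}$ and $D_h=Q_h+\lambda\|w_{x_0,h,\lambda}\|_{\cH}^2$, and read off the constrained and NW statements from global minimality. The one difference is that you complete the square, $\mathcal L_{h,\lambda}(w)-\mathcal L_{h,\lambda}(w^\ast)=\|T_h(w-w^\ast)\|_{\cH}^2+\lambda\|w-w^\ast\|_{L^2(\rho_X)}^2$, where the paper sets the G\^ateaux derivative to zero; this makes global minimality explicit instead of tacitly relying on convexity, and you also justify the positivity of $\EE\{K_h(X,x_0)\}$, which the paper leaves implicit.
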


\begin{proof}[Proof of Proposition~\ref{prop:rkhs-localization-variational-nw}]
For any $w,u\in\cH$,
\[
\begin{aligned}
\frac{1}{2}
\frac{d}{dt}
\mathcal L_{h,\lambda}(w+tu)
\bigg|_{t=0}
&=
-
\langle K_{h,x_0}-T_hw,T_hu\rangle_{\cH}
+
\lambda
\langle w,u\rangle_{L^2(\rho_X)}
\\
&=
-
\langle
K_{h,x_0}-T_hw-\lambda w,
u
\rangle_{L^2(\rho_X)},
\end{aligned}
\]
where we use
$\langle v,T_hu\rangle_{\cH}
=
\langle v,u\rangle_{L^2(\rho_X)}$.

At a stationary point, $\langle K_{h,x_0}-T_hw-\lambda w,u\rangle_{L^2(\rho_X)}=0$ for every $u\in\cH$. Let
$r=K_{h,x_0}-T_hw-\lambda w$. Since $T_h:\cH\to\cH$, we have $r\in\cH$. Taking $u=r$ gives
$\|r\|_{L^2(\rho_X)}^2=0$. Under $\cH=H^m(\cX)$ with $m>d/2$, functions in $\cH$ are continuous, and the positive lower bound on the design density makes the embedding
$\cH\hookrightarrow L^2(\rho_X)$ injective. Hence $r=0$ and $(T_h+\lambda I)w=K_{h,x_0}$.
Furthermore, $\langle v,(T_h+\lambda I)v\rangle_{\cH}=\|v\|_{L^2(\rho_X)}^2+\lambda\|v\|_{\cH}^2\ge\lambda\|v\|_{\cH}^2$, so $T_h+\lambda I$ is strictly positive and boundedly invertible. Therefore, $w=(T_h+\lambda I)^{-1}K_{h,x_0}$
is the unique minimizer.

The Tikhonov equation gives $K_{h,x_0}-T_hw_{x_0,h,\lambda}=\lambda w_{x_0,h,\lambda}$,
so $\mathcal B_h(w_{x_0,h,\lambda})^2=\lambda^2\|w_{x_0,h,\lambda}\|_{\cH}^2$. Using $D_h(x_0;\lambda)=Q_h(x_0;\lambda)+\lambda\|w_{x_0,h,\lambda}\|_{\cH}^2$, we obtain $\mathcal B_h(w_{x_0,h,\lambda})^2=\lambda\{D_h(x_0;\lambda)-Q_h(x_0;\lambda)\}$.
Consequently,
\[
\begin{aligned}
\min_{w\in\cH}\mathcal L_{h,\lambda}(w)
&=
\mathcal B_h(w_{x_0,h,\lambda})^2
+
\lambda Q_h(x_0;\lambda)
\\
&=
\lambda
\{D_h(x_0;\lambda)-Q_h(x_0;\lambda)\}
+
\lambda Q_h(x_0;\lambda)
\\
&=
\lambda D_h(x_0;\lambda).
\end{aligned}
\]

To establish constrained optimality, suppose that some $w\in\cH$ satisfies
$\mathcal B_h(w)\le\tau_\lambda$ and
$Q_h(w)<Q_h(w_{x_0,h,\lambda})$. Then $\mathcal L_{h,\lambda}(w)<\mathcal L_{h,\lambda}(w_{x_0,h,\lambda})$, contradicting the variational optimality of $w_{x_0,h,\lambda}$. Hence, $w_{x_0,h,\lambda}\in\arg\min_{\mathcal B_h(w)\le\tau_\lambda}Q_h(w)$.

Finally, if NW uses the same base kernel $K_h$, then
$w_h^{\rm NW}$ is a scalar multiple of $K_{h,x_0}$ and therefore belongs to $\cH$. The variational inequality follows by taking
$w=w_h^{\rm NW}$ as a feasible candidate.
\end{proof}

\noindent\textbf{Connection to the estimation bound.}
Let
$R_\eta=\|\eta(\cdot;\theta_0(x_0))\|_{\cH}$ and define
\[
\mathcal E_{n,N}
=
\sqrt{\log(n+N)}
\left\{
\frac{B_{Y-f}(\theta_\lambda(x_0))}{\sqrt n}
+
\frac{B_f(\theta_\lambda(x_0))}{\sqrt N}
\right\}
+
\kappa\|\eta(\cdot;\theta_\lambda(x_0))\|_{\cH}
\sqrt{\frac{\log N}{N}}.
\]
Up to universal constants, the upper bound
\eqref{eq:thetahat-theta0-bound-clean-twofold} has the form
\[
|\widehat\theta(x_0)-\theta_0(x_0)|
\lesssim
\frac{1}{c_J}
\left[
\mathcal E_{n,N}
\sqrt{D_h(x_0;\lambda)}
+
R_\eta
\sqrt{
\lambda\{D_h(x_0;\lambda)-Q_h(x_0;\lambda)\}
}
\right].
\]
At
$\lambda\asymp(\mathcal E_{n,N}/R_\eta)^2$, both terms are bounded by a constant multiple of
$R_\eta\{\lambda D_h(x_0;\lambda)\}^{1/2}$. Proposition~\ref{prop:rkhs-localization-variational-nw} therefore gives
\[
|\widehat\theta(x_0)-\theta_0(x_0)|
\lesssim
\frac{R_\eta}{c_J}
\left[
\min_{w\in\cH}
\mathcal L_{h,\lambda}(w)
\right]^{1/2}.
\]

\noindent\textbf{Connection to pointwise inference.}
For a generic weight $w$, the worst-case localization bias is bounded by
$R_\eta\mathcal B_h(w)$, while $Q_h(w)^{1/2}/\sqrt n$ gives the weight-dependent labeled stochastic scale. Centered inference therefore requires
\[
\frac{\sqrt n\,\mathcal B_h(w)}{Q_h(w)^{1/2}}\longrightarrow0.
\]
For the RKHS localization weight, the squared ratio is
\[
\frac{n\lambda\{D_h(x_0;\lambda)-Q_h(x_0;\lambda)\}}{Q_h(x_0;\lambda)}.
\]
By Proposition~\ref{prop:Dx0-order-H}, this ratio is of order $n\lambda$, which gives the condition $\lambda=o(n^{-1})$ in
\eqref{eq:cond-clean-twofold}. At the bias level
$\tau_\lambda=\mathcal B_h(w_{x_0,h,\lambda})$, Proposition~\ref{prop:rkhs-localization-variational-nw} shows that the RKHS localization weight minimizes $Q_h(w)$. Thus, when the variance and localized-Jacobian factors induced by $\ell$ are comparable, it also minimizes the leading variance and confidence-interval width among RKHS weights satisfying the same worst-case bias tolerance.

\subsubsection{Numerical comparisons}

\noindent\textbf{Data-generating process and methods.}
We consider
\[
X\sim\operatorname{Unif}[-1,1],
\qquad
Y=bX+4X^2+\varepsilon,
\qquad
\varepsilon\sim N(0,1),
\]
with perfect predictor $f(X)=bX+4X^2$ and target
$\theta_0(0)=0$. We take $n=200$, $N=5000$,
$b\in\{0,2\}$, and 500 paired Monte Carlo replications.

NW uses the Matérn--$5/2$ kernel. NW-CV selects its bandwidth by
leave-one-out cross-validation on an independent labeled sample,
whereas NW-US uses $h_{\rm US}=h_{\rm CV}n^{-0.05}$. The NW
normalizing denominator is estimated by two-fold cross-fitting over
the unlabeled sample.
PPCI with the RKHS localization weight uses the
median-scaled bandwidth grid, shrinking regularization grid,
bias-admissibility threshold $c_{\rm bias}=0.18$, stability constants
$(c_{\rm op},c_{\rm loc})=(12,4)$, and the fallback rule in
Section~\ref{sec:tuning_twofold}.

\noindent\textbf{Data-driven comparison.}
Table~\ref{tab:nw-rkhs-localization-data-driven} compares the
fully data-driven methods.
\begin{table}[t]
\centering
\small
\setlength{\tabcolsep}{6pt}
\caption{Data-driven comparison of NW-PPCI and PPCI with the
RKHS localization weight over 500 paired replications.}
\label{tab:nw-rkhs-localization-data-driven}
\begin{tabular}{clrrrr}
\toprule
$b$ & Method & Coverage & Bias & RMSE & Width \\
\midrule
0 & NW-CV         & 0.964 &  0.0302 & 0.1818 & 0.7110 \\
0 & NW-US         & 0.970 &  0.0145 & 0.2027 & 0.7958 \\
0 & RKHS localization & 0.970 & -0.0005 & 0.1674 & 0.6808 \\
\addlinespace
2 & NW-CV         & 0.960 &  0.0288 & 0.1833 & 0.7171 \\
2 & NW-US         & 0.970 &  0.0135 & 0.2044 & 0.8014 \\
2 & RKHS localization & 0.970 & -0.0005 & 0.1674 & 0.6813 \\
\bottomrule
\end{tabular}
\end{table}
The mean selected bandwidths satisfy
$\bar h_{\rm CV}\in[0.081,0.083]$,
$\bar h_{\rm US}\in[0.063,0.064]$, and
$\bar h_{\rm RKHS}\approx0.600$. NW-US trades lower bias for larger
RMSE and width, whereas RKHS localization attains the smallest
absolute bias, RMSE, and width. Writing $\mathrm{VR}$ for LO-relative variance
reduction,
$(\mathrm{VR}_{\rm CV},\mathrm{VR}_{\rm US},\mathrm{VR}_{\rm RKHS})$
equals $(0.14\%,0.13\%,0.86\%)$ for $b=0$ and
$(3.02\%,1.80\%,7.01\%)$ for $b=2$. This contrast agrees with the
$(O(h^4),O(h^2))$ orders for $(b=0,b=2)$ in
Proposition~\ref{prop:nw-localization-limit}.

\noindent\textbf{Same-bandwidth mechanism comparison.}
At each fixed $h\in\{0.08,0.20,0.40,0.60\}$, we compare
NW-PPCI and PPCI using the same Matérn--$5/2$ kernel; RKHS
localization selects only $\lambda$.
For a fitted weight $\widehat w$, define $
\widehat\Delta_2(\widehat w)
=|
\frac{
\mathbb P_N\{\widehat w(X)X^2\}
}{
\mathbb P_N\{\widehat w(X)\}
}|$.
This diagnostic measures quadratic localization error rather
than the worst-case RKHS quantity $\mathcal B_h(w)$.

\begin{table}[t]
\centering
\small
\setlength{\tabcolsep}{5pt}
\caption{Same-bandwidth comparison for $b=0$ over 500 paired
replications. Here $\widehat\Delta_2$ is the normalized empirical
quadratic localization error.}
\label{tab:nw-rkhs-localization-same-h-full}
\begin{tabular}{ccrrrr}
\toprule
$h$ & Method & Coverage & Bias & RMSE & $\widehat\Delta_2$ \\
\midrule
0.08 & NW            & 0.956 &  0.0260 & 0.1766 & 0.007690 \\
0.08 & RKHS localization & 0.980 &  0.0023 & 0.3806 & 0.000047 \\
\addlinespace
0.20 & NW            & 0.652 &  0.1851 & 0.2175 & 0.047563 \\
0.20 & RKHS localization & 0.968 &  0.0019 & 0.2601 & 0.000127 \\
\addlinespace
0.40 & NW            & 0.000 &  0.5879 & 0.5943 & 0.147705 \\
0.40 & RKHS localization & 0.952 & -0.0049 & 0.1960 & 0.000259 \\
\addlinespace
0.60 & NW            & 0.000 &  0.8638 & 0.8675 & 0.216623 \\
0.60 & RKHS localization & 0.942 & -0.0070 & 0.1676 & 0.000459 \\
\bottomrule
\end{tabular}
\end{table}

At $h=0.08$, NW has lower RMSE because its narrow neighborhood
limits curvature bias, whereas RKHS localization incurs a variance
cost from balancing. As $h$ increases, NW bias rises rapidly and coverage
deteriorates. At $h=0.60$,
$\widehat\Delta_2(\widehat w_{\rm RKHS})/
\widehat\Delta_2(\widehat w_{\rm NW})\approx1/472$ and
$\mathbb P_N\{\widehat w_{\rm RKHS}<0\}=41.8\%$. Results for $b=2$
follow the same pattern.

\noindent\textbf{Curvature stress test.}
We fix $h=0.60$ and $b=0$ and vary the quadratic coefficient
in $Y=AX^2+\varepsilon$ over $A\in\{2,4,8\}$, thereby isolating
sensitivity to curvature. For $A=2,4,8$, the NW
$(\mathrm{Bias},\mathrm{RMSE})$ pairs are
$(0.431,0.438)$, $(0.864,0.867)$, and $(1.730,1.732)$, respectively.
By contrast, RKHS localization has
$\mathrm{Bias}\in[-0.0079,-0.0051]$ and $\mathrm{RMSE}\approx0.168$.
The approximately proportional growth of NW bias with $A$ is
consistent with quadratic smoothing bias, while the RKHS localization
weight balances curvature over the wider neighborhood.

\noindent\textbf{Analytic rate verification.}
We also verify the rate statements independently of the main
performance comparison. Let $\widehat\alpha_b$ denote the fitted
log--log slope over the prespecified lower $40\%$ of the bandwidth
grid. Then $(\widehat\alpha_0,\widehat\alpha_2)
=(3.9995,1.9951)\approx(4,2)$, matching the $O(h^4)$ and $O(h^2)$
orders in Proposition~\ref{prop:nw-localization-limit}. For a fixed
uniform NW weight, the empirical bias pair
$(0.1213,0.1218)\approx(0.12,0.12)$, while the empirical--analytic
variance discrepancy lies in $[4.8\%,9.6\%]$.

\noindent\textbf{NW bandwidth-grid sensitivity.}
To assess boundary sensitivity, we expand the NW grid from
$[0.05,0.8]$ to $[0.02,0.8]$. The pair (NW-CV lower-bound selection
rate, NW-US clipping rate) changes as
$(8\%,18\%)\to(0.4\%,1\%)$. Nevertheless, bias, RMSE, and coverage do
not materially improve, while RMSE and width increase slightly. The
wider grid therefore changes boundary selection without altering the
substantive comparison.

\noindent\textbf{RKHS localization tuning diagnostics.}
Across all 500 replications and every
$h\in\{0.08,0.20,0.40,0.60\}$, the RKHS procedure satisfies all three
tuning criteria without fallback. Its bias-admissibility statistic
lies in $[0.1669,0.1794]\subset[0,c_{\rm bias})$, with
$c_{\rm bias}=0.18$. Thus, the comparison does not rely on the
least-violation safeguard.
\subsection{Why Localize First?}
\label{subsec:why-localize-first}
\noindent
A natural strategy for conditional inference is to first estimate the
entire conditional-moment function and then evaluate it at a query point
$x_0$. Classical prediction-powered inference (PPI)
\citep{angelopoulos2023prediction}, originally developed for a global
parameter $\theta_0$ satisfying $\EE[\ell(Y;\theta_0)]=0$, suggests one
such construction: form a prediction-powered pseudo-outcome and regress it
on $X$ using kernel ridge regression (KRR). We call this estimate-first,
evaluate-later procedure PPI+KRR. PPCI instead fixes $x_0$ first and
calibrates localization specifically for that point. This distinction is
important for conditional inference.

Suppose that $N'=n+N$ covariates $\{\bar X_j\}_{j=1}^{N'}$ are observed,
with only $n$ corresponding outcomes available. Let $O_j$ indicate whether
the outcome is observed, independently of $(\bar X_j,Y_j)$, with
$\PP(O_j=1)=\pi=n/N'$. For a fixed $\theta$, define
\begin{equation*}
Z_j(\theta)
=
\ell(f(\bar X_j);\theta)
+
\frac{O_j}{\pi}
\left\{
\ell(Y_j;\theta)-\ell(f(\bar X_j);\theta)
\right\}.
\end{equation*}
Since $\EE\{Z(\theta)|X=x\}=\eta(x;\theta)$, PPI+KRR estimates the
conditional moment function by
\begin{equation*}
\wh g_{\gamma,\theta}
=
\underset{g\in\cH}{\arg\min}
\left[
\frac{1}{N'}
\sum_{j=1}^{N'}
\{Z_j(\theta)-g(\bar X_j)\}^2
+
\gamma\|g\|_{\cH}^2
\right].
\end{equation*}
Writing $\Sigma=K(\bar X,\bar X)$ and $k_x=K(\bar X,x)$ gives
$\wh g_{\gamma,\theta}(x)
=k_x^\top(\Sigma+N'\gamma I_{N'})^{-1}\bar Z(\theta)$, and the estimator at
$x_0$ solves $\wh g_{\gamma,\wh\theta_\gamma(x_0)}(x_0)=0$. Both PPI+KRR and PPCI impose smoothness on
$\eta(\cdot;\theta)$, while $f$ affects only the pseudo-outcome noise:
\begin{equation}\label{eq:ppi-VarZ}
\Var\{Z(\theta)|X=x\}
=
\frac{1}{\pi}\Var\{\ell(Y;\theta)|X=x\}
+
\frac{1-\pi}{\pi}
\left[
\eta(x;\theta)-\ell(f(x);\theta)
\right]^2.
\end{equation}

\begin{proof}[Proof of~\eqref{eq:ppi-VarZ}]
Conditionally on $X=x$, let $D = \ell(Y;\theta) - \ell(f(x);\theta)$. Since $\ell(f(x);\theta)$ is deterministic given $X=x$, the conditional variance of $Z(\theta)$ is
\begin{equation*}
\Var\big(Z(\theta) | X=x\big)
= \Var\left( \ell(f(x);\theta) + \frac{O}{\pi} D | X=x \right)
= \frac{1}{\pi^2} \Var(OD | X=x).
\end{equation*}
For $\Var(OD | X=x)$, we have $\Var(OD | X=x) = \EE[O^2 D^2 | X=x] - \big(\EE[OD | X=x]\big)^2$. Since the observation indicator $O \in \{0, 1\}$ is a Bernoulli random variable, we have $O^2 = O$. Furthermore, $O$ is independent of $(X, Y)$ with $\EE[O] = \pi$. Therefore, the conditional moments are:
\begin{equation*}
\begin{aligned}
\EE[O^2 D^2 | X=x] &= \EE[O] \EE[D^2 | X=x] = \pi \EE[D^2 | X=x], \\
\big(\EE[OD | X=x]\big)^2 &= \big(\EE[O] \EE[D | X=x]\big)^2 = \pi^2 \big(\EE[D | X=x]\big)^2.
\end{aligned}
\end{equation*}
Substituting these into the variance formula, and applying the identity $\EE[D^2 | X=x] = \Var(D | X=x) + \big(\EE[D | X=x]\big)^2$, we obtain:
\begin{equation*}
\begin{aligned}
\Var(OD | X=x)
&= \pi \EE[D^2 | X=x] - \pi^2 \big(\EE[D | X=x]\big)^2 \\
&= \pi \Big( \Var(D | X=x) + \big(\EE[D | X=x]\big)^2 \Big) - \pi^2 \big(\EE[D | X=x]\big)^2 \\
&= \pi \Var(D | X=x) + \pi(1-\pi) \big(\EE[D | X=x]\big)^2.
\end{aligned}
\end{equation*}
Noting that $\Var(D | X=x) = \Var\big(\ell(Y;\theta) | X=x\big)$ and $\EE[D | X=x] = \eta(x;\theta) - \ell(f(x);\theta)$, we have
\begin{equation*}
\begin{aligned}
\Var\big(Z(\theta) | X=x\big)
&= \frac{1}{\pi^2} \Big[ \pi \Var(D | X=x) + \pi(1-\pi) \big(\EE[D | X=x]\big)^2 \Big] \\
&= \frac{1}{\pi} \Var\big(\ell(Y;\theta) | X=x\big) + \frac{1-\pi}{\pi} \big(\eta(x;\theta) - \ell(f(x);\theta)\big)^2,
\end{aligned}
\end{equation*}
which is exactly~\eqref{eq:ppi-VarZ}.
\end{proof}

For a fixed kernel and regularization level, PPI+KRR is closely connected
to the RKHS localization weight used by PPCI. At the population level,
$g_{\gamma,\theta}=(T_K+\gamma I)^{-1}T_K\eta(\cdot;\theta)$, so $
g_{\gamma,\theta}(x_0)
=
\EE[
\{(T_K+\gamma I)^{-1}K_{x_0}\}(X)\eta(X;\theta)]$.
Thus evaluating KRR at $x_0$ implicitly induces an RKHS localization
weight. The difference is calibration: PPI+KRR selects one
global $\gamma$ to estimate the full function, whereas PPCI fixes $x_0$
first and selects the point-specific $\lambda$.

This distinction matters for standard prediction-oriented KRR tuning.
The pseudo-outcome variance is inflated by
$1/\pi=N'/n$, so the integrated KRR risk is, up to constants,
$\gamma+D(\gamma)/n$. Under
Assumption~\ref{ass:eigenfunctions_bound},
$D(\gamma)\asymp\gamma^{-d/(2m)}$, so risk-consistent cross-validation or
generalized cross-validation yields $\gamma_{\rm GCV}
\asymp
n^{-2m/(2m+d)}$.
This rate is generally too
large for centered pointwise inference.
To see this, consider finitely many interior query points
$\mathcal X_0=\{x_0^{(1)},\ldots,x_0^{(L)}\}$ and marginal pointwise
coverage at each point. Write
$D_\ell(\gamma)=D(x_0^{(\ell)};\gamma)$ and
$Q_\ell(\gamma)=Q(x_0^{(\ell)};\gamma)$. The worst-case RKHS
regularization bias at $x_0^{(\ell)}$ is bounded by
$\|\eta(\cdot;\theta)\|_{\cH}
\{\gamma(D_\ell(\gamma)-Q_\ell(\gamma))\}^{1/2}$, while in the
abundant-unlabeled regime the leading stochastic scale is proportional to
$\{Q_\ell(\gamma)/n\}^{1/2}$. Hence the squared standardized-bias proxy is $
\frac{
n\gamma\{D_\ell(\gamma)-Q_\ell(\gamma)\}
}{
Q_\ell(\gamma)
}$.

Suppose, under the Sobolev eigenvalue condition, that
$\{D_\ell(\gamma)-Q_\ell(\gamma)\}/Q_\ell(\gamma)
\to\kappa_\ell\in(0,\infty)$. Then the proxy is
$n\gamma\kappa_\ell\{1+o(1)\}$. Since $
n\gamma_{\rm GCV}
\asymp
n^{d/(2m+d)}
\longrightarrow\infty$,
globally prediction-optimal PPI+KRR does not ensure negligible pointwise
regularization bias. A confidence interval centered at the GCV-tuned KRR
estimator therefore need not attain nominal pointwise coverage uniformly
over the RKHS class.

Bootstrap methods for KRR
\citep{singh2023kernel,yu2025estimation}, based on Gaussian and
multiplier-bootstrap approximations
\citep{chernozhukov2016empirical}, do not remove this issue. They
approximate the stochastic distribution for a given regularization level
but do not eliminate the difference between the penalized target
$g_{\gamma,\theta}(x_0)$ and the unregularized target
$\eta(x_0;\theta)$. For example, \citet{yu2025estimation} requires this
difference to be negligible relative to the stochastic scale and achieves
it through undersmoothing. Thus inference for the unregularized target
still requires negligible regularization bias.

Replacing GCV by undersmoothing controls bias, but introduces a
different compromise. A common $\gamma$ valid at all query points must,
under the calibration in Section~\ref{sec:tuning_twofold}, satisfy
\begin{equation*}
\gamma_{\rm global}
\lesssim
\frac{c_{\rm bias}^2}
{n\max_{1\le k\le L}\kappa_k}.
\end{equation*}
Thus the regularization level is dictated by the most difficult query
point. Since $Q_\ell(\gamma)\asymp\gamma^{-d/(2m)}$, whereas the largest
point-specific regularization satisfying the same bias criterion is of
order $c_{\rm bias}^2/(n\kappa_\ell)$, the resulting width inflation at
$x_0^{(\ell)}$ is
\begin{equation*}
\frac{
\operatorname{Width}_{\rm global}(x_0^{(\ell)})
}{
\operatorname{Width}_{\rm pointwise}(x_0^{(\ell)})
}
\asymp
\left(
\frac{
\max_{1\le k\le L}\kappa_k
}{
\kappa_\ell
}
\right)^{d/(4m)}
\ge 1.
\end{equation*}
Hence GCV can leave excessive pointwise bias, whereas one globally
undersmoothed $\gamma$ can make inference at easier points unnecessarily
variable. The same tension appears in stability constraints, since smaller
$\gamma$ increases effective dimension and pointwise leverage.

PPCI avoids this global compromise. For each $x_0^{(\ell)}$, it separately
constructs the RKHS localization weight and selects
$\wh\lambda_\ell$ using
$\wh D_h(x_0^{(\ell)};\lambda)$ and
$\wh Q_h(x_0^{(\ell)};\lambda)$. The shrinking regularization grid provides
undersmoothing, while bias and stability restrictions are imposed
pointwise. PPCI can therefore use stronger undersmoothing where
localization is difficult while retaining more stable regularization at
easier points.

\subsection{Conformal Prediction}\label{sec:CP}
\noindent
Conformal prediction provides distribution-free guarantees for predicting $Y$ given $x$ by constructing prediction sets with marginal coverage \citep{vovk2005algorithmic,lei2018distribution}, and more recently, conditional coverage guarantees \citep{gibbs2025conformal}. However, its focus is on predictive coverage for the realized outcome $Y$. Our objective is different: we aim to conduct inference for conditional functionals. For instance, in the conditional mean setting, the target is $\theta_0(x_0)=\EE[Y | X=x_0]$, which may differ substantially from the observed outcome $Y$ when the noise in response is large.

\begin{figure}[t]
  \centering
  \includegraphics[width=\textwidth, height=0.20\textheight]{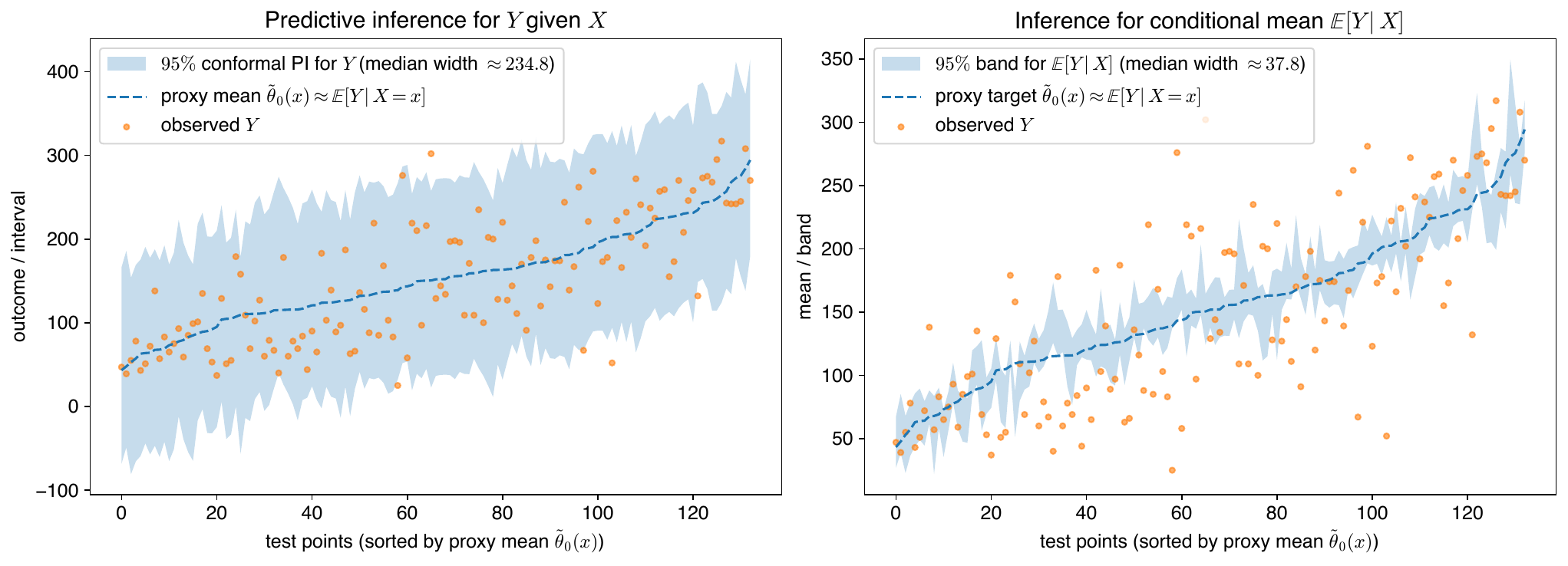}
  \caption{Diabetes progression data illustration. Left: split conformal
  predictive intervals for the outcome $Y$ at test covariates $X$. Right: a bootstrap uncertainty band for the conditional mean
  $\theta_0(x)=\EE[Y| X=x]$.}
  \label{fig:cp-vs-ci-diabetes}
  \Description{Side-by-side conceptual intervals contrast uncertainty for a future response with uncertainty for the conditional mean at the same covariate value.}
\end{figure}
The bootstrap panel is a conceptual illustration of the different inferential target; it does not use an auxiliary predictor or unlabeled data and is not presented as a performance benchmark against PPCI.

This distinction is practically important. In clinical decision support, practitioners need more than a range of plausible responses $Y$; they require a reliable, patient-specific risk estimate $\EE[Y| X=x_0]$ together with its uncertainty~\citep{begoli2019need}.
To illustrate, consider the diabetes progression dataset with $n=442$ patients and $d=10$ baseline covariates \citep{efron2004least}, where $Y$ measures disease progression one year after baseline. Let $\theta_0(x)=\EE[Y| X=x]$ denote the conditional mean. Figure~\ref{fig:cp-vs-ci-diabetes} compares (left) a split conformal prediction interval for $Y$ \citep{lei2018distribution} and (right) a bootstrap uncertainty band for $\theta_0(x)$, both evaluated at the same target points. The predictive intervals remain wide across covariates, reflecting substantial response variability, whereas the uncertainty band for $\theta_0(x)$ is considerably tighter. Consequently, response-level coverage can be overly conservative for decision-making. For example, two patients may have clearly different estimated mean progression, yet their predictive intervals for $Y$ may be wide and heavily overlapping. In such cases, predictive coverage gives an ambiguous prognosis, while conditional inference for $\theta_0(x)$ provides uncertainty quantification that is directly aligned with action.

\subsection{Combining PPCI with Synthetic Surrogates}
\label{subsec:synth-surrogate}
\noindent Synthetic-surrogate (SynSurr) methods
\citep{gronsbell2024another,mccaw2024synthetic} exploit the association between the
covariate and surrogate predictions to improve inference for a global target. Following the PPCI localization step, SynSurr can be extended to conditional targets.
Suppose that a surrogate $\wh Y$ is observed for both the labeled
and unlabeled samples. For a fixed query point $x_0$, PPCI first constructs
the RKHS localization weight
$w_{x_0,\lambda}=(T_K+\lambda I)^{-1}K_{x_0}$ and targets the localized
moment $\EE[w_{x_0,\lambda}(X)\ell(Y;\theta)]$. For any scalar coefficient
$\omega$, this moment admits the decomposition
\begin{equation*}
\EE\big[w_{x_0,\lambda}(X)\ell(Y;\theta)\big]
=
\EE\big[
w_{x_0,\lambda}(X)
\{\ell(Y;\theta)-\omega\ell(\wh Y;\theta)\}
\big]
+
\omega\EE\big[
w_{x_0,\lambda}(X)\ell(\wh Y;\theta)
\big].
\end{equation*}

Suppressing the fold indices of the cross-fitted weights, the corresponding
empirical localized moment can be written as
\begin{equation*}
\begin{aligned}
\wh\eta_{\lambda,\mathrm{Syn}}(x_0;\theta)
={}&
\frac{1}{n}
\sum_{i=1}^{n}
\wh w_{x_0,\lambda}(X_i)
\left\{
\ell(Y_i;\theta)
-
\wh\omega(x_0)
\ell(\wh Y_i;\theta)
\right\}
\\
&+
\frac{\wh\omega(x_0)}{n+N}
\left\{
\sum_{i=1}^{n}
\wh w_{x_0,\lambda}(X_i)\ell(\wh Y_i;\theta)
+
\sum_{u=1}^{N}
\wh w_{x_0,\lambda}(\wt X_u)\ell(\wh Y_u;\theta)
\right\}.
\end{aligned}
\end{equation*}
The resulting estimator is defined by
$\wh\eta_{\lambda,\mathrm{Syn}}(x_0;\wh\theta_{\mathrm{Syn}}(x_0))=0$.
The coefficient $\omega$ controls the efficiency gain. Temporarily treating
the localization weight as fixed, the variance-minimizing population
coefficient at the target parameter is
\begin{equation*}
\omega^\ast(x_0)
=
\frac{
\Cov\left(
w_{x_0,\lambda}(X)\ell(Y;\theta),
w_{x_0,\lambda}(X)\ell(\wh Y;\theta)
\right)
}{
\Var\left(
w_{x_0,\lambda}(X)\ell(\wh Y;\theta)
\right)
}.
\end{equation*}
Relative to the localized labeled-only moment, the corresponding variance
reduction is
\begin{equation*}
\frac{N}{n(n+N)}
\frac{
\Cov\left(
w_{x_0,\lambda}(X)\ell(Y;\theta),
w_{x_0,\lambda}(X)\ell(\wh Y;\theta)
\right)^2
}{
\Var\left(
w_{x_0,\lambda}(X)\ell(\wh Y;\theta)
\right)
}.
\end{equation*}
Hence, the synthetic surrogate improves efficiency whenever its localized
contribution is correlated with the localized outcome contribution. If this correlation
vanishes, then $\omega^\ast(x_0)=0$, and the procedure reduces to
localized labeled-only inference. In
practice, $\omega^\ast(x_0)$ can be estimated on an independent fold and
inserted into the cross-fitted estimating equation.

\section{PPCI Without Sample-Splitting}
\label{app:ppci-no-split}
\noindent
As established in Section~\ref{sec:PPCI}, the cross-fitted PPCI estimator defined there and analyzed in Section~\ref{sec:theory} achieves the desired asymptotic normality and variance reduction. Nevertheless, the cross-fitting partition prevents the fold-specific empirical weights from fully exploiting the entire pool of available covariates. To address this, we further investigate a variant of the PPCI procedure without sample-splitting in this appendix. Specifically, we construct a global empirical localization weight, $\wh w_{x_0,\lambda}$, by pooling all available covariates from both the labeled and unlabeled datasets. The remainder of this appendix is organized as follows: Appendix~\ref{app:PPCI_method} introduces this full-sample methodology. Appendix~\ref{app:PPCI_theory} provides the detailed theoretical guarantees for this full-sample pointwise estimator. Appendix~\ref{app:proofs-of-PPCI-nosplit} contains the formal proofs of the theory presented in Appendix~\ref{app:PPCI_theory}, and Appendix~\ref{app:proofs-of-lemmas-nosplit} provides the proofs of the technical lemmas utilized in Appendix~\ref{app:proofs-of-PPCI-nosplit}.

\subsection{Methodology}\label{app:PPCI_method}
\noindent
To construct the empirical localization weight corresponding to~\eqref{eq:rkhs-localization-pop-x0_twofold} in Section~\ref{sec:localization_twofold},
let $N':=n+N$ and define the pooled covariate sample by
$\bar X_j=X_j$ for $j=1,\dots,n$ and $\bar X_j=\wt X_{j-n}$ for $j=n+1,\dots,N'$.
The empirical integral operator of kernel $K$ is
$\wh T_K := \frac{1}{N'}\sum_{j=1}^{N'} K_{\bar X_j}\otimes K_{\bar X_j}$,
and the empirical localization weight is
$\wh w_{x_0,\lambda} := (\wh T_K+\lambda I)^{-1}K(x_0,\cdot)$.
Equivalently, let $\Sigma\in\RR^{N'\times N'}$ be the pooled Gram matrix
$\Sigma_{jk}=K(\bar X_j,\bar X_k)$ and let $k_0\in\RR^{N'}$ satisfy
$(k_0)_j=K(\bar X_j,x_0)$.

Set $\xi_\lambda=(\Sigma+\lambda N' I_{N'})^{-1}k_0$. The empirical RKHS localization weight corresponding to \eqref{eq:rkhs-localization-pop-x0_twofold} has the finite-dimensional form
\begin{equation}\label{eq:finite-nosplit-weight}
\wh w_{x_0,\lambda}(x)
=
\frac{K(x,x_0)-K(x,\bar X)^\top\xi_\lambda}{\lambda}.
\end{equation}
At the pooled sample points, this simplifies to
\[
\big(\wh w_{x_0,\lambda}(\bar X_1),\dots,\wh w_{x_0,\lambda}(\bar X_{N'})\big)^\top
=N'\xi_\lambda,
\]
which follows from $(\Sigma+N'\lambda I)\xi_\lambda=k_0$.

The empirical localized moment is obtained by replacing the expectations in~\eqref{eqn:Mc_twofold} with sample averages and using $\wh w_{x_0,\lambda}$. Specifically,
\begin{equation}
\label{eq:Mhatc-def}
\wh \eta_\lambda(x_0;\theta) =\frac{1}{n}\sum_{i=1}^n \wh w_{x_0,\lambda}(X_i)\big\{\ell(Y_i;\theta)-\ell(f(X_i);\theta)\big\} +\frac{1}{N}\sum_{u=1}^N \wh w_{x_0,\lambda}(\wt X_u)\ell\big(f(\wt X_u);\theta\big).
\end{equation}
The prediction-powered conditional estimator $\wh\theta(x_0)$ is defined as the solution to
\begin{equation}\label{eq:thetahat-x0}
\wh \eta_\lambda(x_0;\wh\theta(x_0))=0.
\end{equation}
Based on the prediction-powered estimator $\wh\theta(x_0)$ defined as the solution to~\eqref{eq:thetahat-x0}, we construct conditional inference for $\theta_0(x_0)$ at target point $x_0$.
First, we compute the empirical Jacobian of the localized moment~\eqref{eq:Mhatc-def} evaluated at $\wh\theta(x_0)$:
\begin{equation}\label{eq:H-hat-new}
  \wh J_\lambda(x_0) := \partial_\theta \wh \eta_\lambda(x_0;\theta)  \big|_{\theta=\wh\theta(x_0)}.
\end{equation}
Second, define the empirical contributions from the labeled and unlabeled samples to \eqref{eq:Mhatc-def} as
$\wh\zeta_i(x_0):=\wh w_{x_0,\lambda}(X_i)\big\{\ell(Y_i;\wh\theta(x_0))-\ell(f(X_i);\wh\theta(x_0))\big\}$ with $i=1,\dots,n$,
and $\wh\zeta_u(x_0):=\wh w_{x_0,\lambda}(\wt X_u)\,\ell(f(\wt X_u);\wh\theta(x_0))$, with $u=1,\dots,N$.
Let $\wh\sigma^2_{Y-f}(x_0)$ and $\wh\sigma^2_{f}(x_0)$ denote the sample variances of
$\{\wh\zeta_i(x_0)\}_{i=1}^n$ and $\{\wh\zeta_u(x_0)\}_{u=1}^N$, respectively.
The variance estimator is then
\begin{equation}\label{eq:V-hat-new}
  \wh V(x_0):=\frac{1}{n}\wh\sigma^2_{Y-f}(x_0)+\frac{1}{N}\wh\sigma^2_{f}(x_0).
\end{equation}
Finally, the $(1-\alpha)$ conditional confidence interval for $\theta_0(x_0)$ is constructed as
\begin{equation}\label{eq:CI-new}
  \cC(x_0):=\Big(\wh\theta(x_0)  \pm z_{1-\alpha/2}\, \sqrt{\wh V(x_0)}/\big|\wh J_\lambda(x_0)\big|\Big),
\end{equation}
where $z_{1-\alpha/2}$ denotes the $(1-\alpha/2)$ quantile of the standard normal distribution. Note that, to avoid notational clutter, we reuse the notations from Section~\ref{sec:PPCI} (e.g., $\wh\theta(x_0)$, $\wh\eta_\lambda(x_0;\theta)$, $\wh J_\lambda(x_0)$, $\wh V(x_0)$, and $\cC(x_0)$), but we emphasize that these quantities are constructed here without sample-splitting and with the empirical weight in \eqref{eq:finite-nosplit-weight}. {As in the main text, $\lambda$ is treated as fixed in the theoretical analysis.}

\subsection{Theoretical Guarantees}\label{app:PPCI_theory}
\noindent
For PPCI without sample-splitting, we establish a nonasymptotic upper bound for the PPCI estimator
$\wh\theta(x_0)$ in \eqref{eq:thetahat-x0}.
\begin{theorem}
\label{thm:thetahat-theta0-bound}
Let $x_0$ be an interior point of $\cX$, and let $\theta_\lambda(x_0)$ denote the locally unique solution in $\Theta_0$ to $\eta_\lambda(x_0;\theta)=0$. Suppose Assumptions~\ref{ass:sampling-scheme}--\ref{ass:eigenfunctions_bound} hold. Assume $\lambda=\lambda_{n,N}\to0$ and that the regularization parameter satisfies
\begin{equation*}
\lambda^{-1} = o\left( \left(\frac{N'}{\log N'}\right)^{\frac{m}{d}} \wedge
\left(\frac{n\wedge N}{\log N'}\right)^{\frac{2m}{d}} \right).
\end{equation*}
Then for all sufficiently large $n,N$, with probability at least
$1 - 30 (N')^{-1}$ with $N'=n+N$,
\begin{equation}
\label{eq:thetahat-theta0-bound-clean}
\begin{aligned}
|\wh\theta(x_0)-\theta_0(x_0)|\le
&
\underbrace{\frac{8\sqrt{2}}{c_J}\sqrt{Q(x_0;\lambda)\log N'}
\left(\frac{B_{Y-f}(\theta_\lambda(x_0))}{\sqrt n}+\frac{B_f(\theta_\lambda(x_0))}{\sqrt N}\right)}_{\text{moment estimation error}}
\\
&\quad+\underbrace{\frac{32}{c_J}\,\kappa\,\|\eta(\cdot;\theta_\lambda(x_0))\|_{\cH}\,\sqrt{\frac{D(x_0;\lambda)\log N'}{N'}}}_{\text{weight estimation error}}
\\
&\quad+
\underbrace{\frac{1}{c_J}\,\|\eta(\cdot;\theta_0(x_0))\|_{\cH}\,\sqrt{\lambda\{D(x_0;\lambda)-Q(x_0;\lambda)\}}}_{\text{regularization bias}}.
\end{aligned}
\end{equation}
Moreover, define the stochastic-error scale
\begin{equation*}
\sN := \sqrt{\log N'} \left( \frac{8\sqrt{2}B_{Y-f}(\theta_\lambda(x_0))}{\sqrt n} + \frac{8\sqrt{2}B_{f}(\theta_\lambda(x_0))}{\sqrt N} + \frac{32\kappa\|\eta(\cdot;\theta_\lambda(x_0))\|_{\cH}}{\sqrt{N'}} \right)
\end{equation*}
which collects the contributions from labeled and unlabeled data. The formal oracle balance takes
$\lambda\asymp\{\sN/\|\eta(\cdot;\theta_0(x_0))\|_{\cH}\}^2$. If a sequence of this order also satisfies the preceding stability condition, then there exists $C_0>0$ such that
\begin{equation}\label{eqn:upper_bound_sobolev_rate}
|\wh\theta(x_0) - \theta_0(x_0)|\le\frac{2\sqrt{C_0}}{c_J}\,\|\eta(\cdot;\theta_0(x_0))\|_{\cH}^{\frac{d}{2m}}\,\sN^{\,1-\frac{d}{2m}}.
\end{equation}
\end{theorem}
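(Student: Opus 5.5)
We prove Theorem~\ref{thm:thetahat-theta0-bound} by the same root-perturbation argument as for Theorem~\ref{thm:thetahat-theta0-bound-twofold}; only the weight-estimation step changes. We split
\[
\wh\theta(x_0)-\theta_0(x_0)=\{\wh\theta(x_0)-\theta_\lambda(x_0)\}+\{\theta_\lambda(x_0)-\theta_0(x_0)\}.
\]

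\emph{Regularization bias.} Since $\eta_\lambda(x_0;\theta_\lambda(x_0))=0=\eta(x_0;\theta_0(x_0))$, the mean value theorem and Assumption~\ref{ass:J_lambda_bound} give
\[
|\theta_\lambda(x_0)-\theta_0(x_0)|\le c_J^{-1}\,|\eta_\lambda(x_0;\theta_0(x_0))-\eta(x_0;\theta_0(x_0))|.
\]
By the Tikhonov representation \eqref{eqn:Tikhonov-regu_twofold}, this difference equals $-\lambda\langle w_{x_0,\lambda},\eta\rangle_{\cH}$. Cauchy--Schwarz and the identity $\lambda^2\|w_{x_0,\lambda}\|_\cH^2=\lambda\{D(x_0;\lambda)-Q(x_0;\lambda)\}$ then yield the third term.

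\emph{Stochastic terms.} For the first difference, we show that with high probability
\[
\sup_{\theta\in\Theta_0}|\partial_\theta\wh\eta_\lambda-J_\lambda|\le c_J/2 .
\]
This makes $\theta\mapsto\wh\eta_\lambda(x_0;\theta)$ strictly monotone on $[\theta_\lambda\pm\delta_\theta]$, so a root exists there and
\[
|\wh\theta(x_0)-\theta_\lambda(x_0)|\le 2c_J^{-1}|\wh\eta_\lambda(x_0;\theta_\lambda)-\eta_\lambda(x_0;\theta_\lambda)|.
\]
At $\theta=\theta_\lambda$, we decompose $\wh\eta_\lambda-\eta_\lambda$ into two pieces:
\begin{itemize}
\item[(a)] the moment term, the same sample averages with the population weight $w_{x_0,\lambda}$ in place of $\wh w_{x_0,\lambda}$;
\item[(b)] the weight term, sample averages of $(\wh w_{x_0,\lambda}-w_{x_0,\lambda})$ times $\ell(Y;\theta)-\ell(f;\theta)$ and times $\ell(f;\theta)$.
\end{itemize}
For (a), Bernstein's inequality applies with variance $\le B^2Q(x_0;\lambda)$ and range $\le B\|w_{x_0,\lambda}\|_\infty\le B\kappa\sqrt{D(x_0;\lambda)}/\sqrt\lambda$. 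The rate condition makes the range part lower order, giving the first term with $\log N'$.

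For (b), we use the resolvent identity
\[
\wh w-w=(\wh T_K+\lambda)^{-1}(T_K-\wh T_K)w .
\]
We rewrite the weighted averages as $\langle \wh w-w,\wh\mu_n\rangle_\cH$, where $\wh\mu_n$ is the relevant empirical embedding, e.g.\ $n^{-1}\sum_i K_{X_i}\{\ell(Y_i)-\ell(f(X_i))\}$. The empirical embedding splits into its population mean plus a fluctuation. The population-mean contribution reduces, via $\EE[K_X\eta(X)]=T_K\eta$, to
\[
\langle (T_K-\wh T_K)w,\ (\wh T_K+\lambda)^{-1}T_K\eta\rangle .
\]
We bound this using an operator concentration
\[
\|(T_K+\lambda)^{-1/2}(T_K-\wh T_K)(T_K+\lambda)^{-1/2}\|\lesssim\sqrt{D(\lambda)\log N'/N'}=o(1),
\]
which is where the first part of the $\lambda$ condition (with $D(\lambda)\asymp\lambda^{-d/2m}$ from Proposition~\ref{prop:Dx0-order-H}) is used. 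Together with a vector Bernstein bound on $\|(T_K+\lambda)^{-1/2}(T_K-\wh T_K)w\|_\cH\lesssim\kappa\sqrt{D(x_0;\lambda)\log N'/N'}$, and with $\|(T_K+\lambda)^{1/2}(\wh T_K+\lambda)^{-1}T_K\eta\|\le 2\|\eta\|_\cH$, this gives the second term with constant $32\kappa\|\eta\|_\cH$.

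\emph{Main obstacle: shared-design dependence.} Because $\wh T_K$ is built from the same $X_i$ and $\wt X_u$ that enter the averages, the fluctuation part of (b) is a product of two dependent centered quantities and is not simply a bound times a sum of independent terms. We plan to handle it by bounding
\[
\|(T_K+\lambda)^{1/2}(\wh T_K+\lambda)^{-1}(\wh\mu_n-\mu)\|_\cH\le 2\|(T_K+\lambda)^{-1/2}(\wh\mu_n-\mu)\|_\cH ,
\]
and the latter by vector Bernstein as $B\sqrt{D(\lambda)\log N'/(n\wedge N)}$. The resulting cross product is of order
\[
\sqrt{D(x_0;\lambda)}\,\sqrt{D(\lambda)}\,\log N'/\sqrt{N'(n\wedge N)} ,
\]
which is dominated by the first term exactly when $D(\lambda)\log N'/(n\wedge N)=o(1)$. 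This is the second part of the $\lambda$ condition.

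\emph{Jacobian.} The uniform Jacobian control uses the same decomposition with $\partial_\theta\ell$, which is bounded by Assumption~\ref{ass:deriv-bdd-jac} but not in $\cH$. Here we bound $\langle\wh w-w,\cdot\rangle$ crudely, by $\|\wh w-w\|_{L^2}$ times the sup norm, requiring only $o(1)$; a bracketing/Lipschitz argument over compact $\Theta_0$ gives uniformity.

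\emph{Assembly and rate.} A union bound over the finitely many events gives probability $1-30/N'$. Finally, plugging $Q\le D\asymp\lambda^{-d/2m}$ and $\lambda(D-Q)\asymp\lambda^{1-d/2m}$ into the sum shows that the total error is $\lesssim \sN\lambda^{-d/4m}+\|\eta\|_\cH\lambda^{1/2-d/4m}$. Balancing at $\lambda\asymp(\sN/\|\eta\|_\cH)^2$ yields \eqref{eqn:upper_bound_sobolev_rate}.
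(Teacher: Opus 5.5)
Your overall architecture (the Tikhonov bias bound, the Jacobian lower bound, and the split of $\wh\eta_\lambda-\eta_\lambda$ into an oracle moment term, a weight-fluctuation term and a population-mean weight term) matches the paper's $I_1+I_2+I_3$ decomposition. However, the step meant to produce the weight-estimation term fails.

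\textbf{The main gap: the $I_3$ bound.} You bound $I_3=\langle (T_K-\wh T_K)w_{x_0,\lambda},(\wh T_K+\lambda I)^{-1}T_K\eta\rangle_\cH$ by whitened Cauchy--Schwarz. You then assert $\|(T_K+\lambda I)^{-1/2}(T_K-\wh T_K)w_{x_0,\lambda}\|_\cH\lesssim\kappa\sqrt{D(x_0;\lambda)\log N'/N'}$, which is false in general. This quantity is an average of the i.i.d.\ elements $\zeta(X)=(T_K+\lambda I)^{-1/2}\{K_Xw_{x_0,\lambda}(X)-T_Kw_{x_0,\lambda}\}$. Their second moment is $\EE[w_{x_0,\lambda}(X)^2D(X;\lambda)]$ minus a term at most $Q(x_0;\lambda)$. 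Since $D(x;\lambda)\asymp D(\lambda)$ at interior points, this is generically of order $D(\lambda)Q(x_0;\lambda)$. The correct bound is the one in Lemma~\ref{lem:delta_f_L2_bound}, namely $\sqrt{D(\lambda)D(x_0;\lambda)\log N'/N'}$. Your route therefore loses a factor $\sqrt{D(\lambda)}\asymp\lambda^{-d/(4m)}\to\infty$, and neither the stated weight term nor the oracle rate follows.

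\textbf{The missing idea.} Do not apply Cauchy--Schwarz to the first-order part. Write $\Delta w=-(T_K+\lambda I)^{-1/2}(I+A(\lambda))^{-1}A(\lambda)u$ with $u=(T_K+\lambda I)^{-1/2}K_{x_0}$, set $v=(T_K+\lambda I)^{-1/2}\mu(\theta)$, and expand $(I+A)^{-1}A=A-(I+A)^{-1}A^2$.
\begin{itemize}
\item The leading piece $\langle A(\lambda)u,v\rangle_\cH$ is the centered \emph{scalar} average of $w_{x_0,\lambda}(\bar X_j)b(\bar X_j)$. Here $b=(T_K+\lambda I)^{-1}T_K\eta(\cdot;\theta)$ is deterministic with $\|b\|_\infty\le\kappa\|\eta(\cdot;\theta)\|_\cH$.
\item Scalar Bernstein then has variance at most $\kappa^2\|\eta\|_\cH^2D(x_0;\lambda)$, which is exactly the stated term.
\item Only the quadratic remainder is bounded by Cauchy--Schwarz, as $2\|A u\|_\cH\|A v\|_\cH$. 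It is smaller by a factor $D(\lambda)\sqrt{\log N'/N'}$.
\end{itemize}
This is where the first part of the $\lambda$-condition is actually needed. The operator stability $\|A(\lambda)\|_{\op}\le 1/2$ that you attribute it to requires only the weaker $D(\lambda)\log N'/N'\to0$.

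\textbf{A second, easily repaired problem: the oracle term.} With $\|w_{x_0,\lambda}\|_\infty\le\kappa\sqrt{D(x_0;\lambda)/\lambda}$, the linear Bernstein term is negligible only if $(n\wedge N)\lambda/\log N'\to\infty$. The hypothesis $\lambda^{-1}=o(((n\wedge N)/\log N')^{2m/d})$, with $2m/d>1$, does not give this: it allows $\lambda=o(n^{-1})$, the undersmoothing regime later used in Theorem~\ref{thm:thetahat-asymp-new}. Use $\|w_{x_0,\lambda}\|_\infty\le B_\phi\sqrt{D(\lambda)D(x_0;\lambda)}$ instead, or the Sobolev interpolation bound $C_1\sqrt{D(x_0;\lambda)}\lambda^{-d/(4m)}$. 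The range term is then lower order precisely when $D(\lambda)\log N'/(n\wedge N)\to0$, i.e.\ under the second part of the condition.

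\textbf{What survives.} Your whitened Cauchy--Schwarz treatment of the shared-design fluctuation term $I_2$ holds up after the correction above. With the true order of the first factor it gives $|I_2|\lesssim D(\lambda)\sqrt{D(x_0;\lambda)}\,\log N'\,N'^{-1/2}\{B_{Y-f}n^{-1/2}+B_fN^{-1/2}\}$. This is $o(1)$ times the $I_1$ bound once $D(\lambda)\sqrt{\log N'/N'}\to0$. It is a legitimate and simpler alternative to the paper's diagonal/off-diagonal V-statistic analysis, at the cost of an immaterial extra $\sqrt{\log N'}$.
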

\noindent
Compared to Theorem~\ref{thm:thetahat-theta0-bound-twofold}, the error bound in \eqref{eq:thetahat-theta0-bound-clean} shares a very similar structure, consisting of the moment estimation error, the weight estimation error, and the regularization bias. The primary distinction lies in the weight estimation error: specifically, it scales at the rate of $(N')^{-1/2}$ for the non-split estimator, as opposed to $N^{-1/2}$ in Theorem~\ref{thm:thetahat-theta0-bound-twofold}. This improvement arises because the non-split procedure fully exploits all available covariates from both the labeled and unlabeled datasets ($N'=n+N$) to estimate the empirical localization weight.

Establishing Theorem~\ref{thm:thetahat-theta0-bound} involves substantial technical challenges.
The key difficulty is the shared-design dependence between the labeled and unlabeled parts:
the empirical localization weight $\wh w_{x_0,\lambda}$ is learned from the pooled covariates
$\{\bar X_j\}_{j=1}^{N'}$, and is therefore not independent of
$\{\ell(Y_i;\theta)-\ell(f(X_i);\theta)\}_{i=1}^n$ nor $\{\ell(f(\wt X_u);\theta)\}_{u=1}^N$.
As a result, controlling the interaction term that couples random objects built from overlapping data is nontrivial.
We use the identity
$\Delta w=-(T_K+\lambda I)^{-1/2}(I+A(\lambda))^{-1}A(\lambda)(T_K+\lambda I)^{-1/2}K_{x_0}$
to express the interaction $I_2=\langle\Delta w,\wh\mu-\mu\rangle_{\cH}$ as a weak bilinear form. The leading term is a weighted V-statistic: its diagonal component has only $O(N')$ terms, while its off-diagonal component is degenerate after conditioning on the appropriate independent coordinates. The higher-order resolvent remainder is controlled by the event $\|A(\lambda)\|_{\op}\le1/2$. This argument keeps the analysis aligned with the empirical RKHS localization weight in \eqref{eq:finite-nosplit-weight}.

Specifically, suppose $N=n^{r_1}$ for fixed $r_1\ge1$ with $r_1>d/m$, and write $\lambda=n^{r_2}$ at the polynomial scale. The oracle balance has $r_2=-1$ up to logarithmic factors. Then the upper rate in \eqref{eqn:upper_bound_sobolev_rate} is $n^{-1/2+d/(4m)}$ up to logarithmic factors; after squaring, it matches the minimax lower rate in Theorem~\ref{thm:minimax-pointwise}.

Next, we establish the asymptotic properties of the pointwise estimator
$\wh\theta(x_0)$ for the conditional target $\theta_0(x_0)$.
\begin{theorem}
\label{thm:thetahat-asymp-new}
Under the settings of Theorem~\ref{thm:thetahat-theta0-bound} and Assumptions~\ref{ass:B_bound_predictor}--\ref{ass:homo_res},
define $J_\lambda(x_0)=J_\lambda(x_0;\theta_\lambda(x_0))$ and $
V(x_0):=\frac{1}{n}\sigma^2_{Y-f}(\theta_\lambda(x_0))
+\frac{1}{N}\sigma^2_{f}(\theta_\lambda(x_0))$,
where $
\sigma^2_{Y-f}(\theta)
    =\Var(w_{x_0,\lambda}(X)(\ell(Y;\theta)-\ell(f(X);\theta)))$ and $ \sigma^2_f(\theta)=\Var(w_{x_0,\lambda}(X)\,\ell(f(X);\theta))$.
Suppose that
\begin{align}
     n = o\left( \frac{N}{\log N} \right), \quad
    \lambda = o(n^{-1}), \quad \lambda^{-1} = o\left(\left( \frac{N'}{\log N'} \right)^{\frac{m}{d}}\wedge\left( \frac{n}{(\log N')^2} \right)^{\frac{2m}{d}} \right). \label{eq:cond-clean}
\end{align}
Then, as $n,N\to\infty$,
\[
\frac{J_\lambda(x_0)\big(\wh\theta(x_0)-\theta_0(x_0)\big)}{\sqrt{V(x_0)}}
\ \rightsquigarrow\
N(0,1).
\]
\end{theorem}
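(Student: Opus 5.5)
The plan is to follow the standard Z-estimator route: a one-step Taylor expansion of the pooled empirical moment around $\theta_\lambda(x_0)$, then a decomposition into a leading linear term and several remainders. By Theorem~\ref{thm:thetahat-theta0-bound}, $\wh\theta(x_0)$ exists and is consistent with probability tending to one. The mean value theorem then gives
\[
0=\wh\eta_\lambda(x_0;\wh\theta(x_0))=\wh\eta_\lambda(x_0;\theta_\lambda(x_0))+\wh J_\lambda(x_0;\bar\theta)\{\wh\theta(x_0)-\theta_\lambda(x_0)\}
\]
for some intermediate $\bar\theta$. I would then write
\[
J_\lambda(x_0)\{\wh\theta(x_0)-\theta_0(x_0)\}=-\wh\eta_\lambda(x_0;\theta_\lambda)+R_J+J_\lambda(x_0)\{\theta_\lambda(x_0)-\theta_0(x_0)\},
\]
where $R_J$ collects the Jacobian error. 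The goal is to show that, after dividing by $\sqrt{V(x_0)}$, only the oracle part of $-\wh\eta_\lambda$ survives.

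\textbf{Step 1: scale of the variance.} Assumption~\ref{ass:homo_res} gives $\sigma^2_{Y-f}\ge\underline\sigma^2 Q(x_0;\lambda)$, since conditioning on $X$ yields $\Var(w\{\ell(Y)-\ell(f(X))\})\ge \EE[w^2\Var(\ell(Y;\theta)\mid X)]$. Proposition~\ref{prop:Dx0-order-H} then gives $V(x_0)\gtrsim \lambda^{-d/(2m)}/n$. This lower bound sets the yardstick for every negligibility claim below.

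\textbf{Step 2: oracle CLT.} I would replace $\wh w$ by $w_{x_0,\lambda}$ and consider
\[
S=\frac1n\sum_i w(X_i)\{\ell(Y_i)-\ell(f(X_i))\}+\frac1N\sum_u w(\wt X_u)\ell(f(\wt X_u)),
\]
evaluated at $\theta_\lambda$. This has mean zero and variance exactly $V(x_0)$. It is a sum of independent triangular-array terms, so I would apply Lyapunov. The summands are bounded by $\|w\|_\infty B\le \kappa^2\lambda^{-1}B$ (or more sharply by $\sqrt{D(x_0;\lambda)}\sup_x\|(T_K+\lambda)^{-1/2}K_x\|$, which is of order $\lambda^{-d/(2m)}$ under bounded eigenfunctions). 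The Lyapunov ratio is then $\lesssim \|w\|_\infty/\sqrt{nQ}\asymp\sqrt{\lambda^{-d/(2m)}/n}\to0$ by the third condition in \eqref{eq:cond-clean}.

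\textbf{Step 3: bias.} From the regularization-bias term of Theorem~\ref{thm:thetahat-theta0-bound},
\[
|\theta_\lambda-\theta_0|\le c_J^{-1}\|\eta\|_\cH\sqrt{\lambda(D-Q)}\asymp\sqrt{\lambda\,\lambda^{-d/(2m)}}.
\]
Dividing by $\sqrt{V}\gtrsim\sqrt{\lambda^{-d/(2m)}/n}$ gives the ratio $\sqrt{n\lambda}\to0$ under $\lambda=o(n^{-1})$.

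\textbf{Step 4: weight-estimation error (main obstacle).} The remaining piece is $\wh\eta_\lambda-S=\langle\Delta w,\wh\mu\rangle_\cH$, where $\Delta w=\wh w-w$ and $\wh\mu$ is the empirical embedding of the moment contributions. I would split this into $\langle\Delta w,\mu\rangle_\cH+\langle\Delta w,\wh\mu-\mu\rangle_\cH$, with $\mu=\EE[K_X\eta(X;\theta)]=T_K\eta$.

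For the first piece, the resolvent identity gives $\langle\Delta w,T_K\eta\rangle=-\langle (\wh T_K-T_K)\wh w, (T_K+\lambda)^{-1}T_K\eta\rangle$. This is an empirical process in the pooled covariates with a bounded function $b_\lambda=(T_K+\lambda)^{-1}T_K\eta$. Bernstein's inequality then gives order $\sqrt{D(x_0;\lambda)\log N'/N'}$. Relative to $\sqrt V$, this is $\sqrt{n\log N'/N'}\to0$ by $n=o(N/\log N)$.

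The second piece is the shared-design interaction, which is the hardest part. Using $\Delta w=-(T_K+\lambda)^{-1/2}(I+A)^{-1}A(T_K+\lambda)^{-1/2}K_{x_0}$ with $A=(T_K+\lambda)^{-1/2}(\wh T_K-T_K)(T_K+\lambda)^{-1/2}$, I would first work on the event $\|A\|_{\op}\le1/2$. This event has high probability when $D(\lambda)\sqrt{\log N'/N'}\to0$, which holds by the first part of the third condition. On this event I would expand to first order in $A$ and obtain a weighted V-statistic. Its diagonal part has $O(N')$ terms, each of size $O(\lambda^{-d/m}/N'^2)$, and its degenerate off-diagonal part has variance $\lesssim D^2/(N'\,n)$. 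Both are $o(\sqrt V)$ once $\lambda^{-d/(2m)}=o(n/(\log N')^2)$. The second-order resolvent remainder is bounded by $\|A\|_{\op}^2$ times the norms used above.

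\textbf{Step 5: Jacobian.} It remains to show $\wh J_\lambda(x_0;\bar\theta)/J_\lambda(x_0)\to_p1$. I would use the same decomposition with $\partial_\theta\ell$ in place of $\ell$. The derivative function $g_\theta$ need not lie in $\cH$, so I would bound $\langle\Delta w,\cdot\rangle$ through $\|\Delta w\|_{L^2}$ instead. Continuity in $\theta$ (Assumption~\ref{ass:deriv-bdd-jac}) together with consistency of $\bar\theta$ handles the remaining error, and Assumption~\ref{ass:J_lambda_bound} bounds $J_\lambda$ away from zero. Slutsky's lemma then yields $J_\lambda(\wh\theta-\theta_0)/\sqrt V=-S/\sqrt V+o_p(1)\rightsquigarrow N(0,1)$.
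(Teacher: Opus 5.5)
Your architecture matches the paper's. You linearize at $\theta_\lambda(x_0)$, split $\wh\eta_\lambda(x_0;\theta_\lambda(x_0))$ into the oracle term $I_1$, the mean term $I_3=\langle\Delta w,\mu\rangle_{\cH}$ and the shared-design interaction $I_2$, apply Lindeberg/Lyapunov to $I_1$ using $\|w_{x_0,\lambda}\|_\infty\lesssim\lambda^{-d/(2m)}$, and remove the bias via $\sqrt{n\lambda}\to0$. Your bound $|\langle\Delta w,\mu^1\rangle_{\cH}|\le\|\Delta w\|_{L^2(\rho_X)}\|g_\theta\|_{L^2(\rho_X)}$ for the Jacobian mean term is a slightly cleaner substitute for Step~4 of Lemma~\ref{lem:Hhat-H-J123}, and it is uniform in $\theta$ for free.

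However, two steps do not go through as written. The first is $I_3$. You apply Bernstein to $\langle(\wh T_K-T_K)\wh w,b_\lambda\rangle_{\cH}=N'^{-1}\sum_j\wh w(\bar X_j)b_\lambda(\bar X_j)-\EE_X[\wh w(X)b_\lambda(X)]$. But $\wh w$ is built from the same $\bar X_j$, so these summands are neither independent nor mean-zero. You must write $\wh w=w_{x_0,\lambda}+\Delta w$:
\begin{itemize}
\item Bernstein applies to the $w_{x_0,\lambda}$ part and yields your $\sqrt{D(x_0;\lambda)\log N'/N'}$.
\item The leftover $\langle(\wh T_K-T_K)\Delta w,b_\lambda\rangle_{\cH}=\langle A\,T_\lambda^{1/2}\Delta w,T_\lambda^{1/2}b_\lambda\rangle_{\cH}$, with $T_\lambda=T_K+\lambda I$, is second order. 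On $\{\|A\|_{\op}\le1/2\}$ it is at most $2\|A\|_{\op}\|Au\|_{\cH}\|\eta(\cdot;\theta)\|_{L^2(\rho_X)}$ with $u=T_\lambda^{-1/2}K_{x_0}$. This is smaller than the leading term by the factor $D(\lambda)\sqrt{\log N'/N'}\to0$.
\end{itemize}
This is exactly the split $(I+A)^{-1}A=A-(I+A)^{-1}A^2$ used in Lemma~\ref{lem:I3-clean}.

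The second is your diagonal bound in the $I_2$ V-statistic, which is too crude to close. Each diagonal summand is essentially $w_{x_0,\lambda}(\bar X_j)D(\bar X_j;\lambda)$ times a bounded moment contribution. Bounding it by $\|w_{x_0,\lambda}\|_\infty\sup_xD(x;\lambda)$, which is of order $\lambda^{-d/m}$, gives a diagonal of order $\lambda^{-d/m}/N'$. Its ratio to $\sqrt{V(x_0)}\gtrsim\lambda^{-d/(4m)}n^{-1/2}$ is $\sqrt n\,\lambda^{-3d/(4m)}/N'$, and this need not vanish under \eqref{eq:cond-clean}. For example, $d=1$, $m=2$, $N=n^{3/2}$, $\lambda=n^{-2.9}$ meets all three conditions, yet the ratio is $n^{0.0875}\to\infty$. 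The condition you cite, $\lambda^{-d/(2m)}=o(n/(\log N')^2)$, is not the one that controls this term.

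The paper (Lemma~\ref{lem:I2-clean-hp-refined}) instead bounds the diagonal through its mean. It uses the $L^2$ size of the weight rather than its sup norm, $\EE|w_{x_0,\lambda}(X)|\le\sqrt{Q(x_0;\lambda)}$, so the diagonal is $O\{D(\lambda)\sqrt{D(x_0;\lambda)}\log N'/N'\}$. Its ratio to $\sqrt{V(x_0)}$ is then $\sqrt n\,D(\lambda)\log N'/N'\to0$. This follows from $n=o(N/\log N)$ together with $D(\lambda)=o(\sqrt{N'/\log N'})$, the latter being the first part of the third condition.
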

\noindent
To make the scaling requirements in~\eqref{eq:cond-clean} transparent, suppose that $N=n^{r_1}$ and $\lambda=n^{r_2}$
for some exponents $r_1>0$ and $r_2<0$. Ignoring logarithmic factors that do not change the
polynomial exponents, the conditions in~\eqref{eq:cond-clean} are satisfied provided that
\[
r_1>\max\{1,d/m\},
\quad
-\frac{m}{d}\min\{r_1,2\}\ <\ r_2\ <\ -1.
\]
Since $r_1>1$ ensures that the total sample size $N'=n+N$ is asymptotically dominated by the unlabeled sample size $N$, substituting $N$ with $N'$ does not alter the dominant polynomial exponents. Consequently, the scaling requirements for $r_1$ and $r_2$ remain exactly identical to those in the cross-fitting setting in Section~\ref{sec:theory}.

Theorem~\ref{thm:thetahat-asymp-new} suggests a natural construction of pointwise conditional
confidence intervals at $x_0$; the corresponding plug-in procedure is described in
equation~\eqref{eq:CI-new}. We now formalize the resulting asymptotic coverage.
In practice, we approximate
$J_\lambda(x_0)$ by the empirical Jacobian $\wh J_\lambda(x_0)$ in
\eqref{eq:H-hat-new}, and $V(x_0)$ by $\wh V(x_0)$ in \eqref{eq:V-hat-new}, both evaluated at
$\wh\theta(x_0)$. This leads to the confidence interval $\cC(x_0)$ in~\eqref{eq:CI-new}.
The next corollary states consistency of these plug-in quantities and asymptotic coverage.

\begin{corollary}\label{cor:CI-coverage-new}
Under the
setting of Theorem~\ref{thm:thetahat-asymp-new}, $\wh J_\lambda(x_0)/J_\lambda(x_0)\to_p1$ and
  $\wh V(x_0)/V(x_0)\to_p1$,
where the probability is taken over the labeled and unlabeled samples. Moreover,
for any $\alpha\in(0,1)$, as $n,N\to\infty$,
\[
  \PP(
    \theta_0(x_0)
    \in \cC(x_0))
  \to
  1-\alpha.
\]
\end{corollary}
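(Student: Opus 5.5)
The proof has two parts. First I show the plug-in quantities are consistent. Then coverage follows from Theorem~\ref{thm:thetahat-asymp-new} by Slutsky's lemma, since
\[
\frac{\wh J_\lambda(x_0)(\wh\theta(x_0)-\theta_0(x_0))}{\sqrt{\wh V(x_0)}}=\frac{\wh J_\lambda(x_0)}{J_\lambda(x_0)}\sqrt{\frac{V(x_0)}{\wh V(x_0)}}\cdot\frac{J_\lambda(x_0)(\wh\theta(x_0)-\theta_0(x_0))}{\sqrt{V(x_0)}}.
\]
The event $\theta_0(x_0)\in\cC(x_0)$ is exactly the event that the absolute value of the left side is below $z_{1-\alpha/2}$. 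So $\PP(\theta_0(x_0)\in\cC(x_0))\to1-\alpha$ once both ratios tend to one in probability.

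\textbf{Jacobian.} I decompose
\[
\wh J_\lambda(x_0)-J_\lambda(x_0)=\{\partial_\theta\wh\eta_\lambda(x_0;\wh\theta)-\partial_\theta\eta_\lambda(x_0;\wh\theta)\}+\{J_\lambda(x_0;\wh\theta)-J_\lambda(x_0;\theta_\lambda)\}.
\]

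\emph{First bracket.} I write $\partial_\theta\wh\eta_\lambda$ in the same form as $\wh\eta_\lambda$ in \eqref{eq:Mhatc-def}, with $\ell$ replaced by $\partial_\theta\ell$. By Assumption~\ref{ass:deriv-bdd-jac}, the summands are bounded by $G_{Y-f}$ and $G_f$. Under Assumptions~\ref{ass:deriv-bdd-jac}--\ref{ass:eigenfunctions_bound}, $\partial_\theta\eta_\lambda(x_0;\theta)=\EE[w_{x_0,\lambda}(X)\partial_\theta\ell(Y;\theta)]$. I split the difference into three pieces.
\begin{itemize}
\item A moment-estimation piece with the population weight. This is $O_p(\sqrt{Q\log N'/n})=o(1)$ by the third condition in \eqref{eq:cond-clean} and Proposition~\ref{prop:Dx0-order-H}.
\item A weight-error piece $\langle\wh w-w,\wh\mu'-\mu'\rangle$.
\item A piece $\langle \wh w-w,\mu'\rangle$.
\end{itemize}
These need uniformity over $\theta$ in a shrinking neighbourhood. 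I get it from a finite grid plus Lipschitz continuity in $\theta$; the Lipschitz bound holds because derivatives are continuous on compact $\Theta_0$, and if needed I would add equicontinuity.

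\emph{Second bracket.} This is $o(1)$ by continuity of $\theta\mapsto J_\lambda(x_0;\theta)$, uniformly in small $\lambda$, combined with consistency of $\wh\theta$. Consistency follows from Theorem~\ref{thm:thetahat-theta0-bound}, whose right side is $o(1)$ under \eqref{eq:cond-clean}. Dividing by $|J_\lambda|\ge c_J$ from Assumption~\ref{ass:J_lambda_bound} gives the ratio.

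\textbf{Main obstacle.} The hard step is the weight-error piece for the Jacobian. As the paper remarks, $g_\theta(x)=\EE[\partial_\theta\ell(Y;\theta)|X=x]$ need not lie in $\cH$, so the RKHS bounds used for $I_2$ in Theorem~\ref{thm:thetahat-theta0-bound} do not apply directly. My plan is the resolvent identity
\[
\Delta w=-(T_K+\lambda I)^{-1/2}(I+A(\lambda))^{-1}A(\lambda)(T_K+\lambda I)^{-1/2}K_{x_0},
\]
applied on the event $\|A(\lambda)\|_{\op}\le 1/2$. That event has high probability because $\lambda^{-1}=o((N'/\log N')^{m/d})$. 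I then bound $|\EE_{\wh\rho}[\Delta w\,g]|$ by $\|\Delta w\|_{L^2}\|g\|_{L^2}$, and bound $\|\Delta w\|_{L^2}$ by $\|A(\lambda)\|_{\op}\sqrt{D(x_0;\lambda)}\lesssim\sqrt{D(\lambda)\log N'/N'}\sqrt{D(x_0;\lambda)}$. By Proposition~\ref{prop:Dx0-order-H} this is of order $\lambda^{-d/(2m)}\sqrt{\log N'/N'}=o(1)$, which is exactly what the condition $\lambda^{-1}=o((N'/\log N')^{m/d})$ is designed for. The dependence of $\wh w$ on $X_i$ I would handle with the same V-statistic splitting used for $I_2$. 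Only boundedness of $\partial_\theta\ell$ is needed, since the sup-norm bounds use only that.

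\textbf{Variance.} I show $\wh\sigma^2_{Y-f}/\sigma^2_{Y-f}\to_p1$ and $\wh\sigma^2_f/\sigma^2_f\to_p1$ separately. Assumption~\ref{ass:homo_res} gives
\[
\sigma^2_{Y-f}\ge \underline\sigma^2\EE w^2-(\EE[w\,\ell])^2 .
\]
Via $D(x_0;\lambda)$, $\EE w^2= Q\asymp\lambda^{-d/(2m)}\to\infty$, while the subtracted mean term stays bounded. So $V(x_0)\gtrsim Q/n$. Since this is a lower bound on $V$ alone, it is enough to show $|\wh V-V|=o_p(Q/n)$.

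For the labeled part, I expand $\wh\zeta_i^2-\zeta_i^2$ in three pieces:
\begin{itemize}
\item Replacing $\wh\theta$ by $\theta_\lambda$ is controlled by $G_{Y-f}|\wh\theta-\theta_\lambda|\,\frac1n\sum\wh w(X_i)^2$.
\item Replacing $\wh w$ by $w$ is controlled by $B_{Y-f}^2$ times $\frac1n\sum(\wh w-w)^2(X_i)$ plus the cross term. The average $\frac1n\sum(\wh w-w)^2(X_i)$ is $o_p(Q)$ by the $L^2$ and sup-norm resolvent bounds above; the sup-norm bound $\|w\|_\infty\le\kappa\sqrt{D(x_0;\lambda)}/\sqrt\lambda$ controls the empirical second moment.
\item The oracle sample variance concentrates around $\sigma^2_{Y-f}$ by Bernstein's inequality. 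The summands are bounded by $B_{Y-f}^2\|w\|_\infty^2\lesssim D/\lambda$, so the relative error is $O_p(\sqrt{\lambda^{-1}\log N'/n})$. This is $o(1)$ because $\lambda^{-1}=o((n/\log^2N')^{2m/d})$ and $2m/d>1$; if this fails, I would use the fourth moment $\EE w^4\le\|w\|_\infty^2Q$ instead.
\end{itemize}
The unlabeled part is identical with $N\gg n$, and its contribution divided by $N$ is negligible relative to $Q/n$ in any case. Combining the two parts with Slutsky's lemma completes the proof.
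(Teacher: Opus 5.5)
Your overall architecture is the paper's. You prove relative consistency of $\wh J_\lambda(x_0)$, reduce $\wh V(x_0)/V(x_0)\to_p1$ to $|\wh V(x_0)-V(x_0)|=o_p\{Q(x_0;\lambda)/n\}$ plus $V(x_0)\gtrsim Q(x_0;\lambda)/n$, and finish with Slutsky. Your Jacobian part is essentially Lemma~\ref{lem:Hhat-H-J123}, and bounding $\langle\Delta w,\mu^1(\theta)\rangle_{\cH}=\EE[\Delta w(X)g_\theta(X)]$ by $\|\Delta w\|_{L^2(\rho_X)}\|g_\theta\|_{L^2(\rho_X)}$ is a legitimate shortcut for its $J_3$ piece.

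\textbf{The variance step fails as written.} You bound the oracle sample-variance summands using $\|w_{x_0,\lambda}\|_\infty^2\le\kappa^2D(x_0;\lambda)/\lambda$, which gives relative error $O_p(\sqrt{\lambda^{-1}\log N'/n})$. Under \eqref{eq:cond-clean} this is never $o(1)$, because the undersmoothing condition $\lambda=o(n^{-1})$ forces $\lambda^{-1}/n\to\infty$. Your justification reads the third condition backwards. Since $2m/d>1$, the condition $\lambda^{-1}=o((n/(\log N')^2)^{2m/d})$ permits $\lambda^{-1}\gg n$; what it actually gives is $D(\lambda)\asymp\lambda^{-d/(2m)}=o(n/(\log N')^2)$. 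Your fourth-moment fallback $\EE w^4\le\|w\|_\infty^2Q$ has the same $1/(n\lambda)$ blow-up as long as $\|w\|_\infty$ is controlled through $\|w\|_{\cH}$.

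The missing ingredient is the leverage bound \eqref{eq:f-sup-sharp}:
\[
\|w_{x_0,\lambda}\|_\infty^2\le D(x_0;\lambda)\sup_xD(x;\lambda)\le B_\phi^2D(x_0;\lambda)D(\lambda)\lesssim Q(x_0;\lambda)D(\lambda).
\]
It uses the bounded-eigenfunction clause of Assumption~\ref{ass:eigenfunctions_bound}. For bounded $R$ it gives $\EE[\{w_{x_0,\lambda}(X)R\}^4]\lesssim Q^2D(\lambda)$. The relative error is then $O_p(\sqrt{D(\lambda)/n})$, with a Bernstein range term of order $D(\lambda)\log N'/n$. Both are $o(1)$ by exactly the third condition; this is how Lemma~\ref{lem:variance-ratio} proceeds.

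\textbf{Replacing $\wh w_{x_0,\lambda}$ by $w_{x_0,\lambda}$ has a related hole.} In the no-split design the labeled $X_i$ are among the points that train $\wh w_{x_0,\lambda}$. So a population $L^2(\rho_X)$ bound on $\Delta w$ does not control $n^{-1}\sum_i\Delta w(X_i)^2$. A sup-norm bound for $\Delta w$ built from $\sup_xD(x;\lambda)\le\kappa^2/\lambda$ is not $o(Q)$ in general. There are two fixes.
\begin{enumerate}
\item Apply the resolvent identity pointwise: on $\sE$, $|\Delta w(x)|\le 2\sqrt{D(x;\lambda)}\,\|A(\lambda)\|_{\op}\sqrt{D(x_0;\lambda)}$. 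With the leverage bound this gives $\|\Delta w\|_\infty^2\lesssim Q\,D(\lambda)^2\log N'/N'=o(Q)$ deterministically on $\sE$, which sidesteps the shared-design dependence entirely.
\item Use the paper's energy argument: $n^{-1}\sum_i\Delta w(X_i)^2\le (N'/n)\langle\Delta w,\wh T_K\Delta w\rangle_{\cH}\lesssim (N'/n)\|(T_K+\lambda I)^{1/2}\Delta w\|_{\cH}^2$.
\end{enumerate}

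\textbf{Two smaller points.}
\begin{enumerate}
\item In your lower bound, the mean you subtract is really $\EE[w\,r]=-\EE[w_{x_0,\lambda}(X)\ell(f(X);\theta_\lambda(x_0))]$. The stated assumptions only bound this by $B_f\sqrt{Q}$, so ``stays bounded'' is unjustified. The law of total variance gives $\sigma^2_{Y-f}\ge\underline\sigma^2Q(x_0;\lambda)$ directly, as in Lemma~\ref{lem:V-lb}.
\item Your second-bracket step needs $\theta\mapsto J_\lambda(x_0;\theta)$ to be equicontinuous uniformly in $\lambda$. This is not among the stated assumptions. It is trivial in the running examples, and the paper's own argument also uses it implicitly.
\end{enumerate}
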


\subsection{Proofs of Theorems in Appendix~\ref{app:PPCI_theory}}\label{app:proofs-of-PPCI-nosplit}
\noindent
We first review the notation used below. Let $x_0$ be an interior point of $\cX$.
For $\theta\in\Theta$, define
\[
r(y,x;\theta):=\ell(y;\theta)-\ell(f(x);\theta),
\qquad
u(x;\theta):=\ell(f(x);\theta).
\]
Let $\{(X_i,Y_i)\}_{i=1}^n$ be labeled data and $\{\wt X_u\}_{u=1}^N$ be
unlabeled covariates, mutually independent, with $X_i,\wt X_u\stackrel{i.i.d.}{\sim}\rho_X$.
Let $N':=n+N$ and let $\{\bar X_j\}_{j=1}^{N'}$ denote the pooled covariates
containing both $\{X_i\}$ and $\{\wt X_u\}$.

\noindent\textbf{RKHS operators and population weights.} Recall the covariance operator $T_K:=\EE[K_X\otimes K_X]:\cH\to\cH$ and define its pooled empirical version $\wh T_K:=N'^{-1}\sum_{j=1}^{N'}K_{\bar X_j}\otimes K_{\bar X_j}$.
Fix $\lambda>0$ and define the population and empirical localization weights
\[
w_{x_0,\lambda}:=(T_K+\lambda I)^{-1}K_{x_0},
\qquad
\wh w_{x_0,\lambda}:=(\wh T_K+\lambda I)^{-1}K_{x_0},
\qquad
\Delta w:=\wh w_{x_0,\lambda}-w_{x_0,\lambda}.
\]
Define leverage and effective dimension
\[
D(x_0;\lambda):=\langle K_{x_0},(T_K+\lambda I)^{-1}K_{x_0}\rangle_{\cH},
\qquad
D(\lambda):=\Tr\big((T_K+\lambda I)^{-1}T_K\big).
\]

\noindent\textbf{RKHS embeddings.}  Define the empirical and population RKHS embeddings
\begin{align*}
\wh{\mu}(\theta)
&:=
\frac{1}{n}\sum_{i=1}^{n} r(Y_i,X_i;\theta)\,K_{X_i}
+
\frac{1}{N}\sum_{u=1}^{N} u(\wt X_u;\theta)\,K_{\wt X_u}
\in\cH, \\
\mu(\theta)
&:=
\EE[r(Y,X;\theta)\,K_X]
+
\EE[u(X;\theta)\,K_X]
=
\EE[\ell(Y;\theta)\,K_X]
\in\cH.
\end{align*}
Recall that $\eta(x;\theta):=\EE[\ell(Y;\theta)| X=x]\in\cH$. Then
$\mu(\theta)=\EE[\eta(X;\theta)K_X]=T_K\,\eta(\cdot;\theta)$.

\noindent\textbf{Localized PPI moments.}
Define the population localized PPI moment and its empirical estimator:
\begin{align*}
\eta_\lambda(x_0;\theta)
&:=
\EE\big[w_{x_0,\lambda}(X)\,r(Y,X;\theta)\big]
+
\EE\big[w_{x_0,\lambda}(X)\,u(X;\theta)\big]
=
\EE\big[w_{x_0,\lambda}(X)\ell(Y;\theta)\big],\\
\wh \eta_\lambda(x_0;\theta)
&:=
\frac{1}{n}\sum_{i=1}^n
    \wh w_{x_0,\lambda}(X_i)\,
    r(Y_i,X_i;\theta)
  +
  \frac{1}{N}\sum_{u=1}^N
    \wh w_{x_0,\lambda}(\wt X_u)\,
    u(\wt X_u;\theta).
\end{align*}
Let the true target be defined by $\eta(x_0;\theta_0(x_0))=0$, and write
$\theta_0:=\theta_0(x_0)$.
Define the estimator $\wh\theta=\wh\theta(x_0)$ as any solution to
$\wh \eta_\lambda(x_0;\wh\theta)=0$.
Define the oracle estimator
\[
\wt \eta_\lambda(x_0;\theta)
:=
\frac{1}{n}\sum_{i=1}^n
    w_{x_0,\lambda}(X_i)\,r(Y_i,X_i;\theta)
  +
  \frac{1}{N}\sum_{u=1}^N
    w_{x_0,\lambda}(\wt X_u)\,u(\wt X_u;\theta).
\]

\noindent\textbf{Decomposition.}
Then, with the definitions of $\wh\mu(\theta)$ and $\mu(\theta)$, the reproducing
property yields
\begin{align*}
\wt \eta_\lambda(x_0;\theta)=\langle w_{x_0,\lambda},\wh\mu(\theta)\rangle_{\cH},\quad
\wh \eta_\lambda(x_0;\theta)=\langle \wh w_{x_0,\lambda},\wh\mu(\theta)\rangle_{\cH},
\quad
\eta_\lambda(x_0;\theta)=\langle w_{x_0,\lambda},\mu(\theta)\rangle_{\cH}.
\end{align*}
Hence the exact decomposition
\begin{equation}\label{eq:I123-decomp}
\wh \eta_\lambda(x_0;\theta)-\eta_\lambda(x_0;\theta)
=
I_1(x_0;\theta)+I_2(x_0;\theta)+I_3(x_0;\theta),
\end{equation}
where $I_1(x_0;\theta):=\langle w_{x_0,\lambda},\wh\mu(\theta)-\mu(\theta)\rangle_{\cH}$, $I_2(x_0;\theta):=\langle \Delta w,\wh\mu(\theta)-\mu(\theta)\rangle_{\cH}$, and $I_3(x_0;\theta):=\langle \Delta w,\mu(\theta)\rangle_{\cH}$.

\noindent\textbf{Regularization bias.} Since $\eta(x_0;\theta_0)=0$, we have
\[
\eta_\lambda(x_0;\theta_0)=\eta_\lambda(x_0;\theta_0)-\eta(x_0;\theta_0)
=\big((R_\lambda-I)\eta(\cdot;\theta_0)\big)(x_0),
\]
where $R_\lambda:=(T_K+\lambda I)^{-1}T_K$.
Using $K_{x_0}-T_Kw_{x_0,\lambda}=\lambda w_{x_0,\lambda}$, the identity $D(x_0;\lambda)-Q(x_0;\lambda)=\lambda\|w_{x_0,\lambda}\|_{\cH}^2$, and Cauchy--Schwarz,
\begin{equation}\label{eq:bias-bound}
|\eta_\lambda(x_0;\theta_0)|
\le
\|\eta(\cdot;\theta_0)\|_{\cH}\sqrt{\lambda\{D(x_0;\lambda)-Q(x_0;\lambda)\}}.
\end{equation}

\noindent\textbf{Jacobians.} Define the population and empirical Jacobians
\[
J_\lambda(x_0;\theta)
:=\partial_\theta \eta_\lambda(x_0;\theta)
=\EE\big[w_{x_0,\lambda}(X)\,\partial_\theta\ell(Y;\theta)\big],
\]
and
\[
\wh J_\lambda(x_0;\theta)
:=\partial_\theta\wh \eta_\lambda(x_0;\theta)=
\frac{1}{n}\sum_{i=1}^n
\wh w_{x_0,\lambda}(X_i)\,\partial_\theta r(Y_i,X_i;\theta)
+\frac{1}{N}\sum_{u=1}^N
\wh w_{x_0,\lambda}(\wt X_u)\,\partial_\theta u(\wt X_u;\theta).
\]

\noindent\textbf{Technical lemmas.}
Before proceeding to the main proofs of the theorems in Appendix~\ref{app:ppci-no-split}, we first provide three technical lemmas that will be frequently invoked throughout our analysis. Lemma~\ref{lem:A_bound_1/2_sobolev}, in particular, plays a foundational role in our theoretical framework. It sharply characterizes the concentration properties of $A(\lambda):=(T_K+\lambda I)^{-1/2}(\wh T_K-T_K)(T_K+\lambda I)^{-1/2}$.

\begin{lemma}[Sobolev interpolation and scaling bounds]\label{lem:sobolev_rate}
Assume that $x_0$ lies in the interior of
$\cX$. Then there exist constants $C_0>0$, $R_0>0$, and $C_1>0$ such that
\[
\sup_{\|v\|_{\cH}\le R\|v\|_{L_2}}
\frac{|v(x_0)|}{\|v\|_{L_2}}
>
C_0 R^{\frac{d}{2m}},
\qquad\text{for all }R\ge R_0,
\]
and
\[
|v(x_0)|\le \|v\|_{\infty}
\le
C_1\,\|v\|_{L_2}^{1-\frac{d}{2m}}\,\|v\|_{\cH}^{\frac{d}{2m}}.
\]
\end{lemma}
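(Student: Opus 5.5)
I will prove the two displayed inequalities separately. Both work because $\cH$ is norm-equivalent to $H^m(\cX)$ with $m>d/2$, $\cX$ is convex, compact and has nonempty interior, and $x_0$ is interior.

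\emph{Upper bound (second display).} I will use the Gagliardo--Nirenberg interpolation inequality on the bounded Lipschitz domain $\cX$. For $v\in H^m(\cX)$ with $m>d/2$,
\[
\|v\|_{L^\infty(\cX)}\le C\|v\|_{L^2(\cX)}^{1-d/(2m)}\|v\|_{H^m(\cX)}^{d/(2m)}.
\]
This form has no additive lower-order term, because the full $H^m$ norm already contains the $L^2$ part. One way to obtain it is to extend $v$ by a Stein extension operator to $\RR^d$, which is bounded simultaneously $L^2\to L^2$ and $H^m\to H^m$. On $\RR^d$, split $\hat v$ at frequency radius $t$. The low-frequency part contributes at most $t^{d/2}\|v\|_{L^2}$. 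The high-frequency part contributes at most $(\int_{|\xi|>t}(1+|\xi|^2)^{-m}d\xi)^{1/2}\|v\|_{H^m}\lesssim t^{d/2-m}\|v\|_{H^m}$. Optimizing over $t$ gives the product bound whenever $\|v\|_{H^m}\ge\|v\|_{L^2}$; the opposite case cannot occur. I then convert norms. Lebesgue $L^2$ and $L^2(\rho_X)$ are equivalent by $\rho_0\le d\rho_X/dx\le\rho_1$, and $\|\cdot\|_{H^m}\asymp\|\cdot\|_\cH$ by Assumption~\ref{ass:eigenfunctions_bound}. Together with $|v(x_0)|\le\|v\|_\infty$, this gives the second display with some constant $C_1$.

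\emph{Lower bound (first display).} I will use a rescaled bump. Fix $\psi\in C_c^\infty(\RR^d)$ with support in the unit ball and $\psi(0)=1$. Set $v_h(x)=\psi((x-x_0)/h)$; since $x_0$ is interior, $v_h$ is supported in $\cX$ once $h\le h_0$. By change of variables and the density bounds, $\|v_h\|_{L_2(\rho_X)}\asymp h^{d/2}$. The top-order seminorm scales as $h^{d/2-m}$, so for $h\le1$ the full norm satisfies $\|v_h\|_\cH\le c_\psi h^{d/2-m}$. Thus $\|v_h\|_\cH\le c' h^{-m}\|v_h\|_{L_2}$ and $|v_h(x_0)|/\|v_h\|_{L_2}\ge c'' h^{-d/2}$. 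Given $R$, I choose $h=(c'/R)^{1/m}$, which is admissible once $R\ge R_0$ forces $h\le h_0$. Then $v_h$ lies in the constraint set, and its ratio is at least $c''(R/c')^{d/(2m)}$. Taking $C_0$ slightly smaller than $c''c'^{-d/(2m)}$ makes the inequality strict.

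The main obstacle is the upper bound: getting the clean multiplicative Gagliardo--Nirenberg form on a bounded domain with the $\cH$ norm. I will handle it through the extension-operator and Fourier-splitting argument above, taking care to use the full $H^m$ norm so that no additive $L^2$ term appears. The lower bound is routine scaling.
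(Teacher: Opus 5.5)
Your proposal is correct and follows the same route as the paper. For the lower bound, you use the rescaled bump $v_h(x)=\psi\{(x-x_0)/h\}$ with $h\asymp R^{-1/m}$. For the upper bound, you use a multiplicative Gagliardo--Nirenberg inequality together with the $\cH\asymp H^m$ and $L^2(\rho_X)\asymp L^2(dx)$ norm equivalences. The paper simply cites a textbook interpolation theorem (Adams, Theorem 3.8) for the upper bound, whereas you prove it yourself via a Stein extension and frequency splitting. You also handle two details the paper glosses over: the restriction $h\le1$ when bounding the full $H^m$ norm of the bump, and the strict inequality.
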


\begin{proof}[Proof of Lemma~\ref{lem:sobolev_rate}]
The second inequality follows from Theorem 3.8 of~\cite{adams2003sobolev}, together with the norm equivalence between $\cH$ and $H^m(\cX)$ and between $L^2(\rho_X)$ and Lebesgue $L^2$ on $\cX$. We give the short construction needed for the first inequality. Choose a fixed $\psi\in C_c^\infty(\mathbb R^d)$ supported in the unit ball with $\psi(0)=1$. Since $x_0$ is interior, $v_h(x)=\psi\{(x-x_0)/h\}$ is supported in $\cX$ for all sufficiently small $h$. Standard Sobolev scaling gives
$\|v_h\|_{L^2(\rho_X)}\asymp h^{d/2}$ and
$\|v_h\|_{\cH}\asymp h^{d/2-m}$. Hence
$\|v_h\|_{\cH}/\|v_h\|_{L^2(\rho_X)}\le C h^{-m}$, while
$|v_h(x_0)|/\|v_h\|_{L^2(\rho_X)}\ge c h^{-d/2}$.
Taking $h=(2C/R)^{1/m}$ for sufficiently large $R$ places $v_h$ in the displayed constraint set and yields the lower bound $cR^{d/(2m)}$.
\end{proof}

\begin{lemma}[Bounds for $\|w_{x_0,\lambda}\|_{\cH}^2$]\label{lem:fHnorm-bds}
Under the settings of Proposition~\ref{prop:Dx0-order-H} and Assumption~\ref{ass:eigenfunctions_bound}, there exist constants $0<c_w\le1$ depending only on the kernel--design spectral constants, the RKHS--Sobolev and $L^2$ norm-equivalence constants, the density bounds, the domain, and the fixed interior target point $x_0$ such that for all $\lambda\in(0, \lambda_0]$,
\begin{equation}\label{eq:fHnorm-two-sided}
c_w\,\frac{D(x_0;\lambda)}{\lambda}
\ \le\
\|w_{x_0,\lambda}\|_{\cH}^2
\ \le\
\frac{D(x_0;\lambda)}{\lambda}.
\end{equation}
\end{lemma}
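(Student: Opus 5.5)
The upper bound is immediate from the Tikhonov identity $D(x_0;\lambda)=Q(x_0;\lambda)+\lambda\|w_{x_0,\lambda}\|_{\cH}^2$. Since $Q(x_0;\lambda)\ge0$, we get $\lambda\|w_{x_0,\lambda}\|_{\cH}^2\le D(x_0;\lambda)$. The work is therefore the lower bound, which is equivalent to $D(x_0;\lambda)-Q(x_0;\lambda)\ge c_w D(x_0;\lambda)$. Proposition~\ref{prop:Dx0-order-H} states exactly this order relation as $\lambda\to0$. Its proof may itself rely on the present lemma, however, so I would argue directly to avoid circularity.

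\emph{Spectral formulas.} Using the expansion in Assumption~\ref{ass:eigenfunctions_bound}, $K_{x_0}=\sum_j\mu_j\phi_j(x_0)\phi_j$ and $w_{x_0,\lambda}=\sum_j\frac{\mu_j\phi_j(x_0)}{\mu_j+\lambda}\phi_j$. This gives
\[
D(x_0;\lambda)=\sum_j\frac{\mu_j\phi_j(x_0)^2}{\mu_j+\lambda},\qquad
D(x_0;\lambda)-Q(x_0;\lambda)=\sum_j\frac{\lambda\mu_j\phi_j(x_0)^2}{(\mu_j+\lambda)^2}.
\]
For any $M>1$, the pointwise inequality $(\mu_j+M\lambda)\ge(\mu_j+\lambda)$ yields
\[
D(x_0;\lambda)-D(x_0;M\lambda)=\sum_j\frac{(M-1)\lambda\mu_j\phi_j(x_0)^2}{(\mu_j+\lambda)(\mu_j+M\lambda)}\le (M-1)\{D(x_0;\lambda)-Q(x_0;\lambda)\}.
\]
It therefore suffices to choose a fixed $M$ with $D(x_0;M\lambda)\le\tfrac12 D(x_0;\lambda)$. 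That choice gives $c_w=1/\{2(M-1)\}$.

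\emph{Two-sided order of $D(x_0;\cdot)$.} The upper bound uses $B_\phi$ and $\mu_j\asymp j^{-2m/d}$:
\[
D(x_0;s)\le B_\phi^2\sum_j\frac{\mu_j}{\mu_j+s}\le C_+ s^{-d/(2m)}.
\]
For the lower bound, I would use the variational characterization
\[
D(x_0;\lambda)=\sup_{v\in\cH}\frac{v(x_0)^2}{\|v\|_{L^2(\rho_X)}^2+\lambda\|v\|_{\cH}^2},
\]
which holds because $\langle v,(T_K+\lambda I)v\rangle_{\cH}=\|v\|_{L^2}^2+\lambda\|v\|_{\cH}^2$. I would test it with the bump $v_h$ from the proof of Lemma~\ref{lem:sobolev_rate}, taking $h=\lambda^{1/(2m)}$. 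Then $\|v_h\|_{L^2}^2\asymp h^d$, $\lambda\|v_h\|_{\cH}^2\asymp\lambda h^{d-2m}\asymp h^d$ and $v_h(x_0)=1$, so $D(x_0;\lambda)\ge C_-\lambda^{-d/(2m)}$. Combining the two bounds,
\[
D(x_0;M\lambda)\le C_+M^{-d/(2m)}\lambda^{-d/(2m)}\le (C_+/C_-)M^{-d/(2m)}D(x_0;\lambda),
\]
and choosing $M$ large makes this at most $\tfrac12D(x_0;\lambda)$. These bounds hold for $\lambda\le\lambda_1$ small, with $M\lambda\le$ some fixed threshold; the bump bound needs $h$ small enough that $v_h$ is supported in $\cX$.

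\emph{Extension to $(0,\lambda_0]$ and main obstacle.} On the compact range $[\lambda_1,\lambda_0]$, the ratio $(D-Q)/D$ is continuous in $\lambda$ and strictly positive: it vanishes only if $\mu_j\phi_j(x_0)^2=0$ for all $j$, which would give $D(x_0;\lambda)=0$, impossible since $K(x_0,x_0)>0$. Hence it has a positive minimum, and $c_w$ is the smaller of the two constants, also capped at $1$. The main obstacle is the lower bound on $D(x_0;\lambda)$: pointwise lower bounds on $\phi_j(x_0)$ are unavailable, which is why I route through the bump-function test and the interior position of $x_0$ rather than through the eigen-sum.
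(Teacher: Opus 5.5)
Your proof is correct, but it proves the key inequality $D(x_0;\lambda)-Q(x_0;\lambda)\gtrsim D(x_0;\lambda)$ by a different mechanism than the paper.

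The paper's proof of this lemma is two lines. It combines the Tikhonov identity $\lambda\|w_{x_0,\lambda}\|_{\cH}^2=D(x_0;\lambda)-Q(x_0;\lambda)$ with the order relation $D-Q\asymp D$ from Proposition~\ref{prop:Dx0-order-H}. Your circularity worry is unfounded. That proposition's proof uses only Lemma~\ref{lem:sobolev_rate}, the eigenvalue decay and $B_\phi$, never the present lemma, so citing it directly is legitimate.

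Inside the proposition (Step~4), $D-Q\gtrsim\lambda^{-d/(2m)}$ comes from a three-band partition of the spectrum: $\mu_j<a\lambda$, $a\lambda\le\mu_j\le b\lambda$, and $\mu_j>b\lambda$. The two tails are bounded using $B_\phi$ and $\mu_j\asymp j^{-2m/d}$. The bump-function lower bound on $D(x_0;\lambda)$ then forces the middle band to carry a fixed fraction of $D$, and on that band $(D-Q)_j\ge D_j/(1+b)$.

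Your route replaces the partition with the identity $D-Q=-\lambda\,\partial_\lambda D(x_0;\lambda)$, where $-\partial_\lambda D(x_0;s)=\sum_j\mu_j\phi_j(x_0)^2/(\mu_j+s)^2$ is decreasing in $s$. This yields $D(x_0;\lambda)-D(x_0;M\lambda)\le(M-1)\{D-Q\}$. After that you need only the two-sided power bounds on $D(x_0;\cdot)$, which use the same ingredients (bump lower bound, $B_\phi$-plus-decay upper bound), to get $D(x_0;M\lambda)\le\frac12 D(x_0;\lambda)$.

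Your argument is slicker and shows structurally that any doubling-type decay of $\lambda\mapsto D(x_0;\lambda)$ suffices. The band argument's advantage is that it also delivers $Q\asymp D$ at no extra cost, which the paper needs elsewhere (e.g., Lemma~\ref{lem:V-lb}). Your continuity/compactness extension to $[\lambda_1,\lambda_0]$ is extra care the paper omits, since it implicitly takes $\lambda_0$ to be the small-$\lambda$ threshold of the proposition.
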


\begin{lemma}[Population and empirical local roots]\label{lem:local-root-existence}
Suppose Assumptions~\ref{ass:B_bound_predictor}, \ref{ass:J_lambda_bound}, and~\ref{ass:eigenfunctions_bound} hold and $\lambda\to0$. Then, for all sufficiently small $\lambda$, the population equation $\eta_\lambda(x_0;\theta)=0$ has a unique solution $\theta_\lambda(x_0)$ in the $\delta_\theta$-neighborhood of $\theta_0(x_0)$. Moreover, let $\widehat\eta_\lambda$ denote either the two-fold or no-split empirical moment and $\widehat J_\lambda=\partial_\theta\widehat\eta_\lambda$. On any sequence of events on which
\[
|\widehat\eta_\lambda(x_0;\theta_\lambda(x_0))|=o(1),
\qquad
\inf_{\theta\in\Theta_0}|\widehat J_\lambda(x_0;\theta)|\ge c_J/2,
\]
the empirical equation has a unique local root $\widehat\theta(x_0)\in\Theta_0$ for all sufficiently large sample sizes.
\end{lemma}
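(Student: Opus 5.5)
The plan is a one-dimensional monotonicity and intermediate-value argument, run twice: once for the population moment and once for the empirical moment.

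\textbf{Population root.} First I will show that $\theta\mapsto\eta_\lambda(x_0;\theta)=\EE[w_{x_0,\lambda}(X)\ell(Y;\theta)]$ is continuously differentiable on $\Theta_0$ with derivative $J_\lambda(x_0;\theta)$.
\begin{itemize}
\item Write $\ell(Y;\theta)=\{\ell(Y;\theta)-\ell(f(X);\theta)\}+\ell(f(X);\theta)$. By Assumption~\ref{ass:deriv-bdd-jac}, both pieces are $C^1$ in $\theta$ with uniformly bounded derivatives.
\item Since $w_{x_0,\lambda}\in\cH\subset L^2(\rho_X)\subset L^1(\rho_X)$, dominated convergence justifies differentiating under the expectation, and it also gives continuity of $J_\lambda(x_0;\cdot)$.
\item By Assumption~\ref{ass:J_lambda_bound}, $|J_\lambda(x_0;\theta)|\ge c_J$ on the interval $\Theta_0$. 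A continuous, nonvanishing function on an interval has constant sign (Darboux's theorem would even suffice without continuity). Hence $\eta_\lambda(x_0;\cdot)$ is strictly monotone on $\Theta_0$, and
\[
|\eta_\lambda(x_0;\theta)-\eta_\lambda(x_0;\theta')|\ge c_J|\theta-\theta'| \quad\text{for all } \theta,\theta'\in\Theta_0 .
\]
\end{itemize}

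Next I will show that the population moment nearly vanishes at $\theta_0(x_0)$.
\begin{itemize}
\item Since $\eta(x_0;\theta_0(x_0))=0$, the regularization-bias bound \eqref{eq:bias-bound} gives $|\eta_\lambda(x_0;\theta_0(x_0))|\le B_\eta\sqrt{\lambda\{D(x_0;\lambda)-Q(x_0;\lambda)\}}$.
\item By Proposition~\ref{prop:Dx0-order-H}, this is of order $\lambda^{1/2-d/(4m)}\to0$, because $2m>d$.
\item So for small $\lambda$ the bias is below $c_J\delta_\theta/2$. Evaluating at $\theta_0(x_0)\pm\delta_\theta$, which lie in $\Theta_0$, the moment has magnitude at least $c_J\delta_\theta-c_J\delta_\theta/2>0$ with opposite signs at the two endpoints.
\end{itemize}
The intermediate value theorem then gives a root in the $\delta_\theta$-neighborhood, and strict monotonicity makes it unique. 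As a by-product, $|\theta_\lambda(x_0)-\theta_0(x_0)|\le c_J^{-1}|\eta_\lambda(x_0;\theta_0(x_0))|\to0$.

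\textbf{Empirical root.} I will work on the stated events.
\begin{itemize}
\item The empirical moment $\widehat\eta_\lambda(x_0;\cdot)$ is a finite weighted sum of functions that are $C^1$ in $\theta$, with finite weights $\widehat w$ or $\overline w$ that do not depend on $\theta$. So it is $C^1$ on $\Theta_0$ with derivative $\widehat J_\lambda$.
\item Since $|\widehat J_\lambda|\ge c_J/2$ on the interval $\Theta_0$, the same sign-constancy argument makes $\widehat\eta_\lambda$ strictly monotone, with $|\widehat\eta_\lambda(\theta)-\widehat\eta_\lambda(\theta')|\ge (c_J/2)|\theta-\theta'|$.
\item The population step gives $|\theta_\lambda(x_0)-\theta_0(x_0)|<\delta_\theta$. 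Together with the $2\delta_\theta$ margin in Assumption~\ref{ass:B_bound_predictor}, this places $\theta_\lambda(x_0)\pm\delta_\theta$ in $\Theta_0$.
\item At these two points, $\widehat\eta_\lambda$ equals $\widehat\eta_\lambda(x_0;\theta_\lambda(x_0))$ plus increments of magnitude at least $c_J\delta_\theta/2$ and of opposite signs. Because $\widehat\eta_\lambda(x_0;\theta_\lambda(x_0))=o(1)$, the endpoint values have opposite signs for all sufficiently large samples.
\end{itemize}
The intermediate value theorem yields a root. Monotonicity on all of $\Theta_0$ gives uniqueness there, and the root also satisfies $|\widehat\theta(x_0)-\theta_\lambda(x_0)|\le (2/c_J)\,|\widehat\eta_\lambda(x_0;\theta_\lambda(x_0))|$.

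\textbf{Main obstacle.} The only delicate point is sign constancy of the Jacobians, which is what turns a pointwise lower bound on $|J|$ into monotonicity. I will handle it through continuity of the derivatives, using Assumption~\ref{ass:deriv-bdd-jac} and dominated convergence for the population Jacobian. A secondary check is that the $\delta_\theta$-neighborhood of $\theta_\lambda(x_0)$ stays inside $\Theta_0$, which is exactly what the $2\delta_\theta$ margin in Assumption~\ref{ass:B_bound_predictor} secures once $\lambda$ is small.
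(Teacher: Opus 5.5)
Your proposal is correct and follows essentially the same route as the paper: the Tikhonov bias bound with Proposition~\ref{prop:Dx0-order-H} gives $\eta_\lambda(x_0;\theta_0(x_0))=o(1)$; sign-constancy of the Jacobian gives monotonicity; and the intermediate value theorem together with monotonicity yields existence and uniqueness of both roots. The differences are cosmetic. You bracket the empirical root with the fixed interval $\theta_\lambda(x_0)\pm\delta_\theta$ and recover $|\widehat\theta(x_0)-\theta_\lambda(x_0)|\le (2/c_J)|\widehat\eta_\lambda(x_0;\theta_\lambda(x_0))|$ afterwards, whereas the paper brackets directly with radius $3e_n/c_J$. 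Your Darboux remark also removes the paper's tacit reliance on continuity of the derivatives, i.e.\ on Assumption~\ref{ass:deriv-bdd-jac}, which the lemma does not list.
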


\begin{proof}[Proof of Lemma~\ref{lem:local-root-existence}]
The Tikhonov bias bound, the uniform RKHS radius, and Proposition~\ref{prop:Dx0-order-H} give
$|\eta_\lambda(x_0;\theta_0(x_0))|\le B_\eta\sqrt{\lambda\{D(x_0;\lambda)-Q(x_0;\lambda)\}}=o(1)$.
Since the continuous derivative $J_\lambda(x_0;\theta)$ is bounded away from zero on the interval $\Theta_0$, it has a constant sign there. The interior margin in Assumption~\ref{ass:B_bound_predictor}, the intermediate value theorem, and the mean-value theorem therefore give a unique population root satisfying $|\theta_\lambda(x_0)-\theta_0(x_0)|\le c_J^{-1}|\eta_\lambda(x_0;\theta_0(x_0))|=o(1)$.
For the empirical equation, continuity and the lower bound on $|\widehat J_\lambda|$ likewise imply a constant derivative sign. Put
$e_n=|\widehat\eta_\lambda(x_0;\theta_\lambda(x_0))|$ and $r_n=3e_n/c_J$. For all sufficiently large sample sizes, the interval $[\theta_\lambda(x_0)-r_n,\theta_\lambda(x_0)+r_n]$ lies in $\Theta_0$, and the empirical moment has opposite signs at its two endpoints. Hence it has a root in this interval; the derivative bound makes that root unique.
\end{proof}

\begin{lemma}
\label{lem:A_bound_1/2_sobolev}
For a bounded linear operator $A$ on $\cH$, $\|A\|_{\op}$ denotes its operator norm. Write $\tau:=\frac{d}{2m}\in(0,1)$.
Then there exists a constant $C_a := C_1^2\,\tau^\tau(1-\tau)^{1-\tau}$
such that, for every $\delta\in(0,1]$, with probability at least $1-\delta$,
\begin{align}
\label{eq:A_bound_sobolev_general}
\|A(\lambda)\|_{\op}\le
\frac{4\,C_a}{3\,N'\,\lambda^{\tau}}\,
\log\Big(\frac{4\,\lambda^{\tau}D(\lambda)}{\delta}\Big)
+
\sqrt{\frac{2\,C_a}{N'\,\lambda^{\tau}}\,
\log\Big(\frac{4\,\lambda^{\tau}D(\lambda)}{\delta}\Big)}.
\end{align}
Moreover, under Proposition~\ref{prop:Dx0-order-H}, $D(\lambda)\asymp \lambda^{-\tau}$,
taking $\delta=(N')^{-1}$ yields
\begin{equation}\label{eq:A_bound_sobolev_log_simplify}
\|A(\lambda)\|_{\op}
\ \le\
\frac{8\,C_a}{3\,N'\,\lambda^{\tau}}\,
\log N'
+
\sqrt{\frac{4\,C_a}{N'\,\lambda^{\tau}}\,
\log N'}.
\end{equation}
Let $\sE$ denote the event on which \eqref{eq:A_bound_sobolev_log_simplify} holds. If the regularization satisfies
\begin{equation}\label{eq:gamma_rate_for_Ahalf}
\lambda\ \ge\
\Big(\frac{256\,C_a\,\log N'}{N'}\Big)^{1/\tau}
=
\Big(\frac{256\,C_a\,\log N'}{N'}\Big)^{2m/d},
\end{equation}
then $\sE$ implies $\|A(\lambda)\|_{\op}\le1/2$, and $\PP(\sE)\ge 1-(N')^{-1}$ for all sufficiently large $N'$.
\end{lemma}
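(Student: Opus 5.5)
The plan is to write $A(\lambda)$ as a centered sum of independent self-adjoint random operators and apply an operator Bernstein inequality with intrinsic dimension (Tropp/Minsker form). Set
\[
\xi_j:=(T_K+\lambda I)^{-1/2}K_{\bar X_j}\otimes (T_K+\lambda I)^{-1/2}K_{\bar X_j}.
\]
Then $A(\lambda)=N'^{-1}\sum_{j=1}^{N'}(\xi_j-\EE\xi_j)$, since the pooled covariates $\bar X_j$ are i.i.d.\ from $\rho_X$ by Assumption~\ref{ass:sampling-scheme}, and $\EE\xi_j=(T_K+\lambda I)^{-1}T_K$. The Bernstein inequality of this type states: with $\|\xi_j-\EE\xi_j\|_{\op}\le L$ and $\EE(\xi_j-\EE\xi_j)^2\preceq V$, with probability at least $1-\delta$,
\[
\|A\|_{\op}\le \frac{2L\beta}{3N'}+\sqrt{\frac{2\|V\|_{\op}\beta}{N'}},\qquad \beta=\log\frac{4\Tr V}{\|V\|_{\op}\delta}.
\]
Matching this to \eqref{eq:A_bound_sobolev_general} dictates the choices $L=2C_a\lambda^{-\tau}$ and $V=C_a\lambda^{-\tau}(T_K+\lambda I)^{-1}T_K$.

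The key input is a uniform bound
\[
\mathcal N_\infty(\lambda):=\sup_x\|(T_K+\lambda I)^{-1/2}K_x\|_{\cH}^2\le C_a\lambda^{-\tau}.
\]
To get it, let $v=(T_K+\lambda I)^{-1/2}g$ with $\|g\|_{\cH}\le1$. Then $\|v\|_{L_2}^2=\langle v,T_Kv\rangle_{\cH}$, and by spectral calculus $\|v\|_{L_2}^2\le\sup_t t/(t+\lambda)\le1$ and $\|v\|_{\cH}^2\le\lambda^{-1}$. A sharper joint use is the interpolation inequality in Lemma~\ref{lem:sobolev_rate}:
\[
|v(x)|\le C_1\|v\|_{L_2}^{1-\tau}\|v\|_{\cH}^{\tau}.
\]
Writing $\|v\|_{L_2}^{2(1-\tau)}\|v\|_{\cH}^{2\tau}$ spectrally as $\langle g,\,T_K^{1-\tau}(T_K+\lambda I)^{-1}g\rangle$-type quantities, the scalar bound
\[
\sup_t \frac{t^{1-\tau}}{t+\lambda}=\tau^\tau(1-\tau)^{1-\tau}\lambda^{-\tau}
\]
gives $\sup_x |v(x)|^2\le C_1^2\tau^\tau(1-\tau)^{1-\tau}\lambda^{-\tau}=C_a\lambda^{-\tau}$. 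Taking the supremum over $g$ via the reproducing property then yields $\mathcal N_\infty(\lambda)\le C_a\lambda^{-\tau}$.

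\emph{Main obstacle.} Making this interpolation step rigorous is the delicate part. Lemma~\ref{lem:sobolev_rate} controls a single $v$ through its two norms, whereas the clean constant requires an operator-level statement of the form $\|T_K^{(1-\tau)/2}\|$ composed with point evaluation. The plan is to first establish an embedding $[\cH]^{1-\tau}\hookrightarrow L_\infty$ via the $K$-method, then optimize the scalar function above.

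With $\mathcal N_\infty$ in hand, the remaining bounds follow:
\begin{itemize}
\item $\|\xi_j\|_{\op}=\|(T_K+\lambda I)^{-1/2}K_{\bar X_j}\|_{\cH}^2\le C_a\lambda^{-\tau}$, and $\|\EE\xi_j\|_{\op}\le1\le C_a\lambda^{-\tau}$ for small $\lambda$, so $L=2C_a\lambda^{-\tau}$.
\item $\EE\xi_j^2\preceq\mathcal N_\infty(\lambda)\,\EE\xi_j$, giving $V=C_a\lambda^{-\tau}(T_K+\lambda I)^{-1}T_K$ with $\Tr V/\|V\|_{\op}\le 2D(\lambda)$ once $\|(T_K+\lambda I)^{-1}T_K\|_{\op}\ge1/2$ (small $\lambda$).
\end{itemize}
Plugging these in gives \eqref{eq:A_bound_sobolev_general}, up to the constant arrangement inside the logarithm, which I would adjust to $\log(4\lambda^\tau D(\lambda)/\delta)$.

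For \eqref{eq:A_bound_sobolev_log_simplify}, use Proposition~\ref{prop:Dx0-order-H}, which gives $\lambda^\tau D(\lambda)\asymp1$, so with $\delta=1/N'$ the logarithm is at most $2\log N'$ for large $N'$. Finally, under \eqref{eq:gamma_rate_for_Ahalf} we have $C_a\log N'/(N'\lambda^\tau)\le1/256$. The square-root term is then at most $\sqrt{4/256}=1/8$ and the linear term is tiny, so $\|A(\lambda)\|_{\op}\le1/2$ on $\sE$, which holds with probability at least $1-1/N'$.
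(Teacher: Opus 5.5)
Your architecture matches the paper's Steps~1--2 and~4--6: whitened rank-one summands, intrinsic-dimension operator Bernstein with $L=2C_a\lambda^{-\tau}$ and $V=\mathcal N_\infty(\lambda)(T_K+\lambda I)^{-1}T_K$, then $\delta=1/N'$ and the threshold computation. The gap is the step you flag yourself, $\mathcal N_\infty(\lambda)\le C_a\lambda^{-\tau}$, and the repair you propose would not work.

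The ``spectral rewriting'' goes the wrong way. Let $\nu_g$ be the spectral measure of $T_K$ at $g$. H\"older with exponents $1/(1-\tau),1/\tau$ gives $\langle g,T_K^{1-\tau}(T_K+\lambda I)^{-1}g\rangle_{\cH}=\int(\tfrac{t}{t+\lambda})^{1-\tau}(\tfrac{1}{t+\lambda})^{\tau}\,d\nu_g\le\|v\|_{L^2}^{2(1-\tau)}\|v\|_{\cH}^{2\tau}$. So this quantity cannot dominate the interpolation bound. The operator-level inequality you would need is $v(x)^2\le A^2\langle g,T_K^{1-\tau}(T_K+\lambda I)^{-1}g\rangle_{\cH}$ for all $g,x$. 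Optimizing over $g$, this is equivalent to $\sup_x\sum_j\mu_j^{\tau}\phi_j(x)^2\le A^2$, i.e.\ an $L_\infty$-embedding of the power space at the critical index $\tau=d/(2m)$. Under Assumption~\ref{ass:eigenfunctions_bound} this is false: $\mu_j^\tau\asymp j^{-1}$, so $\int\sum_j\mu_j^\tau\phi_j^2\,d\rho_X=\sum_j\mu_j^\tau=\infty$. No $K$-method argument can produce it. Lemma~\ref{lem:sobolev_rate} survives at the endpoint because a multiplicative inequality is equivalent to $(L^2,\cH)_{\tau,1}\hookrightarrow L_\infty$, whereas your spectral statement concerns $(L^2,\cH)_{\tau,2}\simeq H^{d/2}$.

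The missing idea is to keep the two norms coupled for the single function $v=(T_K+\lambda I)^{-1/2}g$. Set $a=\|v\|_{L^2(\rho_X)}^2$ and $b=\|v\|_{\cH}^2$; then exactly $a+\lambda b=\langle v,(T_K+\lambda I)v\rangle_{\cH}=\|g\|_{\cH}^2\le1$. Lemma~\ref{lem:sobolev_rate} gives $v(x)^2\le C_1^2a^{1-\tau}b^{\tau}$. Weighted AM--GM gives $a^{1-\tau}(\lambda b)^{\tau}\le\tau^\tau(1-\tau)^{1-\tau}(a+\lambda b)$. Taking suprema over $g$ and $x$ yields $\mathcal N_\infty(\lambda)\le C_a\lambda^{-\tau}$. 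This is the paper's Step~3, written there via $D(x;\lambda)=\sup_v v(x)^2/(\|v\|_{L^2}^2+\lambda\|v\|_{\cH}^2)$. Your scalar function $t^{1-\tau}/(t+\lambda)$ is the right one, but with $t=a/b$ the ratio of the two norms of one function, not a spectral variable of $T_K$. Your decoupled bounds $a\le1$, $b\le\lambda^{-1}$ already give $\mathcal N_\infty(\lambda)\le C_1^2\lambda^{-\tau}$. That is enough downstream, but not for the stated constant $C_a$.

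One smaller point, shared with the paper's Step~4: Bernstein yields $\beta\le\log(8D(\lambda)/\delta)$. Rewriting this as $\log(4\lambda^\tau D(\lambda)/\delta)$ drops $\log2+\tau\log(1/\lambda)\to\infty$, which is not a constant adjustment. It is harmless where it matters: under \eqref{eq:gamma_rate_for_Ahalf} you have $D(\lambda)\lesssim N'/\log N'$, so $\log(8D(\lambda)N')\le2\log N'$ for large $N'$ and your threshold computation is unchanged.
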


\noindent\textbf{Bounds for $I_1$, $I_2$, and $I_3$.}
In the following, we derive detailed error bounds for the three components in \eqref{eq:I123-decomp}, showing that $I_1$ and $I_3$ constitute the leading terms, whereas $I_2$ is negligible and can be dominated by $I_1$ and $I_3$. First, we have the following results for $I_1$:
\begin{lemma}[Oracle sampling error]\label{lem:I1_clean}
Under Assumption~\ref{ass:B_bound_predictor}, for the oracle sampling error $I_1(x_0;\theta):=\wt \eta_\lambda(x_0;\theta)-\eta_\lambda(x_0;\theta)$,
if the regularization level satisfies
\begin{equation}\label{eq:lambda-lower-I1}
\lambda\ \ge\ \Big(\frac{8}{9}\,C_1^2\,\frac{\log N'}{n\wedge N}\Big)^{2m/d},
\end{equation}
then with probability at least $1-8(N')^{-1}$,
\begin{equation}\label{eq:I1-hp-bound}
|I_1(x_0;\theta)|
\ \le\
2\sqrt{2\log N'}\,
\sqrt{Q(x_0;\lambda)}\,
\left(
\frac{B_{Y-f}(\theta)}{\sqrt n}
+
\frac{B_{f}(\theta)}{\sqrt N}
\right).
\end{equation}
\end{lemma}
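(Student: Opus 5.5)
The plan is to note that in the no-split setting the population weight $w_{x_0,\lambda}$ is deterministic, so $I_1(x_0;\theta)$ splits into two independent centered sums:
\[
I_1(x_0;\theta)=\frac1n\sum_{i=1}^n\bigl\{Z_i-\EE Z_i\bigr\}+\frac1N\sum_{u=1}^N\bigl\{W_u-\EE W_u\bigr\},
\]
where $Z_i=w_{x_0,\lambda}(X_i)r(Y_i,X_i;\theta)$ and $W_u=w_{x_0,\lambda}(\wt X_u)u(\wt X_u;\theta)$. Each sum is handled by Bernstein's inequality, and the two resulting high-probability bounds are combined with a union bound.

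\textbf{Variance proxies.} By Assumption~\ref{ass:B_bound_predictor},
\[
\Var(Z_i)\le B_{Y-f}(\theta)^2\,\EE w_{x_0,\lambda}(X)^2=B_{Y-f}(\theta)^2Q(x_0;\lambda),
\]
and similarly $\Var(W_u)\le B_f(\theta)^2Q(x_0;\lambda)$.

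\textbf{Sup-norm bound.} Bernstein's inequality also needs a bound on $\|w_{x_0,\lambda}\|_\infty$, and this is where Lemma~\ref{lem:sobolev_rate} enters. The interpolation inequality gives
\[
\|w_{x_0,\lambda}\|_\infty\le C_1\,Q^{\frac12(1-\tau)}\,\|w_{x_0,\lambda}\|_{\cH}^{\tau},\qquad \tau=d/(2m).
\]
Lemma~\ref{lem:fHnorm-bds} gives $\|w_{x_0,\lambda}\|_{\cH}^2\le D(x_0;\lambda)/\lambda$, and Proposition~\ref{prop:Dx0-order-H} gives $D\asymp Q$. Together these yield
\[
\|w_{x_0,\lambda}\|_\infty\lesssim C_1\sqrt{Q(x_0;\lambda)}\,\lambda^{-\tau/2}.
\]
A cleaner alternative route is the reproducing-type identity $w_{x_0,\lambda}(x)=\langle K_x,(T_K+\lambda I)^{-1}K_{x_0}\rangle$ combined with Cauchy--Schwarz in the $(T_K+\lambda I)^{-1}$ geometry. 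This gives $|w(x)|\le \sqrt{D(x;\lambda)D(x_0;\lambda)}$, which is again of order $\lambda^{-\tau}$ and hence of order $\sqrt{Q}\,\lambda^{-\tau/2}$.

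\textbf{Bernstein step.} With confidence parameter $\delta\asymp (N')^{-1}$, Bernstein gives for the labeled sum
\[
\frac{1}{n}\Bigl|\sum_i (Z_i-\EE Z_i)\Bigr|\le \sqrt{\frac{2B_{Y-f}^2Q\log(4N')}{n}}+\frac{2B_{Y-f}\|w\|_\infty\log(4N')}{3n}.
\]
The second (range) term is at most a constant times the first exactly when
\[
\|w\|_\infty\sqrt{\log N'/n}\lesssim\sqrt Q,
\]
that is, when $C_1^2\lambda^{-\tau}\log N'/n\lesssim 1$. This is precisely condition~\eqref{eq:lambda-lower-I1}, taking $(\cdot)^{1/\tau}$ with $n\wedge N$ so that the same requirement covers the unlabeled sum. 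Under this condition the two Bernstein terms merge into $2\sqrt{2\log N'}\sqrt{Q}\,B_{Y-f}/\sqrt n$ once the constant $8/9$ is tracked, and the unlabeled sum is treated identically with $B_f$ and $N$. A union bound over the two sums, each two-sided, with failure probability budgeted at $4(N')^{-1}$ per sum (or smaller) gives total failure probability $8(N')^{-1}$.

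\textbf{Main obstacle.} The only delicate point is the sup-norm control of the weight with explicit constants matching \eqref{eq:lambda-lower-I1}. I would first try the interpolation route via Lemma~\ref{lem:sobolev_rate}, using $\|w\|_{\cH}^2\le D/\lambda$ together with $D\le Q/c_w$-type comparisons, and absorb the resulting constants into the $(8/9)C_1^2$ threshold.
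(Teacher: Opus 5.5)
Your proposal follows the paper's proof essentially step for step. It splits $I_1$ into the independent labeled and unlabeled centered sums and uses variance proxies $B^2Q(x_0;\lambda)$. The sup-norm of $w_{x_0,\lambda}$ comes from the interpolation inequality of Lemma~\ref{lem:sobolev_rate} with $\|w_{x_0,\lambda}\|_{\cH}^2\le D(x_0;\lambda)/\lambda$. Bernstein's linear term is absorbed via \eqref{eq:lambda-lower-I1} and $D\asymp Q$ from Proposition~\ref{prop:Dx0-order-H}, and a union bound with $\delta=4(N')^{-1}$ finishes.

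One soft spot is shared with the paper. The absorbed range term keeps a factor depending on $D(x_0;\lambda)/Q(x_0;\lambda)\ge 1$, so the fixed threshold $(8/9)C_1^2$ does not by itself deliver the exact leading constant $2$. The paper likewise handles this only ``after adjusting constants.''
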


To control $I_2$, the main difficulty is that it couples two random objects built from overlapping data: the empirical RKHS localization weight is learned from the pooled covariates of size $n+N$, while the empirical moments use the same labeled and unlabeled covariates. We avoid pointwise leave-one-out control of $\wh w_{x_0,\lambda}$. Instead, we expand $\Delta w$ through the resolvent identity and control the resulting weak bilinear form by a diagonal/off-diagonal decomposition.

\begin{lemma}[High-probability bound for $\|\Delta w\|_{L^2(\rho_X)}$]\label{lem:delta_f_L2_bound}
Under Assumption~\ref{ass:eigenfunctions_bound} and the event $\sE$ in Lemma~\ref{lem:A_bound_1/2_sobolev}. Write $\Delta w:=\wh w_{x_0,\lambda}-w_{x_0,\lambda}$.
Then there exists an absolute constant $c_0>0$ such that with probability at least
$1-(2N')^{-1}$,
\begin{equation}\label{eq:Deltaf-L2-hp-clean}
\|\Delta w\|_{L^2(\rho_X)}
\le
c_0\,B_\phi\,
\sqrt{\frac{D(\lambda)\,D(x_0;\lambda)\,\log N'}{N'}}.
\end{equation}
\end{lemma}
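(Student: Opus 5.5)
We prove Lemma~\ref{lem:delta_f_L2_bound} on the event $\sE$ of Lemma~\ref{lem:A_bound_1/2_sobolev}, where $\|A(\lambda)\|_{\op}\le 1/2$. Write $S:=(T_K+\lambda I)^{1/2}$ and $g:=S^{-1}K_{x_0}$, so that $\|g\|_{\cH}^2=D(x_0;\lambda)$. The resolvent identity quoted in the text gives
\[
\Delta w=-S^{-1}(I+A(\lambda))^{-1}A(\lambda)g .
\]
The first step is to convert the $L^2(\rho_X)$ norm into an $\cH$ norm. For $v\in\cH$ we have $\|v\|_{L^2(\rho_X)}^2=\langle v,T_Kv\rangle_{\cH}\le\|S v\|_{\cH}^2$. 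Applying this with $v=\Delta w$ gives
\[
\|\Delta w\|_{L^2(\rho_X)}\le\|(I+A)^{-1}A g\|_{\cH}\le 2\|A(\lambda)g\|_{\cH}.
\]
The factor $2$ comes from $\|(I+A)^{-1}\|_{\op}\le 2$ on $\sE$. So it suffices to bound the single vector $A(\lambda)g$, which is a centered empirical average.

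Next, write
\[
A(\lambda)g=\frac{1}{N'}\sum_{j=1}^{N'}(\xi_j-\EE\xi_j),\qquad \xi_j:=S^{-1}K_{\bar X_j}\,\langle S^{-1}K_{\bar X_j},g\rangle_{\cH}.
\]
The summands are i.i.d.\ $\cH$-valued, since the pooled covariates are i.i.d.\ from $\rho_X$. Note that $\langle S^{-1}K_x,g\rangle_{\cH}=\{(T_K+\lambda I)^{-1}K_{x_0}\}(x)=w_{x_0,\lambda}(x)$, so $\xi_j=w_{x_0,\lambda}(\bar X_j)\,S^{-1}K_{\bar X_j}$. We will apply a Bernstein inequality for Hilbert-valued variables (Pinelis). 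This needs two inputs.

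The first input is the second moment:
\[
\EE\|\xi\|_{\cH}^2=\EE\big[w_{x_0,\lambda}(X)^2\,\|S^{-1}K_X\|_{\cH}^2\big]\le\sup_x\|S^{-1}K_x\|_{\cH}^2\;Q(x_0;\lambda).
\]
Using the Mercer expansion from Assumption~\ref{ass:eigenfunctions_bound},
\[
\|S^{-1}K_x\|_{\cH}^2=\sum_j\frac{\mu_j}{\mu_j+\lambda}\phi_j(x)^2\le B_\phi^2D(\lambda).
\]
Together with $Q\le D(x_0;\lambda)$, this gives variance proxy $B_\phi^2D(\lambda)D(x_0;\lambda)$.

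The second input is the sup norm:
\[
\|\xi\|_{\cH}\le\|w_{x_0,\lambda}\|_\infty\,B_\phi\sqrt{D(\lambda)}.
\]
For the first factor, $|w_{x_0,\lambda}(x)|=|\langle S^{-1}K_x,g\rangle_{\cH}|\le B_\phi\sqrt{D(\lambda)D(x_0;\lambda)}$. Hence the range term is $B_\phi^2D(\lambda)\sqrt{D(x_0;\lambda)}$ up to constants.

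Bernstein with confidence $(2N')^{-1}$ then gives
\[
\|A g\|_{\cH}\lesssim B_\phi\sqrt{\frac{D(\lambda)D(x_0;\lambda)\log N'}{N'}}+\frac{B_\phi^2D(\lambda)\sqrt{D(x_0;\lambda)}\,\log N'}{N'}.
\]
The main obstacle is showing that the second (range) term is dominated by the first. Their ratio is $B_\phi\sqrt{D(\lambda)\log N'/N'}$. By Proposition~\ref{prop:Dx0-order-H}, $D(\lambda)\asymp\lambda^{-d/(2m)}$, so this ratio is bounded by a constant under the regularization lower bound \eqref{eq:gamma_rate_for_Ahalf}, which already defines $\sE$. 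Absorbing that constant into $c_0$ yields \eqref{eq:Deltaf-L2-hp-clean}.

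If the crude bound $\|w_{x_0,\lambda}\|_\infty\lesssim\sqrt{D(\lambda)D(x_0;\lambda)}$ makes the constant depend on $B_\phi$ in an undesired way, an alternative is to truncate. The leading Bernstein term depends only on the variance, so one can use $\|w\|_\infty\lesssim\lambda^{-d/(2m)}$ (Lemma~\ref{lem:sobolev_rate} interpolation combined with Lemma~\ref{lem:fHnorm-bds}) and absorb the residual into $c_0$ via the same stability condition.
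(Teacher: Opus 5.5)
Your proposal is correct and follows essentially the same route as the paper. You pass from $L^2(\rho_X)$ to the $T_\lambda^{1/2}$-energy norm and use $\|(I+A(\lambda))^{-1}\|_{\op}\le 2$ on $\sE$ to reduce to $A(\lambda)g=T_\lambda^{-1/2}(\wh T_K-T_K)w_{x_0,\lambda}$, which is exactly the paper's average of the $\zeta(\bar X_j)$. You then apply Hilbert-space Bernstein with variance proxy $B_\phi^2D(\lambda)D(x_0;\lambda)$, and absorb the range term via the regularization lower bound; the paper does all of this as well, and your closing truncation remark is an unnecessary aside.
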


\begin{lemma}[High-probability interaction bound for $I_2$]
\label{lem:I2-clean-hp-refined}
Under Assumptions~\ref{ass:B_bound_predictor}, \ref{ass:eigenfunctions_bound}, and the event $\sE$ from Lemma~\ref{lem:A_bound_1/2_sobolev}, for $I_2(x_0;\theta)
:=
\big\langle \Delta w,\ \wh\mu(\theta)-\mu(\theta)\big\rangle_{\cH}$, with probability at least $1-5(N')^{-1}$,
\begin{align}\label{eq:I2-total-hp-refined}
|I_2(x_0;\theta)|
\lesssim\;&
D(\lambda)\sqrt{D(x_0;\lambda)}
\sqrt{\frac{\log N'}{N'}}
\left(
\frac{B_{Y-f}(\theta)}{\sqrt n}
+
\frac{B_f(\theta)}{\sqrt N}
\right)
\notag\\
&+(B_{Y-f}(\theta)+B_f(\theta))D(\lambda)\sqrt{D(x_0;\lambda)}\frac{\log N'}{N'}
\notag\\
&+D(\lambda)^{3/2}\sqrt{D(x_0;\lambda)}
\frac{(\log N')^{3/2}}{N'}
\left(
\frac{B_{Y-f}(\theta)}{\sqrt n}
+
\frac{B_f(\theta)}{\sqrt N}
\right).
\end{align}
\end{lemma}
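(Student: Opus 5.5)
The plan is to expand $\Delta w$ with the resolvent identity quoted before Lemma~\ref{lem:delta_f_L2_bound}, so that $I_2$ becomes a bilinear form in two centred empirical processes. Write $g:=(T_K+\lambda I)^{-1/2}K_{x_0}$, so that $\|g\|_{\cH}^2=D(x_0;\lambda)$. Write $v(\theta):=(T_K+\lambda I)^{-1/2}\{\wh\mu(\theta)-\mu(\theta)\}$. Using $(I+A)^{-1}A=A-A^2(I+A)^{-1}$, we get
\[
I_2(x_0;\theta)=-\langle A(\lambda)g,v(\theta)\rangle_{\cH}+\langle A(\lambda)^2(I+A(\lambda))^{-1}g,v(\theta)\rangle_{\cH}.
\]
Everything is done on the event $\sE$ of Lemma~\ref{lem:A_bound_1/2_sobolev}. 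There $\|(I+A)^{-1}\|_{\op}\le2$, and $\|A\|_{\op}\lesssim\sqrt{\lambda^{-\tau}\log N'/N'}\asymp\sqrt{D(\lambda)\log N'/N'}$ by Proposition~\ref{prop:Dx0-order-H}.

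First I control $\|v(\theta)\|_{\cH}$. It is a sum of two independent centred Hilbert-valued averages: one with summands $r(Y_i,X_i;\theta)(T_K+\lambda I)^{-1/2}K_{X_i}$, the other with summands $u(\wt X_u;\theta)(T_K+\lambda I)^{-1/2}K_{\wt X_u}$. For each summand,
\[
\EE\|r\,(T_K+\lambda I)^{-1/2}K_X\|_{\cH}^2\le B_{Y-f}(\theta)^2\,\EE\langle K_X,(T_K+\lambda I)^{-1}K_X\rangle_{\cH}=B_{Y-f}(\theta)^2D(\lambda),
\]
and similarly with $B_f(\theta)$ for the unlabeled part. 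The sup-norm of a summand is of order $\lambda^{-\tau/2}$ by Lemma~\ref{lem:sobolev_rate}. A Hilbert-space Bernstein (Pinelis) inequality at level $(N')^{-1}$ then gives, with probability at least $1-2(N')^{-1}$,
\[
\|v(\theta)\|_{\cH}\lesssim\sqrt{D(\lambda)\log N'}\Big(\tfrac{B_{Y-f}(\theta)}{\sqrt n}+\tfrac{B_f(\theta)}{\sqrt N}\Big).
\]
For this I use the lower bound on $\lambda$ implicit in the standing regularization condition, which makes the Bernstein sup-term subdominant. The remainder term is then bounded by Cauchy--Schwarz:
\[
2\|A\|_{\op}^2\,\|g\|_{\cH}\,\|v\|_{\cH}\lesssim D(\lambda)^{3/2}\sqrt{D(x_0;\lambda)}\,\frac{(\log N')^{3/2}}{N'}\Big(\tfrac{B_{Y-f}}{\sqrt n}+\tfrac{B_f}{\sqrt N}\Big).
\]
This is exactly the third term of \eqref{eq:I2-total-hp-refined}.

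The leading term $\langle Ag,v\rangle_{\cH}$ is the main obstacle. Cauchy--Schwarz on it loses a factor $\sqrt{\log N'}$ relative to the first term of \eqref{eq:I2-total-hp-refined}, so I will treat it as a weighted V-statistic. Write $\psi_j=(T_K+\lambda I)^{-1/2}K_{\bar X_j}$, so that $A=N'^{-1}\sum_j(\psi_j\otimes\psi_j-\EE\psi\otimes\psi)$. Also write $v=\sum_k c_k\{\zeta_k\psi_k-\EE(\zeta\psi)\}$, where $c_k\in\{1/n,1/N\}$ and $\zeta_k$ is $r$ or $u$. Then $\langle Ag,v\rangle_{\cH}$ is a double sum over pairs $(j,k)$, which I split into diagonal and off-diagonal parts.

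The diagonal part $j=k$ has only $N'$ terms. Each is bounded using $|g(\bar X_j)|$-type factors $\langle\psi_j,g\rangle_{\cH}=w$-evaluations, together with $\|\psi_j\|_{\cH}^2\lesssim D(\lambda)$-scale quantities and the bounds $B_{Y-f},B_f$. A scalar Bernstein inequality then yields the second term $(B_{Y-f}+B_f)D(\lambda)\sqrt{D(x_0;\lambda)}\log N'/N'$.

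For the off-diagonal part $j\ne k$, I condition on the index coordinates belonging to the $v$-sum. The summand $\langle\psi_j,g\rangle\langle\psi_j,\zeta_k\psi_k\rangle-\EE$ then has conditional mean zero in $j$, and because $\wh\mu-\mu$ is centred it also has mean zero in $k$. This degeneracy is the point of the split. Applying Bernstein conditionally, with the conditional variance bounded via $\EE\langle\psi,g\rangle^2\le D(x_0;\lambda)$ and $\|\cdot\|_{\cH}\lesssim\sqrt{D(\lambda)}$ factors, gives a deviation of order
\[
D(\lambda)\sqrt{D(x_0;\lambda)}\sqrt{\log N'/N'}\,\Big(\tfrac{B_{Y-f}}{\sqrt n}+\tfrac{B_f}{\sqrt N}\Big).
\]
If a direct conditional argument proves too lossy, I would fall back on a decoupling inequality for degenerate U-statistics. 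A union bound over the resulting events, together with $\sE$, gives total failure probability at most $5(N')^{-1}$.
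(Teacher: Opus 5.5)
Your proposal follows the paper's proof essentially step for step. It uses the same resolvent split of $I_2$ into $-\langle A(\lambda)q_0,v(\theta)\rangle_{\cH}$ (your $g$ is the paper's $q_0$) plus a second-order remainder, controlled on $\sE$ by $\|A(\lambda)\|_{\op}$ and the Hilbert-space Bernstein bound on $\|v(\theta)\|_{\cH}$; you use $\|A(\lambda)\|_{\op}\|q_0\|_{\cH}$ where the paper bounds $\|A(\lambda)q_0\|_{\cH}$ by a separate Bernstein step, which gives the same order. The leading term gets the same diagonal/off-diagonal V-statistic treatment.

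The one point to tighten is the off-diagonal part. Conditioning on the $v$-coordinates and applying Bernstein in $j$ only works when $j$ runs over the other sample, as in the paper's $T_{1,R}^{\mathrm{off},U}$. For same-sample pairs $j\neq k$, that conditioning already fixes the $j$-terms. So the degenerate U-statistic inequality you list as a fallback is actually required there, which is exactly how the paper handles $T_{1,R}^{\mathrm{off},L}$.
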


For $I_3$, we have the following results:
\begin{lemma}[Operator-induced term $I_3$]\label{lem:I3-clean}
Under Assumption~\ref{ass:eigenfunctions_bound}, recall that $I_3(x_0;\theta):=\langle \wh w_{x_0,\lambda}-w_{x_0,\lambda},\ \mu(\theta)\rangle_{\cH}$.
On the event $\sE$ in Lemma~\ref{lem:A_bound_1/2_sobolev}, in addition, assume the regularization level satisfies $D(\lambda)\sqrt{\frac{\log N'}{N'}}\to 0$.
Then for sufficiently large $N'$, with probability at least
$1-2(N')^{-1}$,
\[
|I_3(x_0;\theta)|
\ \le\
8\,\kappa \|\eta(\cdot;\theta)\|_{\cH}\sqrt{\frac{D(x_0;\lambda)\log N'}{N'}}.
\]
\end{lemma}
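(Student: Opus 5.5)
The plan is to rewrite $I_3$ through the resolvent identity. Then the leading part is a scalar i.i.d. average, handled by Bernstein's inequality, and the higher-order part is absorbed on the event $\sE$.

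\textbf{Step 1 (rewriting $I_3$).} Since $(\wh T_K+\lambda I)\wh w_{x_0,\lambda}=K_{x_0}=(T_K+\lambda I)w_{x_0,\lambda}$, we have
\[
\Delta w=-(\wh T_K+\lambda I)^{-1}(\wh T_K-T_K)w_{x_0,\lambda}.
\]
Write $\Lambda:=(T_K+\lambda I)$. On $\sE$ we have $\|A(\lambda)\|_{\op}\le1/2$, so
\[
(\wh T_K+\lambda I)^{-1}=\Lambda^{-1/2}(I+A(\lambda))^{-1}\Lambda^{-1/2}.
\]
Expanding $(I+A)^{-1}=I-(I+A)^{-1}A$ and using self-adjointness gives $I_3=-L+R$, where
\[
L:=\langle(\wh T_K-T_K)w_{x_0,\lambda},b_\lambda\rangle_{\cH},\qquad b_\lambda:=\Lambda^{-1}\mu(\theta)=\Lambda^{-1}T_K\eta(\cdot;\theta),
\]
and
\[
R:=\big\langle(I+A)^{-1}A\,\Lambda^{-1/2}(\wh T_K-T_K)w_{x_0,\lambda},\ \Lambda^{-1/2}\mu(\theta)\big\rangle_{\cH}.
\]
Spectral calculus gives $\|b_\lambda\|_{\cH}\le\|\eta(\cdot;\theta)\|_{\cH}$, hence $\|b_\lambda\|_\infty\le\kappa\|\eta(\cdot;\theta)\|_{\cH}$. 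It also gives $\|\Lambda^{-1/2}T_K\eta\|_{\cH}\le\|T_K^{1/2}\eta\|_{\cH}\le\kappa\|\eta(\cdot;\theta)\|_{\cH}$.

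\textbf{Step 2 (leading term).} By the reproducing property,
\[
L=\frac{1}{N'}\sum_{j}w_{x_0,\lambda}(\bar X_j)b_\lambda(\bar X_j)-\EE[w_{x_0,\lambda}(X)b_\lambda(X)],
\]
which is a centered average of i.i.d. scalars. The variance of each summand is at most $\|b_\lambda\|_\infty^2Q(x_0;\lambda)\le\kappa^2\|\eta\|_{\cH}^2D(x_0;\lambda)$. The summands are bounded by $\|w_{x_0,\lambda}\|_\infty\|b_\lambda\|_\infty\le\kappa^2\|\eta\|_{\cH}\sqrt{D(x_0;\lambda)/\lambda}$, using Lemma~\ref{lem:fHnorm-bds}. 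Bernstein's inequality with confidence level $(N')^{-1}$ then gives
\[
|L|\le\kappa\|\eta\|_{\cH}\Big\{\sqrt{2D(x_0;\lambda)\log N'/N'}+C\,\kappa\sqrt{D(x_0;\lambda)/\lambda}\,\log N'/N'\Big\}.
\]
The ratio of the second term to the first is of order $\sqrt{\log N'/(N'\lambda)}$. This tends to zero because $D(\lambda)\asymp\lambda^{-d/(2m)}\ge c$ together with $D(\lambda)\sqrt{\log N'/N'}\to0$ forces $\lambda\gg\log N'/N'$.

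\textbf{Step 3 (remainder; main obstacle).} Cauchy--Schwarz and $\|(I+A)^{-1}\|_{\op}\le2$ on $\sE$ give
\[
|R|\le2\|A(\lambda)\|_{\op}\,\|\Lambda^{-1/2}(\wh T_K-T_K)w_{x_0,\lambda}\|_{\cH}\,\kappa\|\eta\|_{\cH}.
\]
The middle factor is the norm of a centered average of Hilbert-valued variables $w_{x_0,\lambda}(\bar X_j)\Lambda^{-1/2}K_{\bar X_j}$. Their second moment is bounded by $\kappa$-type constants times $D(\lambda)Q(x_0;\lambda)$, via $\sup_x\|\Lambda^{-1/2}K_x\|_{\cH}^2\lesssim D(\lambda)$, which uses $B_\phi$. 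A vector Bernstein (Pinelis) inequality with confidence $(N')^{-1}$ therefore bounds the middle factor by $\sqrt{D(\lambda)D(x_0;\lambda)\log N'/N'}$, up to a lower-order term. Combining this with \eqref{eq:A_bound_sobolev_log_simplify}, namely $\|A\|_{\op}\lesssim\sqrt{D(\lambda)\log N'/N'}$, gives
\[
|R|\lesssim\kappa\|\eta\|_{\cH}\sqrt{D(x_0;\lambda)\log N'/N'}\cdot D(\lambda)\sqrt{\log N'/N'}=o\big(\kappa\|\eta\|_{\cH}\sqrt{D(x_0;\lambda)\log N'/N'}\big).
\]
I expect this to be the delicate step: the remainder must be shown negligible using exactly the assumed rate $D(\lambda)\sqrt{\log N'/N'}\to0$, and the boundedness term in the vector Bernstein inequality must also be checked to be lower order, as in Step 2.

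\textbf{Conclusion.} Collecting Steps 2 and 3, on $\sE$ intersected with the two Bernstein events (total failure probability at most $2(N')^{-1}$), $|I_3|$ is bounded by $\sqrt2\,\kappa\|\eta\|_{\cH}\sqrt{D(x_0;\lambda)\log N'/N'}$ plus $o(\cdot)$ terms. For $N'$ large enough this is at most the stated constant $8$ times that quantity.
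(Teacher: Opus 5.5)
Your decomposition and leading-term argument match the paper's. Since $\Lambda^{-1/2}(\wh T_K-T_K)w_{x_0,\lambda}=A(\lambda)u$ with $u=\Lambda^{-1/2}K_{x_0}$, your $L$ and $R$ are exactly the paper's $T_1=\langle A(\lambda)u,v\rangle_{\cH}$ and $T_2=\langle (I+A(\lambda))^{-1}A(\lambda)^2u,v\rangle_{\cH}$, where $v=\Lambda^{-1/2}\mu(\theta)$. Bounding $R$ by $2\|A(\lambda)\|_{\op}\|A(\lambda)u\|_{\cH}\|v\|_{\cH}$ instead of by $\|A(\lambda)u\|_{\cH}\|A(\lambda)v\|_{\cH}$ is a harmless variant. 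There is, however, a genuine gap in how you absorb the envelope (linear) terms of Bernstein's inequality.

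You bound $\|w_{x_0,\lambda}\|_\infty\le\kappa\|w_{x_0,\lambda}\|_{\cH}\le\kappa\sqrt{D(x_0;\lambda)/\lambda}$. With this envelope, the linear term in Step~2 is lower order only if $\lambda\gg\log N'/N'$, and that does not follow from the hypotheses.

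\begin{itemize}
\item The conditions $D(\lambda)\asymp\lambda^{-d/(2m)}$ and $D(\lambda)\sqrt{\log N'/N'}\to0$ give only $\lambda\gg(\log N'/N')^{m/d}$. This is strictly weaker than $\lambda\gg\log N'/N'$ whenever $m>d$.
\item Concretely, take $d=1$, $m=2$, $\lambda=N'^{-3/2}$. Then $D(\lambda)\sqrt{\log N'/N'}\asymp N'^{-1/8}\sqrt{\log N'}\to0$, and $\|A(\lambda)\|_{\op}\le1/2$ on $\sE$. Yet $\log N'/(N'\lambda)\to\infty$, so your envelope term dominates the claimed rate.
\item This regime is not vacuous: the rate window in Theorem~\ref{thm:thetahat-asymp-new} allows $\lambda N\to0$ when $m>d$.
\item The same crude envelope breaks your ``lower-order'' claim for the vector Bernstein bound in Step~3. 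There the envelope contribution to $R$ carries the same extra factor $\sqrt{\log N'/(N'\lambda)}$ relative to your displayed $o(\cdot)$ bound, and need not vanish.
\end{itemize}

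The fix is what the paper uses, and you already have the ingredient in Step~3. Use the spectral Cauchy--Schwarz bound
$|w_{x_0,\lambda}(x)|=|\langle\Lambda^{-1/2}K_{x_0},\Lambda^{-1/2}K_x\rangle_{\cH}|\le\sqrt{D(x_0;\lambda)D(x;\lambda)}\le B_\phi\sqrt{D(\lambda)D(x_0;\lambda)}$.
The Sobolev interpolation bound $C_1\sqrt{D(x_0;\lambda)}\,\lambda^{-d/(4m)}$ from Lemma~\ref{lem:sobolev_rate} would also work. With either, both envelope terms (scalar and Hilbert-valued) are smaller than the corresponding variance terms by a factor of order $B_\phi\sqrt{D(\lambda)\log N'/N'}$. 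This factor tends to zero under the stated assumption $D(\lambda)\sqrt{\log N'/N'}\to0$, and the rest of your argument then goes through with the claimed probability $1-2(N')^{-1}$.
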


\noindent\textbf{Jacobian stability and invertibility.}
We analyze the stability and invertibility of the empirical Jacobian.
The proof follows the same decomposition strategy as in bounding $I_1$, $I_2$, and $I_3$, but requires a more careful treatment of the Jacobian-specific terms.
\begin{lemma}[Jacobian stability and invertibility]
\label{lem:Hhat-H-J123}
Under Assumptions~\ref{ass:deriv-bdd-jac}, \ref{ass:J_lambda_bound}, and~\ref{ass:eigenfunctions_bound}, suppose that the regularization level satisfies
\begin{equation}\label{eq:D-regime-J}
D(\lambda)\sqrt{\frac{\log N'}{N'}}\ \to\ 0,
\qquad\text{and}\qquad
\frac{D(x_0;\lambda)\log N'}{n\wedge N}\ \to\ 0.
\end{equation}
Then on the event $\sE$ from
Lemma~\ref{lem:A_bound_1/2_sobolev}, for all sufficiently large $N'$, with probability at least $1-14(N')^{-1}$
\begin{equation}\label{eq:H-invert-final}
\inf_{\theta\in \Theta_0}\big|\wh J_\lambda(x_0;\theta)\big|\ \ge\ c_J/2
\end{equation}
On $\sE$ and the same event of probability at least $1-14(N')^{-1}$, $\sup_{\theta\in \Theta_0}\big|\wh J_\lambda(x_0;\theta)^{-1}\big|
\ \le\ \frac{2}{c_J}$.
\end{lemma}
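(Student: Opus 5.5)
The plan is to mirror the moment decomposition \eqref{eq:I123-decomp}, with $\ell$ replaced by $\partial_\theta\ell$. Write $r'(y,x;\theta)=\partial_\theta r(y,x;\theta)$ and $u'(x;\theta)=\partial_\theta u(x;\theta)$. Define the derivative embeddings
\[
\wh\mu'(\theta)=\frac1n\sum_i r'(Y_i,X_i;\theta)K_{X_i}+\frac1N\sum_u u'(\wt X_u;\theta)K_{\wt X_u},
\qquad
\mu'(\theta)=\EE[g_\theta(X)K_X],
\]
where $g_\theta(x)=\EE[\partial_\theta\ell(Y;\theta)\mid X=x]$. By the reproducing property,
\[
\wh J_\lambda(x_0;\theta)-J_\lambda(x_0;\theta)=J_1+J_2+J_3,
\]
with
\[
J_1=\langle w_{x_0,\lambda},\wh\mu'-\mu'\rangle_\cH,\qquad
J_2=\langle\Delta w,\wh\mu'-\mu'\rangle_\cH,\qquad
J_3=\langle\Delta w,\mu'\rangle_\cH .
\]
It suffices to show that $\sup_{\theta\in\Theta_0}|J_1+J_2+J_3|\le c_J/2$ on the stated events for all large $N'$. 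Assumption~\ref{ass:J_lambda_bound} then gives $|\wh J_\lambda|\ge c_J-c_J/2=c_J/2$, and the bound $|\wh J_\lambda^{-1}|\le 2/c_J$ is immediate.

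\textbf{Term $J_1$.} This is a centered sum of independent terms, so Bernstein's inequality applies exactly as in Lemma~\ref{lem:I1_clean}. The only change is that the envelopes $B_{Y-f},B_f$ become $G_{Y-f},G_f$ from Assumption~\ref{ass:deriv-bdd-jac}. The variance proxy is $G^2Q(x_0;\lambda)/(n\wedge N)$ and the sup bound is $\|w_{x_0,\lambda}\|_\infty\lesssim\kappa\sqrt{D(x_0;\lambda)}$. Hence
\[
|J_1|\lesssim \sqrt{D(x_0;\lambda)\log N'/(n\wedge N)}+\sqrt{D(x_0;\lambda)}\,\log N'/(n\wedge N),
\]
and both terms are $o(1)$ by the second condition in \eqref{eq:D-regime-J}.

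\textbf{Term $J_2$.} Lemma~\ref{lem:I2-clean-hp-refined} uses only boundedness of the integrand, so it transfers verbatim with $G$ in place of $B$. Each of its terms carries a factor $D(\lambda)\sqrt{\log N'/N'}\to0$ multiplied by quantities that are $O(1)$ under \eqref{eq:D-regime-J}, so $J_2=o(1)$.

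\textbf{Term $J_3$ (main obstacle).} Here $g_\theta$ need not lie in $\cH$, so the identity $\mu=T_K\eta$ used in Lemma~\ref{lem:I3-clean} is unavailable. I would instead use the resolvent identity
\[
\Delta w=-(T_K+\lambda I)^{-1}(\wh T_K-T_K)\wh w_{x_0,\lambda},
\]
which gives
\[
J_3=-\big\langle A(\lambda)(T_K+\lambda I)^{1/2}\wh w_{x_0,\lambda},\ (T_K+\lambda I)^{-1/2}\mu'\big\rangle_\cH .
\]
Two estimates are then needed.
\begin{itemize}
\item For the second factor, $\|(T_K+\lambda I)^{-1/2}\mu'\|_\cH^2\le\|g_\theta\|_{L^2(\rho_X)}^2\le G^2$. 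This holds because $\mu'=T_K^{1/2}\iota^*g_\theta$ (with $\iota:\cH\to L^2(\rho_X)$ the embedding) and $\|(T_K+\lambda I)^{-1/2}T_K^{1/2}\|\le1$.
\item For the first factor, on $\sE$ we have $(I+A)^{-1}$ bounded by $2$, which gives $\|(T_K+\lambda I)^{1/2}\wh w_{x_0,\lambda}\|_\cH\le2\sqrt{D(x_0;\lambda)}$.
\end{itemize}
Combining these, $|J_3|\le 2G\|A(\lambda)\|_{\op}\sqrt{D(x_0;\lambda)}$. By Lemma~\ref{lem:A_bound_1/2_sobolev} and Proposition~\ref{prop:Dx0-order-H}, this is
\[
\lesssim \sqrt{\lambda^{-\tau}\log N'/N'}\cdot\lambda^{-\tau/2}\asymp D(\lambda)\sqrt{\log N'/N'}\to0 .
\]

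\textbf{Uniformity over $\Theta_0$ and the remaining risk.} The bound on $J_3$ is already uniform in $\theta$, since only $\sup_\theta G$ enters. For $J_1$ and $J_2$, I would take a grid of $\Theta_0$ with $O(N')$ points, pay a $\log N'$ union-bound factor absorbed into the constants, and control the between-grid oscillation by
\[
\big(\|\wh w\|_{L^1(\wh\rho)}+\|w\|_{L^1}\big)\cdot\sup_{x,y}|\partial_\theta\ell(\cdot;\theta)-\partial_\theta\ell(\cdot;\theta')|,
\]
using uniform continuity of the derivatives on the compact set $\Theta_0$. This step is the one I am least sure of, because Assumption~\ref{ass:deriv-bdd-jac} gives continuity and boundedness but not an explicit uniform modulus in $(x,y)$. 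If that modulus fails, I would fall back on an equicontinuity argument. I would also note that in all running examples $\partial_\theta r\equiv0$ and $\partial_\theta u$ does not depend on $x$, so there the oscillation is trivially controlled. Collecting the failure probabilities of the $J_1$, $J_2$ and $J_3$ events yields the stated $1-14(N')^{-1}$.
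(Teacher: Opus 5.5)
Your decomposition $J_1+J_2+J_3$ and your handling of $J_1$ and $J_2$ match the paper: Bernstein as in Lemma~\ref{lem:I1_clean}, and Lemma~\ref{lem:I2-clean-hp-refined} reused verbatim with $G$ in place of $B$. The genuine difference is $J_3$.

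The paper writes $J_3=-\langle (I+A(\lambda))^{-1}A(\lambda)u,v_1(\theta)\rangle_{\cH}$, with $u=(T_K+\lambda I)^{-1/2}K_{x_0}$ and $v_1=(T_K+\lambda I)^{-1/2}\mu^1$. It then splits $(I+A)^{-1}A=A-(I+A)^{-1}A^2$. The leading piece $\langle A(\lambda)u,v_1\rangle_{\cH}$ is treated as a scalar empirical mean of $w_{x_0,\lambda}(\bar X_j)b_1(\bar X_j;\theta)$, controlled by Bernstein together with the Mercer bound $\|b_1\|_\infty\le B_\phi\|g_\theta\|_{L^2(\rho_X)}\sqrt{D(\lambda)}$. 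The quadratic remainder is handled by a Hilbert-space Bernstein bound on $\|A(\lambda)u\|_{\cH}$.

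You instead observe that $(T_K+\lambda I)^{1/2}\wh w_{x_0,\lambda}=(I+A(\lambda))^{-1}u$. This bounds $J_3$ in one line by $2\|A(\lambda)\|_{\op}\sqrt{D(x_0;\lambda)}\,\|g_\theta\|_{L^2(\rho_X)}$.

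Both routes reach the same order, $\sqrt{D(\lambda)D(x_0;\lambda)\log N'/N'}\asymp D(\lambda)\sqrt{\log N'/N'}$. The scalar-process device is what removes a $\sqrt{D(\lambda)}$ factor for $I_3$, where $b$ is uniformly bounded because $\mu=T_K\eta$ with $\eta\in\cH$. For $J_3$ that factor returns through $\|b_1\|_\infty$, so the device buys nothing here. Your version is deterministic on $\sE$, needs no extra concentration event, and is uniform in $\theta$ for free; the paper's scalar Bernstein step is stated for fixed $\theta$.

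A small slip: $\mu'=\iota^*g_\theta$, not $T_K^{1/2}\iota^*g_\theta$. The bound $\|(T_K+\lambda I)^{-1/2}\mu'\|_{\cH}\le\|g_\theta\|_{L^2(\rho_X)}$ still holds, because $\iota(T_K+\lambda I)^{-1}\iota^*=(L_K+\lambda I)^{-1}L_K$ has norm at most one.

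One claim in your $J_1$ step is false. The bound $\|w_{x_0,\lambda}\|_\infty\lesssim\kappa\sqrt{D(x_0;\lambda)}$ cannot hold, since $\|w_{x_0,\lambda}\|_\infty\ge w_{x_0,\lambda}(x_0)=D(x_0;\lambda)\to\infty$. The correct bound is \eqref{eq:f-sup-sharp}, $\|w_{x_0,\lambda}\|_\infty\le B_\phi\sqrt{D(\lambda)D(x_0;\lambda)}\asymp D(x_0;\lambda)$. The linear Bernstein term is therefore of order $D(x_0;\lambda)\log N'/(n\wedge N)$. This still vanishes, and that is exactly what the second condition in \eqref{eq:D-regime-J} is for, so your conclusion survives once the bound is corrected.

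On uniformity over $\Theta_0$: the paper has no covering step at all. It writes $\sup_\theta$ directly after fixed-$\theta$ Bernstein bounds for $J_1$ and $J_2$, so your worry is well placed. Note, though, that in your oscillation bound the only available controls on the weight norms diverge: $\sqrt{Q(x_0;\lambda)}$ via Cauchy--Schwarz, or $D(x_0;\lambda)$ via the sup-norm. The mesh $\delta$ must therefore satisfy $\omega(\delta)D(x_0;\lambda)\to0$, for a modulus $\omega$ of $\theta\mapsto\partial_\theta\ell$ that is uniform in $(x,y)$. A Lipschitz modulus with mesh of order $1/N'$ suffices, since $D(x_0;\lambda)/N'\to0$. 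Mere continuity, which is all Assumption~\ref{ass:deriv-bdd-jac} gives, does not. In the running examples the issue disappears, as you note.
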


\noindent\textbf{Proof of Theorem~\ref{thm:thetahat-theta0-bound}.} With Lemma~\ref{lem:Hhat-H-J123}, we provide the following proof.
\begin{proof}[Proof of Theorem~\ref{thm:thetahat-theta0-bound}]
The moment and Jacobian events assembled in Steps~2--3 satisfy Lemma~\ref{lem:local-root-existence}; hence the locally unique empirical root used below exists on the same high-probability event for all sufficiently large sample sizes.
By definition, $\wh\theta(x_0)-\theta_0(x_0)=(\wh\theta(x_0)-\theta_\lambda(x_0))+(\theta_\lambda(x_0)-\theta_0(x_0))$.
We bound these two terms separately.

\noindent\textit{Step 1: linear expansion around $\theta_\lambda(x_0)$.}
Since $\wh\theta\in \Theta_0$ solves $\wh \eta_\lambda(x_0;\wh\theta(x_0))=0$ and
$\theta_\lambda(x_0)\in\Theta_0$ solves $\eta_\lambda(x_0;\theta_\lambda(x_0))=0$, the mean-value theorem yields
the existence of some $\wt\theta$ between $\wh\theta(x_0)$ and $\theta_\lambda(x_0)$ such that
\[
0
=
\wh \eta_\lambda(x_0;\wh\theta(x_0))
=
\wh \eta_\lambda(x_0;\theta_\lambda(x_0))
+
\wh J_\lambda(x_0;\wt\theta)\,(\wh\theta(x_0)-\theta_\lambda(x_0)),
\]
hence
\begin{equation}\label{eq:theta-stoch-linear-clean}
\wh\theta(x_0)-\theta_\lambda(x_0)
=
-\wh J_\lambda(x_0;\wt\theta)^{-1}\,\wh \eta_\lambda(x_0;\theta_\lambda(x_0)).
\end{equation}

\noindent\textit{Step 2: control $\wh J_\lambda^{-1}$ on $\Theta_0$.}
Let $\sE$ be the high-probability event from
Lemma~\ref{lem:A_bound_1/2_sobolev}; on this event, \eqref{eq:A_bound_sobolev_log_simplify} holds and $\|A(\lambda)\|_{\op}\le1/2$. By Lemma~\ref{lem:Hhat-H-J123}, on $\sE$ and for all large $N'$,
with probability at least $1-14(N')^{-1}$,
\[
\inf_{\theta\in \Theta_0}|\wh J_\lambda(x_0;\theta)|\ge c_J/2,
\qquad\text{and therefore}\qquad
\sup_{\theta\in \Theta_0}|\wh J_\lambda(x_0;\theta)^{-1}|\le 2/c_J.
\]
Define the event $\sE_J:=\{\inf_{\theta\in \Theta_0}|\wh J_\lambda(x_0;\theta)|\ge c_J/2\}$.
Then, using $\PP(\sE^c)\le (N')^{-1}$ from Lemma~\ref{lem:A_bound_1/2_sobolev},
\begin{equation}\label{eq:prob-step2}
\PP(\sE\cap\sE_J)\ \ge\ 1-15(N')^{-1}.
\end{equation}
On $\sE\cap\sE_J$, using \eqref{eq:theta-stoch-linear-clean},
\begin{equation}\label{eq:theta-stoch-bound-clean}
|\wh\theta(x_0)-\theta_\lambda(x_0)|
\le
\frac{2}{c_J}\,|\wh \eta_\lambda(x_0;\theta_\lambda(x_0))|.
\end{equation}

\noindent\textit{Step 3: control $\wh \eta_\lambda(x_0;\theta_\lambda(x_0)$.}
Since $\eta_\lambda(x_0;\theta_\lambda(x_0))=0$, the exact decomposition \eqref{eq:I123-decomp} gives
\[
\wh \eta_\lambda(x_0;\theta_\lambda(x_0))
-
\eta_\lambda(x_0;\theta_\lambda(x_0))
=
I_1(x_0;\theta_\lambda(x_0))+I_2(x_0;\theta_\lambda(x_0))+I_3(x_0;\theta_\lambda(x_0)),
\]
hence
\begin{equation}\label{eq:Mhat-at-thetalambda-clean}
|\wh \eta_\lambda(x_0;\theta_\lambda(x_0))|
\le
|I_1(x_0;\theta_\lambda(x_0))|
+|I_2(x_0;\theta_\lambda(x_0))|
+|I_3(x_0;\theta_\lambda(x_0))|.
\end{equation}

\noindent\textit{Step 3a: high-probability events.}
Let $\sE_{I_1}$ be the event on which Lemma~\ref{lem:I1_clean} yields \eqref{eq:I1-hp-bound}
(at $\theta=\theta_\lambda(x_0)$), let $\sE_{I_2}$ be the event on which Lemma~\ref{lem:I2-clean-hp-refined}
yields \eqref{eq:I2-total-hp-refined} (on $\sE$), and let $\sE_{I_3}$ be the event on which
Lemma~\ref{lem:I3-clean} yields its bound (on $\sE$).
Then for all large $N'$,
\[
\PP(\sE_{I_1}^c)\le 8(N')^{-1},\qquad
\PP(\sE_{I_2}^c| \sE)\le 5(N')^{-1},\qquad
\PP(\sE_{I_3}^c| \sE)\le 2(N')^{-1}.
\]
Combining with \eqref{eq:prob-step2} and a union bound gives the global event $\sG:=\sE\cap\sE_J\cap\sE_{I_1}\cap\sE_{I_2}\cap\sE_{I_3}$, satisfying $\PP(\sG)\ge1-30(N')^{-1}$.

\noindent\textit{Step 3b: plug in $I_1$ and $I_3$, and isolate $I_2$.}
On $\sG$, Lemma~\ref{lem:I1_clean} and Lemma~\ref{lem:I3-clean} yield
\begin{align}
|I_1(x_0;\theta_\lambda(x_0))|
&\le
2\sqrt{2\log N'}\,
\sqrt{Q(x_0;\lambda)}\,
\left(
\frac{B_{Y-f}(\theta_\lambda(x_0))}{\sqrt n}
+
\frac{B_{f}(\theta_\lambda(x_0))}{\sqrt N}
\right),
\label{eq:I_1-plug}\\
|I_3(x_0;\theta_\lambda(x_0))|
&\le
8\,\kappa \|\eta(\cdot;\theta_\lambda(x_0))\|_{\cH}\sqrt{\frac{D(x_0;\lambda)\log N'}{N'}}.
\label{eq:I_3-plug}
\end{align}

Also, on $\sG$ we have the interaction bound from Lemma~\ref{lem:I2-clean-hp-refined}:
\begin{align}
\begin{split}
|I_2(x_0;\theta_\lambda(x_0))|
&\lesssim
D(\lambda)\sqrt{D(x_0;\lambda)}
\sqrt{\frac{\log N'}{N'}}
\left(
\frac{B_{Y-f}(\theta_\lambda(x_0))}{\sqrt n}
+
\frac{B_f(\theta_\lambda(x_0))}{\sqrt N}
\right)
\\&\quad+
(B_{f}(\theta_\lambda(x_0))+B_{Y-f}(\theta_\lambda(x_0)))D(\lambda)\sqrt{D(x_0;\lambda)}\frac{\log N'}{N'}
\\&\quad+
D(\lambda)^{3/2}\sqrt{D(x_0;\lambda)}
\frac{(\log N')^{3/2}}{N'}
\left(
\frac{B_{Y-f}(\theta_\lambda(x_0))}{\sqrt n}
+
\frac{B_f(\theta_\lambda(x_0))}{\sqrt N}
\right).
\label{eq:I_2-plug}
\end{split}
\end{align}

\noindent\textit{Step 3c: $I_2$ is dominated by $I_1$ and $I_3$.}

Using Proposition~\ref{prop:Dx0-order-H}, $D(\lambda)\asymp D(x_0;\lambda)$. The first term in \eqref{eq:I_2-plug} equals the leading $I_1$ scale multiplied by $D(\lambda)/\sqrt{N'}$ up to logarithmic factors, and is therefore $o(1)$ times \eqref{eq:I_1-plug} under $D(\lambda)\sqrt{\log N'/N'}\to0$. The third term in \eqref{eq:I_2-plug} is smaller than the first term since $\sqrt{D(\lambda)}\log N'/\sqrt{N'}\to0$,
which follows from the same regularization regime. For the diagonal term, since $n\wedge N\le N'$,
\[
\frac{D(\lambda)\sqrt{D(x_0;\lambda)}\log N'/N'}
{\sqrt{D(x_0;\lambda)\log N'}\{n^{-1/2}+N^{-1/2}\}}
\le
D(\lambda)\sqrt{\frac{\log N'}{N'}}
=o(1).
\]
Thus the diagonal component is also absorbed by the $I_1$ bound, and $I_2(x_0;\theta_\lambda(x_0))$ is lower-order than the sum of the $I_1$ and $I_3$ contributions in \eqref{eq:I_1-plug}--\eqref{eq:I_3-plug}.

Hence, for all sufficiently large $N'$,
\begin{equation}\label{eq:I_123-simplify}
|I_1|+|I_2|+|I_3|
\ \le\
2|I_1|+2|I_3|
\qquad\text{on }\sG.
\end{equation}

\noindent\textit{Step 3d: the stochastic bound.}
Combining \eqref{eq:theta-stoch-bound-clean}, \eqref{eq:Mhat-at-thetalambda-clean},
\eqref{eq:I_1-plug}--\eqref{eq:I_3-plug}, and \eqref{eq:I_123-simplify}, we get on $\sG$:
\begin{align}
\begin{split}
|\wh\theta(x_0)-\theta_\lambda(x_0)|
&\le
\frac{2}{c_J}\big(|I_1|+|I_2|+|I_3|\big)
\ \le\
\frac{4}{c_J}\big(|I_1|+|I_3|\big)\\
&\le
\frac{8\sqrt{2}}{c_J}\,\sqrt{Q(x_0;\lambda)\log N'}\,
\left(
\frac{B_{Y-f}(\theta_\lambda(x_0))}{\sqrt n}
+
\frac{B_{f}(\theta_\lambda(x_0))}{\sqrt N}
\right)\\
&\qquad\quad+
\frac{32}{c_J}\,\kappa \|\eta(\cdot;\theta_\lambda(x_0))\|_{\cH}\,
\sqrt{\frac{D(x_0;\lambda)\log N'}{N'}}.
\label{eq:thetahat-thetalambda-clean}
\end{split}
\end{align}

\noindent\textit{Step 4: bias bound for $\theta_\lambda-\theta_0$.}
Since $\eta_\lambda(x_0;\theta_\lambda(x_0))=0$ and $\theta_0\in\Theta_0$, a mean-value expansion for
the population map $\eta_\lambda$ yields the existence of $\bar\theta$ between
$\theta_0(x_0)$ and $\theta_\lambda(x_0)$ such that
\[
0
=
\eta_\lambda(x_0;\theta_\lambda(x_0))
=
\eta_\lambda(x_0;\theta_0(x_0))+J_\lambda(x_0;\bar\theta)\,(\theta_\lambda(x_0)-\theta_0(x_0)).
\]
By Assumption~\ref{ass:J_lambda_bound}, $|J_\lambda(x_0;\bar\theta)|\ge c_J$, hence
\[
|\theta_\lambda(x_0)-\theta_0(x_0)|
\le
\frac{1}{c_J}\,|\eta_\lambda(x_0;\theta_0(x_0))|.
\]
Finally, by \eqref{eq:bias-bound},
\begin{equation}\label{eq:theta-bias-final-clean}
|\theta_\lambda(x_0)-\theta_0(x_0)|
\le
\frac{1}{c_J}\,\|\eta(\cdot;\theta_0(x_0))\|_{\cH}\sqrt{\lambda\{D(x_0;\lambda)-Q(x_0;\lambda)\}}.
\end{equation}

On the global event $\sG$,
combine \eqref{eq:thetahat-thetalambda-clean} and \eqref{eq:theta-bias-final-clean} to obtain
\eqref{eq:thetahat-theta0-bound-clean}.
\end{proof}

\noindent\textbf{Proof of Theorem~\ref{thm:thetahat-asymp-new}.} We proceed to prove the asymptotic result in Theorem~\ref{thm:thetahat-asymp-new}.
We first establish the following Lemma~\ref{lem:V-lb}, which provides a lower bound for the variance. We then give the proof of Theorem~\ref{thm:thetahat-asymp-new}.
\begin{lemma}\label{lem:V-lb}
Under Assumptions~\ref{ass:eigenfunctions_bound} and~\ref{ass:homo_res},
let $V(x_0;\theta)
:=\frac{1}{n}\sigma^2_{Y-f}(\theta)+\frac{1}{N}\sigma^2_{f}(\theta)$.
There exist constants $\lambda_0\in(0,1)$ and $c_0>0$ independent of $\lambda$
such that for all $\lambda\in(0,\lambda_0]$,
\begin{equation}\label{eq:c-lambda-lb}
V(x_0;\theta)\ \ge\ \frac{\underline\sigma^2}{n}Q(x_0;\lambda)
\ \ge\ \Big(\frac{\underline\sigma^2\,c_0}{2}\Big)\,\frac{D(x_0;\lambda)}{n}.
\end{equation}
\end{lemma}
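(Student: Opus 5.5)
We drop the unlabeled term and lower-bound the labeled residual variance by a conditional-variance argument. Since $V(x_0;\theta)\ge n^{-1}\sigma^2_{Y-f}(\theta)$, it suffices to show $\sigma^2_{Y-f}(\theta)\ge\underline\sigma^2 Q(x_0;\lambda)$. By the law of total variance, conditioning on $X$,
\[
\sigma^2_{Y-f}(\theta)\ \ge\ \EE\big[\Var\big(w_{x_0,\lambda}(X)\{\ell(Y;\theta)-\ell(f(X);\theta)\}\,\big|\,X\big)\big]
=\EE\big[w_{x_0,\lambda}(X)^2\,\Var(\ell(Y;\theta)| X)\big].
\]
The identity holds because $w_{x_0,\lambda}(X)$ and $\ell(f(X);\theta)$ are deterministic given $X$. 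Here we use that $f$ is fixed relative to the sample, per Assumption~\ref{ass:sampling-scheme}. Assumption~\ref{ass:homo_res} then gives the lower bound $\underline\sigma^2\EE[w_{x_0,\lambda}^2(X)]=\underline\sigma^2Q(x_0;\lambda)$, which is the first inequality in \eqref{eq:c-lambda-lb}.

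For the second inequality we need $Q(x_0;\lambda)\ge (c_0/2)D(x_0;\lambda)$ for all small $\lambda$. We use the Tikhonov identity $D(x_0;\lambda)=Q(x_0;\lambda)+\lambda\|w_{x_0,\lambda}\|_\cH^2$. Write the spectral forms $D(x_0;\lambda)=\sum_j \frac{\mu_j}{\mu_j+\lambda}\phi_j(x_0)^2$ and $Q(x_0;\lambda)=\sum_j\frac{\mu_j^2}{(\mu_j+\lambda)^2}\phi_j(x_0)^2$. Proposition~\ref{prop:Dx0-order-H} already asserts $Q\asymp D$ as $\lambda\to0$, so there exist $\lambda_0$ and $c_0>0$ with $Q\ge c_0 D/2$ on $(0,\lambda_0]$. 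This is exactly what we need, with the factor $1/2$ absorbing the transition from asymptotic equivalence to a uniform bound below $\lambda_0$.

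If we prefer not to lean on the proposition, we can argue directly by splitting the sum at $\mu_j\ge\lambda$. Terms with $\mu_j\ge\lambda$ contribute at least $\tfrac14\sum_{\mu_j\ge\lambda}\phi_j(x_0)^2$ to $Q$, and at most that sum to $D$. The tail $\sum_{\mu_j<\lambda}\frac{\mu_j}{\lambda}\phi_j(x_0)^2$ is bounded by $B_\phi^2\lambda^{-1}\sum_{\mu_j<\lambda}\mu_j\asymp\lambda^{-d/(2m)}$, using $\mu_j\asymp j^{-2m/d}$ and $2m>d$. The head sum is also $\asymp\lambda^{-d/(2m)}$ by the lower bound on $D$ from Proposition~\ref{prop:Dx0-order-H}.

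The main obstacle is this last step: obtaining a lower bound on the head sum $\sum_{\mu_j\ge\lambda}\phi_j(x_0)^2$ from the pointwise eigenfunction values, which is not automatic. This is precisely why we invoke Proposition~\ref{prop:Dx0-order-H}, or equivalently the Sobolev bump construction of Lemma~\ref{lem:sobolev_rate}, rather than the eigen-expansion alone. The first step is routine.
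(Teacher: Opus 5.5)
Your proof is correct and follows the paper's argument: drop the unlabeled term, apply the law of total variance with Assumption~\ref{ass:homo_res} to get $\sigma^2_{Y-f}(\theta)\ge\underline\sigma^2 Q(x_0;\lambda)$, then invoke Proposition~\ref{prop:Dx0-order-H} for $Q(x_0;\lambda)\gtrsim D(x_0;\lambda)$. Your optional direct sketch, splitting the spectrum at $\mu_j\ge\lambda$, would not close as written, because the tail $\sum_{\mu_j<\lambda}$ has the same order $\lambda^{-d/(2m)}$ as $D(x_0;\lambda)$; the paper's proof of Proposition~\ref{prop:Dx0-order-H} handles this by splitting at $a\lambda$ and $b\lambda$ with constants $a,b$ chosen so the outer bands are a small fraction of $D$, but you rightly do not rely on the sketch.
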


\begin{lemma}[Relative consistency of the plug-in variance]\label{lem:variance-ratio}
Under the conditions of Theorem~\ref{thm:thetahat-asymp-new} for the no-split procedure, or Theorem~\ref{thm:thetahat-asymp-twofold} for the two-fold procedure, the corresponding plug-in variance estimator satisfies
\[
|\widehat V(x_0)-V(x_0)|=o_p\{Q(x_0;\lambda)/n\},
\qquad
\widehat V(x_0)/V(x_0)\to_p1.
\]
For either procedure, $\widehat J_\lambda(x_0)/J_\lambda(x_0)\to_p1$.
\end{lemma}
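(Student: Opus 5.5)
By Lemma~\ref{lem:V-lb}, $V(x_0)\ge \underline\sigma^2Q(x_0;\lambda)/n$. The second claim therefore follows from the first, so the plan is to prove $|\wh V(x_0)-V(x_0)|=o_p\{Q(x_0;\lambda)/n\}$. I will treat the labeled piece $n^{-1}\wh\sigma^2_{Y-f}$ in detail. The unlabeled piece $N^{-1}\wh\sigma^2_f$ is handled identically and is even easier, because it carries the extra factor $n/N=o(1)$ relative to the target scale $Q/n$.

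\textbf{Labeled piece: three-step chain.} Write $\zeta_i^\circ:=w_{x_0,\lambda}(X_i)r(Y_i,X_i;\theta_\lambda(x_0))$ for the oracle contributions. I will pass from $\wh\sigma^2_{Y-f}$ to $\sigma^2_{Y-f}(\theta_\lambda(x_0))$ through three comparisons, each showing a relative error $o_p(1)$ against $Q(x_0;\lambda)$.

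\emph{(a) Oracle sample variance.} The sample variance of $\{\zeta_i^\circ\}$ is compared with its mean $\sigma^2_{Y-f}$ by Chebyshev. From the eigen-expansion $w_{x_0,\lambda}=\sum_j\mu_j(\mu_j+\lambda)^{-1}\phi_j(x_0)\phi_j$ and $B_\phi<\infty$, we get $\|w_{x_0,\lambda}\|_\infty\le B_\phi^2D(\lambda)$. Hence $\EE\{(\zeta^\circ)^4\}\le B_{Y-f}^4B_\phi^4D(\lambda)^2Q$, and the fluctuation of $n^{-1}\sum_i(\zeta_i^\circ)^2$ is $O_p(D(\lambda)\sqrt{Q/n})$. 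Relative to $Q$ this is $O_p(\sqrt{D(\lambda)/n})$ by Proposition~\ref{prop:Dx0-order-H}. That quantity is $o(1)$ because the third condition in \eqref{eq:cond-clean-twofold} (resp.\ \eqref{eq:cond-clean}) forces $D(\lambda)\asymp\lambda^{-d/(2m)}=o(n)$. The squared-mean term is handled the same way.

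\emph{(b) Weight replacement.} By Cauchy--Schwarz,
\[
\Big|\tfrac1n\sum_i\{\wh w(X_i)^2-w(X_i)^2\}r_i^2\Big|\le B_{Y-f}^2\big(E_n+2\sqrt{E_n}\,\sqrt{\tfrac1n\textstyle\sum_i w(X_i)^2}\big),\qquad E_n:=\tfrac1n\textstyle\sum_i\Delta w(X_i)^2 .
\]
In the two-fold case the labeled $X_i$ are independent of the fold weights. Then $\EE(E_n\mid\text{weights})=\|\Delta w\|_{L^2(\rho_X)}^2$, and Lemma~\ref{lem:delta_f_L2_bound} (applied foldwise with $N/2$) gives
\[
\|\Delta w\|^2_{L^2(\rho_X)}\lesssim D(\lambda)D(x_0;\lambda)\log N/N=Q\cdot o(1),
\]
using $D(\lambda)\log N/N\to0$. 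In the no-split case I would instead bound the pooled empirical norm by $\langle\Delta w,\wh T_K\Delta w\rangle_\cH\le(1+\|A(\lambda)\|_{\op})\|(T_K+\lambda I)^{1/2}\Delta w\|_\cH^2$ on $\sE$. The resolvent identity for $\Delta w$ then gives the same order, since the labeled points are a subset of size $n$ and $N'/n$ is absorbed by $n=o(N/\log N)$ only up to a log factor that the rate condition covers.

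\emph{(c) Parameter replacement.} Assumption~\ref{ass:deriv-bdd-jac} gives $|\wh\zeta_i-\zeta_i|\le|\wh w(X_i)|\,G_{Y-f}\,|\wh\theta-\theta_\lambda|$. This contributes $O_p(Q)\cdot|\wh\theta-\theta_\lambda|$, and $|\wh\theta-\theta_\lambda|=o_p(1)$ by Theorem~\ref{thm:thetahat-theta0-bound-twofold} (resp.\ Theorem~\ref{thm:thetahat-theta0-bound}).

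\textbf{Jacobian.} Write
\[
\wh J_\lambda(x_0;\wh\theta)-J_\lambda(x_0;\theta_\lambda)=\{\wh J_\lambda(\wh\theta)-J_\lambda(\wh\theta)\}+\{J_\lambda(\wh\theta)-J_\lambda(\theta_\lambda)\}.
\]
The first bracket is $o_p(1)$ uniformly on $\Theta_0$ by the decomposition in the proof of Lemma~\ref{lem:Hhat-H-J123} (or its two-fold analogue). That proof controls $\sup_\theta|\wh J_\lambda-J_\lambda|$ under \eqref{eq:D-regime-J}, not merely a lower bound. Dividing by $|J_\lambda|\ge c_J$ gives the ratio.

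\textbf{Main obstacle.} The delicate term is the second bracket, $J_\lambda(\wh\theta)-J_\lambda(\theta_\lambda)$, because $\EE|w_{x_0,\lambda}|$ may grow as $\lambda\to0$. A naive modulus-of-continuity bound on $\partial_\theta\ell$ therefore fails. The first approach I would try is to write $J_\lambda(x_0;\theta)=\EE\{w_{x_0,\lambda}(X)g_\theta(X)\}$ and use the Tikhonov representation to bound $|J_\lambda(\theta)-J_\lambda(\theta')|$ by a weighted average of $|g_\theta-g_{\theta'}|$ near $x_0$. Failing that, I would fall back on the explicit rate $|\wh\theta-\theta_\lambda|=O_p(\sqrt{Q\log(n+N)/n})$ combined with a Lipschitz bound on $\theta\mapsto\partial_\theta\ell$. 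In the running examples $\partial_\theta\ell$ does not depend on $y$, so $J_\lambda(\theta)=a(\theta)\,1_\lambda(x_0)$ and continuity is immediate.
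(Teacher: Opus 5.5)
Your treatment of $\wh V(x_0)$ follows the paper's proof step for step. The oracle sample variance is compared with $V(x_0)$ through a fourth-moment bound; your $Q(x_0;\lambda)D(\lambda)^2$ and the paper's $Q(x_0;\lambda)^2D(\lambda)$ agree by Proposition~\ref{prop:Dx0-order-H}. The center is swapped using bounded $\theta$-derivatives and $\wh\theta(x_0)-\theta_\lambda(x_0)=o_p(1)$. The weight is swapped using the foldwise $L^2(\rho_X)$ bound of Lemma~\ref{lem:delta_f_L2_bound} in the two-fold case, and the pooled energy norm with $\mathbb P_n(\Delta w)^2\le(N'/n)\mathbb P_{N'}(\Delta w)^2$ in the no-split case. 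Your remark that the unlabeled piece only needs $\wh\sigma^2_f(x_0)=O_p(Q)$ is a valid shortcut, since that piece enters with an extra factor $n/N\to0$ relative to $Q/n$.

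One justification needs correcting. In the no-split case, $\mathbb P_{N'}(\Delta w)^2=O_p\{Q\,D(\lambda)\log N'/N'\}$ becomes $O_p\{Q\,D(\lambda)\log N'/n\}$ after the $N'/n$ inflation, so it is not ``the same order'' as the two-fold bound. It is $o_p(Q)$ because the third condition in \eqref{eq:cond-clean} gives $D(\lambda)=o\{n/(\log N')^2\}$. The condition $n=o(N/\log N)$ plays no role here: it makes $N'/n$ large rather than absorbing it.

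The Jacobian step contains a genuine gap, and you are right to isolate it. The claim $\wh J_\lambda(x_0)/J_\lambda(x_0)\to_p1$ requires $J_\lambda(x_0;\wh\theta)-J_\lambda(x_0;\theta_\lambda)\to_p0$, that is, continuity of $\theta\mapsto J_\lambda(x_0;\theta)$ uniformly over small $\lambda$. The paper's proof does not supply this either. It combines $\sup_{\Theta_0}|\wh J_\lambda-J_\lambda|\to_p0$ with Assumption~\ref{ass:J_lambda_bound}, which only gives $c_J\le|J_\lambda|\le C_J$; the same step is used silently in Step~2 of both CLT proofs.

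Neither of your general remedies closes the gap under the stated hypotheses.
\begin{itemize}
\item The weight is signed, and the assumptions only yield $\EE|w_{x_0,\lambda}(X)|\le Q(x_0;\lambda)^{1/2}$, which diverges. Bounding $J_\lambda(\theta)-J_\lambda(\theta')$ by a weighted average of $|g_\theta-g_{\theta'}|$ therefore presupposes a uniform $L^1$ bound on $w_{x_0,\lambda}$ that is not available.
\item The Lipschitz fallback gives $|J_\lambda(\wh\theta)-J_\lambda(\theta_\lambda)|\lesssim Q^{1/2}|\wh\theta-\theta_\lambda|=O_p(Q/\sqrt n)$ up to logarithms. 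This bound vanishes only if $D(\lambda)=o(n^{1/2})$, i.e.\ $r_2>-m/d$. That covers only part of the window $-(m/d)\min\{r_1,2\}<r_2<-1$ of Theorem~\ref{thm:thetahat-asymp-twofold}, and none of it when $m\le d$.
\end{itemize}

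What does work is a sharpened version of your Tikhonov idea. For $h\in\cH$, $\EE\{w_{x_0,\lambda}(X)h(X)\}=\{(T_K+\lambda I)^{-1}T_Kh\}(x_0)$, whose absolute value is at most $\kappa\|h\|_{\cH}$ uniformly in $\lambda$. Hence if $g_\theta\in\cH$ and $\theta\mapsto g_\theta$ is $\cH$-continuous, the required equicontinuity holds. This covers your running-example case $\partial_\theta\ell(y;\theta)=a(\theta)$, where $g_\theta$ is constant. Outside such structure, the lemma's Jacobian claim needs an additional equicontinuity assumption on $J_\lambda(x_0;\cdot)$.
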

\begin{proof}[Proof of Theorem~\ref{thm:thetahat-asymp-new}]
First, we prove
\begin{equation}\label{eq:asymp-normal-H-new}
\frac{J_\lambda(x_0)\big(\wh\theta(x_0)-\theta_\lambda(x_0)\big)}{\sqrt{V(x_0)}}
\ \to\
N(0,1).
\end{equation}

\noindent\textit{Step 1: local linearization at $\theta_\lambda(x_0)$.}
As in the upper bound proof (Theorem~\ref{thm:thetahat-theta0-bound}), a mean-value expansion gives
\begin{equation}\label{eq:lin-asymp-new}
\wh\theta(x_0)-\theta_\lambda(x_0)
=
-\wh J_\lambda(x_0;\wt\theta(x_0))^{-1}\,
\wh \eta_\lambda(x_0;\theta_\lambda(x_0)),
\end{equation}
for some $\wt\theta(x_0)$ between $\wh\theta(x_0)$ and $\theta_\lambda(x_0)$.
Since $\eta_\lambda(x_0;\theta_\lambda(x_0))=0$, the decomposition \eqref{eq:I123-decomp} yields
\begin{equation}\label{eq:M-decomp-asymp-new}
\wh \eta_\lambda(x_0;\theta_\lambda(x_0))
=
I_1(x_0;\theta_\lambda(x_0))
+
I_2(x_0;\theta_\lambda(x_0))
+
I_3(x_0;\theta_\lambda(x_0)).
\end{equation}

\noindent\textit{Step 2: Jacobian stability.}
By Lemma~\ref{lem:Hhat-H-J123} and the rate conditions \eqref{eq:cond-clean},
\[
\sup_{\theta\in \Theta_0}\big|\wh J_\lambda(x_0;\theta)-J_\lambda(x_0;\theta)\big|
\ \to_p\ 0,
\qquad\text{and}\qquad
\sup_{\theta\in \Theta_0}\big|\wh J_\lambda(x_0;\theta)^{-1}\big|=O_p(1),
\]
where $J_\lambda(x_0;\theta):=\partial_\theta \eta_\lambda(x_0;\theta)$ and
$\inf_{\theta\in \Theta_0}|J_\lambda(x_0;\theta)|\ge c_J$ by Assumption~\ref{ass:J_lambda_bound}.
In particular, since $\wt\theta(x_0)\in \Theta_0$ w.p.\ tending to one,
\begin{equation}\label{eq:Hhat-inv-to-Hinv-new}
\wh J_\lambda(x_0;\wt\theta(x_0))^{-1}\ \to_p\ J_\lambda(x_0)^{-1}.
\end{equation}
Combining \eqref{eq:lin-asymp-new}--\eqref{eq:M-decomp-asymp-new},
\begin{equation}\label{eq:lin-asymp-2-new}
\wh\theta(x_0)-\theta_\lambda(x_0)
=
-\wh J_\lambda(x_0;\wt\theta(x_0))^{-1}I_1(x_0;\theta_\lambda(x_0))
+R_{n,N},
\end{equation}
where
\[
R_{n,N}
:=
-\wh J_\lambda(x_0;\wt\theta(x_0))^{-1}\{I_2(x_0;\theta_\lambda(x_0))+I_3(x_0;\theta_\lambda(x_0))\}.
\]
Thus, to prove \eqref{eq:asymp-normal-H-new}, it suffices to show:
\begin{equation}\label{eq:goal-CLT-I1-new}
\frac{I_1(x_0;\theta_\lambda(x_0))}{\sqrt{V(x_0)}}\to N(0,1),
\end{equation}
and
\begin{equation}\label{eq:goal-negligible-new}
\frac{I_2(x_0;\theta_\lambda(x_0))}{\sqrt{V(x_0)}}=o_p(1),
\qquad
\frac{I_3(x_0;\theta_\lambda(x_0))}{\sqrt{V(x_0)}}=o_p(1).
\end{equation}

\noindent\textit{Step 3: Lindeberg--Feller CLT for $I_1/\sqrt{V(x_0)}$.}
Define the centered summands
\[
\zeta_{i,n}
:=
\frac{1}{n}\Big\{w_{x_0,\lambda}(X_i)\,r(Y_i,X_i;\theta_\lambda(x_0))
-\EE[w_{x_0,\lambda}(X)\,r(Y,X;\theta_\lambda(x_0))]\Big\},
\qquad i=1,\dots,n,
\]
and
\[
\zeta_{n+u,N}
:=
\frac{1}{N}\Big\{w_{x_0,\lambda}(\wt X_u)\,u(\wt X_u;\theta_\lambda(x_0))
-\EE[w_{x_0,\lambda}(X)\,u(X;\theta_\lambda(x_0))]\Big\},
\qquad u=1,\dots,N.
\]
Then $I_1(x_0;\theta_\lambda(x_0))=\sum_{k=1}^{n+N}\zeta_k$ is a sum of independent, mean-zero terms and
\[
\sum_{k=1}^{n+N}\Var(\zeta_k)
=
\frac{1}{n}\sigma^2_{Y-f}(\theta_\lambda(x_0))+\frac{1}{N}\sigma^2_{f}(\theta_\lambda(x_0))
=
V(x_0).
\]
By the Lindeberg--Feller theorem, it suffices to verify that for every $\varepsilon>0$,
\begin{equation}\label{eq:Lindeberg-new}
\frac{1}{V(x_0)}\sum_{k=1}^{n+N}\EE\big[\zeta_k^2\,\mathbbm{1}\{|\zeta_k|>\varepsilon \sqrt{V(x_0)}\}\big]\to 0.
\end{equation}
Under Assumption~\ref{ass:eigenfunctions_bound}, for any $x\in\cX$,
\begin{align}
|w_{x_0,\lambda}(x)|^2
&=
\big|\langle (T_K+\lambda I)^{-1}K_{x_0},\ K_x\rangle_{\cH}\big|^2
=
\big|\langle K_{x_0},\ (T_K+\lambda I)^{-1}K_x\rangle_{\cH}\big|^2
\notag\\
&\le
\langle K_{x_0},(T_K+\lambda I)^{-1}K_{x_0}\rangle_{\cH}\,
\langle K_x,(T_K+\lambda I)^{-1}K_x\rangle_{\cH}
=
D(x_0;\lambda)\,D(x;\lambda).
\label{eq:f-CS-leverage}
\end{align}
Moreover, in the spectral expansion, $D(x;\lambda)\le
B_\phi^2\,D(\lambda)$,
so combining with \eqref{eq:f-CS-leverage} yields
\begin{equation}\label{eq:f-sup-sharp}
\|w_{x_0,\lambda}\|_\infty
\ \le\
B_\phi\,\sqrt{D(x_0;\lambda)\,D(\lambda)}.
\end{equation}
Under Assumption~\ref{ass:B_bound_predictor},
$|r(Y,X;\theta_\lambda(x_0))|\le B_{Y-f}(\theta_\lambda(x_0))$ and $|u(X;\theta_\lambda(x_0))|\le B_{f}(\theta_\lambda(x_0))$.
Hence
\[
\max_{1\le i\le n}|\zeta_{i,n}|
\le
\frac{2\,\|w_{x_0,\lambda}\|_\infty\,B_{Y-f}(\theta_\lambda(x_0))}{n},
\qquad
\max_{1\le u\le N}|\zeta_{n+u,N}|
\le
\frac{2\,\|w_{x_0,\lambda}\|_\infty\,B_{f}(\theta_\lambda(x_0))}{N}.
\]
Therefore, it is enough to show
\begin{equation}\label{eq:max-to-zero-new}
\frac{\max_{1\le k\le n+N}|\zeta_k|}{\sqrt{V(x_0)}}\to 0,
\end{equation}
since \eqref{eq:max-to-zero-new} implies the indicator in \eqref{eq:Lindeberg-new} is eventually zero
uniformly in $k$, yielding \eqref{eq:Lindeberg-new}.
By Lemma~\ref{lem:V-lb} and $\lambda\to 0$, we have $c(\lambda)\ge c_0$ for all large $n$,
and therefore
\[
V(x_0)
\ge
\Big(\frac{\underline\sigma^2\,c_0}{2}\Big)\,\frac{D(x_0;\lambda)}{n}.
\]
Using \eqref{eq:f-sup-sharp} and Lemma~\ref{lem:V-lb},
\begin{align*}
\frac{\max_{1\le i\le n}|\zeta_{i,n}|}{\sqrt{V(x_0)}}
&\lesssim
\frac{\|w_{x_0,\lambda}\|_\infty/n}{\sqrt{D(x_0;\lambda)/n}}
\ \le\
\frac{B_\phi\sqrt{D(x_0;\lambda)D(\lambda)}/n}{\sqrt{D(x_0;\lambda)/n}}
=
B_\phi\,\sqrt{\frac{D(\lambda)}{n}}
\ \to\ 0,
\\
\frac{\max_{1\le u\le N}|\zeta_{n+u,N}|}{\sqrt{V(x_0)}}
&\lesssim
\frac{\|w_{x_0,\lambda}\|_\infty/N}{\sqrt{D(x_0;\lambda)/n}}
\ \le\
\frac{B_\phi\sqrt{D(x_0;\lambda)D(\lambda)}/N}{\sqrt{D(x_0;\lambda)/n}}
=
B_\phi\,\frac{\sqrt{n\,D(\lambda)}}{N}
\ \to\ 0,
\end{align*}
by the condition \eqref{eq:cond-clean}. Thus \eqref{eq:max-to-zero-new} holds,
which proves the Lindeberg condition \eqref{eq:Lindeberg-new} and hence establishes
\eqref{eq:goal-CLT-I1-new}.

\noindent\textit{Step 4: $I_2$ and $I_3$ are negligible.}
From Lemma~\ref{lem:V-lb}, we have $\sqrt{V(x_0)}\gtrsim \sqrt{D(x_0;\lambda)/n}$.

By Lemma~\ref{lem:I2-clean-hp-refined}, the leading part of $I_2$ satisfies
\[
|I_2(x_0;\theta_\lambda(x_0))|
\lesssim_p
D(\lambda)\sqrt{D(x_0;\lambda)}
\sqrt{\frac{\log N'}{N'}}
\left(\frac{1}{\sqrt n}+\frac{1}{\sqrt N}\right)
+\text{lower-order terms}.
\]
Using Lemma~\ref{lem:V-lb},
\[
\frac{|I_2(x_0;\theta_\lambda(x_0))|}{\sqrt{V(x_0)}}
\lesssim_p
D(\lambda)\sqrt{\frac{\log N'}{N'}}
\left(1+\sqrt{\frac nN}\right)+o_p(1)
\to 0,
\label{eq:I2-negl-new}
\]
by the condition~\eqref{eq:cond-clean}. Hence $I_2(x_0;\theta_\lambda(x_0))/\sqrt{V(x_0)}=o_p(1)$.

Using Lemma~\ref{lem:I3-clean},
\[
|I_3(x_0;\theta_\lambda(x_0))|
\ \lesssim_p\
\kappa\|\eta(\cdot;\theta_\lambda(x_0))\|_{\cH}\sqrt{\frac{D(x_0;\lambda)\log N'}{N'}},
\]
and thus
\begin{align*}
\frac{|I_3(x_0;\theta_\lambda(x_0))|}{\sqrt{V(x_0)}}
&\lesssim_p
\kappa\|\eta(\cdot;\theta_\lambda(x_0))\|_{\cH}
\sqrt{\frac{D(x_0;\lambda)\log N'}{N'}}\cdot \sqrt{\frac{n}{D(x_0;\lambda)}}
\\&=
\kappa\|\eta(\cdot;\theta_\lambda(x_0))\|_{\cH}\sqrt{\frac{n\log N'}{N'}}
\to 0,
\end{align*}
by the condition~\eqref{eq:cond-clean}. Hence $I_3(x_0;\theta_\lambda(x_0))/\sqrt{V(x_0)}=o_p(1)$. Then \eqref{eq:goal-negligible-new} holds.

\noindent\textit{Step 5: conclude.}
Multiply \eqref{eq:lin-asymp-2-new} by $J_\lambda(x_0)/\sqrt{V(x_0)}$:
\begin{align*}
\frac{J_\lambda(x_0)\big(\wh\theta(x_0)-\theta_\lambda(x_0)\big)}{\sqrt{V(x_0)}}
=&
-\frac{J_\lambda(x_0)}{\wh J_\lambda(x_0;\wt\theta(x_0))}\cdot\frac{I_1(x_0;\theta_\lambda(x_0))}{\sqrt{V(x_0)}}
\\&-\frac{J_\lambda(x_0)}{\wh J_\lambda(x_0;\wt\theta(x_0))}\cdot\frac{I_2(x_0;\theta_\lambda(x_0))+I_3(x_0;\theta_\lambda(x_0))}{\sqrt{V(x_0)}}.
\end{align*}
By \eqref{eq:Hhat-inv-to-Hinv-new}, $\frac{J_\lambda(x_0)}{\wh J_\lambda(x_0;\wt\theta(x_0))}\to_p 1$.
By \eqref{eq:goal-CLT-I1-new}, $\frac{I_1(x_0;\theta_\lambda(x_0))}{\sqrt{V(x_0)}}\to N(0,1)$.
By \eqref{eq:goal-negligible-new}, $\frac{I_2(x_0;\theta_\lambda(x_0))+I_3(x_0;\theta_\lambda(x_0))}{\sqrt{V(x_0)}}=o_p(1)$.
Slutsky's theorem yields~\eqref{eq:asymp-normal-H-new}.

\noindent\textit{Step 6: bias is negligible.}
By the population bias bound \eqref{eq:bias-bound} in Theorem~\ref{thm:thetahat-theta0-bound},
\[
|\theta_\lambda(x_0)-\theta_0(x_0)|
\ \le\
\frac{1}{c_J}\,\|\eta(\cdot;\theta_0(x_0))\|_{\cH}\sqrt{\lambda\{D(x_0;\lambda)-Q(x_0;\lambda)\}}.
\]
Divide by the standard error scale $\sqrt{V(x_0)}/|J_\lambda(x_0)|$ and use Lemma~\ref{lem:V-lb}:
\[
\frac{|J_\lambda(x_0)|\,|\theta_\lambda(x_0)-\theta_0(x_0)|}{\sqrt{V(x_0)}}
\ \lesssim\
\frac{\sqrt{\lambda\{D(x_0;\lambda)-Q(x_0;\lambda)\}}}{\sqrt{Q(x_0;\lambda)/n}}
\asymp
\sqrt{n\lambda}
\ \to\ 0.
\]
Hence,
\begin{align*}
\frac{J_\lambda(x_0)\big(\wh\theta(x_0)-\theta_0(x_0)\big)}{\sqrt{V(x_0)}}
&=
\frac{J_\lambda(x_0)\big(\wh\theta(x_0)-\theta_\lambda(x_0)\big)}{\sqrt{V(x_0)}}
+\frac{J_\lambda(x_0)\big(\theta_\lambda(x_0)-\theta_0(x_0)\big)}{\sqrt{V(x_0)}}
\\&=
\frac{J_\lambda(x_0)\big(\wh\theta(x_0)-\theta_\lambda(x_0)\big)}{\sqrt{V(x_0)}}+o(1).
\end{align*}
Combining with \eqref{eq:asymp-normal-H-new} concludes the proof.
\end{proof}

\noindent\textbf{Proof of Corollary~\ref{cor:CI-coverage-new}.} We also provide the following proof.
\begin{proof}[Proof of Corollary~\ref{cor:CI-coverage-new}]
All probabilities and expectations below are taken over the samples, with $x_0$ treated as fixed.
The proof proceeds in two steps.

\noindent\textit{Step 1: relative consistency of the plug-in quantities.}
Lemma~\ref{lem:variance-ratio} applies to the pooled-design no-split weight and gives
$\wh V(x_0)/V(x_0)\to_p1$ and $\wh J_\lambda(x_0)/J_\lambda(x_0)\to_p1$. Its proof uses the resolvent energy bound for the empirical weight and therefore does not invoke an iid law of large numbers for a weight trained on the same covariates.

\noindent\textit{Step 2: asymptotic coverage.}
Define the studentized statistic
\[
  T_{n,N}(x_0)
  :=
  \frac{
    \wh J_\lambda(x_0)\big(\wh\theta(x_0)-\theta_0(x_0)\big)
  }{
    \sqrt{\wh V(x_0)}
  }.
\]
By Step~1, both plug-in ratios converge to one, so Slutsky's theorem yields
$T_{n,N}(x_0)\to N(0,1)$. The event $\{\theta_0(x_0)\in\cC(x_0)\}$ is equivalent to
$\{|T_{n,N}(x_0)|\le z_{1-\alpha/2}\}$, hence
\[
  \PP\big(\theta_0(x_0)\in\cC(x_0)\big)
  =
  \PP\big(|T_{n,N}(x_0)|\le z_{1-\alpha/2}\big)
  \ \to\ 1-\alpha.
\]
This proves the corollary.
\end{proof}

\subsection{Proofs of Technical Lemmas in Appendix~\ref{app:proofs-of-PPCI-nosplit}}\label{app:proofs-of-lemmas-nosplit}
\begin{proof}[Proof of Lemma~\ref{lem:fHnorm-bds}]
The Tikhonov energy identity gives the exact relation $\lambda\|w_{x_0,\lambda}\|_{\cH}^2=D(x_0;\lambda)-Q(x_0;\lambda)$.
Proposition~\ref{prop:Dx0-order-H} states that
$D(x_0;\lambda)-Q(x_0;\lambda)\asymp D(x_0;\lambda)$. Dividing by $\lambda$ therefore yields
$\|w_{x_0,\lambda}\|_{\cH}^2\asymp D(x_0;\lambda)/\lambda$, as claimed.
\end{proof}

\begin{proof}[Proof of Lemma~\ref{lem:A_bound_1/2_sobolev}]
\textit{Step 1: rewrite $A(\lambda)$.}
Define the preconditioned feature map
\[
\psi_\lambda(x)
\ :=\
(T_K+\lambda I)^{-1/2}K_x
\ \in\ \cH,
\qquad
K_x:=K(x,\cdot).
\]
Then, using $T_K=\EE[K_X\otimes K_X]$ and $\wh T_K=\frac1{N'}\sum_{i=1}^{N'}K_{X_i}\otimes K_{X_i}$,
\begin{align*}
(T_K+\lambda I)^{-1/2}\wh T_K(T_K+\lambda I)^{-1/2}
&=
\frac1{N'}\sum_{i=1}^{N'} \psi_\lambda(X_i)\otimes \psi_\lambda(X_i),\\
(T_K+\lambda I)^{-1/2}T_K(T_K+\lambda I)^{-1/2}
&=
\EE\big[\psi_\lambda(X)\otimes \psi_\lambda(X)\big].
\end{align*}
Therefore
\begin{equation}\label{eq:A_as_empirical_average}
\|A(\lambda)\|_{\op}
=
\Big\|
\frac1{N'}\sum_{i=1}^{N'}\psi_\lambda(X_i)\otimes\psi_\lambda(X_i)
-
\EE[\psi_\lambda\otimes\psi_\lambda]
\Big\|_{\op}
=
\Big\|\frac1{N'}\sum_{i=1}^{N'} Z_i\Big\|_{\op},
\end{equation}
where the summands are i.i.d.\ centered self-adjoint operators
\[
Z_i
:=
\psi_\lambda(X_i)\otimes\psi_\lambda(X_i)
-
\EE[\psi_\lambda(X)\otimes\psi_\lambda(X)],
\qquad
\EE[Z_i]=0.
\]

\noindent\textit{Step 2: uniform bound.}
For each $x\in\cX$,
\[
\|\psi_\lambda(x)\|_{\cH}^2
=
\big\langle K_x,(T_K+\lambda I)^{-1}K_x\big\rangle_{\cH}
=:D(x;\lambda).
\]
In particular,
\begin{align*}
\|\psi_\lambda(x)\otimes\psi_\lambda(x)\|_{\op}
=
\|\psi_\lambda(x)\|_{\cH}^2
=
D(x;\lambda),
\\
\|Z_i\|_{\op}\le
\|\psi_\lambda(X_i)\otimes\psi_\lambda(X_i)\|_{\op}
+
\|\EE[\psi_\lambda\otimes\psi_\lambda]\|_{\op}.
\end{align*}
Since $\EE[\psi_\lambda\otimes\psi_\lambda] = T_K(T_K+\lambda I)^{-1} \preceq I$,
we may bound $\|\EE[\psi_\lambda\otimes\psi_\lambda]\|_{\op}\le 1 \le \sup_{x\in\cX} D(x;\lambda)$
for all sufficiently small $\lambda$, and hence
\begin{equation}\label{eq:Zi_op_bound_by_supD}
\|Z_i\|_{\op}\ \le\ 2\,\sup_{x\in\cX}D(x;\lambda).
\end{equation}
Thus, to apply a Bernstein-type inequality to \eqref{eq:A_as_empirical_average},
it remains to control $\sup_x D(x;\lambda)$ sharply.

\noindent\textit{Step 3: upper bound for $\sup_x D(x;\lambda)$.}
Using the variational characterization of $D(x;\lambda)$,
one has for each fixed $x\in\cX$,
\begin{align}
D(x;\lambda)
=
\sup_{v\in\cH\setminus\{0\}}
\frac{\langle K_x,v\rangle_{\cH}^2}{\langle v,(T_K+\lambda I)v\rangle_{\cH}}=
\sup_{v\in\cH\setminus\{0\}}
\frac{v(x)^2}{\|v\|_{L^2(\rho_X)}^2+\lambda\|v\|_{\cH}^2}.
\label{eq:Dx_variational}
\end{align}
By Lemma~\ref{lem:sobolev_rate},
for every $v\in\cH$ and $x\in\cX$,
\begin{equation}\label{eq:interp_use}
|v(x)|^2
\ \le\
C_1^2\,\|v\|_{L^2(\rho_X)}^{2(1-\tau)}\,\|v\|_{\cH}^{2\tau}.
\end{equation}
Write $a:=\|v\|_{L^2(\rho_X)}^2$ and $b:=\|v\|_{\cH}^2$. Then \eqref{eq:interp_use} gives
$v(x)^2\le C_1^2\,a^{1-\tau}b^{\tau}$, and \eqref{eq:Dx_variational} implies
\begin{equation}\label{eq:Dx_reduce_ab}
D(x;\lambda)
\ \le\
C_1^2\,
\sup_{a>0,b>0}\frac{a^{1-\tau}b^\tau}{a+\lambda b}.
\end{equation}
To evaluate the supremum in \eqref{eq:Dx_reduce_ab}, we can normalize the denominator by setting $a+\lambda b=1$, yielding
\[
\sup_{a>0,b>0}\frac{a^{1-\tau}b^\tau}{a+\lambda b}
=
\sup_{\substack{a>0,b>0\\ a+\lambda b=1}}
a^{1-\tau}b^\tau.
\]
Substituting $b=(1-a)/\lambda$ for $a\in(0,1)$, we maximize the one-dimensional function:
\[
\phi(a):=
a^{1-\tau}\Big(\frac{1-a}{\lambda}\Big)^\tau
=
\lambda^{-\tau}\,a^{1-\tau}(1-a)^\tau.
\]
Taking the derivative of $\log\phi(a)$ yields:
\[
\frac{d}{da}\log\phi(a)
=
\frac{1-\tau}{a}-\frac{\tau}{1-a}=0
\quad\implies\quad
a^\star=1-\tau.
\]
Hence,
\begin{equation}\label{eq:phi_max}
\sup_{\substack{a>0,b>0\\ a+\lambda b=1}}
a^{1-\tau}b^\tau
=
\lambda^{-\tau}\,(1-\tau)^{1-\tau}\tau^\tau.
\end{equation}
Combining \eqref{eq:Dx_reduce_ab} and \eqref{eq:phi_max} yields the uniform leverage-score bound:
\begin{equation}\label{eq:supD_bound}
\sup_{x\in\cX}D(x;\lambda)
\ \le\
C_1^2\,\tau^\tau(1-\tau)^{1-\tau}\,\lambda^{-\tau}
\ =\
C_a\,\lambda^{-\tau}.
\end{equation}

\noindent\textit{Step 4: apply  Bernstein inequality.}
We now plug \eqref{eq:supD_bound} into a standard Bernstein inequality for sums of
independent self-adjoint random operators.

First, \eqref{eq:Zi_op_bound_by_supD} and \eqref{eq:supD_bound} give the uniform almost sure bound $\|Z_i\|_{\op}\le2C_a\lambda^{-\tau}$.
Second, define the second-moment operator
\[
\Sigma_\lambda
:=
\EE[\psi_\lambda(X)\otimes\psi_\lambda(X)]
=
(T_K+\lambda I)^{-1/2}T_K(T_K+\lambda I)^{-1/2}.
\]
Its trace equals the effective dimension:
\[
\Tr(\Sigma_\lambda)
=
\Tr\big(T_K(T_K+\lambda I)^{-1}\big)
=
D(\lambda).
\]
Then for any $\delta\in(0,1]$, with probability at least $1-\delta$,
\[
\Big\|\frac1{N'}\sum_{i=1}^{N'} Z_i\Big\|_{\op}
\ \le\
\frac{4\,\sup_x D(x;\lambda)}{3N'}\,
\log\Big(\frac{4\,D(\lambda)}{\delta}\Big)
+
\sqrt{\frac{2\,\sup_x D(x;\lambda)}{N'}\,
\log\Big(\frac{4\,D(\lambda)}{\delta}\Big)}.
\]
We now replace $\sup_x D(x;\lambda)$ by \eqref{eq:supD_bound}.
Since our bound scales as $\lambda^{-\tau}$, it is convenient
to rewrite the logarithmic factor in terms of $\lambda^\tau D(\lambda)$ by absorbing the $\lambda^{-\tau}$ inside the logarithm (adjusting universal constants appropriately), which yields exactly \eqref{eq:A_bound_sobolev_general}.

\noindent\textit{Step 5: specialize to $\delta=(N')^{-1}$.}
Take $\delta=(N')^{-1}$ so the confidence is $1-(N')^{-1}$. Let
\[
L_\lambda
:=
\log\Big(\frac{4\lambda^\tau D(\lambda)}{\delta}\Big)
=
\log\Big(4\lambda^\tau D(\lambda)\,N'\Big).
\]
Under $D(\lambda)\le C_D\lambda^{-\tau}$ (Proposition~\ref{prop:Dx0-order-H}), we have $\lambda^\tau D(\lambda)\le C_D$. Thus,
\[
L_\lambda
\le
\log(4C_D N')
=
\log(4C_D)+\log N'.
\]
For all sufficiently large $N'$ (such that $N' \ge 4C_D$), we have $L_\lambda\le 2\log N'$.
Plugging this upper bound into \eqref{eq:A_bound_sobolev_general} directly yields \eqref{eq:A_bound_sobolev_log_simplify}.

\noindent\textit{Step 6: ensure the right-hand side is $<1/2$.}
Let $U := \frac{C_a\,L_\lambda}{N'\lambda^\tau}$. Then \eqref{eq:A_bound_sobolev_general} reads $\|A(\lambda)\|_{\op}\ \le\ \frac{4}{3}U+\sqrt{2U}$.
To ensure $\|A(\lambda)\|_{\op} \le 1/2$, it suffices to require $U \le 1/128$ (since $\frac{4}{3}(\frac{1}{128}) + \sqrt{\frac{2}{128}} < 1/2$).
Using $L_\lambda\le 2\log N'$ from Step~5, the condition $U \le 1/128$ is implied by
\[
\frac{C_a\,(2\log N')}{N'\lambda^\tau}\ \le\ \frac{1}{128},
\qquad\text{i.e.}\qquad
N'\lambda^\tau\ \ge\ 256\,C_a\,\log N',
\]
which is exactly \eqref{eq:gamma_rate_for_Ahalf}. Under \eqref{eq:gamma_rate_for_Ahalf},
we obtain that the event $\sE$ holds with probability at least $1-(N')^{-1}$.
This completes the proof.
\end{proof}

\begin{proof}[Proof of Lemma~\ref{lem:I1_clean}]
Write
\begin{align*}
  I_1(x_0;\theta)
  =
  \underbrace{
    \frac{1}{n}\sum_{i=1}^n
      \Big(
        w_{x_0,\lambda}(X_i)r(Y_i,X_i;\theta)
        -
        \EE\big[w_{x_0,\lambda}(X)r(Y,X;\theta)\big]
      \Big)
  }_{=:U_n(x_0;\theta)}
  \\+
  \underbrace{
    \frac{1}{N}\sum_{u=1}^N
      \Big(
        w_{x_0,\lambda}(\wt X_u)u(\wt X_u;\theta)
        -
        \EE\big[w_{x_0,\lambda}(X)u(X;\theta)\big]
      \Big)
  }_{=:U_N(x_0;\theta)}.
\end{align*}

\noindent\textit{Step 1: bound for $\|w_{x_0,\lambda}\|_\infty$.}  Using the spectral decomposition of $T_K$, one verifies that
\begin{align}\label{eqn:expectation_f}
  \EE\big[w_{x_0,\lambda}(X)^2\big]
  =
  \big\langle w_{x_0,\lambda}, T_K w_{x_0,\lambda}\big\rangle_{\cH}
  =Q(x_0;\lambda)
  \le D(x_0;\lambda).
\end{align}
By~Lemma~\ref{lem:fHnorm-bds}, $\|w_{x_0,\lambda}\|_{\cH}^2
\le
\frac{D(x_0;\lambda)}{\lambda}$.
Moreover, by~\eqref{eqn:expectation_f},
\[
\|w_{x_0,\lambda}\|_{L_2(\rho_X)}^2
=
\EE\big[w_{x_0,\lambda}(X)^2\big]
=Q(x_0;\lambda)
\le
D(x_0;\lambda).
\]
Applying Lemma~\ref{lem:sobolev_rate} to $v=w_{x_0,\lambda}$ and combining the two bounds above yields
the sharper deterministic sup-norm control
\begin{equation}\label{eq:f-sup-I1}
\|w_{x_0,\lambda}\|_\infty
\le
C_1\,
\Big(\sqrt{D(x_0;\lambda)}\Big)^{1-\frac{d}{2m}}
\Big(\sqrt{D(x_0;\lambda)/\lambda}\Big)^{\frac{d}{2m}}
=
C_1\,\sqrt{D(x_0;\lambda)}\,\lambda^{-\frac{d}{4m}}.
\end{equation}

\noindent\textit{Step 2: Bernstein for $U_n$.}
Recall that
\[
Z_{R,i}
:=
w_{x_0,\lambda}(X_i)\,r(Y_i,X_i;\theta)
-\EE\big[w_{x_0,\lambda}(X)\,r(Y,X;\theta)\big],
\qquad i=1,\dots,n,
\]
so that $U_n=n^{-1}\sum_{i=1}^n Z_{R,i}$ and $\EE[Z_{R,i}]=0$.
By Assumption~\ref{ass:B_bound_predictor}, $|r(Y,X;\theta)|\le B_{Y-f}(\theta)$ almost surely, hence by~\eqref{eq:f-sup-I1},
\[
|Z_{R,i}|
\le
2\,\|w_{x_0,\lambda}\|_\infty\,B_{Y-f}(\theta)
\le
2\,C_1\,B_{Y-f}(\theta)\,\sqrt{D(x_0;\lambda)}\,\lambda^{-\frac{d}{4m}}.
\]
Moreover, exactly as before,
\[
\Var(Z_{R,1})
=
\Var\big(w_{x_0,\lambda}(X)\,r(Y,X;\theta)\big)
\le
B_{Y-f}^2(\theta)\,Q(x_0;\lambda).
\]
Applying Bernstein's inequality to $\{Z_{R,i}\}_{i=1}^n$ gives: with probability at least $1-\delta$,
\begin{align}
\begin{split}
|U_n|
&\le
\sqrt{\frac{2\,\Var(Z_{R,1})\,\log(2/\delta)}{n}}
+\frac{2\,\|Z_{R,1}\|_\infty\,\log(2/\delta)}{3n}\\
&\le
B_{Y-f}(\theta)\sqrt{\frac{2\,Q(x_0;\lambda)\,\log(4/\delta)}{n}}
+\frac{4}{3}\,B_{Y-f}(\theta)\,\|w_{x_0,\lambda}\|_\infty\,\frac{\log(4/\delta)}{n}\\
&\le
B_{Y-f}(\theta)\sqrt{\frac{2\,Q(x_0;\lambda)\,\log(4/\delta)}{n}}
+\frac{4}{3}\,C_1\,B_{Y-f}(\theta)\,\sqrt{D(x_0;\lambda)}\,\lambda^{-\frac{d}{4m}}\,
\frac{\log(4/\delta)}{n}.
\label{eq:Un-bernstein-pre}
\end{split}
\end{align}
Let $\delta\in(0,1)$ and suppose the regularization level satisfies~\eqref{eq:lambda-lower-I1}.
Using $n\wedge N\le n$,
\[
\lambda^{d/(2m)}\ \ge\ \frac{8}{9}\,C_1^2\,\frac{\log(4/\delta)}{n}
\quad\to\quad
\frac{4}{3}\,C_1\,\lambda^{-\frac{d}{4m}}\,\frac{\log(4/\delta)}{n}
\ \le\
\sqrt{\frac{2\log(4/\delta)}{n}}.
\]
Combining this with Proposition~\ref{prop:Dx0-order-H}, which gives $D(x_0;\lambda)\lesssim Q(x_0;\lambda)$ for small $\lambda$, absorbs the linear Bernstein term into the $Q$-variance term after adjusting constants. Thus, with probability at least $1-\delta$,
\begin{equation}\label{eq:Un-clean}
|U_n|
\le
2\,B_{Y-f}(\theta)\sqrt{\frac{2\,Q(x_0;\lambda)\,\log(4/\delta)}{n}}
=
2\sqrt{2\log(4/\delta)}\,\sqrt{Q(x_0;\lambda)}\,\frac{B_{Y-f}(\theta)}{\sqrt n}.
\end{equation}

\noindent\textit{Step 3: Bernstein for $U_N$.}
Define i.i.d.\ centered summands
\[
Z_{U,u}
:=
w_{x_0,\lambda}(\wt X_u)\,u(\wt X_u;\theta)
-\EE\big[w_{x_0,\lambda}(X)\,u(X;\theta)\big],
\qquad u=1,\dots,N,
\]
so that $U_N=N^{-1}\sum_{u=1}^N Z_{U,u}$ and $\EE[Z_{U,u}]=0$.
By Assumption~\ref{ass:B_bound_predictor}, $|u(X;\theta)|\le B_{f}(\theta)$, and by~\eqref{eq:f-sup-I1},
\[
|Z_{U,u}|
\le
2\,\|w_{x_0,\lambda}\|_\infty\,B_{f}(\theta)
\le
2\,C_1\,B_{f}(\theta)\,\sqrt{D(x_0;\lambda)}\,\lambda^{-\frac{d}{4m}},
\quad
\Var(Z_{U,1})
\le
B_{f}^2(\theta)\,Q(x_0;\lambda).
\]
Repeating the Bernstein argument from Step 2 and using again \eqref{eq:lambda-lower-I1} with $n\wedge N\le N$ gives: with probability at least $1-\delta$,
\begin{equation}\label{eq:UN-clean}
|U_N|
\le
2\sqrt{2\log(4/\delta)}\,\sqrt{Q(x_0;\lambda)}\,\frac{B_{f}(\theta)}{\sqrt N}.
\end{equation}

\noindent\textit{Step 4.}
By independence and a union bound, with probability at least $1-2\delta$,
\[
|I_1|
\le
|U_n|+|U_N|
\le
2\sqrt{2\log(4/\delta)}\,\sqrt{Q(x_0;\lambda)}
\left(
\frac{B_{Y-f}(\theta)}{\sqrt n}
+
\frac{B_{f}(\theta)}{\sqrt N}
\right).
\]
In particular, taking $\delta=4(N')^{-1}$ yields $\log(4/\delta)=\log N'$ and a bound holding with probability at least
$1-8(N')^{-1}$,
\[
|I_1|
\le
2\sqrt{2\log N'}\,\sqrt{Q(x_0;\lambda)}
\left(
\frac{B_{Y-f}(\theta)}{\sqrt n}
+
\frac{B_{f}(\theta)}{\sqrt N}
\right),
\]
under the condition \eqref{eq:lambda-lower-I1}.
\end{proof}

\begin{proof}[Proof of Lemma~\ref{lem:delta_f_L2_bound}]
\textit{Step 1: compare $\|\Delta w\|_{L^2(\rho_X)}$ to a $T_\lambda^{1/2}$-norm.}
Using $\wh w=\wh T_\lambda^{-1}K_{x_0}$ and $w=T_\lambda^{-1}K_{x_0}$,
the resolvent identity gives the exact equality
\[
\Delta w
=
\wh T_\lambda^{-1}K_{x_0}-T_\lambda^{-1}K_{x_0}
=
-\wh T_\lambda^{-1}(\wh T_K-T_K)\,T_\lambda^{-1}K_{x_0}
=
-\wh T_\lambda^{-1}(\wh T_K-T_K)\,w.
\]
Recall that for any $h\in\cH$, $
\|h\|_{L^2(\rho_X)}^2
=
\EE[h(X)^2]
=
\langle h,T_K h\rangle_{\cH}$.
Since $0\preceq T_K\preceq T_\lambda:=T_K+\lambda I$, we have
\[
\|h\|_{L^2(\rho_X)}^2
=
\langle h,T_K h\rangle_{\cH}
\le
\langle h,T_\lambda h\rangle_{\cH}
=
\|T_\lambda^{1/2}h\|_{\cH}^2.
\]
Applying this to $h=\Delta w$ yields
\begin{equation}\label{eq:L2-to-Tlambda}
\|\Delta w\|_{L^2(\rho_X)}
\le
\|T_\lambda^{1/2}\Delta w\|_{\cH}.
\end{equation}

\noindent\textit{Step 2: on $\sE$, control the resolvent factor.}

On $\sE$, write
\[
\wh T_\lambda
=
T_\lambda+\wh T_K-T_K
=
T_\lambda^{1/2}\Big(I+T_\lambda^{-1/2}(\wh T_K-T_K)T_\lambda^{-1/2}\Big)T_\lambda^{1/2}
=
T_\lambda^{1/2}(I+A(\lambda))T_\lambda^{1/2}.
\]
The condition $\|A(\lambda)\|_{\op}\le 1/2$ implies
$\|(I+A(\lambda))^{-1}\|_{\op}\le 2$. Hence
\[
T_\lambda^{1/2}\wh T_\lambda^{-1}T_\lambda^{1/2}
=
(I+A(\lambda))^{-1},
\qquad
\|T_\lambda^{1/2}\wh T_\lambda^{-1}T_\lambda^{1/2}\|_{\op}\le 2.
\]

Combining with Step 1, on $\sE$ we obtain
\begin{align*}
\|T_\lambda^{1/2}\Delta w\|_{\cH}
&=
\big\|T_\lambda^{1/2}\wh T_\lambda^{-1}(\wh T_K-T_K)w\big\|_{\cH}\\
&=
\big\|(T_\lambda^{1/2}\wh T_\lambda^{-1}T_\lambda^{1/2})
\cdot T_\lambda^{-1/2}(\wh T_K-T_K)w\big\|_{\cH}\\
&\le
2\,\big\|T_\lambda^{-1/2}(\wh T_K-T_K)w\big\|_{\cH}.
\end{align*}
Together with \eqref{eq:L2-to-Tlambda}, this yields on $\sE$:
\begin{equation}\label{eq:Deltaf-L2-vs-zeta}
\|\Delta w\|_{L^2(\rho_X)}
\le
2\,\big\|T_\lambda^{-1/2}(\wh T_K-T_K)w\big\|_{\cH}.
\end{equation}

\noindent\textit{Step 3: reduce to a Hilbert-space average.}
Define the mean-zero $\cH$-valued random element
\[
\zeta(X)
:=
T_\lambda^{-1/2}\Big((K_X\otimes K_X)-T_K\Big)w
=
T_\lambda^{-1/2}\Big(K_X w(X)-T_K w\Big).
\]
Then
\[
T_\lambda^{-1/2}(\wh T_K-T_K)w
=
\frac1{N'}\sum_{j=1}^{N'}\zeta(\bar X_j),
\]
and therefore, on $\sE$,
\begin{equation}\label{eq:Deltaf-L2-vs-zeta-avg}
\|\Delta w\|_{L^2(\rho_X)}
\le
2\Big\|\frac1{N'}\sum_{j=1}^{N'}\zeta(\bar X_j)\Big\|_{\cH}.
\end{equation}
Thus it suffices to control the Hilbert-space average
$N'^{-1}\sum_{j=1}^{N'}\zeta(\bar X_j)$ by a Hilbert-space Bernstein inequality.
The two required inputs are (i) a variance proxy $\EE\|\zeta(X)\|_{\cH}^2$
and (ii) a uniform bound on $\|\zeta(X)\|_{\cH}$.

\noindent\textit{Step 4: variance proxy.}
Let $\Delta K:=K_X\otimes K_X-T_K$ for $X\sim\rho_X$. Then
\[
\EE\|\zeta(X)\|_{\cH}^2
=
\EE\Big\langle \Delta K w,\ T_\lambda^{-1}\Delta K w\Big\rangle_{\cH}.
\]
Let $a:=K_X w(X)\in\cH$ and $b:=T_K w=\EE[a]\in\cH$. Then $\Delta K w=a-b$ and
\begin{align*}
\EE\langle \Delta K w,\ T_\lambda^{-1} \Delta K w\rangle_{\cH}
&=
\EE\langle a-b,\ T_\lambda^{-1}(a-b)\rangle_{\cH}
\\&=
\EE\langle a,T_\lambda^{-1}a\rangle_{\cH}
-\langle b,T_\lambda^{-1}b\rangle_{\cH}
\ \le\
\EE\langle a,T_\lambda^{-1}a\rangle_{\cH},
\end{align*}
where we dropped the nonnegative term $\langle b,T_\lambda^{-1}b\rangle_{\cH}$.
By reproducing,
\begin{align*}
\langle a,T_\lambda^{-1}a\rangle_{\cH}
&=
\langle K_X w(X),\ T_\lambda^{-1} K_X w(X)\rangle_{\cH}
\\&=
w(X)^2\,\langle K_X,\ T_\lambda^{-1} K_X\rangle_{\cH}
=
w(X)^2\,D(X;\lambda).
\end{align*}
Therefore, $\EE\|\zeta(X)\|_{\cH}^2
\le\EE[w(X)^2\,D(X;\lambda)]$. Under Assumption~\ref{ass:eigenfunctions_bound},
\[
D(X;\lambda)
=
\sum_{j\ge1}\frac{\mu_j}{\mu_j+\lambda}\phi_j(X)^2
\le
\Big(\sup_{j\ge1}\|\phi_j\|_\infty^2\Big)
\sum_{j\ge1}\frac{\mu_j}{\mu_j+\lambda}
\le
B_\phi^2\,D(\lambda),
\]
hence $\EE\|\zeta(X)\|_{\cH}^2\le
B_\phi^2 D(\lambda)\EE[w(X)^2]$.
Finally, since $\EE[w(X)^2]=\langle w,T_K w\rangle_{\cH}\le\langle w,T_\lambda w\rangle_{\cH}$ and
$T_\lambda w=K_{x_0}$, we get $\EE[w(X)^2]\le \langle w,K_{x_0}\rangle_{\cH}=w(x_0)=D(x_0;\lambda)$.
Combining yields
\begin{equation}\label{eq:zeta-variance-proxy-L2}
\EE\|\zeta(X)\|_{\cH}^2
\le
B_\phi^2\,D(\lambda)D(x_0;\lambda).
\end{equation}

\noindent\textit{Step 5: uniform bound.}
By triangle inequality,
\[
\|\zeta(X)\|_{\cH}
=
\|T_\lambda^{-1/2}(K_X w(X)-T_K w)\|_{\cH}
\le
\|T_\lambda^{-1/2}K_X\|_{\cH}\,|w(X)|
+\|T_\lambda^{-1/2}T_K w\|_{\cH}.
\]
First,
\[
\|T_\lambda^{-1/2}K_X\|_{\cH}^2
=
\langle K_X,\ T_\lambda^{-1}K_X\rangle_{\cH}
=
D(X;\lambda)
\le
B_\phi^2\,D(\lambda).
\]
Also,
\begin{align*}
|w(X)|
&=
|\langle w,K_X\rangle_{\cH}|
=
|\langle T_\lambda^{-1/2}K_{x_0},\ T_\lambda^{-1/2}K_X\rangle_{\cH}|
\\&\le
\|T_\lambda^{-1/2}K_{x_0}\|_{\cH}\,\|T_\lambda^{-1/2}K_X\|_{\cH}
=
\sqrt{D(x_0;\lambda)}\,\sqrt{D(X;\lambda)}.
\end{align*}
Thus
\begin{align*}
\|T_\lambda^{-1/2}K_X\|_{\cH}\,|w(X)|
&\le
\sqrt{D(X;\lambda)}\cdot \sqrt{D(x_0;\lambda)}\cdot \sqrt{D(X;\lambda)}
\\&=
\sqrt{D(x_0;\lambda)}\,D(X;\lambda)
\le
\sqrt{D(x_0;\lambda)}\,B_\phi^2 D(\lambda).
\end{align*}
For the second term, use $0\preceq T_K\preceq T_\lambda$, implying
$\|T_\lambda^{-1/2}T_K T_\lambda^{-1/2}\|_{\op}\le 1$. Hence
\[
\|T_\lambda^{-1/2}T_K w\|_{\cH}
=
\|(T_\lambda^{-1/2}T_K T_\lambda^{-1/2})(T_\lambda^{-1/2}K_{x_0})\|_{\cH}
\le
\|T_\lambda^{-1/2}K_{x_0}\|_{\cH}
=
\sqrt{D(x_0;\lambda)}.
\]
Putting together, for all $X$,
\begin{equation}\label{eq:zeta-uniform-bound-L2}
\|\zeta(X)\|_{\cH}
\le
\sqrt{D(x_0;\lambda)}\big( B_\phi^2 D(\lambda)+1\big).
\end{equation}

\noindent\textit{Step 6: Hilbert-space Bernstein.}
By \eqref{eq:zeta-variance-proxy-L2} and \eqref{eq:zeta-uniform-bound-L2}, a Bernstein inequality for sums of independent, bounded Hilbert-space random elements
then yields: for any $\delta\in(0,1)$, with probability at least $1-\delta$,
\begin{equation}\label{eq:zeta-bernstein-L2}
\Big\|\frac1{N'}\sum_{j=1}^{N'}\zeta(\bar X_j)\Big\|_{\cH}
\le
c\left[
\sqrt{\frac{B_\phi^2 D(\lambda)D(x_0;\lambda)\,\log(1/\delta)}{N'}}
+
\frac{\sqrt{D(x_0;\lambda)}(B_\phi^2 D(\lambda)+1)\,\log(1/\delta)}{N'}
\right],
\end{equation}
for a universal constant $c>0$.
Choose $\delta=1/(2N')$ in \eqref{eq:zeta-bernstein-L2}; then
$\log(1/\delta)\asymp \log N'$. On the intersection $\sE\cap\{\eqref{eq:zeta-bernstein-L2}\text{ holds}\}$,
by \eqref{eq:Deltaf-L2-vs-zeta-avg},
\[
\|\Delta w\|_{L^2(\rho_X)}
\le
c'\left[
B_\phi\sqrt{\frac{D(\lambda)D(x_0;\lambda)\,\log N'}{N'}}
+
\sqrt{D(x_0;\lambda)}\frac{(B_\phi^2 D(\lambda)+1)\,\log N'}{N'}
\right]
\]
for another universal constant $c'>0$.
Under $\lambda\ge C(\log N'/N')^{2m/d}$,
the second term is dominated by the first, so after adjusting constants we obtain
\eqref{eq:Deltaf-L2-hp-clean}.
Finally, on event $\sE$, the Bernstein failure (with $\delta=1/(2N')$) is $\le 1/(2N')$.

\end{proof}

\begin{proof}[Proof of Lemma~\ref{lem:I2-clean-hp-refined}]
Throughout the proof we work on the event $\sE$. Write $T_\lambda=T_K+\lambda I$ and define
\[
A(\lambda)=T_\lambda^{-1/2}(\wh T_K-T_K)T_\lambda^{-1/2},
\qquad
q_0:=T_\lambda^{-1/2}K_{x_0}.
\]
Let $v(\theta):=T_\lambda^{-1/2}\{\wh\mu(\theta)-\mu(\theta)\}$.
The whitening identity gives
\[
\Delta w
=
-T_\lambda^{-1/2}(I+A(\lambda))^{-1}A(\lambda)q_0.
\]
Therefore
\begin{equation}\label{eq:I2-resolvent-expand}
I_2(x_0;\theta)
=-\langle A(\lambda)q_0,v(\theta)\rangle_{\cH}
+\langle (I+A(\lambda))^{-1}A(\lambda)^2q_0,v(\theta)\rangle_{\cH}
=:T_1+T_2.
\end{equation}
We control these two terms separately.

\noindent\textit{Step 1: the leading bilinear term.}
For $x\in\cX$, set $\psi_\lambda(x)=T_\lambda^{-1/2}K_x$ and $w(x)=w_{x_0,\lambda}(x)$. Then
\[
A(\lambda)q_0=\frac1{N'}\sum_{j=1}^{N'}a_j,
\qquad
a_j:=w(\bar X_j)\psi_\lambda(\bar X_j)-\EE[w(X)\psi_\lambda(X)],
\]
and $v(\theta)=n^{-1}\sum_{i=1}^n b_i^R+N^{-1}\sum_{u=1}^N b_u^U$,
where
\[
b_i^R:=r(Y_i,X_i;\theta)\psi_\lambda(X_i)-\EE[r(Y,X;\theta)\psi_\lambda(X)],
\quad
b_u^U:=u(\wt X_u;\theta)\psi_\lambda(\wt X_u)-\EE[u(X;\theta)\psi_\lambda(X)].
\]
Hence $T_1$ can be written as $T_1=T_{1,R}+T_{1,U}$, where
\[
T_{1,R}:=-\frac1{N'n}\sum_{j=1}^{N'}\sum_{i=1}^n
\langle a_j,b_i^R\rangle_{\cH},
\qquad
T_{1,U}:=-\frac1{N'N}\sum_{j=1}^{N'}\sum_{u=1}^N
\langle a_j,b_u^U\rangle_{\cH}.
\]
We detail the bound for $T_{1,R}$; the proof for $T_{1,U}$ is identical with $n$ and
$B_{Y-f}(\theta)$ replaced by $N$ and $B_f(\theta)$. Since the pooled sample satisfies
$\bar X_i=X_i$ for $i=1,\ldots,n$, the only terms in $T_{1,R}$ where $a_j$ and $b_i^R$
are constructed from the same labeled observation are those with $j=i$. We therefore
decompose
\[
T_{1,R}=T_{1,R}^{\mathrm{diag}}+T_{1,R}^{\mathrm{off}},
\]
where
\[
T_{1,R}^{\mathrm{diag}}
:=
-\frac1{N'n}\sum_{i=1}^n\langle a_i,b_i^R\rangle_{\cH},
\qquad
T_{1,R}^{\mathrm{off}}
:=
-\frac1{N'n}\sum_{i=1}^n\sum_{\substack{1\le j\le N'\\ j\neq i}}
\langle a_j,b_i^R\rangle_{\cH}.
\]

We first bound the diagonal term. Using
\[
\|\psi_\lambda(x)\|_{\cH}^2=D(x;\lambda)\le B_\phi^2D(\lambda),
\quad
|w(x)|\le B_\phi\sqrt{D(\lambda)D(x_0;\lambda)},
\]
and $|r(Y,X;\theta)|\le B_{Y-f}(\theta)$, we have
\[
\EE\{\langle a_i,b_i^R\rangle_{\cH}^2\}
\lesssim
B_{Y-f}^2(\theta)D^2(\lambda)D(x_0;\lambda),
\]
with the corresponding envelope bounded by a constant multiple of
$B_{Y-f}(\theta)D^{3/2}(\lambda)\sqrt{D(x_0;\lambda)}$. Applying Bernstein's inequality
to the centered summands, and bounding the mean by the same second-moment scale, gives
with probability at least $1-(2N')^{-1}$,
\begin{equation}\label{eq:T1-diag-bound}
|T_{1,R}^{\mathrm{diag}}|
\lesssim
B_{Y-f}(\theta)D(\lambda)\sqrt{D(x_0;\lambda)}\frac{\log N'}{N'}.
\end{equation}

Next consider the off-diagonal term. We further split it according to whether the index
$j$ comes from the labeled or unlabeled part of the pooled covariates:
\[
T_{1,R}^{\mathrm{off}}
=
T_{1,R}^{\mathrm{off},L}
+
T_{1,R}^{\mathrm{off},U},
\]
where
\[
T_{1,R}^{\mathrm{off},L}
:=
-\frac1{N'n}\sum_{i=1}^n\sum_{\substack{1\le j\le n\\j\neq i}}
\langle a_j,b_i^R\rangle_{\cH},
\qquad
T_{1,R}^{\mathrm{off},U}
:=
-\frac1{N'n}\sum_{i=1}^n\sum_{j=n+1}^{N'}
\langle a_j,b_i^R\rangle_{\cH}.
\]
For $T_{1,R}^{\mathrm{off},U}$, conditional on the labeled data, the vector
$\sum_{i=1}^n b_i^R$ is fixed, while $\{a_j:j=n+1,\ldots,N'\}$ are independent
mean-zero Hilbert-valued random elements. Moreover, a Hilbert-space Bernstein inequality
gives
\[
\left\|\sum_{i=1}^n b_i^R\right\|_{\cH}
\lesssim
B_{Y-f}(\theta)\sqrt{nD(\lambda)\log N'}
\]
with probability at least $1-(2N')^{-1}$. Conditional on this event, applying Bernstein's
inequality to the scalar variables
$\langle a_j,\sum_{i=1}^n b_i^R\rangle_{\cH}$ yields, with conditional probability at least
$1-(2N')^{-1}$,
\[
|T_{1,R}^{\mathrm{off},U}|
\lesssim
B_{Y-f}(\theta)D(\lambda)\sqrt{D(x_0;\lambda)}
\sqrt{\frac{\log N'}{N'n}}.
\]
For $T_{1,R}^{\mathrm{off},L}$, the kernel
$\langle a_j,b_i^R\rangle_{\cH}$ is canonical for $j\neq i$, since
\[
\EE\{\langle a_j,b_i^R\rangle_{\cH}| \bar X_j\}=0,
\qquad
\EE\{\langle a_j,b_i^R\rangle_{\cH}| X_i,Y_i\}=0.
\]
Thus $T_{1,R}^{\mathrm{off},L}$ is an ordered degenerate U-statistic. The standard bounded
degenerate U-statistic concentration inequality gives, with probability at least
$1-(2N')^{-1}$,
\[
|T_{1,R}^{\mathrm{off},L}|
\lesssim
B_{Y-f}(\theta)D(\lambda)\sqrt{D(x_0;\lambda)}
\frac{\log N'}{N'}.
\]
Combining the two off-diagonal pieces and taking a union bound,
\begin{equation}\label{eq:T1-off-bound}
|T_{1,R}^{\mathrm{off}}|
\lesssim
B_{Y-f}(\theta)D(\lambda)\sqrt{D(x_0;\lambda)}
\left\{
\sqrt{\frac{\log N'}{N'n}}+\frac{\log N'}{N'}
\right\}
\end{equation}
with probability at least $1-3(2N')^{-1}$.

Repeating the same diagonal/off-diagonal argument for $T_{1,U}$, now with the diagonal
indices given by $j=n+u$, yields with probability at least $1-2(N')^{-1}$,
\begin{equation}\label{eq:T1U-bound}
|T_{1,U}|
\lesssim
B_f(\theta)D(\lambda)\sqrt{D(x_0;\lambda)}
\left\{
\sqrt{\frac{\log N'}{N'N}}+\frac{\log N'}{N'}
\right\}.
\end{equation}
Combining \eqref{eq:T1-diag-bound}--\eqref{eq:T1U-bound}, we obtain with probability at
least $1-4(N')^{-1}$,
\begin{align}\label{eq:T1-final-bound}
|T_1|
\lesssim\;&
D(\lambda)\sqrt{D(x_0;\lambda)}
\sqrt{\frac{\log N'}{N'}}
\left(\frac{B_{Y-f}(\theta)}{\sqrt n}+\frac{B_f(\theta)}{\sqrt N}\right)
\notag\\
&+(B_{Y-f}(\theta)+B_f(\theta))D(\lambda)\sqrt{D(x_0;\lambda)}\frac{\log N'}{N'}.
\end{align}

\noindent\textit{Step 2: the higher-order resolvent remainder.}
On $\sE$, $\|(I+A(\lambda))^{-1}\|_{\op}\le2$. Hence
\[
|T_2|
\le
2\|A(\lambda)\|_{\op}\,\|A(\lambda)q_0\|_{\cH}\,\|v(\theta)\|_{\cH}.
\]
Lemma~\ref{lem:A_bound_1/2_sobolev} gives
\[
\|A(\lambda)\|_{\op}
\lesssim \sqrt{\frac{D(\lambda)\log N'}{N'}},
\]
and the same Hilbert-space Bernstein argument used in Lemma~\ref{lem:delta_f_L2_bound} yields
\[
\|A(\lambda)q_0\|_{\cH}
\lesssim
\sqrt{\frac{D(\lambda)D(x_0;\lambda)\log N'}{N'}}.
\]
Finally,
\[
\|v(\theta)\|_{\cH}
\lesssim
\sqrt{D(\lambda)\log N'}
\left(\frac{B_{Y-f}(\theta)}{\sqrt n}+\frac{B_f(\theta)}{\sqrt N}\right).
\]
Therefore, with probability at least $1-(N')^{-1}$,
\begin{equation}\label{eq:T2-final-bound}
|T_2|
\lesssim
D(\lambda)^{3/2}\sqrt{D(x_0;\lambda)}
\frac{(\log N')^{3/2}}{N'}
\left(\frac{B_{Y-f}(\theta)}{\sqrt n}+\frac{B_f(\theta)}{\sqrt N}\right).
\end{equation}
Combining \eqref{eq:I2-resolvent-expand}, \eqref{eq:T1-final-bound}, and \eqref{eq:T2-final-bound} gives \eqref{eq:I2-total-hp-refined}. The stated probability follows from the union bound $4(N')^{-1}+(N')^{-1}=5(N')^{-1}$ over the events used above.
\end{proof}

\begin{proof}[Proof of Lemma~\ref{lem:I3-clean}]
\textit{Step 1: whitening identity.}
Recall
\[
  A(\lambda)
  :=
  (T_K+\lambda I)^{-1/2}\,(\wh T_K-T_K)\,(T_K+\lambda I)^{-1/2}.
\]
Then
\[
\wh T_K+\lambda I=(T_K+\lambda I)^{1/2}\big(I+A(\lambda)\big)(T_K+\lambda I)^{1/2},
\]
and hence
\begin{equation}\label{eq:inv-whiten-B2}
(\wh T_K+\lambda I)^{-1}
=(T_K+\lambda I)^{-1/2}\big(I+A(\lambda)\big)^{-1}(T_K+\lambda I)^{-1/2}.
\end{equation}
Let
\[
u:=(T_K+\lambda I)^{-1/2}K_{x_0},
\qquad
v:=(T_K+\lambda I)^{-1/2}\mu(\theta).
\]
Note that $\|u\|_{\cH}^2=\langle K_{x_0},(T_K+\lambda I)^{-1}K_{x_0}\rangle_{\cH}=D(x_0;\lambda)$.
Combining $\wh w_{x_0,\lambda}=(\wh T_K+\lambda I)^{-1}K_{x_0}$ with \eqref{eq:inv-whiten-B2} gives the exact identity
\[
\wh w_{x_0,\lambda}
=
(T_K+\lambda I)^{-1/2}\big(I+A(\lambda)\big)^{-1}u,
\qquad
w_{x_0,\lambda}=(T_K+\lambda I)^{-1/2}u,
\]
hence
\[
\wh w_{x_0,\lambda}-w_{x_0,\lambda}
=
(T_K+\lambda I)^{-1/2}\Big(\big(I+A(\lambda)\big)^{-1}-I\Big)u
=
-(T_K+\lambda I)^{-1/2}\big(I+A(\lambda)\big)^{-1}A(\lambda)u,
\]
and therefore
\begin{equation}\label{eq:I3-exact}
I_3(x_0;\theta)
=
-\big\langle \big(I+A(\lambda)\big)^{-1}A(\lambda)u,\ v\big\rangle_{\cH}.
\end{equation}

\noindent\textit{Step 2: stability event and  decomposition.}
Recall that $\sE:=\{\|A(\lambda)\|_{\op}\le 1/2\}$, on $\sE$, $\|(I+A(\lambda))^{-1}\|_{\op}\le 2$ and we can expand
\[
(I+A(\lambda))^{-1}A(\lambda)
=
A(\lambda)-(I+A(\lambda))^{-1}A(\lambda)^2.
\]
Plugging into \eqref{eq:I3-exact} yields the exact decomposition on $\sE$:
\[
I_3(x_0;\theta)
=
-\underbrace{\langle A(\lambda)u,\ v\rangle_{\cH}}_{=:T_1}
+
\underbrace{\Big\langle (I+A(\lambda))^{-1}A(\lambda)^2u,\ v\Big\rangle_{\cH}}_{=:T_2}.
\]
Consequently, on $\sE$,
\begin{equation}\label{eq:I3-T1T2}
|I_3(x_0;\theta)|
\le |T_1|+|T_2|
\le |T_1| + 2\,\big|\langle A(\lambda)^2u,\ v\rangle_{\cH}\big|
= |T_1| + 2\,\big|\langle A(\lambda)u,\ A(\lambda)v\rangle_{\cH}\big|.
\end{equation}

\noindent\textit{Step 3: reduce $T_1=\langle A(\lambda)u,v\rangle$ to empirical mean.}
Let $g_x:=(T_K+\lambda I)^{-1/2}K_x\in\cH$. Then
\[
A(\lambda)
=
\frac1{N'}\sum_{i=1}^{N'}\big(g_{\bar X_i}\otimes g_{\bar X_i}-\EE[g_X\otimes g_X]\big),
\]
and therefore
\[
\langle A(\lambda)u,\ v\rangle_{\cH}
=
\frac1{N'}\sum_{i=1}^{N'}
\Big(
\langle g_{\bar X_i},u\rangle_{\cH}\,\langle g_{\bar X_i},v\rangle_{\cH}
-
\EE[\langle g_X,u\rangle_{\cH}\,\langle g_X,v\rangle_{\cH}]
\Big).
\]
Define the deterministic functions
\begin{align}
a(x)&:=\langle g_x,u\rangle_{\cH}
=\langle K_x,(T_K+\lambda I)^{-1}K_{x_0}\rangle_{\cH}
=w_{x_0,\lambda}(x),
\label{eq:a-def-B2}
\\
b(x)&:=\langle g_x,v\rangle_{\cH}
=\langle K_x,(T_K+\lambda I)^{-1}\mu(\theta)\rangle_{\cH}.
\label{eq:b-def-B2}
\end{align}
Then
\begin{equation}\label{eq:scalar-mean}
T_1
=
\langle A(\lambda)u,\ v\rangle_{\cH}
=
\frac1{N'}\sum_{i=1}^{N'}\Big(Z_i-\EE[Z]\Big),
\qquad
Z_i:=a(\bar X_i)b(\bar X_i),
\end{equation}
and $\{Z_i\}_{i=1}^{N'}$ are i.i.d.

\noindent\textit{Step 4: bound $b(\cdot)$.}
By $\mu(\theta)=T_K\eta(\cdot;\theta)$,
\[
b(\cdot)
=(T_K+\lambda I)^{-1}\mu(\theta)
=(T_K+\lambda I)^{-1}T_K\eta(\cdot;\theta).
\]
In the eigenbasis of $T_K$, $(T_K+\lambda I)^{-1}T_K$ has eigenvalues $\mu_j/(\mu_j+\lambda)\in[0,1]$,
so $\|(T_K+\lambda I)^{-1}T_K\|_{\op}\le 1$. Hence
\begin{equation}\label{eqn:b_bound}
\|b\|_{\cH}\le \|\eta(\cdot;\theta)\|_{\cH},
\qquad
\|b\|_\infty \le \kappa\|\eta(\cdot;\theta)\|_{\cH}.
\end{equation}

\noindent\textit{Step 5: variance proxy.}
First, $\Var(Z)\le \EE[Z^2]\le \|b\|_\infty^2\EE[a(X)^2]$.
Since $a(X)=w_{x_0,\lambda}(X)$,
\begin{align*}
\EE[a(X)^2]
&=\langle w_{x_0,\lambda},T_Kw_{x_0,\lambda}\rangle_{\cH}
\\&\le \langle w_{x_0,\lambda},(T_K+\lambda I)w_{x_0,\lambda}\rangle_{\cH}
\\&=\langle w_{x_0,\lambda},K_{x_0}\rangle_{\cH}
=w_{x_0,\lambda}(x_0)
=D(x_0;\lambda).
\end{align*}
Combining with \eqref{eqn:b_bound} gives
\begin{equation}\label{eq:VarZ}
\Var(Z)\le \kappa^2 \|\eta(\cdot;\theta)\|_\cH^2\,D(x_0;\lambda).
\end{equation}

\noindent\textit{Step 6: envelope bound.}
By Cauchy--Schwarz in $\cH$,
\begin{equation}\label{eq:a-pointwise}
|a(x)|
=|\langle g_x,u\rangle_{\cH}|
\le \|g_x\|_{\cH}\,\|u\|_{\cH}
=\sqrt{D(x;\lambda)}\,\sqrt{D(x_0;\lambda)}.
\end{equation}
Under Assumption~\ref{ass:eigenfunctions_bound}, we have for every $x\in\cX$,
\[
D(x;\lambda)
=\sum_{j\ge1}\frac{\mu_j}{\mu_j+\lambda}\phi_j(x)^2
\le B_\phi^2\sum_{j\ge1}\frac{\mu_j}{\mu_j+\lambda}
= B_\phi^2D(\lambda),
\]
and hence \eqref{eq:a-pointwise} implies
\begin{equation}\label{eq:a-infty}
\|a\|_\infty
\le B_\phi\,\sqrt{D(\lambda)}\,\sqrt{D(x_0;\lambda)}.
\end{equation}
Therefore, using $\|Z-\EE Z\|_\infty\le 2\|Z\|_\infty\le 2\|a\|_\infty\|b\|_\infty$ and \eqref{eqn:b_bound},
\begin{equation}\label{eq:envZ}
\|Z-\EE Z\|_\infty
\le
2\,\kappa \|\eta(\cdot;\theta)\|_\cH\,B_\phi\,\sqrt{D(\lambda)D(x_0;\lambda)}.
\end{equation}

\noindent\textit{Step 7: scalar Bernstein for $T_1=\langle A(\lambda)u,v\rangle$.}
Apply Bernstein's inequality to the i.i.d.\ mean in \eqref{eq:scalar-mean}.
For any $t\ge1$, with probability at least $1-2e^{-t}$,
\[
|T_1|
\le
\sqrt{\frac{2\,\Var(Z)\,t}{N'}}
+
\frac{2\,\|Z-\EE Z\|_\infty\,t}{3N'}.
\]
Plugging \eqref{eq:VarZ} and \eqref{eq:envZ} yields
\begin{align}
|T_1|
&\le
\sqrt{2}\,\kappa \|\eta(\cdot;\theta)\|_{\cH}\sqrt{\frac{D(x_0;\lambda)t}{N'}}
+ \frac{4}{3}\,\kappa \|\eta(\cdot;\theta)\|_\cH\,B_\phi \sqrt{D(\lambda)D(x_0;\lambda)}\,\frac{t}{N'}
\notag\\
&=
\sqrt{2}\,\kappa \|\eta(\cdot;\theta)\|_{\cH}\sqrt{\frac{D(x_0;\lambda)t}{N'}}
\left(
1+\frac{4\sqrt{2}B_\phi}{3}\sqrt{\frac{D(\lambda)t}{N'}}
\right).
\label{eq:T1-bern}
\end{align}

\noindent\textit{Step 8: the remainder term $T_2$.}
We bound $T_2$ using Cauchy--Schwarz in $\cH$:
\begin{equation}\label{eq:quad-cs}
\big|\langle A(\lambda)u,\ A(\lambda)v\rangle_{\cH}\big|
\le \|A(\lambda)u\|_{\cH}\,\|A(\lambda)v\|_{\cH}.
\end{equation}
Next, note that $A(\lambda)u$ and $A(\lambda)v$ are themselves empirical means of centered $\cH$-valued random variables.
Indeed, using $g_x=(T_K+\lambda I)^{-1/2}K_x$ and \eqref{eq:a-def-B2}--\eqref{eq:b-def-B2},
\begin{align}
A(\lambda)u
&=
\frac1{N'}\sum_{i=1}^{N'}\Big(a(\bar X_i)\,g_{\bar X_i}-\EE[a(X)g_X]\Big),
\label{eq:Au-mean}\\
A(\lambda)v
&=
\frac1{N'}\sum_{i=1}^{N'}\Big(b(\bar X_i)\,g_{\bar X_i}-\EE[b(X)g_X]\Big).
\label{eq:Av-mean}
\end{align}
Applying a Hilbert-valued Bernstein inequality to \eqref{eq:Au-mean}--\eqref{eq:Av-mean} with the same parameter $t\ge1$
and using \eqref{eq:a-infty}, \eqref{eqn:b_bound}, and $\|g_x\|_{\cH}^2=D(x;\lambda)\le B_\phi^2D(\lambda)$ under Assumption~\ref{ass:eigenfunctions_bound},
one obtains: with probability at least $1-4e^{-t}$,
\begin{align}
\|A(\lambda)u\|_{\cH}
&\le
C_u\,\sqrt{D(x_0;\lambda)}\left(
\sqrt{\frac{D(\lambda)t}{N'}} + \frac{D(\lambda)t}{N'}
\right),
\label{eq:Au-bound}\\
\|A(\lambda)v\|_{\cH}
&\le
C_v\,\kappa\|\eta(\cdot;\theta)\|_{\cH}\left(
\sqrt{\frac{D(\lambda)t}{N'}} + \frac{D(\lambda)t}{N'}
\right),
\label{eq:Av-bound}
\end{align}
for absolute constants $C_u,C_v$ depending only on $(B_\phi,\kappa)$.
Combining \eqref{eq:quad-cs}--\eqref{eq:Av-bound} yields that, with probability at least $1-4e^{-t}$,
\begin{equation}\label{eq:T2-rate}
\big|\langle A(\lambda)u,\ A(\lambda)v\rangle_{\cH}\big|
\le
C_{uv}\,\kappa\|\eta(\cdot;\theta)\|_{\cH}\sqrt{D(x_0;\lambda)}
\left(
\sqrt{\frac{D(\lambda)t}{N'}} + \frac{D(\lambda)t}{N'}
\right)^2,
\end{equation}
where $C_{uv}:=C_uC_v$.
Set $\rho_{N'}(t)\ :=\ D(\lambda)\sqrt{\frac{t}{N'}}$.
Under $D(\lambda)\sqrt{\frac{\log N'}{N'}}\to0$ and $t\asymp \log N'$, we have $\rho_{N'}(t)\to0$.
In particular $\sqrt{\frac{D(\lambda)t}{N'}}\to0$.
Then
\[
\left(
\sqrt{\frac{D(\lambda)t}{N'}} + \frac{D(\lambda)t}{N'}
\right)^2
=
\left(s+s^2\right)^2
\le 4s^2
=4\,\frac{D(\lambda)t}{N'}
\qquad\text{whenever }s:=\sqrt{\frac{D(\lambda)t}{N'}}\to0,
\]
so \eqref{eq:T2-rate} implies the quadratic contribution satisfies
\begin{align}
\begin{split}
\label{eq:T2-simplified}
2\,\big|\langle A(\lambda)u,\ A(\lambda)v\rangle_{\cH}\big|
&\le
8C_{uv}\,\kappa\|\eta(\cdot;\theta)\|_{\cH}\sqrt{D(x_0;\lambda)}\,
\frac{D(\lambda)t}{N'}
\\&=
8C_{uv}\,\kappa\|\eta(\cdot;\theta)\|_{\cH}\sqrt{\frac{D(x_0;\lambda)t}{N'}}\,
\rho_{N'}(t).
\end{split}
\end{align}
Thus, once $N'$ is large enough so that $\rho_{N'}(t)\to0$, the remainder term in \eqref{eq:I3-T1T2} is controlled by the same
$\sqrt{\frac{D(x_0;\lambda)t}{N'}}$ scale as $T_1$, and can be absorbed into the overall constant
multiplying the leading rate.

\noindent\textit{Step 9.}
Choose $t:=\log(4N')$, so that $2e^{-t}\le (N')^{-1}$ and $4e^{-t}\le (N')^{-1}$.
Then
the scalar Bernstein event in Step 7 holds with failure probability at most $(N')^{-1}$,
and the Hilbert-valued Bernstein event in Step 8 holds with failure probability at most $(N')^{-1}$.
By a union bound, their intersection holds with probability at least $1-2(N')^{-1}$ on event $\sE$.

On this intersection, first use \eqref{eq:T1-bern} to absorb the linear Bernstein term:
for $N'$ large enough so that $\sqrt{\frac{D(\lambda)t}{N'}}\to c_0$ and $\frac{4\sqrt{2}B_\phi}{3}c_0\le 1$,
\begin{equation}\label{eq:T1-clean}
|T_1|
\le
2\sqrt{2}\,\kappa \|\eta(\cdot;\theta)\|_{\cH}\sqrt{\frac{D(x_0;\lambda)t}{N'}}.
\end{equation}
Next, \eqref{eq:T2-simplified} and the condition $8C_{uv}\rho_{N'}(t)\le \sqrt{2}$ yield $2\big|\langle A(\lambda)u,\ A(\lambda)v\rangle_{\cH}\big|
\le |T_1|$.
Plugging these into \eqref{eq:I3-T1T2} gives, for all $N'\ge N'_0$,
\begin{align*}
|I_3(x_0;\theta)|\le
|T_1| + 2|\langle A(\lambda)u,A(\lambda)v\rangle_{\cH}|
\le
4\sqrt{2}\,\kappa \|\eta(\cdot;\theta)\|_{\cH}\sqrt{\frac{D(x_0;\lambda)t}{N'}},
\end{align*}
Using $t=\log(4N')\le \log N' + \log 4$, we have
\begin{align*}
|I_3(x_0;\theta)|
\le
8\,\kappa \|\eta(\cdot;\theta)\|_{\cH}\sqrt{\frac{D(x_0;\lambda)\log N'}{N'}},
\end{align*}
This completes the proof.
\end{proof}
\begin{proof}[Proof of Lemma~\ref{lem:Hhat-H-J123}]
First, we introduce the derivative analogues of $\wh\mu(\theta)$ and $\mu(\theta)$:
\begin{align*}
\wh\mu^1(\theta)
&:=
\frac{1}{n}\sum_{i=1}^{n}\partial_\theta r(Y_i,X_i;\theta)\,K_{X_i}
+
\frac{1}{N}\sum_{u=1}^{N}\partial_\theta u(\wt X_u;\theta)\,K_{\wt X_u}
\ \in\ \cH,\\
\mu^1(\theta)
&:=
\EE\big[\partial_\theta r(Y,X;\theta)\,K_X\big]
+\EE\big[\partial_\theta u(X;\theta)\,K_X\big]
=\EE\big[\partial_\theta\ell(Y;\theta)\,K_X\big]
\ \in\ \cH.
\end{align*}
By the reproducing property,
\begin{equation}\label{eq:H-repr-J}
J_\lambda(x_0;\theta)=\langle w_{x_0,\lambda},\mu^1(\theta)\rangle_{\cH},
\qquad
\wh J_\lambda(x_0;\theta)=\langle \wh w_{x_0,\lambda},\wh\mu^1(\theta)\rangle_{\cH}.
\end{equation}

\noindent\textit{Step 1: exact decomposition.}
From \eqref{eq:H-repr-J},
\[
\wh J_\lambda(x_0;\theta)-J_\lambda(x_0;\theta)
=
\langle \wh w_{x_0,\lambda},\wh\mu^1(\theta)\rangle_{\cH}
-\langle w_{x_0,\lambda},\mu^1(\theta)\rangle_{\cH}.
\]
For any $\theta\in \Theta_0$, define
\begin{align*}
J_1(x_0;\theta):&=\langle w_{x_0,\lambda},\wh\mu^1(\theta)-\mu^1(\theta)\rangle_{\cH},\\\
J_2(x_0;\theta):&=\langle \Delta w,\wh\mu^1(\theta)-\mu^1(\theta)\rangle_{\cH},\\
J_3(x_0;\theta):&=\langle \Delta w,\mu^1(\theta)\rangle_{\cH}.
\end{align*}
Then the exact decomposition holds:
\begin{equation}\label{eq:Hhat-H-decomp-J}
\wh J_\lambda(x_0;\theta)-J_\lambda(x_0;\theta)=J_1(x_0;\theta)+J_2(x_0;\theta)+J_3(x_0;\theta).
\end{equation}

\noindent\textit{Step 2: $J_1$ follows $I_1$.}
Write $J_1$ as the sum of two independent centered empirical means:
\begin{align*}
J_1(x_0;\theta)
=&
\Big\{\frac1n\sum_{i=1}^n w_{x_0,\lambda}(X_i)\partial_\theta r(Y_i,X_i;\theta)
-\EE[w_{x_0,\lambda}(X)\partial_\theta r(Y,X;\theta)]\Big\}\\
&+
\Big\{\frac1N\sum_{u=1}^N w_{x_0,\lambda}(\wt X_u)\partial_\theta u(\wt X_u;\theta)
-\EE[w_{x_0,\lambda}(X)\partial_\theta u(X;\theta)]\Big\}.
\end{align*}
Under Assumption~\ref{ass:deriv-bdd-jac}, for $\theta\in \Theta_0$,
\[
|\partial_\theta r(Y,X;\theta)|\le G_{Y-f}(\theta),
\qquad
|\partial_\theta u(X;\theta)|\le G_{f}(\theta).
\]
As in Lemma~\ref{lem:I1_clean}, $\EE[w_{x_0,\lambda}(X)^2]
\le D(x_0;\lambda)$.
Applying the same scalar Bernstein argument as in Lemma~\ref{lem:I1_clean}
with envelopes $G_{Y-f}(\theta),G_{f}(\theta)$ and the same choice $t\asymp\log N'$
yields: with probability at least
$1-8(N')^{-1}$,
\begin{equation}\label{eq:J1-hp-J}
\sup_{\theta\in \Theta_0}|J_1(x_0;\theta)|
\ \le\
2\sqrt{2\log N'}\,
\sqrt{D(x_0;\lambda)}
\left(
\frac{G_{Y-f}(\theta)}{\sqrt n}+\frac{G_{f}(\theta)}{\sqrt N}
\right).
\end{equation}

\noindent\textit{Step 3: $J_2$ follows $I_2$ on $\sE$.}

On the stability event $\sE$ from Lemma~\ref{lem:A_bound_1/2_sobolev},
the proof of Lemma~\ref{lem:I2-clean-hp-refined} applies verbatim after replacing
$B_{Y-f}(\theta),B_{f}(\theta)$ by the uniform derivative envelopes
$G_{Y-f}(\theta),G_{f}(\theta)$. Thus, on $\sE$, with probability at least $1-5(N')^{-1}$,
\begin{align}\label{eq:J2-hp-J}
\sup_{\theta\in \Theta_0}|J_2(x_0;\theta)|
\lesssim\;&
D(\lambda)\sqrt{D(x_0;\lambda)}
\sqrt{\frac{\log N'}{N'}}
\left(
\frac{G_{Y-f}(\theta)}{\sqrt n}+
\frac{G_{f}(\theta)}{\sqrt N}
\right)
\notag\\
&+(G_{f}(\theta)+G_{Y-f}(\theta))D(\lambda)\sqrt{D(x_0;\lambda)}\frac{\log N'}{N'}
\notag\\
&+D(\lambda)^{3/2}\sqrt{D(x_0;\lambda)}
\frac{(\log N')^{3/2}}{N'}
\left(
\frac{G_{Y-f}(\theta)}{\sqrt n}+
\frac{G_{f}(\theta)}{\sqrt N}
\right).
\end{align}

\noindent\textit{Step 4: $J_3$ on $\sE$.}
We now rework the $I_3$-type argument for $J_3(x_0;\theta)$.
The key difference from Lemma~\ref{lem:I3-clean} is that $\mu^1(\theta)$ cannot be
written as $T_K\eta(\cdot;\theta)$ for some $\eta(\cdot;\theta)\in\cH$; instead we can write
$\mu^1(\theta)=\EE[g_\theta(X)K_X]$ for a function $g_\theta\in L^2(\rho_X)$, which
introduces an extra $\sqrt{D(\lambda)}$ factor in the envelope.
Recall
\[
A(\lambda)
:=
(T_K+\lambda I)^{-1/2}\,(\wh T_K-T_K)\,(T_K+\lambda I)^{-1/2},
\qquad
u:=(T_K+\lambda I)^{-1/2}K_{x_0}.
\]
On $\sE=\{\|A(\lambda)\|_{\op}\le 1/2\}$, we have $\|(I+A(\lambda))^{-1}\|_{\op}\le 2$ and
\[
\wh w_{x_0,\lambda}-w_{x_0,\lambda}
=-(T_K+\lambda I)^{-1/2}(I+A(\lambda))^{-1}A(\lambda)u.
\]
Let $v_1(\theta):=(T_K+\lambda I)^{-1/2}\mu^1(\theta)$.
Then the exact identity holds:
\[
J_3(x_0;\theta)
=
-\big\langle (I+A(\lambda))^{-1}A(\lambda)u,\ v_1(\theta)\big\rangle_{\cH}.
\]
Expanding as in Lemma~\ref{lem:I3-clean}, on $\sE$ we can write $(I+A)^{-1}A
=
A-(I+A)^{-1}A^2$,
so
\[
J_3(x_0;\theta)
=
-\underbrace{\langle A(\lambda)u,\ v_1(\theta)\rangle_{\cH}}_{=:T_{1,J}(\theta)}
+\underbrace{\langle (I+A(\lambda))^{-1}A(\lambda)^2u,\ v_1(\theta)\rangle_{\cH}}_{=:T_{2,J}(\theta)}.
\]
Hence on $\sE$,
\begin{equation}\label{eq:J3-split-J}
|J_3(x_0;\theta)|
\le |T_{1,J}(\theta)| + |T_{2,J}(\theta)|
\le |T_{1,J}(\theta)| + 2\,|\langle A(\lambda)^2u,\ v_1(\theta)\rangle_{\cH}|.
\end{equation}
Let $g_x:=(T_K+\lambda I)^{-1/2}K_x\in\cH$ so that
\[
A(\lambda)=\frac1{N'}\sum_{j=1}^{N'}\Big(g_{\bar X_j}\otimes g_{\bar X_j}-\EE[g_X\otimes g_X]\Big).
\]
Exactly as in the $I_3$ proof, define
\[
a(x):=\langle g_x,u\rangle_{\cH}=w_{x_0,\lambda}(x),
\qquad
b_1(x;\theta):=\langle g_x,v_1(\theta)\rangle_{\cH}
=\big\langle K_x,(T_K+\lambda I)^{-1}\mu^1(\theta)\big\rangle_{\cH}.
\]
Then
\begin{equation}\label{eq:T1J-scalar}
T_{1,J}(\theta)
=
\langle A(\lambda)u,\ v_1(\theta)\rangle_{\cH}
=
\frac1{N'}\sum_{j=1}^{N'}\Big(Z_j(\theta)-\EE[Z(\theta)]\Big),
\qquad
Z_j(\theta):=a(\bar X_j)\,b_1(\bar X_j;\theta),
\end{equation}
and $\{Z_j(\theta)\}_{j=1}^{N'}$ are i.i.d.\ for each fixed $\theta$.
Define the conditional derivative function $g_\theta(x):=\EE[\partial_\theta\ell(Y;\theta)|X=x]\in L^2(\rho_X)$.
Under Assumption~\ref{ass:deriv-bdd-jac}, $|g_\theta(x)|\le G_{Y-f}(\theta)+G_{f}(\theta)$ for $\theta\in\Theta_0$, hence
$\|g_\theta\|_{L^2(\rho_X)}\le G_{Y-f}(\theta)+G_f(\theta)$ uniformly over $\theta\in\Theta_0$.
Using the Mercer expansion with $(\mu_j,\phi_j)$ and the standard identity
$\mu^1(\theta)=\EE[g_\theta(X)K_X]$, one obtains
\[
b_1(x;\theta)
=
\sum_{j\ge1}\frac{\mu_j}{\mu_j+\lambda}\,\langle g_\theta,\phi_j\rangle_{L^2(\rho_X)}\,\phi_j(x).
\]
Therefore, by Cauchy--Schwarz,
\begin{align}
\begin{split}
|b_1(x;\theta)|
&\le
\|g_\theta\|_{L^2(\rho_X)}\,
\Big(\sum_{j\ge1}\frac{\mu_j}{\mu_j+\lambda}\,\phi_j(x)^2\Big)^{1/2}
\\&=
\|g_\theta\|_{L^2(\rho_X)}\sqrt{D(x;\lambda)}
\le B_\phi\,\|g_\theta\|_{L^2(\rho_X)}\sqrt{D(\lambda)}.
\label{eq:b1-infty}
\end{split}
\end{align}
We will also use the companion bound:
in the RKHS eigenbasis $\{\psi_j\}$ with $\psi_j:=\sqrt{\mu_j}\phi_j\in\cH$,
\[
\mu^1(\theta)=\sum_{j\ge1}\sqrt{\mu_j}\,\langle g_\theta,\phi_j\rangle_{L^2}\,\psi_j,
\quad\to\quad
\|v_1(\theta)\|_{\cH}^2
=
\sum_{j\ge1}\frac{\mu_j}{\mu_j+\lambda}\,\langle g_\theta,\phi_j\rangle_{L^2}^2
\le \|g_\theta\|_{L^2(\rho_X)}^2,
\]
so for $\theta\in\Theta_0$,
\begin{equation}\label{eq:v1-L2}
\|v_1(\theta)\|_{\cH}\ \le\ G_{Y-f}(\theta)+G_{f}(\theta)<\infty.
\end{equation}
Using \eqref{eq:b1-infty}, $\Var(Z(\theta))\le \EE[Z(\theta)^2]\le \|b_1(\cdot;\theta)\|_\infty^2\,\EE[a(X)^2]$.
As in Lemma~\ref{lem:I3-clean}, $\EE[a(X)^2]\le D(x_0;\lambda)$.
Hence, uniformly over $\theta\in \Theta_0$,
\begin{equation}\label{eq:VarZ-T1J}
\Var(Z(\theta))
\ \le\
B_\phi^2\,\|g_\theta\|_{L^2(\rho_X)}^2\,D(\lambda)\,D(x_0;\lambda).
\end{equation}
For the envelope, use $\|Z-\EE Z\|_\infty\le 2\|Z\|_\infty\le 2\|a\|_\infty\|b_1\|_\infty$.
As in Lemma~\ref{lem:I3-clean},
\[
\|a\|_\infty=\|w_{x_0,\lambda}\|_\infty
\le B_\phi\sqrt{D(\lambda)D(x_0;\lambda)},
\]
and combining with \eqref{eq:b1-infty} yields, uniformly over $\theta\in \Theta_0$,
\begin{equation}\label{eq:EnvZ-T1J}
\|Z(\theta)-\EE Z(\theta)\|_\infty
\ \le\
C\,B_\phi^2\,\|g_\theta\|_{L^2(\rho_X)}\,D(\lambda)\,\sqrt{D(x_0;\lambda)}.
\end{equation}
Apply Bernstein to \eqref{eq:T1J-scalar} with $t=\log(4N')$.
With probability at least $1-2e^{-t}$, for each fixed $\theta$,
\[
|T_{1,J}(\theta)|
\le
\sqrt{\frac{2\,\Var(Z(\theta))\,t}{N'}}
+\frac{2\,\|Z(\theta)-\EE Z(\theta)\|_\infty\,t}{3N'}.
\]
Using \eqref{eq:VarZ-T1J}--\eqref{eq:EnvZ-T1J} and $\|g_\theta\|_{L^2}\le G_{Y-f}(\theta)+G_{f}(\theta)$ gives,
after absorbing constants,
\begin{align}
|T_{1,J}(\theta)|
&\le
C\,B_\phi\,(G_{Y-f}(\theta)+G_{f}(\theta))
\sqrt{\frac{D(\lambda)D(x_0;\lambda)\,t}{N'}}
\Bigg(
1+ C'\,B_\phi\sqrt{\frac{D(\lambda)t}{N'}}
\Bigg).
\label{eq:T1J-bern}
\end{align}
Under \eqref{eq:D-regime-J}, with $t\asymp\log N'$, we have
$\sqrt{D(\lambda)t/N'}\to 0$, so the parenthetical factor in \eqref{eq:T1J-bern}
is bounded by (say) $2$ for large $N'$, yielding
\begin{equation}\label{eq:T1J-clean}
|T_{1,J}(\theta)|
\ \le\
C\,B_\phi\,(G_{Y-f}(\theta)+G_{f}(\theta))
\sqrt{\frac{D(\lambda)D(x_0;\lambda)\,\log N'}{N'}}.
\end{equation}

From \eqref{eq:J3-split-J} and Cauchy--Schwarz,
\[
|\langle A(\lambda)^2u,\ v_1(\theta)\rangle_{\cH}|
\le \|A(\lambda)^2u\|_{\cH}\,\|v_1(\theta)\|_{\cH}
\le \|A(\lambda)\|_{\op}\,\|A(\lambda)u\|_{\cH}\,\|v_1(\theta)\|_{\cH}.
\]
On $\sE$, $\|A(\lambda)\|_{\op}\le 1/2$, hence
\begin{equation}\label{eq:T2J-reduce}
|T_{2,J}(\theta)|
\le
2\,|\langle A(\lambda)^2u,\ v_1(\theta)\rangle_{\cH}|
\le
\|A(\lambda)u\|_{\cH}\,\|v_1(\theta)\|_{\cH}.
\end{equation}
It remains to bound $\|A(\lambda)u\|_{\cH}$. As in Step 8 of the $I_3$ proof,
$A(\lambda)u$ is a centered empirical mean of $\cH$-valued random variables:
\[
A(\lambda)u=\frac1{N'}\sum_{j=1}^{N'}\Big(a(\bar X_j)\,g_{\bar X_j}-\EE[a(X)g_X]\Big),
\qquad g_x:=(T_K+\lambda I)^{-1/2}K_x.
\]
Applying a Hilbert-valued Bernstein inequality
and using $\|g_x\|_{\cH}^2=D(x;\lambda)\le B_\phi^2D(\lambda)$ together with
$\|a\|_\infty\le B_\phi\sqrt{D(\lambda)D(x_0;\lambda)}$,
we obtain: for $t=\log(4N')$, with probability at least $1-2e^{-t}$,
\begin{equation}\label{eq:Au-J}
\|A(\lambda)u\|_{\cH}
\le
C_u\,\sqrt{D(x_0;\lambda)}
\left(
\sqrt{\frac{D(\lambda)t}{N'}}+\frac{D(\lambda)t}{N'}
\right),
\end{equation}
for an absolute constant $C_u$ depending only on $(B_\phi,\kappa)$.
Combining \eqref{eq:T2J-reduce}, \eqref{eq:Au-J}, and \eqref{eq:v1-L2} yields
\begin{align}
|T_{2,J}(\theta)|
&\le
C_u\,(G_{Y-f}(\theta)+G_{f}(\theta))\,\sqrt{D(x_0;\lambda)}
\left(
\sqrt{\frac{D(\lambda)t}{N'}}+\frac{D(\lambda)t}{N'}
\right).
\label{eq:T2J-bound}
\end{align}
Under \eqref{eq:D-regime-J} (with $t\asymp\log N'$), the second term is $o(1)$ and is dominated by
the leading $\sqrt{D(\lambda)t/N'}$ term. In particular, for all large $N'$,
\begin{equation}\label{eq:T2J-clean}
|T_{2,J}(\theta)|
\ \le\
C\,(G_{Y-f}(\theta)+G_{f}(\theta))\sqrt{\frac{D(\lambda)D(x_0;\lambda)\log N'}{N'}}.
\end{equation}

Combine \eqref{eq:J3-split-J}, \eqref{eq:T1J-clean}, and \eqref{eq:T2J-clean} (with $t=\log(4N')$).
On $\sE$, with probability at least $1-(N')^{-1}$, we obtain
\begin{equation}\label{eq:J3-hp-J}
|J_3(x_0;\theta)|
\ \le\
C\,B_\phi\,(G_{Y-f}(\theta)+G_{f}(\theta))
\sqrt{\frac{D(\lambda)D(x_0;\lambda)\log N'}{N'}}.
\end{equation}
Finally, note that if one wishes to express this in a worst-case leverage form, then using
$\sup_x D(x;\lambda)\le B_\phi^2D(\lambda)$ and $D(x_0;\lambda)\le B_\phi^2D(\lambda)$ gives
\[
|J_3(x_0;\theta)|
\ \le\
C\,B_\phi^2\,(G_{Y-f}(\theta)+G_{f}(\theta))\,D(\lambda)\sqrt{\frac{\log N'}{N'}},
\]
which tends to $0$ under \eqref{eq:D-regime-J}.

\noindent\textit{Step 5: probability bookkeeping and uniform event.}
Let $\sE_{J1}$ be the event on which \eqref{eq:J1-hp-J} holds, $\sE_{J2}$ the event on which
\eqref{eq:J2-hp-J} holds, and $\sE_{J3}$ the event on which \eqref{eq:J3-hp-J} holds (all uniformly over $\theta\in \Theta_0$).
By Steps 2--4, there exists an absolute constant $C>0$ such that
\[
\PP(\sE_{J1}^c)\le 8(N')^{-1},\qquad
\PP(\sE_{J2}^c| \sE)\le 5(N')^{-1},\qquad
\PP(\sE_{J3}^c| \sE)\le (N')^{-1}.
\]
Therefore, by a union bound,
\begin{equation}\label{eq:prob-union-J}
\PP\big(\sE_{J1}\cap\sE_{J2}\cap\sE_{J3}|\sE\big)
\ \ge\ 1-14(N')^{-1}.
\end{equation}

\noindent\textit{Step 6: uniform control of $\wh J_\lambda-J_\lambda$.}
On the intersection event in \eqref{eq:prob-union-J}, combine
\eqref{eq:Hhat-H-decomp-J} with \eqref{eq:J1-hp-J}, \eqref{eq:J2-hp-J}, and \eqref{eq:J3-hp-J} to obtain

\begin{align*}
&\sup_{\theta\in \Theta_0}\big|\wh J_\lambda(x_0;\theta)-J_\lambda(x_0;\theta)\big|\\
\lesssim&
\sqrt{D(x_0;\lambda)\log N'}
\left(
\frac{1}{\sqrt n}+\frac{1}{\sqrt N}
\right)
+
D(\lambda)\sqrt{D(x_0;\lambda)}
\sqrt{\frac{\log N'}{N'}}
\left(
\frac{1}{\sqrt n}+\frac{1}{\sqrt N}
\right)\\
&+D(\lambda)\sqrt{D(x_0;\lambda)}\frac{\log N'}{N'}
+D(\lambda)^{3/2}\sqrt{D(x_0;\lambda)}
\frac{(\log N')^{3/2}}{N'}
\left(
\frac{1}{\sqrt n}+\frac{1}{\sqrt N}
\right)\\
&+
B_\phi
\sqrt{\frac{D(\lambda)D(x_0;\lambda)\log N'}{N'}}.
\end{align*}

By the scaling conditions in \eqref{eq:D-regime-J}, each term on the right-hand side
tends to $0$ as $N'\to\infty$ (using also $n\wedge N\to\infty$ and $N'\ge n\wedge N$).
Hence there exists $N'_0$ such that for all $N'\ge N'_0$,
\[
\sup_{\theta\in \Theta_0}\big|\wh J_\lambda(x_0;\theta)-J_\lambda(x_0;\theta)\big|
\ \le\ c_J/2
\qquad\text{on }\ \sE\cap\sE_{J1}\cap\sE_{J2}\cap\sE_{J3}.
\]
Therefore, on the same event and for all $\theta\in \Theta_0$,
\[
|\wh J_\lambda(x_0;\theta)|
\ge
|J_\lambda(x_0;\theta)|-\big|\wh J_\lambda(x_0;\theta)-J_\lambda(x_0;\theta)\big|
\ge c_J-c_J/2=c_J/2,
\]
where we used Assumption~\ref{ass:J_lambda_bound}. This proves \eqref{eq:H-invert-final}.
Finally, $\sup_{\theta\in \Theta_0}|\wh J_\lambda(x_0;\theta)^{-1}|
\le \frac{2}{c_J}$.
\end{proof}

\begin{proof}[Proof of Lemma~\ref{lem:V-lb}]
\textit{Step 1: reduce to labeled variance.}
By independence of the labeled and unlabeled samples,
\[
V(x_0;\theta)
=\frac{1}{n}\sigma^2_{Y-f}(\theta)+\frac{1}{N}\sigma^2_{f}(\theta)
\ \ge\ \frac{1}{n}\sigma^2_{Y-f}(\theta).
\]

\noindent\textit{Step 2.}
By the law of total variance,
\begin{align*}
\sigma^2_{Y-f}(\theta)
&=
\Var\big(w_{x_0,\lambda}(X)\,r\big)\\
&=
\EE\Big[w_{x_0,\lambda}(X)^2\,\Var(r| X)\Big]
+\Var\Big(w_{x_0,\lambda}(X)\,\EE[r| X]\Big)\\
&\ge
\EE\Big[w_{x_0,\lambda}(X)^2\,\Var(r| X)\Big]
\ \ge\
\underline\sigma^2\,Q(x_0;\lambda),
\end{align*}
where the last inequality uses Assumption~\ref{ass:homo_res}.

\noindent\textit{Step 3: compare $Q$ and $D$.}
Proposition~\ref{prop:Dx0-order-H} gives a constant $c_Q>0$ such that
$Q(x_0;\lambda)\ge c_QD(x_0;\lambda)$ for all sufficiently small $\lambda$.
Combining this comparison with Step~2 yields
\[
V(x_0;\theta)
\ge\frac{\underline\sigma^2}{n}Q(x_0;\lambda)
\ge\frac{\underline\sigma^2c_Q}{n}D(x_0;\lambda),
\]
which proves \eqref{eq:c-lambda-lb} after relabeling the constant.
\end{proof}

\begin{proof}[Proof of Lemma~\ref{lem:variance-ratio}]
We give a common argument and indicate the only step that differs between the two implementations. Here $\mathbb P_m$ denotes the empirical average over the indicated sample or subsample of size $m$. Let $\widetilde V(x_0)$ denote the same sample-variance formula as $\widehat V(x_0)$, but evaluated at the population weight $w_{x_0,\lambda}$ and the population center $\theta_\lambda(x_0)$. For either bounded labeled or unlabeled contribution $R$, spectral Cauchy--Schwarz gives
\[
\|w_{x_0,\lambda}\|_\infty^2
\le D(x_0;\lambda)\sup_{x\in\cX}D(x;\lambda)
\lesssim Q(x_0;\lambda)D(\lambda).
\]
Consequently,
$\EE[\{w_{x_0,\lambda}(X)R\}^4]\lesssim Q(x_0;\lambda)^2D(\lambda)$, and the usual second-moment calculation for a sample variance yields
\[
|\widetilde V(x_0)-V(x_0)|
=O_p\left[
\frac{Q(x_0;\lambda)}{n}\sqrt{\frac{D(\lambda)}{n}}
+\frac{Q(x_0;\lambda)}{N}\sqrt{\frac{D(\lambda)}{N}}
\right]
=o_p\{Q(x_0;\lambda)/n\}
\]
under the stated rate conditions.

It remains to replace the population weight and center by their empirical versions. Bounded derivatives of $\ell$ and $\widehat\theta(x_0)-\theta_\lambda(x_0)=o_p(1)$ show that replacing the center changes each unscaled sample variance by $o_p\{Q(x_0;\lambda)\}$. For the two-fold procedure, condition on the opposite weight-training fold. The foldwise $L^2(\rho_X)$ weight-error bound in Lemma~\ref{lem:delta_f_L2_bound}, followed by conditional Bernstein, gives
$\mathbb P_{n}(\widehat w-w)^2=o_p(Q)$ for the labeled average and the analogous out-of-fold bounds for both unlabeled folds.

For the no-split procedure, the evaluation covariates also train the weight, so an iid law of large numbers is not applicable. Let $N'=n+N$, $\widehat T_K$ be the pooled empirical operator, $\Delta w=\widehat w-w$, and $T_\lambda=T_K+\lambda I$. On the operator-stability event, the resolvent identity gives
\[
\|T_\lambda^{1/2}\Delta w\|_{\cH}^2
\le4D(x_0;\lambda)\|A(\lambda)\|_{\rm op}^2
=O_p\left\{Q(x_0;\lambda)D(\lambda)\frac{\log N'}{N'}\right\}.
\]
Moreover,
$\mathbb P_{N'}(\Delta w)^2=\langle\Delta w,\widehat T_K\Delta w\rangle_{\cH}$ is bounded by a constant multiple of the same energy norm. Since the labeled and unlabeled empirical quadratic errors are nonnegative,
\[
\mathbb P_n(\Delta w)^2\le\frac{N'}n\mathbb P_{N'}(\Delta w)^2=o_p(Q),
\qquad
\mathbb P_N(\Delta w)^2\le\frac{N'}N\mathbb P_{N'}(\Delta w)^2=o_p(Q),
\]
where the last step uses $D(\lambda)\log N'/n=o(1)$ and its unlabeled analogue. Cauchy--Schwarz controls all cross terms, and empirical centering in the sample variances is handled identically. Therefore
$|\widehat V-\widetilde V|=o_p(Q/n)$ for both procedures. Combining the two steps proves the first assertion, and Lemma~\ref{lem:V-lb} gives $V(x_0)\ge\underline\sigma^2Q(x_0;\lambda)/n$, hence $\widehat V/V\to_p1$. Finally, the existing uniform Jacobian approximation and Assumption~\ref{ass:J_lambda_bound} imply $\widehat J_\lambda/J_\lambda\to_p1$.
\end{proof}

\section{Proofs of Theoretical Results in Sections~\ref{sec:theory} and \ref{sec:practical}}\label{app:proofs-of-PPCI-combine}
\noindent
This appendix provides the complete theoretical proofs for the results presented in Sections~\ref{sec:theory} and \ref{sec:practical} under the two-fold sample-splitting setting, along with the proofs of all supporting technical lemmas. While the overarching proof strategy for the error bound (Theorem~\ref{thm:thetahat-theta0-bound-twofold}) and asymptotic normality (Theorem~\ref{thm:thetahat-asymp-twofold}) closely follows the non-split framework established in Appendix~\ref{app:proofs-of-PPCI-nosplit}, the technical details are carefully adapted here to accommodate the cross-fitting structure. Furthermore, we include the proofs for Propositions~\ref{prop:Dx0-order-H} and~\ref{prop:budget_opt} and the minimax lower bound (Theorem~\ref{thm:minimax-pointwise}). The remainder of this appendix is organized as follows: we first proceed to prove the main theorems and proposition in Appendix~\ref{app:proofs-of-PPCI}, and detail the proofs of the required technical lemmas in Appendix~\ref{app:proofs-of-lemmas}. Finally, Appendix~\ref{subsec:technical-novelties} provides a detailed discussion on our technical novelties and a comprehensive comparison with classical KRR theory.

\subsection{Proofs of Theorems in Sections~\ref{sec:theory} and \ref{sec:practical}}\label{app:proofs-of-PPCI}
\noindent
We begin by introducing the notation that differs from that used in Appendix~\ref{app:proofs-of-PPCI-nosplit}.

\noindent\textbf{Two-fold splitting on unlabeled covariates.}
Randomly partition $\{1,\dots,N\}$ into two disjoint folds $\cI_1$ and $\cI_2$
(with $|\cI_1|=|\cI_2|=N/2$ for simplicity).
For each $m\in\{1,2\}$, define the fold-specific empirical covariance operator
\[
\wh T_K^{(m)}:=\frac{1}{|\cI_m|}\sum_{u\in\cI_m} K_{\wt X_u}\otimes K_{\wt X_u},
\]
and the corresponding empirical localization weight
\[
\wh w^{(m)}_{x_0,\lambda}:=\big(\wh T_K^{(m)}+\lambda I\big)^{-1}K_{x_0}\in\cH,
\qquad
\Delta w^{(m)}:=\wh w^{(m)}_{x_0,\lambda}-w_{x_0,\lambda}.
\]
For labeled data, we apply the averaged weight
\[
\overline w_{x_0,\lambda}:=\frac12\big(\wh w^{(1)}_{x_0,\lambda}+\wh w^{(2)}_{x_0,\lambda}\big),
\qquad
\Delta \overline w:=\overline w_{x_0,\lambda}-w_{x_0,\lambda}
=\frac12\big(\Delta w^{(1)}+\Delta w^{(2)}\big).
\]
For unlabeled data, we evaluate out-of-fold: for $u\in\cI_m$, we weight $\wt X_u$ using $\wh w^{(3-m)}_{x_0,\lambda}$.

\noindent\textbf{Localized PPI moments.}
Recall that $\eta_\lambda(x_0;\theta)=\EE[w_{x_0,\lambda}(X)\ell(Y;\theta)]$.
Define its two-fold cross-fitted empirical estimator is
\begin{equation}\label{eq:eta-cf-notation}
\wh\eta_{\lambda}(x_0;\theta)
:=
\frac{1}{n}\sum_{i=1}^n \overline{w}_{x_0,\lambda}(X_i)\, r(Y_i,X_i;\theta)
+
\frac{1}{N}\sum_{m=1}^2 \sum_{u\in\cI_m}
\wh w^{(3-m)}_{x_0,\lambda}(\wt X_u)\,u(\wt X_u;\theta).
\end{equation}
We also define the oracle version that uses the population weight $w_{x_0,\lambda}$,
\[
\wt \eta_\lambda(x_0;\theta)
:=
\frac{1}{n}\sum_{i=1}^n w_{x_0,\lambda}(X_i)\,r(Y_i,X_i;\theta)
+
\frac{1}{N}\sum_{u=1}^N w_{x_0,\lambda}(\wt X_u)\,u(\wt X_u;\theta).
\]
The true target is defined by $\eta(x_0;\theta_0(x_0))=0$, and the estimator $\wh\theta(x_0)$ is any solution to
$\wh\eta_\lambda(x_0;\wh\theta(x_0))=0$.

\noindent\textbf{Decomposition.}
The reproducing property gives
$\wt \eta_\lambda(x_0;\theta)=\langle w_{x_0,\lambda},\wh\mu(\theta)\rangle_{\cH}$
and
$\eta_\lambda(x_0;\theta)=\langle w_{x_0,\lambda},\mu(\theta)\rangle_{\cH}$.
Therefore,
\[
\wt \eta_\lambda(x_0;\theta)-\eta_\lambda(x_0;\theta)
=
\underbrace{\langle w_{x_0,\lambda},\wh\mu(\theta)-\mu(\theta)\rangle_{\cH}}_{=:I_1(x_0;\theta)}.
\]
Next, write the additional error induced by estimating the localization weight as
\[
\wh\eta_\lambda(x_0;\theta)-\wt\eta_\lambda(x_0;\theta)
=
\frac{1}{n}\sum_{i=1}^n \Delta\overline w(X_i)\,r(Y_i,X_i;\theta)
+
\frac{1}{N}\sum_{m=1}^2\sum_{u\in\cI_m} \Delta w^{(3-m)}(\wt X_u)\,u(\wt X_u;\theta).
\]
We split this term into a mean component and a centered fluctuation component.
Define
\[
I^{\mathrm{split}}_3(x_0;\theta)
:=
\langle \Delta\overline w,\mu(\theta)\rangle_{\cH},
\]
and
\[
I^{\mathrm{split}}_2(x_0;\theta)
:=
\big(\wh\eta_\lambda(x_0;\theta)-\wt\eta_\lambda(x_0;\theta)\big)
-
I^{\mathrm{split}}_3(x_0;\theta).
\]
Combining the above identities yields the three-term decomposition
\[
\wh \eta_\lambda(x_0;\theta)-\eta_\lambda(x_0;\theta)
=
I_1(x_0;\theta)
+
I^{\mathrm{split}}_2(x_0;\theta)
+
I^{\mathrm{split}}_3(x_0;\theta).
\]

\noindent\textbf{Jacobians.}
Recall that $J_\lambda(x_0;\theta)
=\EE[w_{x_0,\lambda}(X)\partial_\theta\ell(Y;\theta)]$
and define the empirical Jacobian corresponding to \eqref{eq:eta-cf-notation},
\[
\wh J_\lambda(x_0;\theta)
:=\partial_\theta\wh \eta_\lambda(x_0;\theta)
=
\frac{1}{n}\sum_{i=1}^n
\overline w_{x_0,\lambda}(X_i)\,\partial_\theta r(Y_i,X_i;\theta)
+\frac{1}{N}\sum_{m=1}^2\sum_{u\in\cI_m}
\wh w^{(3-m)}_{x_0,\lambda}(\wt X_u)\,\partial_\theta u(\wt X_u;\theta).
\]

\noindent\textbf{Operator stability with two-fold splitting.}
We next present a technical lemma establishing operator stability under the two-fold sample-splitting regime, which is instrumental for our subsequent theoretical analysis. This result can be viewed as the cross-fitted counterpart to Lemma~\ref{lem:A_bound_1/2_sobolev}. Since its proof follows identical arguments to those used for the non-split version, it is omitted here.
\begin{lemma}
\label{lem:A_bound_1/2_sobolev_split}
Write $\tau:=\frac{d}{2m}\in(0,1)$.
For each fold $m\in\{1,2\}$, define
\[
A^{(m)}(\lambda)
:=
(T_K+\lambda I)^{-1/2}\big(\wh T_K^{(m)}-T_K\big)(T_K+\lambda I)^{-1/2}.
\]
Then there exists a constant $C_a:=C_1^2\,\tau^\tau(1-\tau)^{1-\tau}$ such that,
for every $\delta\in(0,1]$, with probability at least $1-\delta$,
\[
\|A^{(m)}(\lambda)\|_{\op}\le
\frac{4\,C_a}{3\,|\cI_m|\,\lambda^{\tau}}\,
\log\Big(\frac{4\,\lambda^{\tau}D(\lambda)}{\delta}\Big)
+
\sqrt{\frac{2\,C_a}{|\cI_m|\,\lambda^{\tau}}\,
\log\Big(\frac{4\,\lambda^{\tau}D(\lambda)}{\delta}\Big)}.
\]
Moreover, under Proposition~\ref{prop:Dx0-order-H}, $D(\lambda)\asymp \lambda^{-\tau}$,
taking $\delta=|\cI_m|^{-1}$ yields
\[
\|A^{(m)}(\lambda)\|_{\op}
\le
\frac{8\,C_a}{3\,|\cI_m|\,\lambda^{\tau}}\,
\log |\cI_m|
+
\sqrt{\frac{4\,C_a}{|\cI_m|\,\lambda^{\tau}}\,
\log |\cI_m|}.
\]
Let $\sE_m$ denote the event on which the preceding display holds. If the regularization satisfies
\[
\lambda\ \ge\
\Big(\frac{256\,C_a\,\log |\cI_m|}{|\cI_m|}\Big)^{1/\tau}
=
\Big(\frac{256\,C_a\,\log |\cI_m|}{|\cI_m|}\Big)^{2m/d},
\]
then $\sE_m$ implies $\|A^{(m)}(\lambda)\|_{\op}\le1/2$, and $\PP(\sE_m)\ge 1-|\cI_m|^{-1}$.
In particular, under two-fold equal splitting $|\cI_1|=|\cI_2|=N/2$, defining $\sE:=\sE_1\cap\sE_2$,
a union bound gives $\PP(\sE)\ge 1-4/N$.
\end{lemma}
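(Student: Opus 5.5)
The plan is to copy the proof of Lemma~\ref{lem:A_bound_1/2_sobolev} fold by fold, with the pooled sample of size $N'$ replaced by the i.i.d.\ covariates $\{\wt X_u\}_{u\in\cI_m}$ of size $|\cI_m|$. Fix $m\in\{1,2\}$. With the preconditioned feature map $\psi_\lambda(x)=(T_K+\lambda I)^{-1/2}K_x$ we have
\[
A^{(m)}(\lambda)=|\cI_m|^{-1}\sum_{u\in\cI_m}Z_u,
\qquad
Z_u:=\psi_\lambda(\wt X_u)\otimes\psi_\lambda(\wt X_u)-\EE[\psi_\lambda(X)\otimes\psi_\lambda(X)].
\]
The $Z_u$ are i.i.d., centered and self-adjoint, because within a fold the $\wt X_u$ are i.i.d.\ from $\rho_X$ under Assumption~\ref{ass:sampling-scheme}. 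The random split is independent of the data, so conditioning on the partition changes nothing.

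The analytic inputs do not depend on the sample, so they carry over verbatim. Lemma~\ref{lem:sobolev_rate} and the variational formula for $D(x;\lambda)$ give $\sup_x D(x;\lambda)\le C_a\lambda^{-\tau}$ through the optimization of $a^{1-\tau}b^\tau/(a+\lambda b)$. Consequently $\|Z_u\|_{\op}\le 2C_a\lambda^{-\tau}$ for small $\lambda$, and the second-moment operator $\Sigma_\lambda$ satisfies $\Tr(\Sigma_\lambda)=D(\lambda)$ and $\|\Sigma_\lambda\|_{\op}\le 1$.

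Next I apply the Bernstein inequality for sums of independent self-adjoint operators with intrinsic dimension $D(\lambda)$. This gives the first display with $N'$ replaced by $|\cI_m|$, after the log factor is rewritten as $\log(4\lambda^\tau D(\lambda)/\delta)$ exactly as in Step~4 of the non-split proof. Taking $\delta=|\cI_m|^{-1}$ and using $\lambda^\tau D(\lambda)\le C_D$ from Proposition~\ref{prop:Dx0-order-H}, the log term is at most $2\log|\cI_m|$ once $|\cI_m|\ge 4C_D$. That yields the simplified bound.

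Set $U=C_aL_\lambda/(|\cI_m|\lambda^\tau)$. The lower bound on $\lambda$ forces $U\le1/128$, and then $\tfrac43U+\sqrt{2U}<1/2$, so $\|A^{(m)}(\lambda)\|_{\op}\le 1/2$ on $\sE_m$ with $\PP(\sE_m^c)\le|\cI_m|^{-1}=2/N$. A union bound over the two folds gives $\PP(\sE)\ge1-4/N$.

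The main point needing care is the same as in the non-split case: the intrinsic-dimension Bernstein bound must be matched to the stated constants. The variance proxy is $\|\EE Z_u^2\|_{\op}\le \sup_xD(x;\lambda)\,\|\Sigma_\lambda\|_{\op}\le C_a\lambda^{-\tau}$, and I use it to obtain the $\sqrt{2C_a/(|\cI_m|\lambda^\tau)}$ term. Apart from this, the fold-level statement needs no new work beyond the substitution $N'\mapsto|\cI_m|$, and "sufficiently large" is understood as large $|\cI_m|=N/2$.
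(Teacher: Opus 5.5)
Your argument follows the paper's route exactly. The paper omits this proof and says it is the fold-wise repetition of Lemma~\ref{lem:A_bound_1/2_sobolev}, which is what you do. There is, however, one step that does not go through, both in your proposal and in Step~4 of the non-split proof you copy: rewriting the logarithm as $\log(4\lambda^\tau D(\lambda)/\delta)$.

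\textbf{Why the rewriting fails.} The intrinsic-dimension operator Bernstein inequality with your variance proxy $V=|\cI_m|\,C_a\lambda^{-\tau}\Sigma_\lambda$ produces a factor $\log(c\,d_{\mathrm{int}}/\delta)$, where
\[
d_{\mathrm{int}}=\Tr(V)/\|V\|_{\op}=D(\lambda)/\|\Sigma_\lambda\|_{\op}\ge D(\lambda)\asymp\lambda^{-\tau}.
\]
Since $\lambda^\tau\to0$, putting $\lambda^\tau$ inside the logarithm makes the right-hand side \emph{smaller}. So this is not a relaxation, and no adjustment of constants can absorb the unbounded factor $\lambda^{-\tau}$.

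\textbf{The first display is false in general.} Take dimension $d=1$ and $|\cI_m|=C\lambda^{-\tau}$ with $C$ large but fixed. The display would give $\|A^{(m)}(\lambda)\|_{\op}\le1/2$ with probability at least $1-\delta$. But the largest gap between the fold's design points is of order $\lambda^\tau\log|\cI_m|/C\gg\lambda^\tau$. A Sobolev bump $v$ supported in that gap satisfies $\langle v,\wh T_K^{(m)}v\rangle_{\cH}=0$ and $\lambda\|v\|_{\cH}^2=o(\|v\|_{L^2(\rho_X)}^2)$. The Rayleigh quotient of $A^{(m)}(\lambda)$ at $(T_K+\lambda I)^{1/2}v$ then tends to $-1$, so $\|A^{(m)}(\lambda)\|_{\op}\to1$. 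For the same reason, your step bounding the log term by $2\log|\cI_m|$ using only $\lambda^\tau D(\lambda)\le C_D$ is invalid.

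\textbf{The repair.} The fix is local, and the conclusions used downstream survive.
\begin{itemize}
\item Keep the correct factor $L_\lambda=\log\{c\,d_{\mathrm{int}}|\cI_m|\}$ with $\delta=|\cI_m|^{-1}$.
\item Control it with the stated lower bound on $\lambda$, not with $\lambda^\tau D(\lambda)\le C_D$. For $\lambda\le\mu_1$,
\[
d_{\mathrm{int}}\le 2D(\lambda)\le 2C_D\lambda^{-\tau}\le 2C_D|\cI_m|/(256C_a\log|\cI_m|),
\]
hence $L_\lambda\le 2\log|\cI_m|+\log\{cC_D/(128C_a\log|\cI_m|)\}\le2\log|\cI_m|$ for all large $|\cI_m|$.
\item The second display therefore holds whenever the $\lambda$ lower bound holds. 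Your computation $U\le1/128\Rightarrow\frac43U+\sqrt{2U}<\frac12$ then applies unchanged.
\item It follows that $\PP(\sE_m)\ge1-|\cI_m|^{-1}=1-2/N$ and $\PP(\sE)\ge1-4/N$.
\end{itemize}
The statement itself must change accordingly. The first display should carry $\log\{c\,D(\lambda)/\delta\}$ rather than $\log(4\lambda^\tau D(\lambda)/\delta)$. The second display needs the lower bound on $\lambda$ as a hypothesis, not merely $D(\lambda)\asymp\lambda^{-\tau}$.
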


\noindent\textbf{Proof of Proposition~\ref{prop:Dx0-order-H}.} First, we provide the proof of Proposition~\ref{prop:Dx0-order-H} in Section~\ref{sec:theory}.
\begin{proof}[Proof of Proposition~\ref{prop:Dx0-order-H}]
For the Sobolev space $H^m$, the eigenvalues of $T_K$ satisfy $\mu_j(T_K)\asymp j^{-2m/d}$~\cite{fischer2020sobolev}, and a direct calculation gives $D(\lambda)\asymp\lambda^{-d/(2m)}$. We next analyze $D(x_0;\lambda)$.
For $h,g\in\cH$, by the definition of a rank-one operator and the reproducing property
$\langle h,K_x\rangle_{\cH}=h(x)$,
\[
\big\langle h,(K_X\otimes K_X)g\big\rangle_{\cH}
=
\langle h,K_X\rangle_{\cH}\,\langle K_X,g\rangle_{\cH}
=
h(X)\,g(X).
\]
Taking expectation yields $\langle h,T_K g\rangle_{\cH}
=
\EE[h(X)g(X)]$.
In particular,
\begin{equation}\label{eq:TK-quadratic}
\langle h,T_K h\rangle_{\cH}
=
\EE[h(X)^2]
=
\|h\|_{L^2(\rho_X)}^2.
\end{equation}

\noindent\textit{Step 1: variational representation of $D(x_0;\lambda)$.}
Let $T_\lambda:=T_K+\lambda I$ on $\cH$, which is self-adjoint and strictly positive.
For any fixed $g\in\cH$,
\begin{equation}\label{eq:basic-variational}
\langle g,T_\lambda^{-1}g\rangle_{\cH}
=
\sup_{h\in\cH,\ h\neq 0}\frac{\langle h,g\rangle_{\cH}^2}{\langle h,T_\lambda h\rangle_{\cH}}
=
\sup_{\langle h,T_\lambda h\rangle_{\cH}\le 1}\langle h,g\rangle_{\cH}^2.
\end{equation}
Apply \eqref{eq:basic-variational} with $g=K_{x_0}$.
Since $\langle h,K_{x_0}\rangle_{\cH}=h(x_0)$ and by \eqref{eq:TK-quadratic},
\[
\langle h,T_\lambda h\rangle_{\cH}
=
\langle h,T_K h\rangle_{\cH}+\lambda\|h\|_{\cH}^2
=
\|h\|_{L^2(\rho_X)}^2+\lambda\|h\|_{\cH}^2.
\]
Therefore
\begin{equation}\label{eq:Dx0-variational}
D(x_0;\lambda)
=
\sup_{h\in\cH,\ h\neq 0}
\frac{h(x_0)^2}{\|h\|_{L^2(\rho_X)}^2+\lambda\|h\|_{\cH}^2}.
\end{equation}

\noindent\textit{Step 2: lower bound $D(x_0;\lambda)\gtrsim \lambda^{-d/(2m)}$.}
Since the density of $\rho_X$ is bounded above/below on $\cX$,
$\|h\|_{L^2(\rho_X)}$ is equivalent to the Lebesgue $L^2$ norm on $\cX$:
\[
\rho_0^{1/2}\|h\|_{L^2(dx)}\le \|h\|_{L^2(\rho_X)}\le \rho_1^{1/2}\|h\|_{L^2(dx)}.
\]
Hence, throughout the proof we may invoke Sobolev inequalities stated with $L^2(dx)$
and replace them by $L^2(\rho_X)$ at the cost of constants depending only on $\rho_0,\rho_1$.
We keep the notation $\|h\|_{L^2(\rho_X)}$ for consistency.

Let $R:=\lambda^{-1/2}$ and consider the subset
$\mathscr{C}_R:=\{h\in\cH:\ \|h\|_{\cH}\le R\|h\|_{L^2(\rho_X)}\}$.
For any $h\in\sC_R$,
\[
\|h\|_{L^2(\rho_X)}^2+\lambda\|h\|_{\cH}^2
\le
\|h\|_{L^2(\rho_X)}^2+\lambda R^2\|h\|_{L^2(\rho_X)}^2
=
2\|h\|_{L^2(\rho_X)}^2.
\]
Plugging this restriction into \eqref{eq:Dx0-variational} yields
\begin{align}
\begin{split}
D(x_0;\lambda)
&\ge
\sup_{h\in\sC_R,\ h\neq 0}
\frac{h(x_0)^2}{\|h\|_{L^2(\rho_X)}^2+\lambda\|h\|_{\cH}^2}
\ \ge\
\frac12\sup_{h\in\sC_R,\ h\neq 0}\frac{h(x_0)^2}{\|h\|_{L^2(\rho_X)}^2}\\
&=
\frac12\Bigg(\sup_{h\in\sC_R,\ h\neq 0}\frac{|h(x_0)|}{\|h\|_{L^2(\rho_X)}}\Bigg)^2.
\label{eq:Dx0-lb-to-sup}
\end{split}
\end{align}
Now invoke Lemma~\ref{lem:sobolev_rate} and choose $\lambda_0\in(0,1)$ so that $R=\lambda^{-1/2}\ge R_0$ whenever $\lambda\le \lambda_0$.
Combining with \eqref{eq:Dx0-lb-to-sup} gives, for $\lambda\in(0,\lambda_0]$,
\[
D(x_0;\lambda)
\ \ge\
\frac12 C_0^2\,R^{d/m}
\asymp\lambda^{-d/(2m)}.
\]

\noindent\textit{Step 3: upper bound $D(x_0;\lambda)\lesssim \lambda^{-d/(2m)}$.}
Fix $h\in\cH$, $h\neq 0$. Set $t:=\|h\|_{\cH}/\|h\|_{L^2(\rho_X)}\in(0,\infty)$.
By the second inequality in Lemma~\ref{lem:sobolev_rate},
\[
|h(x_0)|
\le
\|h\|_\infty
\le
C_1\,\|h\|_{L^2(\rho_X)}^{1-\frac{d}{2m}}\,
\|h\|_{\cH}^{\frac{d}{2m}}
=
C_1\,\|h\|_{L^2(\rho_X)}\,t^{\frac{d}{2m}}.
\]
Therefore,
\begin{align}
\frac{h(x_0)^2}{\|h\|_{L^2(\rho_X)}^2+\lambda\|h\|_{\cH}^2}
&\le
C\,\frac{\|h\|_{L^2(\rho_X)}^2\,t^{d/m}}{\|h\|_{L^2(\rho_X)}^2(1+\lambda t^2)}
=
C\,\frac{t^{d/m}}{1+\lambda t^2}.
\label{eq:Dx0-ub-to-t}
\end{align}
Taking the supremum over $h\neq 0$ in \eqref{eq:Dx0-variational} yields
\[
D(x_0;\lambda)\ \le\ C\sup_{t>0}\frac{t^{d/m}}{1+\lambda t^2}.
\]
Let $s:=\sqrt{\lambda}\,t$. Then
$\sup_{t>0}\frac{t^{d/m}}{1+\lambda t^2}=\lambda^{-d/(2m)}\sup_{s>0}\frac{s^{d/m}}{1+s^2}$.
Since $m>d/2$, we have $d/m\in(0,2)$ and hence $\sup_{s>0}s^{d/m}/(1+s^2)<\infty$.
Thus, for all $\lambda\in(0,\lambda_0]$,
$D(x_0;\lambda)\ \lesssim\lambda^{-d/(2m)}$.

\noindent\textit{Step 4: order of $Q(x_0;\lambda)$ and $D(x_0;\lambda)-Q(x_0;\lambda)$.}
Write $\tau=d/(2m)\in(0,1)$ and
$D_j=\mu_j\phi_j(x_0)^2/(\mu_j+\lambda)$, so that
\[
Q_j=D_j\frac{\mu_j}{\mu_j+\lambda},
\qquad
(D-Q)_j=D_j\frac{\lambda}{\mu_j+\lambda}.
\]
The upper bounds $Q\le D$ and $D-Q\le D$ are immediate. For the lower bounds, fix $0<a<1<b$ and partition the spectrum according to $\mu_j<a\lambda$, $a\lambda\le\mu_j\le b\lambda$, and $\mu_j>b\lambda$. The eigenvalue decay and $\sup_j\|\phi_j\|_\infty\le B_\phi$ imply
\[
\sum_{\mu_j<a\lambda}D_j
\le \frac{B_\phi^2}{\lambda}\sum_{\mu_j<a\lambda}\mu_j
\le C_La^{1-\tau}\lambda^{-\tau},
\qquad
\sum_{\mu_j>b\lambda}D_j
\le B_\phi^2\#\{j:\mu_j>b\lambda\}
\le C_Hb^{-\tau}\lambda^{-\tau}.
\]
By Step~2, $D(x_0;\lambda)\ge c_D\lambda^{-\tau}$. Choose fixed $a$ and $b$, depending only on the displayed constants, so that
$C_La^{1-\tau}\le c_D/4$ and $C_Hb^{-\tau}\le c_D/4$.
The middle band then contributes at least $(c_D/2)\lambda^{-\tau}$ to $D$. On this fixed band,
$Q_j\ge\{a/(1+a)\}D_j$ and $(D-Q)_j\ge\{1/(1+b)\}D_j$. Thus
$Q(x_0;\lambda)\gtrsim\lambda^{-\tau}$ and
$D(x_0;\lambda)-Q(x_0;\lambda)\gtrsim\lambda^{-\tau}$, with no additional local spectral-mass assumption.

\end{proof}

\noindent\textbf{Bounds for $I_2^{\mathrm{split}}$ and $I_3^{\mathrm{split}}$.}
Since the term $I_1$ remains identical to that in Appendix~\ref{app:proofs-of-PPCI-nosplit}, we now provide the error bounds for the remaining two components, $I_2^{\mathrm{split}}$ and $I_3^{\mathrm{split}}$, in the following lemma.

\begin{lemma}[Two-fold splitting bounds for $I_2^{\mathrm{split}}$ and $I_3^{\mathrm{split}}$]
\label{lem:I23_split_bounds_v2}
Fix $x_0\in\cX$ and $\theta\in\Theta$.
Let $\sE_m$ be the event from Lemma~\ref{lem:A_bound_1/2_sobolev_split} and set $\sE:=\sE_1\cap\sE_2$.

\noindent{(i)}
Assume additionally $D(\lambda)\sqrt{\frac{\log N}{N}}\to 0$.
Then for all sufficiently large $N$, with probability at least $1-\frac{12}{N}$,
\[
|I_3^{\mathrm{split}}(x_0;\theta)|
\le
8\sqrt{2}\kappa\|\eta(\cdot;\theta)\|_{\cH}
\sqrt{\frac{D(x_0;\lambda)\log N}{N}}.
\]
\noindent{(ii)}
Let $c_0>0$ be the absolute constant in Lemma~\ref{lem:delta_f_L2_bound}.
Then with probability at least $
1-\frac{6}{N}-\frac{2}{n^2}-\frac{4}{N^2}$,
we have
\[
|I_2^{\mathrm{split}}(x_0;\theta)|
\le
B_\phi\sqrt{D(\lambda)D(x_0;\lambda)}
\left[
(2c_0+4)B_{Y-f}(\theta)\frac{\log n}{n}
+
\Big(c_0B_{Y-f}(\theta)+(4c_0+8)B_f(\theta)\Big)\frac{\log N}{N}
\right].
\]
\end{lemma}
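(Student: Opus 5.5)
Both parts follow from the fact that, in the two-fold construction, each empirical weight $\wh w^{(m)}_{x_0,\lambda}$ is built only from $\{\wt X_u\}_{u\in\cI_m}$. It is therefore independent of the labeled sample and of the opposite unlabeled fold. Conditioning on the weight-training fold reduces everything to the no-split single-sample results of Appendix~E applied with $N'$ replaced by $|\cI_m|=N/2$, plus scalar Bernstein inequalities for i.i.d.\ averages with a fixed (conditioned) weight.

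\emph{Part (i).} By linearity,
\[
I_3^{\mathrm{split}}=\tfrac12\langle \Delta w^{(1)},\mu(\theta)\rangle_{\cH}+\tfrac12\langle \Delta w^{(2)},\mu(\theta)\rangle_{\cH}.
\]
Each summand is exactly the quantity $I_3$ of Lemma~\ref{lem:I3-clean} for an i.i.d.\ design of size $N/2$.
\begin{itemize}
\item I would rerun that proof verbatim on fold $m$: the whitening identity, the split $T_1+T_2$, the bounds $\|b\|_\infty\le\kappa\|\eta(\cdot;\theta)\|_{\cH}$ and $\Var(Z)\le\kappa^2\|\eta\|_{\cH}^2D(x_0;\lambda)$, then scalar and Hilbert-valued Bernstein.
\item The condition $D(\lambda)\sqrt{\log N/N}\to0$ is what allows the remainder $T_2$ and the linear Bernstein term to be absorbed.
\item On $\sE_m$ this gives, with failure probability at most $2/|\cI_m|=4/N$,
\[
8\kappa\|\eta(\cdot;\theta)\|_{\cH}\sqrt{\frac{D(x_0;\lambda)\log(N/2)}{N/2}}\le 8\sqrt2\,\kappa\|\eta(\cdot;\theta)\|_{\cH}\sqrt{\frac{D(x_0;\lambda)\log N}{N}}.
\]
\item Averaging the two folds preserves the bound.
\end{itemize}
The probability bookkeeping is $\PP(\sE^c)\le 4/N$ (Lemma~\ref{lem:A_bound_1/2_sobolev_split}) plus $2\times 4/N$ for the two folds' Bernstein events, giving $12/N$.

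\emph{Part (ii).} First I would verify that $I_2^{\mathrm{split}}$ is a sum of conditionally centered averages. Since $\mu(\theta)=\EE[\ell(Y;\theta)K_X]$, we have
\[
I_3^{\mathrm{split}}=\EE[\Delta\overline w(X)r]+\EE[\Delta\overline w(X)u],
\]
where expectations are over a fresh $(X,Y)$ with the weights held fixed. Also $(1/N)\sum_m|\cI_m|\,\EE[\Delta w^{(3-m)}u]=\EE[\Delta\overline w\,u]$. Hence
\[
I_2^{\mathrm{split}}=S_L+S_{U,1}+S_{U,2},
\]
where:
\begin{itemize}
\item $S_L=n^{-1}\sum_i\{\Delta\overline w(X_i)r_i-\EE[\Delta\overline w\,r]\}$;
\item $S_{U,m}=N^{-1}\sum_{u\in\cI_m}\{\Delta w^{(3-m)}(\wt X_u)u_u-\EE[\Delta w^{(3-m)}u]\}$.
\end{itemize}
Conditionally on the training fold(s), each of these is an i.i.d.\ mean-zero average.

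Second, I would bound the conditional variance and envelope for each piece.
\begin{itemize}
\item \emph{Variance.} It is at most $B^2\|\Delta w\|_{L^2(\rho_X)}^2$. The fold version of Lemma~\ref{lem:delta_f_L2_bound}, on $\sE_m$ and with failure probability $1/N$ per fold, gives
\[
\|\Delta w^{(m)}\|_{L^2}\le c_0B_\phi\sqrt{D(\lambda)D(x_0;\lambda)\log N/N}
\]
up to the harmless factor from $|\cI_m|=N/2$.
\item \emph{Envelope.} I need a sup-norm bound on $\Delta w$. For the population weight, $\|w\|_\infty\le B_\phi\sqrt{D(\lambda)D(x_0;\lambda)}$ from \eqref{eq:f-sup-sharp}. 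For the empirical weight on $\sE_m$, the whitening identity gives $\wh w^{(m)}(x)=\langle T_\lambda^{-1/2}K_x,(I+A^{(m)})^{-1}T_\lambda^{-1/2}K_{x_0}\rangle_{\cH}$, so $|\wh w^{(m)}(x)|\le 2\sqrt{D(x;\lambda)D(x_0;\lambda)}$. Hence $\|\Delta w\|_\infty\le 3B_\phi\sqrt{D(\lambda)D(x_0;\lambda)}$.
\end{itemize}

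Third, I would apply conditional Bernstein.
\begin{itemize}
\item For $S_L$, take $t=2\log n$, giving failure probability $2/n^2$.
\item For each $S_{U,m}$, take $t=2\log N$, giving failure probability $2/N^2$ each.
\item In $S_L$ the variance term has the form $B_{Y-f}c_0B_\phi\sqrt{DD_0}\sqrt{\log N/N}\sqrt{\log n/n}$, up to constants. AM--GM splits it as $\tfrac12 c_0B_\phi\sqrt{DD_0}\,(\log n/n+\log N/N)$, up to constants. This is why the bound contains a $B_{Y-f}$-weighted $\log N/N$ term.
\item The envelope terms contribute $\lesssim B\,B_\phi\sqrt{DD_0}\,\log(\cdot)/(\cdot)$.
\end{itemize}
Collecting constants should give the stated coefficients $(2c_0+4)$, $c_0$, and $(4c_0+8)$, the last coming from the two unlabeled pieces. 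The total failure probability is $4/N$ for $\sE$, plus $2/N$ for the two $L^2$ events, plus $2/n^2+4/N^2$ for the Bernstein events, which matches the statement.

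The step I expect to be most delicate is the sup-norm control of the empirical weight $\wh w^{(m)}$, together with a careful justification that conditioning on the opposite fold leaves $\Delta w^{(3-m)}$ fixed while the evaluation points remain i.i.d. The decomposition $I_2^{\mathrm{split}}$ relies on exactly this independence.
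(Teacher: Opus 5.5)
Your proposal is correct and follows the paper's proof essentially step for step. It uses the same conditional-centering decomposition of $I_2^{\mathrm{split}}$ and the sup-norm bound $\|\Delta w^{(m)}\|_\infty\le 3B_\phi\sqrt{D(\lambda)D(x_0;\lambda)}$ from $\|(I+A^{(m)}(\lambda))^{-1}\|_{\op}\le 2$ on $\sE_m$. It applies Lemma~\ref{lem:delta_f_L2_bound} and Lemma~\ref{lem:I3-clean} fold-wise with $N'=N/2$, then uses conditional Bernstein with $t=2\log n$ and $t=2\log N$, AM--GM for the cross term, and identical probability bookkeeping. The differences are cosmetic: you normalize $S_{U,m}$ by $N$ rather than $|\cI_m|$, and you bound $\wh w^{(m)}(x)$ by Cauchy--Schwarz after whitening rather than via $(\wh T_K^{(m)}+\lambda I)^{-1}\preceq 2(T_K+\lambda I)^{-1}$.
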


\noindent\textbf{Jacobian stability and invertibility.}
We analyze the stability and invertibility of the empirical Jacobian under two-fold splitting.

\begin{lemma}
\label{lem:Hhat-H-J123_split}
Under Assumptions~\ref{ass:deriv-bdd-jac}, \ref{ass:J_lambda_bound}, and \ref{ass:eigenfunctions_bound}, suppose that the regularization level satisfies
\begin{equation}\label{eq:D-regime-J-split}
D(\lambda)\sqrt{\frac{\log N}{N}}\to 0,
\qquad
\frac{D(x_0;\lambda)\log N}{n\wedge N}\to 0.
\end{equation}
Let $\sE_m$ be the event from
Lemma~\ref{lem:A_bound_1/2_sobolev_split} and set $\sE:=\sE_1\cap\sE_2$.
Set $G_{Y-f}:=\sup_{\theta\in\Theta_0}G_{Y-f}(\theta)$ and $G_f:=\sup_{\theta\in\Theta_0}G_{f}(\theta)$,
where $G_{Y-f}(\theta),G_{f}(\theta)$ are the derivative envelopes in Assumption~\ref{ass:deriv-bdd-jac}.
Then on the event $\sE$, for all sufficiently large $N$, with probability at least $1-\frac{26}{N}-\frac{2}{n^2}-\frac{4}{N^2}$,
we have
\begin{equation}\label{eq:H-invert-final-split}
\inf_{\theta\in\Theta_0}\big|\wh J_\lambda(x_0;\theta)\big|\ge c_J/2.
\end{equation}
Consequently, on $\sE$ and the same event, $\sup_{\theta\in\Theta_0}\big|\wh J_\lambda(x_0;\theta)^{-1}\big|
\le \frac{2}{c_J}$.
\end{lemma}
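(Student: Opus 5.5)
The plan is to mirror the proof of Lemma~\ref{lem:Hhat-H-J123}, adapted to the cross-fitted structure. Write $\wh J_\lambda(x_0;\theta)-J_\lambda(x_0;\theta)=J_1(x_0;\theta)+J_2^{\mathrm{split}}(x_0;\theta)+J_3^{\mathrm{split}}(x_0;\theta)$, exactly as in the $I_1+I_2^{\mathrm{split}}+I_3^{\mathrm{split}}$ decomposition, with $\mu^1(\theta)=\EE[g_\theta(X)K_X]$ in place of $\mu(\theta)$. Here:
\begin{itemize}
\item $J_1$ is the oracle term using the population weight $w_{x_0,\lambda}$;
\item $J_3^{\mathrm{split}}:=\langle\Delta\overline w,\mu^1(\theta)\rangle_{\cH}$ is the mean effect of weight estimation;
\item $J_2^{\mathrm{split}}$ is the centered fluctuation of the cross-fitted weight-error sums.
\end{itemize}
The goal is to show that $\sup_{\theta\in\Theta_0}$ of each piece tends to zero on a high-probability event intersected with $\sE$. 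Then $|\wh J_\lambda|\ge|J_\lambda|-c_J/2\ge c_J/2$ follows from Assumption~\ref{ass:J_lambda_bound}.

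\textbf{Term $J_3^{\mathrm{split}}$.} This term is uniform in $\theta$ essentially for free. For each fold, Cauchy--Schwarz in whitened coordinates gives
\[
|\langle\Delta w^{(m)},\mu^1(\theta)\rangle_{\cH}|\le\|T_\lambda^{1/2}\Delta w^{(m)}\|_{\cH}\,\|T_\lambda^{-1/2}\mu^1(\theta)\|_{\cH}.
\]
The spectral computation in the proof of Lemma~\ref{lem:Hhat-H-J123} gives $\|T_\lambda^{-1/2}\mu^1(\theta)\|_{\cH}\le\|g_\theta\|_{L^2(\rho_X)}\le G_{Y-f}+G_f$ for all $\theta$. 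On $\sE_m$, Steps~2--6 of the proof of Lemma~\ref{lem:delta_f_L2_bound}, applied to fold $\cI_m$ of size $N/2$, give
\[
\|T_\lambda^{1/2}\Delta w^{(m)}\|_{\cH}\lesssim B_\phi\sqrt{D(\lambda)D(x_0;\lambda)\log N/N}
\]
with failure probability $O(1/N)$. Since $D(x_0;\lambda)\asymp D(\lambda)$ by Proposition~\ref{prop:Dx0-order-H}, this bound is at most of order $D(\lambda)\sqrt{\log N/N}\to0$.

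\textbf{Term $J_2^{\mathrm{split}}$.} I would avoid Bernstein per $\theta$ and use a crude envelope bound. Write $P_n$ for the empirical average over the labeled sample and $P_{\cI_m}$ for the empirical average over unlabeled fold $\cI_m$. Then
\[
\sup_\theta|J_2^{\mathrm{split}}|\le 2(G_{Y-f}+G_f)\Bigl\{\sqrt{P_n(\Delta\overline w)^2}+\textstyle\sum_m\sqrt{P_{\cI_m}(\Delta w^{(3-m)})^2}+\max_m\|\Delta w^{(m)}\|_{L^2(\rho_X)}\Bigr\}.
\]
Conditionally on the training fold, each empirical square is an i.i.d.\ average of bounded variables. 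The sup-norm of $\Delta w^{(m)}$ is bounded via the spectral Cauchy--Schwarz bounds $\|\wh w\|_\infty,\|w\|_\infty\lesssim B_\phi\sqrt{D(\lambda)D(x_0;\lambda)}$. A conditional Bernstein bound (failure probabilities $2/n^2$ and $4/N^2$, matching the statement) then gives $P(\Delta w)^2\le 2\|\Delta w\|_{L^2}^2+O(D(\lambda)D(x_0;\lambda)\log/\text{size})$. By Lemma~\ref{lem:delta_f_L2_bound} both parts vanish under~\eqref{eq:D-regime-J-split}, using $D(\lambda)^2\log N/N\to0$ and $D(x_0;\lambda)\log N/(n\wedge N)\to0$.

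\textbf{Term $J_1$, the main obstacle.} The Bernstein bound of Lemma~\ref{lem:I1_clean}, with envelopes $G_{Y-f},G_f$, gives a pointwise rate $\sqrt{D(x_0;\lambda)\log N/(n\wedge N)}$. Here I have to be careful, because the earlier proof simply asserted the supremum. I would take a finite $\epsilon$-net of the compact interval $\Theta_0$ with $O(N)$ points and apply Bernstein with a union bound, which inflates the log factor only by a constant. Between net points I would use the uniform continuity of $\theta\mapsto\partial_\theta\ell$ on the compact set, via Assumption~\ref{ass:deriv-bdd-jac}, with modulus $\omega(\epsilon)$. This gives $|J_1(\theta)-J_1(\theta')|\le 2\omega(\epsilon)\{P_n|w|+\EE|w|\}$, and $P_n|w|\le\sqrt{P_nw^2}=O_p(\sqrt{Q(x_0;\lambda)})$. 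Choosing $\epsilon$ so that $\omega(\epsilon)\sqrt{D(x_0;\lambda)}\to0$ is the delicate point. If only continuity is available, I would instead rely on the running-example structure, where $\partial_\theta r\equiv0$ and $\partial_\theta u$ is Lipschitz in $\theta$, so that a polynomial-size net suffices.

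Finally, I would sum the failure probabilities, $4/N$ for $\sE$ together with the $J_1$, $J_2^{\mathrm{split}}$ and $J_3^{\mathrm{split}}$ events, to reach $26/N+2/n^2+4/N^2$.
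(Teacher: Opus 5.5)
\textbf{Overall.} Your decomposition matches the paper's. Your $J_3^{\mathrm{split}}$ step is a valid and cleaner alternative to the paper's argument. The whitened Cauchy--Schwarz bound $|\langle\Delta w^{(m)},\mu^1(\theta)\rangle_{\cH}|\le\|T_\lambda^{1/2}\Delta w^{(m)}\|_{\cH}\,\|T_\lambda^{-1/2}\mu^1(\theta)\|_{\cH}$ gives the same order $\sqrt{D(\lambda)D(x_0;\lambda)\log N/N}$ as the paper's scalar-Bernstein-plus-resolvent-remainder argument, and it is uniform in $\theta$ for free.

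\textbf{The gap is in $J_2^{\mathrm{split}}$.} Bounding $|\mathbb P_n[\Delta\overline w\,\partial_\theta r]|\le G_{Y-f}\{\mathbb P_n(\Delta\overline w)^2\}^{1/2}$ discards the centering. The crude sup-norm bound $\|\Delta\overline w\|_\infty\lesssim B_\phi\sqrt{D(\lambda)D(x_0;\lambda)}$ then enters at order $\sqrt{\log n/n}$ rather than $\log n/n$. Your own Bernstein step gives $\mathbb P_n(\Delta\overline w)^2\le 2\|\Delta\overline w\|_{L^2(\rho_X)}^2+O\{D(\lambda)D(x_0;\lambda)\log n/n\}$. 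So the labeled piece is only bounded by $\|\Delta\overline w\|_{L^2(\rho_X)}+O\{D(\lambda)\sqrt{\log n/n}\}$, using $D(x_0;\lambda)\asymp D(\lambda)$.

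Condition \eqref{eq:D-regime-J-split} does not make this vanish. The condition $D(\lambda)^2\log N/N\to0$ involves $N$, not $n$, and the second condition only yields $D(\lambda)\log n/n\to0$. For instance, $N=n^3$ with $D(\lambda)\asymp n^{0.9}$ satisfies both conditions, yet $D(\lambda)\sqrt{\log n/n}\to\infty$. More generally, the undersmoothed regime $\lambda=o(n^{-1})$ of Theorem~\ref{thm:thetahat-asymp-twofold} has $D(\lambda)\gg\sqrt n$ whenever $m\le d$. So your claim that ``both parts vanish'' fails for the labeled average.

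\textbf{How to repair it.} The paper keeps $J_2^{\mathrm{split}}$ as a centered fluctuation conditional on the weight-training folds. Bernstein then gives $G_{Y-f}\|\Delta\overline w\|_{L^2(\rho_X)}\sqrt{2t/n}+\tfrac23G_{Y-f}\|\Delta\overline w\|_\infty t/n\lesssim D(\lambda)(\log n/n+\log N/N)\to0$. Uniformity in $\theta$ then needs the same net device as your $J_1$.

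Alternatively, you can keep your $\theta$-free envelope bound if you sharpen the sup norm via Lemma~\ref{lem:sobolev_rate}. Set $E:=\|T_\lambda^{1/2}\Delta w^{(m)}\|_{\cH}$. Then $\|\Delta w^{(m)}\|_{L^2(\rho_X)}\le E$ and $\|\Delta w^{(m)}\|_{\cH}\le\lambda^{-1/2}E$. Hence $\|\Delta w^{(m)}\|_\infty\lesssim\lambda^{-d/(4m)}E\lesssim D(\lambda)^{3/2}\sqrt{\log N/N}$. The offending term becomes $\lesssim D(\lambda)\sqrt{\log N/N}\cdot\sqrt{D(\lambda)\log n/n}\to0$ under \eqref{eq:D-regime-J-split}.

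\textbf{On $J_1$.} You are right that the paper simply asserts the supremum over $\theta$ from a fixed-$\theta$ Bernstein bound. However, your net argument needs a modulus of continuity of $\theta\mapsto\partial_\theta\ell$ that is uniform in $(x,y)$ (e.g.\ Lipschitz). Assumption~\ref{ass:deriv-bdd-jac} does not supply this, so as written that step is established only under this extra condition or for the running examples.
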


\noindent\textbf{Proof of Theorem~\ref{thm:thetahat-theta0-bound-twofold}.} We provide the following proof.
\begin{proof}[Proof of Theorem~\ref{thm:thetahat-theta0-bound-twofold}]
The moment and Jacobian events assembled in Steps~2--3 satisfy Lemma~\ref{lem:local-root-existence}; hence the locally unique two-fold empirical root exists on the same high-probability event for all sufficiently large sample sizes.
\noindent\textit{Step 1: linear expansion around $\theta_\lambda(x_0)$.}
By definition,
\[
\wh\theta(x_0)-\theta_0(x_0)
=
(\wh\theta(x_0)-\theta_\lambda(x_0))+(\theta_\lambda(x_0)-\theta_0(x_0)).
\]
Since $\wh\theta(x_0)\in\Theta_0$ solves $\wh\eta_\lambda(x_0;\wh\theta(x_0))=0$ and $\theta_\lambda(x_0)\in\Theta_0$ solves $\eta_\lambda(x_0;\theta_\lambda(x_0))=0$, the mean-value theorem yields the existence of some $\wt\theta$ between $\wh\theta(x_0)$ and $\theta_\lambda(x_0)$ such that
\[
0
=
\wh\eta_\lambda(x_0;\wh\theta(x_0))
=
\wh\eta_\lambda(x_0;\theta_\lambda(x_0))
+
\wh J_\lambda(x_0;\wt\theta)\big(\wh\theta(x_0)-\theta_\lambda(x_0)\big),
\]
hence
\begin{equation}
\label{eq:theta-stoch-linear-twofold}
\wh\theta(x_0)-\theta_\lambda(x_0)
=
-\wh J_\lambda(x_0;\wt\theta)^{-1}\wh\eta_\lambda(x_0;\theta_\lambda(x_0)).
\end{equation}

\noindent\textit{Step 2: control $(\wh J_\lambda)^{-1}$ on $\Theta_0$.}
Let $\sE_m$ and $\sE:=\sE_1\cap\sE_2$ be as in Lemma~\ref{lem:A_bound_1/2_sobolev_split}. By that lemma, $\PP(\sE^c)\le\PP(\sE_1^c)+\PP(\sE_2^c)\le4/N$.
By Lemma~\ref{lem:Hhat-H-J123_split}, on $\sE$ and for all sufficiently large $N$, with probability at least $1-\frac{26}{N}-\frac{2}{n^2}-\frac{4}{N^2}$,
\[
\inf_{\theta\in\Theta_0}\big|\wh J_\lambda(x_0;\theta)\big|\ge c_J/2,
\qquad
\sup_{\theta\in\Theta_0}\big|\wh J_\lambda(x_0;\theta)^{-1}\big|\le\frac{2}{c_J}.
\]
Define $\sE_J:=\{\inf_{\theta\in\Theta_0}|\wh J_\lambda(x_0;\theta)|\ge c_J/2\}$.
Then
\begin{equation}
\label{eq:prob-step2-twofold}
\PP(\sE\cap\sE_J)\ge 1-\frac{4}{N}-\frac{26}{N}-\frac{2}{n^2}-\frac{4}{N^2}=1-\frac{30}{N}-\frac{2}{n^2}-\frac{4}{N^2}.
\end{equation}
On $\sE\cap\sE_J$, using \eqref{eq:theta-stoch-linear-twofold},
\begin{equation}
\label{eq:theta-stoch-bound-twofold}
|\wh\theta(x_0)-\theta_\lambda(x_0)|
\le
\frac{2}{c_J}\big|\wh\eta_\lambda(x_0;\theta_\lambda(x_0))\big|.
\end{equation}

\noindent\textit{Step 3: control $\wh\eta_\lambda(x_0;\theta_\lambda(x_0))$ by $I_1+I_2^{\mathrm{split}}+I_3^{\mathrm{split}}$.}
Since $\eta_\lambda(x_0;\theta_\lambda(x_0))=0$, the split-version exact decomposition gives
\[
\wh\eta_\lambda(x_0;\theta_\lambda(x_0))
-\eta_\lambda(x_0;\theta_\lambda(x_0))
=
I_1(x_0;\theta_\lambda(x_0))+I_2^{\mathrm{split}}(x_0;\theta_\lambda(x_0))+I_3^{\mathrm{split}}(x_0;\theta_\lambda(x_0)).
\]
Therefore,
\begin{equation}
\label{eq:Mhat-at-thetalambda-twofold}
\big|\wh\eta_\lambda(x_0;\theta_\lambda(x_0))\big|
\le
|I_1(x_0;\theta_\lambda(x_0))|
+|I_2^{\mathrm{split}}(x_0;\theta_\lambda(x_0))|
+|I_3^{\mathrm{split}}(x_0;\theta_\lambda(x_0))|.
\end{equation}

\noindent\textit{Step 3a: high-probability events and probability bookkeeping.}
Let $\sE_{I_1}$ be the event on which Lemma~\ref{lem:I1_clean} yields its bound at $\theta=\theta_\lambda(x_0)$,
let $\sE_{I_2}$ be the event on which Lemma~\ref{lem:I23_split_bounds_v2}(ii) holds at $\theta=\theta_\lambda(x_0)$,
and let $\sE_{I_3}$ be the event on which Lemma~\ref{lem:I23_split_bounds_v2}(i) holds at $\theta=\theta_\lambda(x_0)$.
Then for all sufficiently large $n,N$,
\[
\PP(\sE_{I_1}^c)\le\frac{8}{n+N},
\qquad
\PP(\sE_{I_2}^c)\le\frac{6}{N}+\frac{2}{n^2}+\frac{4}{N^2},
\qquad
\PP(\sE_{I_3}^c)\le\frac{12}{N}.
\]
Define the global event $\sG:=\sE\cap\sE_J\cap\sE_{I_1}\cap\sE_{I_2}\cap\sE_{I_3}$.
Using \eqref{eq:prob-step2-twofold} and the union bound together with $\frac{8}{n+N}\le\frac{8}{N}$, we get
\[
\PP(\sG)
\ge
1-\left(\frac{30}{N}+\frac{2}{n^2}+\frac{4}{N^2}\right)-\frac{8}{N}-\left(\frac{6}{N}+\frac{2}{n^2}+\frac{4}{N^2}\right)-\frac{12}{N}
=
1-\frac{56}{N}-\frac{4}{n^2}-\frac{8}{N^2}.
\]

\noindent\textit{Step 3b: plug in $I_1$ and $I_3^{\mathrm{split}}$.}
On $\sG$, Lemma~\ref{lem:I1_clean} gives
\begin{equation}
\label{eq:I1-plug-twofold}
|I_1(x_0;\theta_\lambda(x_0))|
\le
2\sqrt{2\log(n+N)}\sqrt{Q(x_0;\lambda)}
\left(\frac{B_{Y-f}(\theta_\lambda(x_0))}{\sqrt n}+\frac{B_f(\theta_\lambda(x_0))}{\sqrt N}\right).
\end{equation}
Also, Lemma~\ref{lem:I23_split_bounds_v2}(i) gives
\begin{equation}
\label{eq:I3-plug-twofold}
|I_3^{\mathrm{split}}(x_0;\theta_\lambda(x_0))|
\le
8\sqrt{2}\kappa\|\eta(\cdot;\theta_\lambda(x_0))\|_{\cH}\sqrt{\frac{D(x_0;\lambda)\log N}{N}}.
\end{equation}

\noindent\textit{Step 3c: absorb $I_2^{\mathrm{split}}$.}
Under Proposition~\ref{prop:Dx0-order-H} and the regime in \eqref{eq:D-regime-J-split}, the bound in Lemma~\ref{lem:I23_split_bounds_v2}(ii) implies that on $\sG$ and for all sufficiently large $n,N$,
\[
|I_2^{\mathrm{split}}(x_0;\theta_\lambda(x_0))|
\le
|I_1(x_0;\theta_\lambda(x_0))|+|I_3^{\mathrm{split}}(x_0;\theta_\lambda(x_0))|.
\]
Therefore, on $\sG$ and for all sufficiently large $n,N$,
\[
|I_1(x_0;\theta_\lambda(x_0))|+|I_2^{\mathrm{split}}(x_0;\theta_\lambda(x_0))|+|I_3^{\mathrm{split}}(x_0;\theta_\lambda(x_0))|
\le
2|I_1(x_0;\theta_\lambda(x_0))|+2|I_3^{\mathrm{split}}(x_0;\theta_\lambda(x_0))|.
\]
Combining with \eqref{eq:theta-stoch-bound-twofold} and \eqref{eq:Mhat-at-thetalambda-twofold} yields, on $\sG$,
\[
|\wh\theta(x_0)-\theta_\lambda(x_0)|
\le
\frac{4}{c_J}\left(|I_1(x_0;\theta_\lambda(x_0))|+|I_3^{\mathrm{split}}(x_0;\theta_\lambda(x_0))|\right).
\]
Plugging in \eqref{eq:I1-plug-twofold} and \eqref{eq:I3-plug-twofold} gives
\begin{equation}
\label{eq:thetahat-thetalambda-twofold}
\begin{aligned}
|\wh\theta(x_0)-\theta_\lambda(x_0)|
\le
&
\frac{8\sqrt{2}}{c_J}\sqrt{Q(x_0;\lambda)\log(n+N)}
\left(\frac{B_{Y-f}(\theta_\lambda(x_0))}{\sqrt n}+\frac{B_f(\theta_\lambda(x_0))}{\sqrt N}\right)
\\
&+
\frac{32\sqrt{2}}{c_J}\kappa\|\eta(\cdot;\theta_\lambda(x_0))\|_{\cH}\sqrt{\frac{D(x_0;\lambda)\log N}{N}}.
\end{aligned}
\end{equation}

\noindent\textit{Step 4: bias bound for $\theta_\lambda(x_0)-\theta_0(x_0)$.}
Since $\eta_\lambda(x_0;\theta_\lambda(x_0))=0$ and $\theta_0(x_0)\in\Theta_0$, a mean-value expansion yields the existence of $\bar\theta$ between $\theta_0(x_0)$ and $\theta_\lambda(x_0)$ such that
\[
0
=
\eta_\lambda(x_0;\theta_\lambda(x_0))
=
\eta_\lambda(x_0;\theta_0(x_0))+J_\lambda(x_0;\bar\theta)\big(\theta_\lambda(x_0)-\theta_0(x_0)\big).
\]
By Assumption~\ref{ass:J_lambda_bound}, $|J_\lambda(x_0;\bar\theta)|\ge c_J$, hence
\[
|\theta_\lambda(x_0)-\theta_0(x_0)|
\le
\frac1{c_J}|\eta_\lambda(x_0;\theta_0(x_0))|.
\]
By the regularization bias bound \eqref{eq:bias-bound},
\begin{equation}
\label{eq:theta-bias-final-twofold}
|\theta_\lambda(x_0)-\theta_0(x_0)|
\le
\frac{1}{c_J}\|\eta(\cdot;\theta_0(x_0))\|_{\cH}\sqrt{\lambda\{D(x_0;\lambda)-Q(x_0;\lambda)\}}.
\end{equation}

On $\sG$, combine \eqref{eq:thetahat-thetalambda-twofold} and \eqref{eq:theta-bias-final-twofold} to obtain \eqref{eq:thetahat-theta0-bound-clean-twofold}. The probability statement is exactly $\PP(\sG)\ge 1-\frac{56}{N}-\frac{4}{n^2}-\frac{8}{N^2}$ from Step 3(a). This completes the proof.
\end{proof}

\noindent\textbf{Proof of Theorem~\ref{thm:thetahat-asymp-twofold}.} We proceed to prove the asymptotic result in Theorem~\ref{thm:thetahat-asymp-twofold}.

\begin{proof}[Proof of Theorem~\ref{thm:thetahat-asymp-twofold}]
First, we prove
\begin{equation}\label{eq:asymp-normal-twofold-center}
\frac{J_\lambda(x_0)\big(\wh\theta(x_0)-\theta_\lambda(x_0)\big)}{\sqrt{V(x_0)}}
\to
N(0,1).
\end{equation}

\noindent\textit{Step 1: local linearization at $\theta_\lambda(x_0)$.}
As in the two-fold upper bound proof (Theorem~\ref{thm:thetahat-theta0-bound-twofold}), a mean-value expansion gives
\begin{equation}\label{eq:lin-asymp-twofold}
\wh\theta(x_0)-\theta_\lambda(x_0)
=
-\wh J_\lambda(x_0;\wt\theta(x_0))^{-1}
\wh \eta_\lambda(x_0;\theta_\lambda(x_0)),
\end{equation}
for some $\wt\theta(x_0)$ between $\wh\theta(x_0)$ and $\theta_\lambda(x_0)$, where
$\wh\eta_\lambda$ is the two-fold estimating equation in Theorem~\ref{thm:thetahat-theta0-bound-twofold}.
Since $\eta_\lambda(x_0;\theta_\lambda(x_0))=0$, the two-fold decomposition yields
\begin{equation}\label{eq:M-decomp-asymp-twofold}
\wh \eta_\lambda(x_0;\theta_\lambda(x_0))
=
I_1(x_0;\theta_\lambda(x_0))
+
I_2^{\mathrm{split}}(x_0;\theta_\lambda(x_0))
+
I_3^{\mathrm{split}}(x_0;\theta_\lambda(x_0)).
\end{equation}

\noindent\textit{Step 2: Jacobian stability.}
Under the rate conditions \eqref{eq:cond-clean-twofold}, the split Jacobian estimator is stable on $\Theta_0$ and invertible with probability tending to one. In particular,
\begin{equation}\label{eq:Hhat-inv-to-Hinv-twofold}
\wh J_\lambda(x_0;\wt\theta(x_0))^{-1}\to_p J_\lambda(x_0)^{-1},
\qquad
\sup_{\theta\in\Theta_0}\big|\wh J_\lambda(x_0;\theta)^{-1}\big|=O_p(1).
\end{equation}
Combining \eqref{eq:lin-asymp-twofold} and \eqref{eq:M-decomp-asymp-twofold},
\begin{equation}\label{eq:lin-asymp-2-twofold}
\wh\theta(x_0)-\theta_\lambda(x_0)
=
-\wh J_\lambda(x_0;\wt\theta(x_0))^{-1}I_1(x_0;\theta_\lambda(x_0))
+R_{n,N},
\end{equation}
where
\[
R_{n,N}
:=
-\wh J_\lambda(x_0;\wt\theta(x_0))^{-1}
\Big(I_2^{\mathrm{split}}(x_0;\theta_\lambda(x_0))+I_3^{\mathrm{split}}(x_0;\theta_\lambda(x_0))\Big).
\]
Thus, to prove \eqref{eq:asymp-normal-twofold-center}, it suffices to show
\begin{equation}\label{eq:goal-CLT-I1-twofold}
\frac{I_1(x_0;\theta_\lambda(x_0))}{\sqrt{V(x_0)}}\to N(0,1),
\end{equation}
and
\begin{equation}\label{eq:goal-negligible-twofold}
\frac{I_2^{\mathrm{split}}(x_0;\theta_\lambda(x_0))}{\sqrt{V(x_0)}}=o_p(1),
\qquad
\frac{I_3^{\mathrm{split}}(x_0;\theta_\lambda(x_0))}{\sqrt{V(x_0)}}=o_p(1).
\end{equation}

\noindent\textit{Step 3: Lindeberg--Feller CLT for $I_1/\sqrt{V(x_0)}$.}
Since $I_1$ remains unaffected by the two-fold splitting procedure, \eqref{eq:goal-CLT-I1-twofold} has already been established in the proof of Theorem~\ref{thm:thetahat-asymp-new} in Appendix~\ref{app:proofs-of-PPCI-nosplit}.

\noindent\textit{Step 4: $I_2^{\mathrm{split}}$ and $I_3^{\mathrm{split}}$ are negligible.}
By Lemma~\ref{lem:V-lb}, $\sqrt{V(x_0)}\gtrsim \sqrt{D(x_0;\lambda)/n}$.
By Lemma~\ref{lem:I23_split_bounds_v2}(i),
\[
|I_3^{\mathrm{split}}(x_0;\theta_\lambda(x_0))|
=
O_p\left(\kappa\|\eta(\cdot;\theta_\lambda(x_0))\|_{\cH}\sqrt{\frac{D(x_0;\lambda)\log N}{N}}\right),
\]
hence
\[
\frac{|I_3^{\mathrm{split}}(x_0;\theta_\lambda(x_0))|}{\sqrt{V(x_0)}}
=
O_p\left(\kappa\|\eta(\cdot;\theta_\lambda(x_0))\|_{\cH}\sqrt{\frac{n\log N}{N}}\right)
\to 0
\]
by \eqref{eq:cond-clean-twofold}.
By Lemma~\ref{lem:I23_split_bounds_v2}(ii),
\[
|I_2^{\mathrm{split}}(x_0;\theta_\lambda(x_0))|
=
O_p\left(
B_\phi\sqrt{D(\lambda)D(x_0;\lambda)}
\left(\frac{\log n}{n}+\frac{\log N}{N}\right)
\right),
\]
and therefore
\[
\frac{|I_2^{\mathrm{split}}(x_0;\theta_\lambda(x_0))|}{\sqrt{V(x_0)}}
=
O_p\left(
B_\phi\sqrt{D(\lambda)}
\left(\frac{\log n}{\sqrt n}+\frac{\sqrt n\log N}{N}\right)
\right)
\to 0
\]
by \eqref{eq:cond-clean-twofold}. This proves \eqref{eq:goal-negligible-twofold}.

\noindent\textit{Step 5: conclude.}
Multiply \eqref{eq:lin-asymp-2-twofold} by $J_\lambda(x_0)/\sqrt{V(x_0)}$:
\begin{align*}
\frac{J_\lambda(x_0)\big(\wh\theta(x_0)-\theta_\lambda(x_0)\big)}{\sqrt{V(x_0)}}
=&
-\frac{J_\lambda(x_0)}{\wh J_\lambda(x_0;\wt\theta(x_0))}
\frac{I_1(x_0;\theta_\lambda(x_0))}{\sqrt{V(x_0)}}
\\
&-
\frac{J_\lambda(x_0)}{\wh J_\lambda(x_0;\wt\theta(x_0))}
\frac{I_2^{\mathrm{split}}(x_0;\theta_\lambda(x_0))+I_3^{\mathrm{split}}(x_0;\theta_\lambda(x_0))}{\sqrt{V(x_0)}}.
\end{align*}
By \eqref{eq:Hhat-inv-to-Hinv-twofold},
$\frac{J_\lambda(x_0)}{\wh J_\lambda(x_0;\wt\theta(x_0))}\to_p 1$.
By \eqref{eq:goal-CLT-I1-twofold},
$\frac{I_1(x_0;\theta_\lambda(x_0))}{\sqrt{V(x_0)}}\to N(0,1)$.
By \eqref{eq:goal-negligible-twofold},
$\frac{I_2^{\mathrm{split}}(x_0;\theta_\lambda(x_0))+I_3^{\mathrm{split}}(x_0;\theta_\lambda(x_0))}{\sqrt{V(x_0)}}=o_p(1)$.
Slutsky's theorem yields \eqref{eq:asymp-normal-twofold-center}.

\noindent\textit{Step 6: bias is negligible.} Following the same arguments as in Step 6 of the proof of Theorem~\ref{thm:thetahat-asymp-new} in Appendix~\ref{app:proofs-of-PPCI-nosplit}, we obtain
\[
\frac{J_\lambda(x_0)\big(\wh\theta(x_0)-\theta_0(x_0)\big)}{\sqrt{V(x_0)}}
=
\frac{J_\lambda(x_0)\big(\wh\theta(x_0)-\theta_\lambda(x_0)\big)}{\sqrt{V(x_0)}}+o(1),
\]
and combining with \eqref{eq:asymp-normal-twofold-center} completes the proof.
\end{proof}

\noindent\textbf{Proof of Corollary~\ref{cor:CI-coverage-twofold}.} We also provide the following proof.

\begin{proof}[Proof of Corollary~\ref{cor:CI-coverage-twofold}]
Lemma~\ref{lem:variance-ratio}, applied conditionally fold by fold, gives
$\wh V(x_0)/V(x_0)\to_p1$ and $\wh J_\lambda(x_0)/J_\lambda(x_0)\to_p1$.
Combining these ratios with Theorem~\ref{thm:thetahat-asymp-twofold} and Slutsky's theorem yields
\[
\frac{\wh J_\lambda(x_0)\{\wh\theta(x_0)-\theta_0(x_0)\}}
{\sqrt{\wh V(x_0)}}\rightsquigarrow N(0,1),
\]
which is equivalent to the asserted coverage.
\end{proof}

\noindent\textbf{Proof of Theorem~\ref{thm:minimax-pointwise}.} In the following, we prove the minimax result (Theorem~\ref{thm:minimax-pointwise}) in Section~\ref{sec:theory}.
We first recall the following useful lemma from~\cite{Tsybakov_2009}, and then give the proof of Theorem~\ref{thm:minimax-pointwise}.
\begin{lemma}[Section~2.2 in~\cite{Tsybakov_2009}]\label{lem:lecam-minimax}
Let $P_0,P_1$ be two distributions and let $\theta(P)$ be a real-valued parameter.
Write $\Delta:=|\theta(P_1)-\theta(P_0)|$.
Then for any estimator $\wt\theta$,
\[
\max\Big\{\EE_{P_0}\big[(\wt\theta-\theta(P_0))^2\big],\ \EE_{P_1}\big[(\wt\theta-\theta(P_1))^2\big]\Big\}
\ \ge\
\frac{\Delta^2}{4}\Big(1-\TV(P_0,P_1)\Big).
\]
Moreover, by Pinsker's inequality, $\TV(P_0,P_1)\le \sqrt{\KL(P_0\|P_1)/2}$.
\end{lemma}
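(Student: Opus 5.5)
The plan is the standard two-point argument, applied directly to the squared loss; no test construction is needed. Fix an estimator $\wt\theta$. Write $\theta_j:=\theta(P_j)$ and $L_j(t):=(t-\theta_j)^2$ for $j=0,1$. Let $\nu$ be a dominating measure, for instance $\nu=P_0+P_1$, with densities $p_j=dP_j/d\nu$.

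The first step is a pointwise inequality. For every real $t$, the parallelogram inequality $a^2+b^2\ge (a-b)^2/2$ with $a=t-\theta_0$ and $b=t-\theta_1$ gives
\[
L_0(t)+L_1(t)\ \ge\ \tfrac12(\theta_1-\theta_0)^2=\tfrac{\Delta^2}{2}.
\]
Both losses are nonnegative, so
\[
\EE_{P_0}[L_0(\wt\theta)]+\EE_{P_1}[L_1(\wt\theta)]
=\int\{L_0(\wt\theta)p_0+L_1(\wt\theta)p_1\}\,d\nu
\ \ge\ \int \min(p_0,p_1)\{L_0(\wt\theta)+L_1(\wt\theta)\}\,d\nu
\ \ge\ \frac{\Delta^2}{2}\int\min(p_0,p_1)\,d\nu .
\]

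The second step identifies the overlap integral. The identity $\min(a,b)=\tfrac12(a+b-|a-b|)$ gives
\[
\int\min(p_0,p_1)\,d\nu=1-\tfrac12\int|p_0-p_1|\,d\nu=1-\TV(P_0,P_1).
\]
The maximum of two numbers is at least their average. Combining this with the display above yields
\[
\max_{j}\EE_{P_j}[L_j(\wt\theta)]\ \ge\ \frac{\Delta^2}{4}\bigl(1-\TV(P_0,P_1)\bigr).
\]
If $\wt\theta$ also depends on external randomization, the same bound holds conditionally on that randomization and then after averaging over it.

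The Pinsker clause is the classical inequality $\TV\le\sqrt{\KL/2}$; I would cite it rather than reprove it. The only point needing care is the measure-theoretic bookkeeping: the choice of a common dominating measure, and checking that the bound holds even when some risks are infinite. Neither step is a real obstacle.

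In the application to Theorem~\ref{thm:minimax-pointwise}, $P_j$ will be the joint law of the $n$ labeled pairs and the $N$ unlabeled covariates. The two hypotheses will share the same $\rho_X$, so the unlabeled sample contributes nothing to the KL divergence. This is why the bound holds for every $N\ge0$.
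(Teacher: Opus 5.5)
Your argument is correct and self-contained. The paper gives no proof of its own; it only cites Section~2.2 of Tsybakov, whose general reduction scheme takes a different route. That route applies Markov's inequality at level $s=\Delta/2$, passes to the minimum-distance test $\psi=\arg\min_{j}|\wt\theta-\theta_j|$, and then bounds the testing error by $\inf_\psi\max_j P_j(\psi\neq j)\ge\{1-\TV(P_0,P_1)\}/2$. Run directly, that reduction yields only $\max_j\EE_{P_j}[(\wt\theta-\theta_j)^2]\ge\frac{\Delta^2}{8}\{1-\TV(P_0,P_1)\}$. The reason is that on $\{\psi\neq j\}$ it credits just one of the two losses with $s^2=\Delta^2/4$.

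Your pointwise inequality $L_0(t)+L_1(t)\ge\Delta^2/2$, integrated against $\min(p_0,p_1)$, uses both losses at once. It therefore gives exactly the constant $\Delta^2/4$ stated in the lemma. This is the constant the paper relies on to get $\frac34\delta_n^2D(x_0;\lambda_n)$ in the proof of Theorem~\ref{thm:minimax-pointwise}; the test-based route would give $3/8$ there, which is harmless for the rate. The trade-off is generality: the testing reduction works for any semi-distance loss and extends to many hypotheses via Fano, whereas your argument relies on a pointwise two-point inequality of the kind squared loss satisfies.

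One small point concerns your randomization remark. Average the summed inequality $\EE_{P_0}[L_0(\wt\theta)]+\EE_{P_1}[L_1(\wt\theta)]\ge\frac{\Delta^2}{2}\{1-\TV(P_0,P_1)\}$ over the external randomization, not the maximum. A lower bound on $\max_j R_j(u)$ holding for each $u$ does not transfer to $\max_j\int R_j(u)\,d\mu(u)$. Alternatively, work with $P_j\otimes\mu$, which leaves the total variation unchanged.
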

\begin{proof}[Proof of Theorem~\ref{thm:minimax-pointwise}]
It suffices to restrict $\cP$ to a Gaussian conditional-mean submodel with estimating function $\ell(y;\theta)=y-\theta$. Fix one admissible covariate distribution and kernel satisfying Assumption~\ref{ass:eigenfunctions_bound}, set $Y=m(X)+\varepsilon$ with $\varepsilon\sim N(0,\underline\sigma^2)$ independent of $X$, and take the same auxiliary predictor $f\equiv0$ under both alternatives. Then Assumption~\ref{ass:homo_res} holds with equality, while the unlabeled marginal and the supplied predictor are identical under the two alternatives. Let $\lambda_n=c_\lambda/n$ for a fixed $c_\lambda>0$, and define $h_n=w_{x_0,\lambda_n}/\sqrt{D(x_0;\lambda_n)}$.
The energy identity
$D(x_0;\lambda)=Q(x_0;\lambda)+\lambda\|w_{x_0,\lambda}\|_{\cH}^2$
implies $\EE\{h_n(X)^2\}=Q(x_0;\lambda_n)/D(x_0;\lambda_n)\le1$ and
$\|h_n\|_{\cH}\le\lambda_n^{-1/2}$. Moreover,
$h_n(x_0)=\sqrt{D(x_0;\lambda_n)}\asymp n^{d/(4m)}$ by Proposition~\ref{prop:Dx0-order-H}.

Set $\delta_n=c_0/\sqrt n$ and consider the alternatives
$m_\pm=\pm\delta_n h_n$, with $c_0>0$ chosen below. Their centered conditional moments at the target satisfy
\[
\|m_\pm(\cdot)-m_\pm(x_0)\|_{\cH}
\le \delta_n\{\|h_n\|_{\cH}+|h_n(x_0)|\|1\|_{\cH}\}
\le \frac{c_0}{\sqrt{c_\lambda}}+O\{c_0n^{-1/2+d/(4m)}\}.
\]
Since $m>d/2$ and $1\in H^m(\cX)$, choosing $c_0$ sufficiently small places both alternatives in $\cP$ for all large $n$. The unlabeled observations have exactly the same distribution under the two alternatives. For the labeled observations,
\[
\KL(P_+^{(n,N)}\|P_-^{(n,N)})
=\frac{2n\delta_n^2}{\underline\sigma^2}\EE\{h_n(X)^2\}
\le\frac{2c_0^2}{\underline\sigma^2}.
\]
Choose $c_0$ smaller if necessary so that Pinsker's inequality gives $\TV(P_+^{(n,N)},P_-^{(n,N)})\le1/4$. The pointwise separation is
$2\delta_nh_n(x_0)=2\delta_n\sqrt{D(x_0;\lambda_n)}$. Lemma~\ref{lem:lecam-minimax} and Proposition~\ref{prop:Dx0-order-H} therefore yield
\[
\inf_{\widetilde\theta}\sup_{P\in\cP}
\EE_P\{\widetilde\theta-\theta_0(x_0)\}^2
\ge \frac34\delta_n^2D(x_0;\lambda_n)
\gtrsim n^{-1+d/(2m)},
\]
which proves the claim.
\end{proof}

\noindent\textbf{Proof of Proposition~\ref{prop:budget_opt}.} We prove the optimal sampling result in Section~\ref{sec:practical}.

\begin{proof}[Proof of Proposition~\ref{prop:budget_opt}]
By the Cauchy-Schwarz inequality, we have
\begin{align*}
V(x_0) \cdot C &= \left( \frac{\sigma^2_{Y-f}}{n} + \frac{\sigma^2_{f}}{N} \right) (c_l n + c_u N) \\
&\ge \left( \frac{\sigma_{Y-f}}{\sqrt{n}} \sqrt{c_l n} + \frac{\sigma_{f}}{\sqrt{N}} \sqrt{c_u N} \right)^2 = \left( \sqrt{\sigma^2_{Y-f}\,c_l} + \sqrt{\sigma^2_{f}\,c_u} \right)^2.
\end{align*}
Dividing both sides by $C$ yields the minimum variance lower bound $V_{\min}(C)$. The equality condition for Cauchy-Schwarz holds if and only if
\[
\frac{\sigma_{Y-f} / \sqrt{n}}{\sqrt{c_l n}} = \frac{\sigma_{f} / \sqrt{N}}{\sqrt{c_u N}}
\quad \implies \quad
\frac{n}{N} = \frac{\sigma_{Y-f} / \sqrt{c_l}}{\sigma_{f} / \sqrt{c_u}}.
\]
Substituting this back into the active budget constraint $c_l n + c_u N = C$ yields the optimal sample allocations $n^\star$ and $N^\star$, completing the proof.
\end{proof}

\subsection{Proofs of Technical Lemmas in Appendix~\ref{app:proofs-of-PPCI}}\label{app:proofs-of-lemmas}
\begin{proof}[Proof of Lemma~\ref{lem:I23_split_bounds_v2}]
\noindent\textit{Step 1.}
From the definition of $\wh\eta_\lambda$ and $\wt\eta_\lambda$,
\[
\wh\eta_\lambda(x_0;\theta)-\wt\eta_\lambda(x_0;\theta)
=
\frac1n\sum_{i=1}^n \Delta\overline w(X_i)r(Y_i,X_i;\theta)
+
\frac1N\sum_{m=1}^2\sum_{u\in\cI_m}\Delta w^{(3-m)}(\wt X_u)u(\wt X_u;\theta).
\]
Also $\mu(\theta)=\EE[\ell(Y;\theta)K_X]=\EE[r(Y,X;\theta)K_X]+\EE[u(X;\theta)K_X]$, hence
\[
I_3^{\mathrm{split}}(x_0;\theta)
=
\EE[\Delta\overline w(X)r(Y,X;\theta)]+\EE[\Delta\overline w(X)u(X;\theta)].
\]
Since $|\cI_1|=|\cI_2|=N/2$ and $\Delta\overline w=\frac12(\Delta w^{(1)}+\Delta w^{(2)})$,
\[
\EE[\Delta\overline w(X)u(X;\theta)]
=
\frac12\sum_{m=1}^2 \EE[\Delta w^{(m)}(X)u(X;\theta)].
\]
On the other hand, for each $m$, conditional on the training fold $\{\wt X_u:u\in\cI_{3-m}\}$,
the function $\Delta w^{(3-m)}$ is measurable and $\{\wt X_u:u\in\cI_m\}$ are i.i.d. and independent of that training fold, so
\[
\EE\left[\frac{2}{N}\sum_{u\in\cI_m}\Delta w^{(3-m)}(\wt X_u)u(\wt X_u;\theta)\Big|\{\wt X_u:u\in\cI_{3-m}\}\right]
=
\EE[\Delta w^{(3-m)}(X)u(X;\theta)].
\]
Averaging over $m=1,2$ shows that subtracting $I_3^{\mathrm{split}}$ from $\wh\eta_\lambda-\wt\eta_\lambda$
indeed removes the (conditional) mean, leaving a sum of centered empirical fluctuations. This is exactly $I_2^{\mathrm{split}}$ by definition.

\noindent\textit{Step 2: $\|\Delta w^{(m)}\|_\infty$ bound on $\sE_m$.}
Fix $m\in\{1,2\}$ and work on $\sE_m$.
From the whitening identity,
\[
(\wh T_K^{(m)}+\lambda I)^{-1}
=
(T_K+\lambda I)^{-1/2}\big(I+A^{(m)}(\lambda)\big)^{-1}(T_K+\lambda I)^{-1/2},
\]
so on $\sE_m$, $\|(I+A^{(m)}(\lambda))^{-1}\|_{\op}\le 2$ implies the operator inequality
$(\wh T_K^{(m)}+\lambda I)^{-1}\preceq 2(T_K+\lambda I)^{-1}$.
Therefore, for any $x\in\cX$,
\begin{align*}
|\wh w^{(m)}_{x_0,\lambda}(x)|
&=
|\langle K_x,(\wh T_K^{(m)}+\lambda I)^{-1}K_{x_0}\rangle_{\cH}|
\\&\le
\sqrt{\langle K_x,(\wh T_K^{(m)}+\lambda I)^{-1}K_x\rangle_{\cH}}
\sqrt{\langle K_{x_0},(\wh T_K^{(m)}+\lambda I)^{-1}K_{x_0}\rangle_{\cH}}
\le
2\sqrt{D(x;\lambda)D(x_0;\lambda)}.
\end{align*}
Also $|w_{x_0,\lambda}(x)|\le \sqrt{D(x;\lambda)D(x_0;\lambda)}$.
Hence on $\sE_m$,
\[
\|\Delta w^{(m)}\|_\infty
\le
3\sup_{x\in\cX}\sqrt{D(x;\lambda)D(x_0;\lambda)}
\le
3B_\phi\sqrt{D(\lambda)D(x_0;\lambda)},
\]
where we used Assumption~\ref{ass:eigenfunctions_bound} to get $\sup_x D(x;\lambda)\le B_\phi^2D(\lambda)$.
On $\sE=\sE_1\cap\sE_2$ this implies
\[
\|\Delta\overline w\|_\infty
\le
\frac12(\|\Delta w^{(1)}\|_\infty+\|\Delta w^{(2)}\|_\infty)
\le
3B_\phi\sqrt{D(\lambda)D(x_0;\lambda)}.
\]

\noindent\textit{Step 3: $L^2$ bounds for $\Delta w^{(m)}$ and $\Delta\overline w$.}
Apply Lemma~\ref{lem:delta_f_L2_bound} fold-wise with $N'=|\cI_m|=N/2$ and $\wh T_K=\wh T_K^{(m)}$.
Then on $\sE_m$, with probability at least $1-(2|\cI_m|)^{-1}=1-\frac1N$,
\[
\|\Delta w^{(m)}\|_{L^2(\rho_X)}
\le
c_0B_\phi\sqrt{\frac{D(\lambda)D(x_0;\lambda)\log|\cI_m|}{|\cI_m|}}
\le
c_0B_\phi\sqrt{\frac{2D(\lambda)D(x_0;\lambda)\log N}{N}}.
\]
By the triangle inequality,
\[
\|\Delta\overline w\|_{L^2(\rho_X)}
\le
\frac12\big(\|\Delta w^{(1)}\|_{L^2}+\|\Delta w^{(2)}\|_{L^2}\big)
\le
c_0B_\phi\sqrt{\frac{2D(\lambda)D(x_0;\lambda)\log N}{N}}
\]
on the intersection of the two fold-wise $L^2$ events. A union bound makes their total failure probability $\le 2/N$.

\noindent\textit{Step 4: bound $I_3^{\mathrm{split}}$.}
Since $I_3^{\mathrm{split}}=\frac12\sum_{m=1}^2\langle \Delta w^{(m)},\mu(\theta)\rangle_{\cH}$, $
|I_3^{\mathrm{split}}|
\le
\frac12\sum_{m=1}^2 |\langle \Delta w^{(m)},\mu(\theta)\rangle_{\cH}|$.
For each fixed $m$, Lemma~\ref{lem:I3-clean} applies verbatim to the fold operator $\wh T_K^{(m)}$
with sample size $N'=|\cI_m|=N/2$ and stability event $\sE_m$.
Thus (using $\log(N/2)\le \log N$) we obtain, for all sufficiently large $N$,
\[
|\langle \Delta w^{(m)},\mu(\theta)\rangle_{\cH}|
\le
8\kappa\|\eta(\cdot;\theta)\|_{\cH}\sqrt{\frac{D(x_0;\lambda)\log(N/2)}{N/2}}
\le
8\sqrt{2}\kappa\|\eta(\cdot;\theta)\|_{\cH}\sqrt{\frac{D(x_0;\lambda)\log N}{N}}
\]
with conditional failure probability at most $2(N')^{-1}=4/N$ on $\sE_m$.
Unconditionally, for each $m$, $\PP(\text{the above bound fails})\le \PP(\sE_m^c)+\frac{4}{N}$. By Lemma~\ref{lem:A_bound_1/2_sobolev_split}, $\PP(\sE_m^c)\le \frac{2}{N}$.
Therefore the failure probability for each fold is at most $\frac{6}{N}$, and by a union bound over $m=1,2$,
the bound for both folds holds with probability at least $1-\frac{12}{N}$.
Averaging over the two folds gives (i).

\noindent\textit{Step 5: labeled fluctuation bound for $I_{2,R}^{\mathrm{split}}$.}
Define the labeled centered fluctuation
\[
I_{2,R}^{\mathrm{split}}
:=
\frac1n\sum_{i=1}^n \Delta\overline w(X_i)r(Y_i,X_i;\theta)-\EE[\Delta\overline w(X)r(Y,X;\theta)].
\]
Condition on all unlabeled covariates (so $\Delta\overline w$ is fixed).
Then the summands are i.i.d. mean-zero, bounded by
$|\Delta\overline w(X_i)r(Y_i,X_i;\theta)|\le B_{Y-f}(\theta)\|\Delta\overline w\|_\infty$,
and their conditional variance is at most $B_{Y-f}^2(\theta)\|\Delta\overline w\|_{L^2(\rho_X)}^2$.
Bernstein's inequality in the form
\[
\PP\left(
\left|\frac1n\sum_{i=1}^n Z_i\right|
\ge
\sqrt{\frac{2\sigma^2 t}{n}}+\frac{2Mt}{3n}
\Big|\Delta\overline w
\right)\le 2e^{-t}
\]
with $\sigma^2=B_{Y-f}^2\|\Delta\overline w\|_{L^2}^2$ and $M=B_{Y-f}\|\Delta\overline w\|_\infty$
yields: for any $t\ge 1$, with conditional probability at least $1-2e^{-t}$,
\[
|I_{2,R}^{\mathrm{split}}|
\le
B_{Y-f}(\theta)\|\Delta\overline w\|_{L^2}\sqrt{\frac{2t}{n}}
+
\frac{2}{3}B_{Y-f}(\theta)\|\Delta\overline w\|_\infty\frac{t}{n}.
\]
Take $t:=2\log n$ so $2e^{-t}\le 2n^{-2}$.
On $\sE$ and the $L^2$ events from Step 3, and using Step 2 for $\|\Delta\overline w\|_\infty$, we get
\[
|I_{2,R}^{\mathrm{split}}|
\le
B_{Y-f}(\theta)c_0B_\phi\sqrt{\frac{2D(\lambda)D(x_0;\lambda)\log N}{N}}
\sqrt{\frac{4\log n}{n}}
+
\frac{2}{3}B_{Y-f}(\theta)3B_\phi\sqrt{D(\lambda)D(x_0;\lambda)}\frac{2\log n}{n}.
\]
Using $\sqrt{\frac{2\log N}{N}}\sqrt{\frac{4\log n}{n}} = \sqrt{8\frac{\log n\log N}{nN}}$, which we bound via $\sqrt{ab}\le \frac{a+b}{2}$ with $a=\frac{4\log n}{n}$ and $b=\frac{2\log N}{N}$:
\[
\sqrt{\frac{8\log n\log N}{nN}}
\le
\frac12\left(\frac{4\log n}{n}+\frac{2\log N}{N}\right)
=
\frac{2\log n}{n}+\frac{\log N}{N}.
\]
Thus, on the same event and with failure probability at most $2n^{-2}$,
\begin{align*}
|I_{2,R}^{\mathrm{split}}|
&\le
B_{Y-f}(\theta)B_\phi\sqrt{D(\lambda)D(x_0;\lambda)}
\left[
c_0\left(\frac{2\log n}{n}+\frac{\log N}{N}\right)
+
4\frac{\log n}{n}
\right]
\\&=
B_{Y-f}(\theta)B_\phi\sqrt{D(\lambda)D(x_0;\lambda)}
\left[
(2c_0+4)\frac{\log n}{n}
+c_0\frac{\log N}{N}
\right].
\end{align*}

\noindent\textit{Step 6: unlabeled fluctuation bound for $I_{2,U}^{\mathrm{split}}$.}
Define
\[
I_{2,U}^{\mathrm{split}}
:=
\frac12\sum_{m=1}^2
\left\{
\frac{2}{N}\sum_{u\in\cI_m}\Delta w^{(3-m)}(\wt X_u)u(\wt X_u;\theta)
-
\EE[\Delta w^{(3-m)}(X)u(X;\theta)]
\right\}.
\]
Fix $m$ and condition on the training fold $\{\wt X_u:u\in\cI_{3-m}\}$.
Then $\Delta w^{(3-m)}$ is fixed and $\{\wt X_u:u\in\cI_m\}$ are i.i.d.
As above, $|u(\cdot;\theta)|\le B_f(\theta)$.
Bernstein with $|\cI_m|=N/2$ gives: for any $t\ge 1$, with conditional probability at least $1-2e^{-t}$,
\begin{align*}
|\Delta_m|&=\left|
\frac{2}{N}\sum_{u\in\cI_m}\Delta w^{(3-m)}(\wt X_u)u(\wt X_u;\theta)
-
\EE[\Delta w^{(3-m)}(X)u(X;\theta)]
\right|
\\&\le
B_f(\theta)\|\Delta w^{(3-m)}\|_{L^2}\sqrt{\frac{2t}{|\cI_m|}}
+
\frac{2}{3}B_f(\theta)\|\Delta w^{(3-m)}\|_\infty\frac{t}{|\cI_m|}.
\end{align*}
Take $t:=2\log N$ so $2e^{-t}\le 2N^{-2}$.
On $\sE$ and the fold-wise $L^2$ events from Step 3, and using Step 2 for $\|\Delta w^{(3-m)}\|_\infty$, we obtain
\[
\left|\Delta_m\right|
\le
B_f(\theta)c_0B_\phi\sqrt{\frac{2D(\lambda)D(x_0;\lambda)\log N}{N}}
\sqrt{\frac{4\log N}{N/2}}
+
\frac{2}{3}B_f(\theta)3B_\phi\sqrt{D(\lambda)D(x_0;\lambda)}\frac{2\log N}{N/2}.
\]
Since $\sqrt{\frac{4\log N}{N/2}}=\sqrt{\frac{8\log N}{N}}$ and $\frac{2\log N}{N/2}=\frac{4\log N}{N}$, the variance term becomes $\sqrt{\frac{2\log N}{N}}\sqrt{\frac{8\log N}{N}} = \sqrt{16}\frac{\log N}{N} = 4\frac{\log N}{N}$.
This yields
\begin{align*}
\left|\Delta_m\right|
&\le
4c_0B_f(\theta)B_\phi\sqrt{D(\lambda)D(x_0;\lambda)}\frac{\log N}{N}
+
8B_f(\theta)B_\phi\sqrt{D(\lambda)D(x_0;\lambda)}\frac{\log N}{N}
\\&=
(4c_0+8)B_f(\theta)B_\phi\sqrt{D(\lambda)D(x_0;\lambda)}\frac{\log N}{N}.
\end{align*}
A union bound over $m=1,2$ makes the total failure probability for these two Bernstein events at most $4N^{-2}$.
Finally, since $|I_{2,U}^{\mathrm{split}}|\le \frac12\sum_{m=1}^2|\Delta_m|$, the same bound holds for $|I_{2,U}^{\mathrm{split}}|$.

\noindent\textit{Step 7: conclude.}
On the intersection of:
(i) $\sE$ (failure probability $\le 4/N$ since $\PP(\sE_m^c)\le 2/N$ by Lemma~\ref{lem:A_bound_1/2_sobolev_split}),
(ii) the two fold-wise $L^2$ events in Step 3 (failure probability $\le 2/N$),
(iii) the labeled Bernstein event in Step 5 (failure probability $\le 2n^{-2}$),
(iv) the two unlabeled Bernstein events in Step 6 (failure probability $\le 4N^{-2}$),
we have
\begin{align*}
|I_2^{\mathrm{split}}|
&\le
|I_{2,R}^{\mathrm{split}}|+|I_{2,U}^{\mathrm{split}}|
\\&\le
B_\phi\sqrt{D(\lambda)D(x_0;\lambda)}
\left[
(2c_0+4)B_{Y-f}(\theta)\frac{\log n}{n}
+
\Big(c_0B_{Y-f}(\theta)+(4c_0+8)B_f(\theta)\Big)\frac{\log N}{N}
\right].
\end{align*}
The claimed success probability follows by the union bound:
$
1-\left(\frac{4}{N}+\frac{2}{N}+\frac{2}{n^2}+\frac{4}{N^2}\right)
=
1-\frac{6}{N}-\frac{2}{n^2}-\frac{4}{N^2}$.
This completes the proof of (ii), and (i) was proved in Step 4.
\end{proof}
\begin{proof}[Proof of Lemma~\ref{lem:Hhat-H-J123_split}]
\noindent\textit{Step 1: decomposition into $J_1^{\mathrm{split}}+J_2^{\mathrm{split}}+J_3^{\mathrm{split}}$.}
Define
\begin{align*}
J_\lambda(x_0;\theta)
&:=
\EE\big[w_{x_0,\lambda}(X)\partial_\theta r(Y,X;\theta)\big]
+
\EE\big[w_{x_0,\lambda}(X)\partial_\theta u(X;\theta)\big],\\
J_1^{\mathrm{split}}(x_0;\theta)
&:=
\Big\{\frac1n\sum_{i=1}^n w_{x_0,\lambda}(X_i)\partial_\theta r(Y_i,X_i;\theta)-\EE[w_{x_0,\lambda}(X)\partial_\theta r(Y,X;\theta)]\Big\}\\
&\hphantom{:=}+
\Big\{\frac1N\sum_{u=1}^N w_{x_0,\lambda}(\wt X_u)\partial_\theta u(\wt X_u;\theta)-\EE[w_{x_0,\lambda}(X)\partial_\theta u(X;\theta)]\Big\},\\
J_3^{\mathrm{split}}(x_0;\theta)
&:=
\EE[\Delta\overline w(X)\partial_\theta r(Y,X;\theta)]
+
\EE[\Delta\overline w(X)\partial_\theta u(X;\theta)],\\
J_2^{\mathrm{split}}(x_0;\theta)
&:=
\Big\{\frac1n\sum_{i=1}^n \Delta\overline w(X_i)\partial_\theta r(Y_i,X_i;\theta)\Big\}
+
\Big\{\frac1N\sum_{m=1}^2\sum_{u\in\cI_m}\Delta w^{(3-m)}(\wt X_u)\partial_\theta u(\wt X_u;\theta)\Big\}
-J_3^{\mathrm{split}}(x_0;\theta).
\end{align*}
Then for every $\theta$,
\[
\wh J_\lambda(x_0;\theta)-J_\lambda(x_0;\theta)
=
J_1^{\mathrm{split}}(x_0;\theta)+J_2^{\mathrm{split}}(x_0;\theta)+J_3^{\mathrm{split}}(x_0;\theta).
\]

\noindent\textit{Step 2: uniform bound for $J_1^{\mathrm{split}}$.}
By Assumption~\ref{ass:deriv-bdd-jac}, for all $\theta\in\Theta_0$, $|\partial_\theta r(Y,X;\theta)|\le G_{Y-f}$ and $|\partial_\theta u(X;\theta)|\le G_f$.
Also $\EE[w_{x_0,\lambda}(X)^2]\le D(x_0;\lambda)$ and $\|w_{x_0,\lambda}\|_\infty\le B_\phi\sqrt{D(\lambda)D(x_0;\lambda)}$.
Applying the same scalar Bernstein argument as in Lemma~\ref{lem:I1_clean} to the labeled and unlabeled terms and taking a union bound yields:
for all sufficiently large $N$, with probability at least $1-\frac{8}{N}$,
\begin{equation}\label{eq:J1-split-hp}
\sup_{\theta\in\Theta_0}|J_1^{\mathrm{split}}(x_0;\theta)|
\le
2\sqrt{2\log N}\sqrt{D(x_0;\lambda)}
\left(\frac{G_{Y-f}}{\sqrt n}+\frac{G_f}{\sqrt N}\right).
\end{equation}

\noindent\textit{Step 3: uniform bound for $J_2^{\mathrm{split}}$ on $\sE$.}
On $\sE$, the proof of Lemma~\ref{lem:I23_split_bounds_v2}(ii) applies verbatim after replacing
$r(Y,X;\theta)$ and $u(X;\theta)$ by $\partial_\theta r(Y,X;\theta)$ and $\partial_\theta u(X;\theta)$,
and replacing $B_{Y-f}(\theta),B_f(\theta)$ by $G_{Y-f},G_f$.
Let $c_0>0$ be the absolute constant in Lemma~\ref{lem:delta_f_L2_bound}.
Thus on $\sE$, with probability at least $1-\frac{6}{N}-\frac{2}{n^2}-\frac{4}{N^2}$,
\begin{equation}\label{eq:J2-split-hp}
\sup_{\theta\in\Theta_0}|J_2^{\mathrm{split}}(x_0;\theta)|
\le
B_\phi\sqrt{D(\lambda)D(x_0;\lambda)}
\left[
(2c_0+4)G_{Y-f}\frac{\log n}{n}
+
\Big(c_0G_{Y-f}+(4c_0+8)G_f\Big)\frac{\log N}{N}
\right].
\end{equation}

\noindent\textit{Step 4: uniform bound for $J_3^{\mathrm{split}}$ on $\sE$.}
By definition of $\Delta\overline w$,
\[
J_3^{\mathrm{split}}(x_0;\theta)=\langle \Delta\overline w,\mu^1(\theta)\rangle_{\cH}
=\frac12\sum_{m=1}^2\langle \Delta w^{(m)},\mu^1(\theta)\rangle_{\cH},
\]
where $\mu^1(\theta)=\EE[\partial_\theta\ell(Y;\theta)K_X]$ as in \eqref{eq:H-repr-J}.
For each fixed $m$, on $\sE_m$ the $J_3$ argument in Step 4 of Lemma~\ref{lem:Hhat-H-J123} applies to the fold operator
$\wh T_K^{(m)}$ with sample size $|\cI_m|=N/2$, and uses only Assumptions~\ref{ass:deriv-bdd-jac} and \ref{ass:eigenfunctions_bound}.
In particular, for all sufficiently large $N$, on $\sE_m$ and with probability at least $1-\frac{6}{N}$,
\[
\sup_{\theta\in\Theta_0}\big|\langle \Delta w^{(m)},\mu^1(\theta)\rangle_{\cH}\big|
\le
64B_\phi(G_{Y-f}+G_f)\sqrt{\frac{D(\lambda)D(x_0;\lambda)\log N}{N}}.
\]
By a union bound over $m=1,2$, on $\sE$ and with probability at least $1-\frac{12}{N}$,
\begin{equation}\label{eq:J3-split-hp}
\sup_{\theta\in\Theta_0}|J_3^{\mathrm{split}}(x_0;\theta)|
\le
64B_\phi(G_{Y-f}+G_f)\sqrt{\frac{D(\lambda)D(x_0;\lambda)\log N}{N}}.
\end{equation}

\noindent\textit{Step 5: conclude.}
Intersect the events in \eqref{eq:J1-split-hp}, \eqref{eq:J2-split-hp}, and \eqref{eq:J3-split-hp}.
On $\sE$, this intersection has probability at least
\[
1-\frac{8}{N}-\left(\frac{6}{N}+\frac{2}{n^2}+\frac{4}{N^2}\right)-\frac{12}{N}
=
1-\frac{26}{N}-\frac{2}{n^2}-\frac{4}{N^2}.
\]
On this event,
\[
\sup_{\theta\in\Theta_0}\big|\wh J_\lambda(x_0;\theta)-J_\lambda(x_0;\theta)\big|
\le
\sup_{\theta\in\Theta_0}|J_1^{\mathrm{split}}(x_0;\theta)|
+\sup_{\theta\in\Theta_0}|J_2^{\mathrm{split}}(x_0;\theta)|
+\sup_{\theta\in\Theta_0}|J_3^{\mathrm{split}}(x_0;\theta)|.
\]
Under \eqref{eq:D-regime-J-split}, each upper bound in \eqref{eq:J1-split-hp}, \eqref{eq:J2-split-hp}, and \eqref{eq:J3-split-hp}
tends to $0$ as $N\to\infty$ (using also $n\wedge N\to\infty$).
Hence there exists $N_0$ such that for all $N\ge N_0$,
\[
\sup_{\theta\in\Theta_0}\big|\wh J_\lambda(x_0;\theta)-J_\lambda(x_0;\theta)\big|
\le c_J/2
\qquad\text{on }\sE.
\]
Therefore, using Assumption~\ref{ass:J_lambda_bound}, for all $\theta\in\Theta_0$,
\[
|\wh J_\lambda(x_0;\theta)|
\ge
|J_\lambda(x_0;\theta)|-\big|\wh J_\lambda(x_0;\theta)-J_\lambda(x_0;\theta)\big|
\ge c_J-c_J/2=c_J/2,
\]
which proves \eqref{eq:H-invert-final-split}. The inverse bound follows immediately: $\sup_{\theta\in\Theta_0}\big|\wh J_\lambda(x_0;\theta)^{-1}\big|
\le \frac{2}{c_J}$.
\end{proof}
\subsection{Technical Novelties and Comparison with Classical KRR Theory}
\label{subsec:technical-novelties}
\noindent
While our theoretical framework builds upon the mathematical tools of RKHS, the conditional estimation nature of PPCI introduces fundamental challenges that classical Kernel Ridge Regression (KRR) theory cannot resolve. Below, we highlight three major technical departures from standard KRR analyses, which also elucidate the necessity of our proof techniques.

\textbf{1. Shared-Design Dependence and the $I_2$ Interaction.}

In classical non-parametric KRR, the objective is to estimate a global regression function $f_0(x)$ from responses $Y_i=f_0(X_i)+\epsilon_i$. The KRR prediction at a target point $x_0$ admits the linear representation
\[
\wh f(x_0)=\frac1n\sum_{i=1}^n \wh w^R_{x_0,\lambda}(X_i)Y_i,
\qquad
\wh w^R_{x_0,\lambda}:=(\wh T_K+\lambda I)^{-1}K_{x_0}.
\]
This RKHS localization weight is the linear representer of the KRR prediction functional. It should not be confused with the noiseless KRR fitted function obtained by using $K(X_i,x_0)$ as responses, which solves $(\wh T_K+\lambda I)v=\wh T_KK_{x_0}$ instead of $(\wh T_K+\lambda I)w=K_{x_0}$. Conditionally on the covariates, the weights in classical KRR are deterministic and the observation noise satisfies $\EE[\epsilon_i| X_i]=0$, so the variance term is controlled by $n^{-2}\sum_i\wh w^R_{x_0,\lambda}(X_i)^2\sigma^2$.

In sharp contrast, our non-split PPCI procedure couples the empirical RKHS localization weight with same-data evaluations of $\ell$. The interaction term $I_2=\langle \Delta w,\wh\mu-\mu\rangle_{\cH}$ contains products in which the same covariate contributes both to $\Delta w$ and to the moment fluctuation. Conditioning on the pooled design no longer produces mean-zero noise. To handle this shared-design dependence, we expand $\Delta w$ through the resolvent identity and analyze the resulting weak bilinear form. Its diagonal component is small, as it has only $O(N')$ terms, and its off-diagonal component is degenerate after conditioning on independent coordinates. This replaces the earlier pointwise LOO/KRR route with an argument that is directly aligned with the empirical RKHS localization weight.

\textbf{2. Avoiding the RKHS Penalty for the $I_3$ term.}
Another critical departure arises in bounding the operator approximation error, $I_3 = \langle \wh w_{x_0, \lambda} - w_{x_0, \lambda}, \mu(\theta) \rangle_{\cH}$. In classical KRR, the empirical estimator $\wh f$ targets the population regularized function $f_\lambda = (T_K + \lambda I)^{-1} T_K f_0$. By the resolvent identity, the pointwise estimation error involves bounding the term $\langle (\wh T_K + \lambda I)^{-1}(T_K - \wh T_K)f_\lambda, K_{x_0} \rangle_{\cH}$. Classical proofs routinely rely on the Cauchy-Schwarz inequality in $\cH$ to decouple the operator approximation error from the target function, yielding the bound:
\[
\big| \langle (\wh T_K + \lambda I)^{-1}(T_K - \wh T_K)f_\lambda, K_{x_0} \rangle_{\cH} \big|
\le \| (\wh T_K + \lambda I)^{-1}(T_K - \wh T_K) \|_{\op}
\cdot \| f_\lambda \|_{\cH}\cdot \| K_{x_0} \|_{\cH}.
\]
This strategy is sharp in classical settings because the global target $f_0 \in \cH$ is a fixed underlying truth; consequently, its regularized version satisfies $\|f_\lambda\|_{\cH} \le \|f_0\|_{\cH} = O(1)$.

However, in our framework, the target object is the RKHS localization weight itself: $w_{x_0, \lambda} = (T_K + \lambda I)^{-1}K_{x_0}$. By applying the same resolvent identity, the error term $I_3$ expands as $\langle (\wh T_K + \lambda I)^{-1}(T_K - \wh T_K) w_{x_0, \lambda}, \mu(\theta) \rangle_{\cH}$. Directly applying the classical Cauchy-Schwarz argument here would yield:
\[
|I_3|
\le \| (\wh T_K + \lambda I)^{-1}(T_K - \wh T_K) \|_{\op}
\cdot \| w_{x_0, \lambda} \|_{\cH}\cdot \| \mu(\theta) \|_{\cH}.
\]
As $\lambda \to 0$, $w_{x_0, \lambda}$ approximates a Dirac delta at $x_0$. Since the delta function does not reside in the RKHS, the RKHS norm of our target explodes at the rate of $\|w_{x_0, \lambda}\|_{\cH}^2 \approx D(x_0; \lambda)/\lambda \propto \lambda^{-1 - d/(2m)}$. Isolating this exploding norm via Cauchy-Schwarz inevitably incurs an additional $1/\sqrt{\lambda}$ penalty, resulting in a loose and suboptimal convergence rate.

To achieve a tight bound, we completely bypass the Cauchy-Schwarz inequality in $\cH$. Instead, we project the RKHS inner product back into a scalar empirical process (Step 3 in the proof of Lemma~\ref{lem:I3-clean}): $T_1 = \frac{1}{N'} \sum_{i=1}^{N'} \{ w_{x_0, \lambda}(\bar{X}_i)b(\bar{X}_i) - \EE[w_{x_0,\lambda}(X)b(X)] \}$. This allows us to bound the variance proxy using the $L^2(\rho_X)$ norm rather than the $\cH$ norm. By noting that $\EE[w_{x_0, \lambda}(X)^2] \le D(x_0; \lambda) \propto \lambda^{-d/(2m)}$, we successfully eliminate the extra variance explosion, recovering the minimax-optimal rate. We remark that this strategy of reducing abstract RKHS operations to scalar empirical processes is a recurring technique throughout our analysis, which proves instrumental in establishing tight rates for other error components as well.

\textbf{3. Uniform Control of the Out-of-RKHS Jacobian.}
Finally, classical KRR analyses typically conclude once the regression function itself is bounded. However, in PPCI, statistical inference fundamentally relies on solving the empirical equation $\wh \eta_\lambda(x_0; \theta) = 0$. The asymptotic behavior of the estimator is therefore governed by the localized Jacobian, $J_\lambda(x_0; \theta) = \partial_\theta \eta_\lambda(x_0; \theta)$.

A major technical hurdle is that even if the conditional moment $\eta(\cdot; \theta)$ belongs to $\cH$, the derivative of the estimating function, $\partial_\theta \ell$, frequently falls outside the RKHS $\cH$. Standard KRR convergence theories cannot cover this out-of-RKHS object. To guarantee the asymptotic validity of our confidence intervals, we must independently establish uniform stability bounds over $\Theta_0$ for this empirical Jacobian. We show that provided the regularization parameter $\lambda$ decays sufficiently slowly, the out-of-RKHS empirical Jacobian does not degenerate, a property entirely absent from the global KRR literature.
\end{document}